\newcommand{\R}{\mathbb{R}}
\newcommand{\bmB}{\mathcal{B}}
\newcommand{\N}{\mathbb{N}}
\newcommand{\E}{\mathbb{E}}
\newcommand{\bmH}{\mathcal{H}}
\newcommand{\bmG}{\mathcal{G}}
\newcommand{\bmF}{\mathcal{F}}
\newcommand{\bmN}{\mathcal{N}}
\newcommand{\bmX}{\mathcal{X}}
\newcommand{\bmO}{\mathcal{O}}
\newcommand{\bmS}{\mathcal{S}}
\newcommand{\bmY}{\mathcal{Y}}
\newcommand{\bmZ}{\mathcal{Z}}
\newcommand{\bmL}{\mathcal{L}}
\newcommand{\1}{\mathds{1}}
\DeclareMathOperator{\Tr}{Tr}
\DeclareMathOperator{\hs}{HS}
\DeclareMathOperator*{\argmin}{arg\,min\,}
\newtheorem{assumption}{Assumption}
\newenvironment{sproof}{%
  \begin{proof}[Sketch of proof]%
}{%
  \end{proof}%
}
\begin{document}

\title{Learning Controlled Stochastic Differential Equations}

\author{\name Luc Brogat-Motte (corresponding author)\email luc.brogatmotte@iit.it \\
  \addr Istituto Italiano di Tecnologia \\
        Via Morego 30, 16163 Genova, Italy
  \AND
  \name Riccardo Bonalli \email riccardo.bonalli@cnrs.fr \\
  \addr Laboratoire des Signaux et Systèmes, CNRS, CentraleSupélec \\
        Université Paris-Saclay \\
        Gif-sur-Yvette, France
  \AND
  \name Alessandro Rudi \email alessandro.rudi@sdabocconi.it \\
  \addr SDA Bocconi School of Management \\
        Bocconi University \\
        Milan, Italy
}
       
\editor{My editor}

\maketitle

\begin{abstract}
We study the problem of learning controlled stochastic differential equations (SDEs)
\[
dX_t = b(t,X_t,u_t)\,dt + \sigma(t,X_t,u_t)\,dW_t,
\]
whose drift and diffusion depend nonlinearly on time, state, and control values. From trajectory data, we aim to estimate coefficients whose induced density flows reproduce those of the observed dynamics. The data consist of several controls sampled from a finite-dimensional family and, for each control, multiple independent trajectories observed at discrete times over a fixed horizon. The controls are observed inputs, not learner-selected decisions.
We propose a kernel method for multidimensional nonlinear controlled SDEs. The method estimates the density flow for each sampled control, then fits the drift \(b\) and diffusion matrix \(a=\sigma\sigma^\top\) by matching the estimated flows through the Fokker--Planck equation.
Under Sobolev regularity assumptions, we prove finite-sample bounds on
the error between the density flows of the learned and true SDEs, measured
in \(L^2\) over controls, time, and state. The bounds quantify how the error decreases with the number \(K\) of sampled controls, with rates determined by the state and control-parametrization dimensions and Sobolev regularity. We further derive uniform-in-control guarantees and CVaR-type bounds for tail-sensitive quantities. Numerical experiments illustrate the method, and an open-source Python implementation is provided.
\end{abstract}

\begin{keywords}
controlled stochastic differential equations, nonlinear dynamical systems, system identification, kernel methods, finite-sample analysis
\end{keywords}

\section{Introduction}\label{sec:introduction}

Models of dynamical systems describe how the state of a system evolves over time. They are central in engineering, robotics, physics, biology, and economics, where they support analysis, prediction, simulation, control, optimization, and fault detection \citep{isermann2011identification, nelles2020nonlinear}. In some settings, these models can be derived from physical, mechanical, electrical, chemical, biological, or economic principles. In many others, first-principles equations are unavailable or incomplete. This motivates data-driven system identification: learning dynamical models from observations \citep{aastrom1971system}.

We focus on continuous-time systems subject to both random fluctuations and control inputs. Specifically, we consider controlled stochastic differential equations of the form 
% \begin{align}\label{eq:sde0} dX(t) = b(t, X(t), u(t)) \, dt + \sigma(t, X(t), u(t)) \, dW(t), \qquad X(0) \sim p_0, \qquad u \in \bmH, \end{align}
\begin{equation}\label{eq:sde0}
\begin{aligned}
dX(t) &= b(t,X(t),u(t))\,dt
      + \sigma(t,X(t),u(t))\,dW(t),\\
X(0) &\sim p_0, \qquad u \in \bmH,
\end{aligned}
\end{equation}
where \(X(t)\) is an \(n\)-dimensional stochastic process, \(W(t)\) is a standard Brownian motion, and \(p_0\) is a probability density on \(\mathbb R^n\). The set \(\bmH \subset \mathcal F([0,T],\mathbb R^d)\) denotes a class of deterministic open-loop controls, and
\[
b,\sigma : [0,T]\times\mathbb R^n\times\mathbb R^d
\to \mathbb R^n \times \mathbb R^{n\times n}
\]
denote the drift and diffusion coefficients.
% and
% \[
% \begin{aligned}
% b &: [0,T] \times \mathbb R^n \times \mathbb R^d \to \mathbb R^n,\\
% \sigma &: [0,T] \times \mathbb R^n \times \mathbb R^d \to \mathbb R^{n\times n},
% \end{aligned}
% \]
% are the drift and diffusion coefficients. The set \(\bmH \subset \mathcal F([0,T], \mathbb R^d)\), denotes a class of deterministic open-loop controls.

% where \(X(t)\) is an \(n\)-dimensional stochastic process, \(W(t)\) is a standard Brownian motion, \(p_0\) is a probability density on \(\mathbb R^n\), and \[ b : [0,T] \times \mathbb R^n \times \mathbb R^d \to \mathbb R^n, \qquad \sigma : [0,T] \times \mathbb R^n \times \mathbb R^d \to \mathbb R^{n\times n} \] are the drift and diffusion coefficients. The set \(\bmH \subset \mathcal F([0,T], \mathbb R^d)\) denotes a class of deterministic open-loop controls.
We study a passive learning setting for estimating \(b\) and \(a=\sigma\sigma^\top\). The learner observes trajectories generated under \(K\) controls \(u_1,\ldots,u_K\), but does not choose these controls as part of the estimation procedure. Instead, the controls are modeled as independent draws from a fixed, possibly unknown, distribution on \(\bmH\). This setting arises naturally when inputs come from logged operation, experimental protocols, environmental forcing, or other data-collection processes in which active probing is costly, unsafe, or unavailable.
For each control \(u_k\), we observe \(Q\) independent trajectories of the SDE driven by \(u_k\), observed only at times \(t_1,\ldots,t_M\). Thus the data consist of repeated trajectory observations under finitely many controls. The formal sampling model is stated in Section~\ref{sec:problem_setting}.

Our goal is to learn controlled stochastic dynamics from these passive observations while quantifying generalization across controls. The data are collected under finitely many controls \(u_1,\ldots,u_K\), whereas the learned model is intended to reproduce the system's density flow under new controls from the same family. We study how the learning error depends on the number of sampled controls, the state dimension, the dimension and smoothness of the control parametrization, and whether performance is measured on average or uniformly across controls.

\paragraph{Contributions.}
We propose a kernel-based estimator for multidimensional controlled SDEs whose drift and
diffusion may depend nonlinearly on time, state, and control. The method extends the
uncontrolled estimator of \citet{bonalli2023non}: it first estimates the density flows
associated with the sampled controls, and then learns drift and diffusion coefficients by
matching these flows through the Fokker--Planck equation.

We prove finite-sample bounds for the density flows induced by the learned coefficients. Our main results give \(L^2\) learning rates for the densities of the true and estimated SDEs, averaged over controls, times, and states. The bounds quantify how the error decreases with the number \(K\) of sampled controls, and how the rates depend on the state dimension, the dimension of the control parametrization, and the Sobolev regularity assumptions.

We also establish \(L^\infty\) learning rates, uniform over the control space, together with CVaR bounds for quantities evaluated under the learned SDE. These stronger guarantees are motivated by downstream applications where uniform or tail-sensitive reliability may be needed, including prediction, simulation, and model-based control. Finally, we provide numerical experiments and open-source Python implementations for both uncontrolled and controlled SDEs, available at \url{lmotte/sde-learn} and \url{lmotte/controlled-sde-learn}.

\subsection{Related Work}\label{subsec:related_work}

\paragraph{System identification.} System identification has a long history, with foundations going back to the 1960s \citep{aastrom1965numerical, ho1966effective} and mature treatments developed in the following decades \citep{ljung1999system, soderstrom}. A wide variety of methods for identifying dynamical systems exists \citep{ljung1999system, katayama2005subspace}. We refer to \citet{isermann2011identification, nelles2020nonlinear} for comprehensive treatments.

A system identification method is usually specified by a model class, an estimation criterion, and a data-collection protocol. Common model classes include state-space models, ARX and ARMAX models, linear and polynomial models, lookup tables, fuzzy models, neural networks, and Koopman-based representations \citep{watter2015embed, andersson2019deep, ljung2020deep, klus2020data, nuske2023finite, zhang2023quantitative}. Estimation criteria include least squares, maximum likelihood, and Bayesian methods. The choice of inputs is also central. Classical experiment design uses conditions such as persistence of excitation or criteria based on Fisher information \citep{ljung1999system}. More recent approaches use active learning or reinforcement learning to choose informative inputs during data collection \citep{abraham2019active, wagenmaker2020active}. Online and adaptive identification methods update the model while the system operates, so that the data distribution can depend on previous model estimates and control actions \citep{hewing2020learning, lew2022safe, mania2022active}.

In this paper, the controls are observed inputs sampled from a fixed distribution. They are not chosen adaptively, and there is no feedback from the estimator to the data-collection process. We study how the learned controlled dynamics generalize from the finitely many controls observed in the dataset to new controls drawn from the same distribution.

Finite-sample guarantees are well developed for linear dynamical systems \citep{simchowitz2018learning, sarkar2019near, dean2020sample, tu2022learning}. 
Existing nonlinear guarantees often rely on structured model classes for the dynamics \citep{oymak2019stochastic, chen2019sample, bahmani2020convex, foster2020learning, mania2022active, sattar2022non}. For example, several works study systems of the form \(x_{t+1}=\phi(Ax_t+Bu_t)\), where the nonlinearity \(\phi\) is known and fixed \citep{oymak2019stochastic, bahmani2020convex, foster2020learning, sattar2022non}. \citet{mania2022active} consider models of the form \(x_{t+1}=A\phi(x_t,u_t)+w_t\), under assumptions including warm starts, computational oracles, and controllability. These settings differ from ours, where the drift and diffusion of a continuous-time stochastic system are nonparametric functions of time, state, and control.

\paragraph{Uncontrolled SDE estimation.}
Most statistical work on SDE coefficient estimation considers autonomous systems without control inputs \citep{nielsen2000parameter}. Classical results often use one long trajectory and obtain asymptotic guarantees under ergodicity assumptions \citep{hoffmann1999adaptive, kessler1999estimating, comte2007penalized, abraham2019nonparametric, nickl2020nonparametric, kutoyants2013statistical, dalalyan2005sharp, genon1992non, florens1993estimating}. This observation model differs from ours because there are no controls, and because the information comes from long-time observation of one process rather than repeated short-horizon trajectories under several inputs.

A more recent line of work studies SDEs from independent sample paths observed over a fixed horizon \citep{comte2020nonparametric, bonalli2023non, nuske2023finite, zhang2023quantitative}. In particular, \citet{bonalli2023non} estimate multidimensional SDEs by first estimating the density flow and then fitting coefficients through the Fokker--Planck equation. We use the same density-flow and Fokker--Planck matching principle, but with coefficients that also depend on a control variable.

\paragraph{Controlled SDE estimation.}
For controlled SDEs, the closest finite-sample analysis we are aware of is \citet{nuske2023finite}. They consider nonlinear control-affine dynamics with drift of the form \(b(t,x,u_{\theta}(t)) = b_0(x) + B_1(x)\theta\), where the control is parametrized by \(\theta\in\mathbb R^m\). In their setting, the diffusion coefficient does not depend on time or control. The model therefore excludes drift terms with general nonlinear dependence on \(t\), \(x\), and \(u\), and excludes diffusion coefficients \(a(t,x,u)\) depending on the control. Their method also does not aim at returning explicit estimates for a general drift-diffusion pair \(b(t,x,u)\), \(a(t,x,u)\). The present paper studies a complementary setting, in which both \(b(t,x,u)\) and \(a(t,x,u)\) may depend on time, state, and control, under regularity, ellipticity, and localization assumptions. The guarantees are also stated at the density-flow level, rather than in a coefficient norm.

\subsection{Paper Organization}

The paper is organized as follows. Section~\ref{sec:problem_setting} states the learning problem. Section~\ref{sec:proposed_method} presents the estimation method. Section~\ref{sec:learning_guarantees} gives finite-sample error bounds. Section~\ref{sec:expe} reports the numerical experiments.

% \subsection{Notation}

% For any function
% \(f : \bmH \times A \to C\), we use the shorthand
% \(
% f(\theta) \triangleq f(u_\theta,\cdot),
% \)
% whenever no ambiguity arises.
% We write \(a\) as a shorthand for the diffusion matrix
% \(\sigma \sigma^\top\). We use the notation for Sobolev spaces \(W^q(A,B)\), tensor
% products \(u \otimes v\), and the Loewner partial order \(A \preceq B\).
% Detailed notation and norms are defined in
% Appendix~\ref{app:notation}.

\section{Problem Setting}\label{sec:problem_setting}

We now introduce the learning problem.

\paragraph{Parametrized controls.} We consider deterministic open-loop controls \(u:[0,T]\to\mathbb R^d\) with a finite-dimensional parametrization: \[ \bmH=\{u_\theta:\theta\in\Theta\}, \qquad \Theta\subset\mathbb R^m, \] where \(\Theta\) is a bounded Lipschitz domain.

\paragraph{Controlled stochastic dynamics.} We consider an \(n\)-dimensional controlled stochastic differential equation \begin{align}\label{eq:sde} dX(t) = b(t,X(t),u(t))\,dt + \sigma(t,X(t),u(t))\,dW(t), \qquad X(0) \sim p_0, \end{align} on a fixed time horizon \(T>0\). Here \(W\) is a standard Brownian motion, \(p_0\) is a probability density on \(\mathbb R^n\), and \[ b : [0,T]\times\mathbb R^n\times\mathbb R^d \to \mathbb R^n, \qquad \sigma : [0,T]\times\mathbb R^n\times\mathbb R^d \to \mathbb R^{n\times n} \] are the drift and diffusion coefficients. For \(\theta\in\Theta\), we write \(X^\theta\) for the solution of \eqref{eq:sde} driven by \(u_\theta\), whenever a unique strong solution exists.

\paragraph{Density flows.} Let \(a=\sigma\sigma^\top\) be the diffusion matrix. Since the law of the SDE depends on \(\sigma\) only through \(a\), we index density flows by \((b,a)\), not by a particular square root \(\sigma\). When \eqref{eq:sde} with coefficients \((b,a)\) admits densities, we denote the associated density flow by \[ p_{b,a} : \Theta \times [0,T] \times \mathbb R^n \to \mathbb R_+. \] Thus, for each \(\theta\in\Theta\) and \(t\in[0,T]\), \(p_{b,a}(\theta,t,\cdot)\) is the density of \(X^\theta(t)\) with respect to Lebesgue measure on \(\mathbb R^n\).

\paragraph{Dataset.} We work in an offline episodic setting. Control parameters \((\theta_k)_{k=1}^K\) are sampled independently from a fixed probability distribution \(\mathbb P_c\) on \(\Theta\). For each \(\theta_k\), we observe \(Q\) independent trajectories of \eqref{eq:sde} driven by \(u_{\theta_k}\), initialized from \(p_0\), and sampled at times \((t_\ell)_{\ell=1}^M\subset[0,T]\). The dataset is \begin{align} (\theta_k, X^{\theta_k}(\omega_i^k,t_\ell))_{k \in \llbracket 1,K \rrbracket,\; i \in \llbracket 1,Q \rrbracket,\; \ell \in \llbracket 1,M \rrbracket}. \end{align} A formal probabilistic construction is given in Appendix~\ref{app:prob_setting}.

\paragraph{Learning problem.} Our goal is to estimate, from the dataset above, coefficients \((b,a)\) whose
induced density flow reproduces that of the true controlled SDE.  Given a
hypothesis space \(\bmF\) of coefficient functions from
\([0,T]\times\mathbb R^n\times\mathbb R^d\) to \(\mathbb R^{n+n^2}\), we formulate
the estimation problem as
\begin{align}\label{eq:learn_obj} \min_{(\hat b, \hat a) \in \bmF}\: \|p_{\hat b, \hat a} - p\|_{L^2(\mathbb P_c)\,L^2([0,T]\times\mathbb R^n)}, \end{align} where \(p \triangleq p_{b,a}\) is the density flow of the true controlled SDE. 

% This objective compares the distributions induced by the true and estimated coefficients, rather than the coefficient error \((\hat b-b,\hat a-a)\) itself.

\begin{remark}[Learning at the level of density flows]\label{rk:non_id}
The objective in \eqref{eq:learn_obj} evaluates candidate coefficients through
the density flows they induce. Thus, the target is not pointwise recovery of
\((b,a)\) in a coefficient norm, but reproduction of the controlled dynamics at
the distributional level. As a consequence, two coefficient pairs that induce the
same density flow are indistinguishable for this criterion. Example~\ref{ex:nonidenti} gives a simple case where distinct SDE coefficients
induce the same density flow; see also
\citet{qiu2022identifiability, browning2020identifiability, miao2011identifiability,
wang2024generator} for discussions.
\end{remark}

% \begin{remark}[Learning at the level of probability densities]\label{rk:non_id} The objective in \eqref{eq:learn_obj} is formulated at the level of densities. Thus, two coefficient pairs may be indistinguishable for this objective if they induce the same density flow. The goal is not coefficient identifiability in a coefficient norm, but accurate reproduction of the controlled dynamics at the distributional level. See Example~\ref{ex:nonidenti} for an illustration and \citet{qiu2022identifiability, browning2020identifiability, miao2011identifiability, wang2024generator} for discussions. \end{remark}

\begin{example}[Distinct SDE coefficients inducing the same density]\label{ex:nonidenti}
Different SDE coefficients may induce the same probability density. Consider the uncontrolled Ornstein--Uhlenbeck process
\[
    dX(t)=\theta(\mu-X(t))\,dt+\sigma\,dW(t),
    \qquad X(0)\sim\mathcal N(m,s^2),
\]
with \(\theta>0\), \(\mu\in\mathbb R\), and \(\sigma>0\). For all \(t\ge 0\), \(X(t)\) is Gaussian, with mean \(\mu+(m-\mu)e^{-\theta t}\) and variance \(\frac{\sigma^2}{2\theta}(1-e^{-2\theta t})+s^2e^{-2\theta t}\). If \(m=\mu\) and \(s^2=\sigma^2/(2\theta)\), the process is stationary and has density \(\mathcal N(m,s^2)\) at all times. Hence different pairs \((\tilde\theta,\tilde\sigma)\) satisfying \(\tilde\sigma^2/(2\tilde\theta)=\sigma^2/(2\theta)\) define different SDEs with the same density flow.
\end{example}

\begin{remark}[Parametrized controls]\label{rk:control_measure} Controls are parametrized as \(\bmH=\{u_\theta:\theta\in\Theta\}\), and the sampling distribution \(\mathbb P_c\) is defined on \(\Theta\). Thus, sampling \(\theta\sim\mathbb P_c\) induces a random control \(u_\theta\in\bmH\). This avoids placing probability measures directly on the infinite-dimensional function space \(\bmH\). Extending the analysis to such measures, for instance Gaussian measures on separable Hilbert spaces of controls, is left for future work. \end{remark}

\section{Proposed Method}\label{sec:proposed_method}

We now present the estimator. It has two steps: first estimate the density flows associated with the sampled controls, then estimate the coefficients by Fokker--Planck matching. Section~\ref{subsec:fp_ineq} gives the well-posedness assumptions and the matching inequality. Section~\ref{subsec:estimator} defines the estimator.

\subsection{Well-Posedness and Fokker--Planck Matching Inequality}\label{subsec:fp_ineq} 

We first state the assumptions used to ensure well-posedness of the controlled
SDEs and to derive the Fokker--Planck matching inequality. Here and below, \(H^s\)
denotes the Sobolev space \(W^{s,2}\).

%  They impose
% ellipticity, regularity of the controls, initial density, and coefficients, and
% a localization condition on a compact state domain.

\begin{assumption}[Uniform ellipticity]\label{as:uniform_ellipticity}
A coefficient pair \((b,a)\) is uniformly elliptic if there exists \(\kappa>0\) such that
\[
a(t,x,v)\succeq \kappa I_n
\]
for all \((t,x,v)\in[0,T]\times\R^n\times\R^d\).
\end{assumption}

% \begin{assumption}[Uniform ellipticity]\label{as:uniform_ellipticity} A coefficient pair \((b,a)\) is uniformly elliptic if there exist \(\kappa>0\) and \(a_0:[0,T]\times\R^n\times\R^d\to\R^{n\times n}\) such that \[ a(t,x,v)=\kappa I_n+a_0(t,x,v), \qquad a_0(t,x,v)\succeq 0, \] for all \((t,x,v)\in[0,T]\times\R^n\times\R^d\). \end{assumption} 

% \begin{assumption}[Control regularity]\label{as:smooth_controls} Let \(s_0 \triangleq 5+n+\left\lceil \frac{d+1}{2}\right\rceil\). There exists a compact set \(V\subset\R^d\) such that, for every \(\theta\in\Theta\), \(u_\theta([0,T])\subset V\) and \[ \sup_{\theta\in\Theta}\|u_\theta\|_{W^{s_0}([0,T];\R^d)}<\infty . \] \end{assumption}

\begin{assumption}[Control regularity]\label{as:smooth_controls}
Let \(s_0 \triangleq 5+n+\left\lceil \frac{d+1}{2}\right\rceil\).
There exists a bounded Lipschitz domain \(V\subset\R^d\) such that, for every
\(\theta\in\Theta\), \(u_\theta([0,T])\subset V\) and
\[
\sup_{\theta\in\Theta}\|u_\theta\|_{H^{s_0}([0,T];\R^d)}<\infty .
\]
\end{assumption}

\begin{assumption}[Initial density regularity]\label{as:initial_density}
The initial density satisfies \(p_0\in H^3(\R^n)\).
\end{assumption}

\begin{assumption}[Coefficient localization]\label{as:p_s}
There exist a bounded Lipschitz domain \(D\subset\R^n\), a cutoff
\(\xi\in C_c^\infty(\R^n)\), with \(0\leq \xi\leq 1\) and
\(\operatorname{supp}(\xi)\subset D\), and functions \(b_0,a_0\) such that
\[
b(t,x,v)=\xi(x)b_0(t,x,v),
\qquad
a(t,x,v)=\kappa I_n+\xi(x)a_0(t,x,v).
\]
\end{assumption}

Assumption~\ref{as:p_s} means that the drift and the localized part of the diffusion vanish outside \([0,T]\times D\), with \(\xi b_0\) and \(\xi a_0\) understood to be extended by zero outside \(D\). There, the dynamics reduce to Brownian motion with diffusion matrix \(\kappa I_n\). This restricts the target class. It can also be viewed as a smooth localization of a larger non-localized model; in that case, our results apply to the localized model and do not quantify the localization bias.

\begin{assumption}[Coefficient regularity]\label{as:smooth_coeffs}
The functions \(b_0\) and \(a_0\) from Assumption~\ref{as:p_s} satisfy
\[
b_0 \in H^{s_0}([0,T]\times\R^n\times\R^d;\R^n),
\qquad
a_0 \in H^{s_0}([0,T]\times\R^n\times\R^d;\R^{n\times n}).
\]
\end{assumption}

% \begin{assumption}[Coefficient regularity]\label{as:smooth_coeffs}
% The coefficient pair \((b,a)\), with \(a=\kappa I_n+a_0\), satisfies
% \[
% b \in W^{s_0}([0,T]\times\mathbb R^n\times\mathbb R^d;\mathbb R^n),
% \qquad
% a_0 \in W^{s_0}([0,T]\times\mathbb R^n\times\mathbb R^d;\mathbb R^{n\times n}).
% \]
% \end{assumption}

% \begin{assumption}[Coefficient localization]\label{as:p_s} Let \(a=\kappa I_n+a_0\). There exist a compact set \(D\subset\R^n\) and a cutoff \(\xi\in C_c^\infty(\R^n)\), with \(0\leq \xi\leq 1\) and \(\operatorname{supp}(\xi)\subset D\), such that \[ \xi b=b, \qquad \xi a_0=a_0 . \] \end{assumption}

% \begin{assumption}[Coefficient localization]\label{as:p_s}
% Let \(a=\kappa I_n+a_0\). There exists a compact set \(D\subset\R^n\) such
% that, for every \(v\in\R^d\), the functions
% \[
% (t,x)\mapsto b(t,x,v), \qquad (t,x)\mapsto a_0(t,x,v),
% \]
% and the derivatives involved in the dual Kolmogorov generator are supported in
% \([0,T]\times D\).
% \end{assumption}

% Under these assumptions, the following lemma controls the density-flow discrepancy between the true and estimated SDEs by a Fokker--Planck residual.

\paragraph{Fokker--Planck operator.}
For \(\theta\in\Theta\), the coefficients \((b,a)\) and the control \(u_\theta\) define a
Fokker--Planck operator acting on sufficiently smooth functions
\(q:[0,T]\times\mathbb R^n\to\mathbb R\) by
\[
\bmL^{(b,a), \theta} q(t,x)
=
\frac12 \sum_{i,j=1}^n
\partial_{x_i x_j}\!\big(a_{ij}(t,x,u_\theta(t))q(t,x)\big)
-
\sum_{i=1}^n
\partial_{x_i}\!\big(b_i(t,x,u_\theta(t))q(t,x)\big).
\]
The objective in \eqref{eq:learn_obj} depends on \((\hat b,\hat a)\) through the density flow \(p_{\hat b,\hat a}\), which is defined implicitly by the SDE. The next lemma replaces this implicit quantity by an explicit residual involving the true density flow \(p\) and the candidate coefficients \((\hat b,\hat a)\). We call this residual the Fokker--Planck matching objective.

\begin{lemma}[Fokker--Planck matching inequality]\label{lem:FP_ineq_main}
Let \((b,a)\) and \((\hat b,\hat a)\) satisfy
Assumptions~\ref{as:uniform_ellipticity},
\ref{as:smooth_controls}, \ref{as:initial_density},
\ref{as:p_s}, and~\ref{as:smooth_coeffs}, with the same
localization domain \(D\), the same cutoff function \(\xi\), and the same ellipticity
constant \(\kappa>0\). Then the associated controlled SDEs admit densities \(p\) and
\(p_{\hat b,\hat a}\). Moreover, there exists a constant
\(C_{\mathrm{FP}}>0\), depending on the \(H^{s_0}\)-norms of
\(\hat b\) and \(\hat a-\kappa I_n\), such that
\begin{align}\label{eq:fp_ineq_main}
\|p_{\hat b,\hat a}-p\|_{L^2(\mathbb P_c)L^2([0,T]\times\R^n)}
\le C_{\mathrm{FP}}\, \mathrm{FP}(\hat b,\hat a),
\end{align}
where
\begin{align}\label{eq:fp_matching}
\mathrm{FP}(\hat b,\hat a)
\triangleq
\left(
\E_{\theta\sim\mathbb P_c}
\left\|
\partial_t p(\theta,\cdot,\cdot)
-
\bmL^{(\hat b,\hat a), \theta} p(\theta,\cdot,\cdot)
\right\|_{L^2([0,T]\times D)}^2
\right)^{1/2}.
\end{align}
The proof is given in Lemma~\ref{lem:FP_ineq}.
\end{lemma}

Lemma~\ref{lem:FP_ineq_main} provides the statistical reason for using
Fokker--Planck matching: a small residual
\(\mathrm{FP}(\hat b,\hat a)\) implies a small density-flow error through
\eqref{eq:fp_ineq_main}. The same formulation also leads to a convenient optimization
problem. Once the density flow \(p\) is fixed, the residual
\(\partial_t p-L^{(\hat b,\hat a)}p\) depends affinely on the unknown coefficients.
Therefore, when the coefficient class is represented in a Hilbert space, minimizing the
squared residual together with a Hilbert-norm regularization term yields a convex
quadratic least-squares objective. This is the structure exploited in the next section to
define the estimator.

\subsection{Proposed Estimator}\label{subsec:estimator}

We now define the estimator, combining density estimation with Fokker--Planck matching.

\paragraph{Controlled SDE estimator.} For each sampled parameter \(\theta_k\), let \(\hat p_k\) be the density-flow estimator of \citet{bonalli2023non} trained on \[ \big(X^{\theta_k}(\omega_i^k,t_\ell)\big)_ {i\in\llbracket 1,Q\rrbracket,\;\ell\in\llbracket 1,M\rrbracket}. \] Let \(z_i=(s_i,y_i)\), \(i=1,\ldots,N\), be sampled uniformly from \([0,T]\times D\). Given \((\hat p_k)_{k=1}^K\), we define \begin{align}\label{eq:proposed_estimator} (\hat b,\hat a) \in \argmin_{(b,a)\in\bmF} \frac{1}{KN} \sum_{k=1}^K\sum_{i=1}^N \left( \partial_t\hat p_k(s_i,y_i) - \bmL^{(b,a),\theta_k}\hat p_k(s_i,y_i) \right)^2 + \lambda \|(b,a)\|_{\mathcal F}^2 . \end{align}

\paragraph{Hypothesis space.} It remains to specify the hypothesis space \(\bmF\) and the regularization norm in \eqref{eq:proposed_estimator}. We build \(\bmF\) so that its elements satisfy the structural assumptions of Lemma~\ref{lem:FP_ineq_main}: smoothness, uniform ellipticity, and localization. Let \(k\) be a continuous positive definite kernel on \([0,T]\times D\times V\), with scalar RKHS \(\mathcal H_k\) continuously embedded into \(H^{s_0}([0,T]\times D\times V)\). Typical choices include Matérn kernels of sufficiently high order. We model the drift in the vector-valued RKHS \(\mathbb R^n\otimes\mathcal H_k\). For the diffusion, write \(a=\kappa I_n+a_0\). We enforce \(a_0\succeq0\) through a positive-semidefinite kernel parametrization \citep{marteau2020non, muzellec2021learning, rudi2021psd, rudi2024finding, berthier2022infinite, vacher2021dimension}. With \(\phi(t,x,v)=k((t,x,v),\cdot)\in\mathcal H_k\), set \[ a_0(t,x,v) = \left( \big\langle \phi(t,x,v),\, w_{ij}\,\phi(t,x,v)\big\rangle_{\mathcal H_k} \right)_{i,j=1}^n, \] where \(w=(w_{ij})_{i,j=1}^n\) is self-adjoint and positive semidefinite. This guarantees \(a(t,x,v)\succeq\kappa I_n\). The embedding of \(\mathcal H_k\) into \(H^{s_0}\), together with the algebra property of \(H^{s_0}\), gives the required Sobolev regularity. Finally, we enforce localization with the cutoff \(\xi\) from Assumption~\ref{as:p_s}. The hypothesis space \(\bmF\) is the set of all pairs \((b,a)\) of the form \[ b(t,x,v)=\xi(x)\, b_0(t,x,v), \qquad a(t,x,v)=\kappa I_n+\xi(x)\, a_0(t,x,v), \] where \(b_0\in\mathbb R^n \otimes \mathcal H_{k}\) and \(a_0\) has the positive-semidefinite kernel parametrization above.  See Appendix~\ref{subsec:preli} for the definition of \(\|\cdot\|_{\bmF}\).

\begin{remark}[Alternative ellipticity constraints]\label{rem:alternative_ellipticity_constraints} Uniform ellipticity can also be enforced by pointwise constraints at sampled locations. Given points \(\zeta_i\in[0,T]\times D\times V\) and directions \(r_j\in\mathbb R^n\), one may impose \[ r_j^\top a(\zeta_i)r_j \ge \kappa, \qquad (i,j)\in\llbracket 1,q\rrbracket\times\llbracket 1,r\rrbracket . \] In the isotropic case \(a=a_0 I_n\), this reduces to \(a_0(\zeta_i)\ge \kappa\). The guarantees in this paper are proved for the hard positive-semidefinite parametrization above; the analysis of such pointwise constraints is left for future work. \end{remark}

\section{Learning Guarantees}\label{sec:learning_guarantees}

We now state finite-sample guarantees for the estimator of Section~\ref{sec:proposed_method}. The error compares the density flows of the learned and true SDEs, and is measured with respect to the sampling distribution over controls. This quantifies generalization across the control family. We first give a baseline \(L^2\) bound under attainability (Section~\ref{subsec:L2_rates}), then refine it using source and embedding assumptions tailored to Fokker--Planck matching (Section~\ref{subsec:refined_L2_rates}). We next derive uniform-in-control bounds (Section~\ref{subsec:Linfty_rates}) and CVaR-type guarantees (Section~\ref{subsec:cvar_lr}). Finally, we instantiate the assumptions and rates for Sobolev coefficients (Section~\ref{subsec:assumptions_examples}).

\subsection{\texorpdfstring{\(L^{2}\)}{L2} Learning Rates}\label{subsec:L2_rates}

We start with a baseline \(L^2\) guarantee. It relies on the following well-specifiedness condition. 

\begin{assumption}[Attainability]\label{as:attain} The true coefficients belong to the hypothesis space: \begin{align} (b,a)\in\bmF . \end{align} \end{assumption}

Assumption~\ref{as:attain} expresses prior information on the target coefficients: the estimator searches over a class that contains the truth. This is the standard well-specified setting in regularized least-squares regression \citep{caponnetto2007optimal}. Some restriction of this kind is unavoidable for uniform finite-sample guarantees, by no-free-lunch phenomena \citep{devroye2013probabilistic}. Concrete examples are given in Section~\ref{subsec:assumptions_examples}.

% \begin{theorem}[\(L^2\) learning rates]\label{thm:lr}
% Let \(N,K\in\mathbb N^*\). Let
% \((\theta_k)_{k=1}^K\overset{\mathrm{i.i.d.}}{\sim}\mathbb{P}_c\), set
% \(u_k=u_{\theta_k}\), and let
% \(z_i=(s_i,y_i)\overset{\mathrm{i.i.d.}}{\sim}
% \mathrm{Unif}([0,T]\times D)\), where \(D\) is the localization domain of
% Assumption~\ref{as:p_s}. Under
% Assumptions~\ref{as:uniform_ellipticity}--\ref{as:attain}, let \((\hat b,\hat a)\) be the estimator defined in
% Section~\ref{sec:proposed_method} from these samples. 
% Then there exist constants \(c_1,c_2>0\), independent of \(N\), \(K\), and
% \(\delta\), such that the following holds for any \(\delta\in(0,1]\). Define
% \(\varepsilon \triangleq
% \sup_{k\in\llbracket 1,K\rrbracket} \mathcal{E}(\hat p_k,p(\theta_k))\), where \(\mathcal{E}\) is the
% squared Sobolev norm defined in Appendix~\ref{subsec:proof_theorem}, and set
% \(\lambda =
% c_2\big(N^{-1}\log(N\delta^{-1})
% +K^{-1}\log(K\delta^{-1})\big)\). If
% \begin{align}
% \varepsilon
% \leq
% \frac{1}{N}\log\frac{N}{\delta}
% +
% \frac{1}{K}\log\frac{K}{\delta},
% \end{align}
% then, with probability at least \(1-\delta\),
% \begin{align}
% \|p_{\hat b,\hat a}-p\|_{L^2(\mathbb{P}_c)L^2([0,T]\times\mathbb R^n)}
% \leq
% c_1\log\frac{2}{\delta}
% \left(
% \frac{\log(N/\delta)}{\sqrt N}
% +
% \frac{\log(K/\delta)}{\sqrt K}
% \right).
% \end{align}
% \end{theorem}

\begin{theorem}[\(L^2\) learning rates]\label{thm:lr}
Let \(N,K\in\mathbb N^*\). Let
\((\theta_k)_{k=1}^K\overset{\mathrm{i.i.d.}}{\sim}\mathbb{P}_c\), and let
\(z_i=(s_i,y_i)\overset{\mathrm{i.i.d.}}{\sim}
\mathrm{Unif}([0,T]\times D)\), where \(D\) is the localization domain of
Assumption~\ref{as:p_s}. Under
Assumptions~\ref{as:uniform_ellipticity}--\ref{as:attain}, let \((\hat b,\hat a)\) be the estimator defined in
Section~\ref{sec:proposed_method} from these samples.
Then there exist constants \(c_1,c_2>0\), independent of \(N\), \(K\), and
\(\delta\), such that the following holds for any \(\delta\in(0,1]\). Define
\(\varepsilon \triangleq
\sup_{k\in\llbracket 1,K\rrbracket} \mathcal{E}(\hat p_k,p(\theta_k))\), where \(\mathcal{E}\) is the
Sobolev-type error defined in Appendix~\ref{subsec:proof_theorem}, and set
\(\lambda =
c_2\log\frac{2}{\delta}
\big(N^{-1}\log^2(N\delta^{-1})
+K^{-1}\log^2(K\delta^{-1})\big)\). If
\begin{align}
\varepsilon
\leq
\frac{1}{N}\log^2\frac{N}{\delta}
+
\frac{1}{K}\log^2\frac{K}{\delta},
\end{align}
then, with probability at least \(1-\delta\),
\begin{align}
\|p_{\hat b,\hat a}-p\|_{L^2(\mathbb{P}_c)L^2([0,T]\times\mathbb R^n)}
\leq
c_1\log\frac{2}{\delta}
\left(
\frac{\log(N/\delta)}{\sqrt N}
+
\frac{\log(K/\delta)}{\sqrt K}
\right).
\end{align}
\end{theorem}

\begin{sproof}
The proof decomposes the error of the proposed estimator into three components:
\begin{enumerate}
    \item the error due to the estimation of the probability densities
    \((\hat p_k)_{k=1}^K \approx (p(\theta_k))_{k=1}^K\);
    \item the error due to the finite-sample approximation of the
    Fokker--Planck residual over \([0,T]\times D\);
    \item the error due to the finite-sample approximation over the control
    space \(\bmH\).
\end{enumerate}
Each error component is then bounded by representing all quantities as norms of random linear operators, followed by appropriate decomposition and the application of Bernstein inequalities for sums of operators.
\end{sproof}

Theorem~\ref{thm:lr} gives a baseline finite-sample guarantee for the density flow induced by the learned coefficients. Provided the first-stage density estimators are accurate enough, the error decreases at rate \(N^{-1/2}+K^{-1/2}\), up to logarithmic factors. The two terms correspond to the two empirical approximations in Fokker--Planck matching: sampling state-time points and sampling controls.

\begin{remark}[Dependence on \ensuremath{K} and \ensuremath{N}] Although the Fokker--Planck matching objective is evaluated on \(KN\) pairs \((\theta_k,z_i)\), the rate in Theorem~\ref{thm:lr} is \(K^{-1/2}+N^{-1/2}\), up to logarithmic factors, not \((KN)^{-1/2}\). This reflects the product structure of the empirical approximation: controls and state-time points approximate two different integrals. Increasing \(N\) improves the approximation over \([0,T]\times D\), while increasing \(K\) improves the approximation over the control distribution. Thus, for fixed \(K\), the bound does not vanish as \(N\to\infty\). \end{remark}

% \begin{remark}[Role of \ensuremath{\mathbb{P}_c}] The bound in Theorem~\ref{thm:lr} is measured in \(L^2(\mathbb{P}_c)\) over controls. Its strength therefore depends on \(\mathbb{P}_c\): errors on controls with small \(\mathbb{P}_c\)-mass have limited effect, while controls outside the support of \(\mathbb{P}_c\) are not controlled. Thus, \((\bmH,\mathbb{P}_c)\) should reflect the controls for which the learned dynamics will be used. In this work, this pair is fixed externally; we do not study adaptive input selection or optimal experiment design. Similarly, \(p_0\) determines which part of the state space is explored by the observed trajectories. \end{remark}

\begin{remark}[Role of \ensuremath{\mathbb{P}_c}]
The bound in Theorem~\ref{thm:lr} is measured in \(L^2(\mathbb{P}_c)\)
over control parameters, equivalently over controls through the pushforward
of \(\mathbb{P}_c\) by \(\theta\mapsto u_\theta\). Its strength therefore
depends on \(\mathbb{P}_c\): errors on controls with small
\(\mathbb{P}_c\)-mass have limited effect, while controls outside its support
are not controlled. Thus, \((\Theta,\mathbb{P}_c)\) should reflect the
controls for which the learned dynamics will be used. Similarly, \(p_0\)
affects which regions of the state space are explored, and therefore the
parts of the dynamics that can be accurately learned. We treat both choices
as fixed and do not study experimental design.
\end{remark}

% \begin{remark}[Role of \ensuremath{\mathbb{P}_c}]
% The bound in Theorem~\ref{thm:lr} is measured in \(L^2(\mathbb{P}_c)\) over
% controls. Hence its strength depends on the sampling distribution
% \(\mathbb{P}_c\): errors on controls that have small \(\mathbb{P}_c\)-mass have
% limited effect on the bound, while controls outside the support of
% \(\mathbb{P}_c\) are not controlled. Thus, \((\bmH,\mathbb{P}_c)\) should be chosen
% to reflect the controls for which the learned dynamics will be used. In this
% work, this pair is fixed externally; we do not study adaptive input selection or
% optimal experiment design. Similarly, the initial distribution \(p_0\) determines
% which part of the state space is explored by the observed trajectories.
% \end{remark}

\begin{remark}[Sampling over \ensuremath{[0,T]\times D}]
In Theorem~\ref{thm:lr}, the Fokker--Planck residual is sampled on the
same domain \([0,T]\times D\) for all controls. This follows
Assumption~\ref{as:p_s}: the coefficients are localized on a fixed bounded
state domain \(D\subset\mathbb R^n\), independently of the control. One
could instead allow a control-dependent domain \(D(u)\), provided the
localization and sampling assumptions are restated accordingly. We leave
this extension to future work.
\end{remark}

\subsection{Refined \texorpdfstring{\(L^{2}\)}{L2} Learning Rates}\label{subsec:refined_L2_rates}

% Applying additional regularity conditions that finely measure the effective dimension of the learning problem generally enables the derivation of refined learning rates. In the context of kernel ridge regression, this is standardly achieved by measuring the regularity of the features and of the target, leading to learning rates that are adaptive to the strength of these regularities \citep{caponnetto2007optimal, ciliberto2020general}. This section focuses on such assumptions tailored to our specific least-squares problem: FP matching.

Refined rates in kernel ridge regression usually follow from two quantities:
the regularity of the regression target and the effective dimension of the
feature distribution \citep{caponnetto2007optimal, ciliberto2020general}.
We apply this principle to the kernel least-squares problem induced by
Fokker--Planck matching.

\paragraph{Fokker--Planck regression representation.}
Once the density \(p\) is fixed, the RKHS coefficient model of
Section~\ref{sec:proposed_method} represents every candidate pair
\((b,a)\in\mathcal F\) by a parameter \(w_{b,a}\in\bmG_n\) such that
\[
    L_{(b,a),\theta}p(\theta,t,x)
    =
    \langle w_{b,a},\tilde\phi(\theta,t,x)\rangle_{\bmG_n}
\]
for almost every \((\theta,t,x)\in\Theta\times[0,T]\times D\). Here
\(\bmG_n\) is the induced Hilbert space and
\(\tilde\phi:\Theta\times[0,T]\times D\to\bmG_n\) is the induced
Fokker--Planck feature map; both are explicitly constructed in
Appendix~\ref{subsec:preli}. Thus, the regression input is
\((\theta,t,x)\), the target is \(\partial_t p(\theta,t,x)\), and the
feature map is \(\tilde\phi\). Moreover, under Assumption~\ref{as:attain}, let \(w\in\bmG_n\) be the parameter
associated with the true coefficients. Then, because \(p\) solves the
Fokker--Planck equation,
\(\partial_t p(\theta,t,x)=
\langle w,\tilde\phi(\theta,t,x)\rangle_{\bmG_n}\) almost everywhere.

% Under Assumption~\ref{as:attain}, let \(w\triangleq w_{b,a}\) denote the
% parameter associated with the true coefficients. Since \(p\) solves the
% Fokker--Planck equation,
% \[
%     \partial_t p(\theta,t,x)
%     =
%     \langle w,\tilde\phi(\theta,t,x)\rangle_{\bmG_n}
% \]
% for almost every \((\theta,t,x)\in\Theta\times[0,T]\times D\).

% \paragraph{Induced regression representation.} The RKHS coefficient model of Section~\ref{sec:proposed_method} induces a least-squares regression form for Fokker--Planck matching. Its feature map \(\tilde\phi\), derived in Section~\ref{subsec:preli}, is built from the RKHS feature map for the coefficients, the density \(p\), and the spatial derivatives in the Fokker--Planck operator. Under Assumption~\ref{as:attain}, there exists \(w\in\bmG_n\) such that \[ \partial_t p(\theta,t,x) = \langle w,\tilde\phi(\theta,t,x)\rangle_{\bmG_n} \] for almost every \((\theta,t,x)\in\Theta\times[0,T]\times D\).

\begin{assumption}[Source condition]\label{as:source} Define \[ C \triangleq \mathbb E_{\theta\sim\mathbb{P}_c,\,(t,x)\sim\mathrm{Unif}([0,T]\times D)} \big[(\tilde\phi\otimes\tilde\phi)(\theta,t,x)\big]. \] There exists \(\alpha\in[0,1]\) such that \begin{align} \|C^{-\alpha/2}w\|_{\bmG_n}<\infty . \end{align} \end{assumption}

Assumption~\ref{as:source} is the standard kernel-ridge source condition, adapted to Fokker--Planck matching. The case \(\alpha=0\) reduces to Assumption~\ref{as:attain}. Concrete examples are discussed in Section~\ref{subsec:assumptions_examples}.

\begin{assumption}[Embedding conditions]\label{as:emb} For any \(\theta\in\Theta\) and \((t,x)\in[0,T]\times D\), define \[ C(\theta) \triangleq \E_{(t,x)\sim \mathrm{Unif}([0,T]\times D)} \big[(\tilde\phi\otimes\tilde\phi)(\theta,t,x)\big], \qquad C(t,x) \triangleq \mathbb E_{\theta\sim\mathbb{P}_c} \big[(\tilde\phi\otimes\tilde\phi)(\theta,t,x)\big]. \] \begin{itemize}[label={}] \item \textbf{(A8.1)} There exist \(r\in[0,1]\) and \(c>0\) such that \[ \|C(\theta)^{-r/2}\tilde\phi(\theta,t,x)\|_{\bmG_n}\le c \quad \mathbb{P}_c\text{-a.e. } \theta,\ \text{Lebesgue-a.e. }(t,x)\in[0,T]\times D . \] \item \textbf{(A8.2)} There exist \(s\in[0,1]\) and \(c>0\) such that \[ \|C(t,x)^{-s/2}\tilde\phi(\theta,t,x)\|_{\bmG_n}\le c \quad \mathbb{P}_c\text{-a.e. } \theta,\ \text{Lebesgue-a.e. }(t,x)\in[0,T]\times D . \] \end{itemize} \end{assumption}

Assumption~\ref{as:emb} adapts standard embedding conditions for kernel methods \citep{pillaud2018statistical, fischer2020sobolev, berthier2020tight} to our product-sampling structure. The conditions are imposed on the Fokker--Planck feature \(\tilde\phi\), not on the original RKHS feature map. They are also separated: \(C(\theta)\) averages over state-time points for a fixed control, while \(C(t,x)\) averages over controls for a fixed state-time point. Thus \(r\) and \(s\) measure embedding strength in the state-time and control directions. The case \(r=s=0\) holds whenever \(\tilde\phi\) is bounded; larger values encode stronger regularity. Concrete examples are given in Section~\ref{subsec:assumptions_examples}.

% Assumption~\ref{as:emb} adapts standard embedding conditions for kernel methods
% \citep{pillaud2018statistical, fischer2020sobolev, berthier2020tight}
% to two specific features of our problem. First, the conditions are imposed on
% the Fokker--Planck matching feature \(\tilde\phi\), rather than on the original
% RKHS feature map used to model the coefficients. Second, they are separated
% according to the product sampling structure of the estimator: \(C(\theta)\) averages
% the feature covariance over state-time points for a fixed control, whereas
% \(C(t,x)\) averages it over controls for a fixed state-time point. Thus \(r\)
% and \(s\) measure the embedding strength in the state-time and control
% directions, respectively. This separation allows the analysis to track the two
% sources of statistical error separately. These conditions hold with \(r=s=0\)
% whenever \(\tilde\phi\) is bounded, and become stronger as \(r\) and \(s\)
% increase. Concrete examples are discussed in
% Section~\ref{subsec:assumptions_examples}.

\begin{remark}[Product-space embedding and capacity] Assumption~\ref{as:emb} implies a corresponding product-space condition. By Jensen's operator inequality \citep{hansen2003jensen}, \( C(\theta) \preccurlyeq c\,\mathbb E_{t,x}[C(t,x)^s] \preccurlyeq c\,C^s, \qquad \|C^{-rs/2}\tilde\phi(\theta,t,x)\|_{\bmG_n}\le c . \) Consequently, \[ \Tr(C^{1-rs}) = \mathbb E\!\left[ \|C^{-rs/2}\tilde\phi\|_{\bmG_n}^2 \right] \le c. \] Thus Assumption~\ref{as:emb} implies the usual capacity condition on the product space with exponent \(rs\). The embedding condition is stronger than this trace condition: in noisy kernel ridge regression, capacity conditions improve rates from \(N^{-1/4}\) to \(N^{-1/2}\), while embedding conditions can yield arbitrarily fast polynomial rates in the noiseless setting. \end{remark}

We now derive refined rates for the estimator. Throughout, for real numbers \(x,y\), we write
\(x\wedge y\triangleq\min\{x,y\}\) and
\(x\vee y\triangleq \max\{x,y\}\).

\begin{theorem}[Refined \(L^2\) learning rates]\label{th:refined}
Suppose that the assumptions of Theorem~\ref{thm:lr} hold, together with
Assumptions~\ref{as:source} and~\ref{as:emb}. Then there exist constants
\(c_1,c_2>0\), independent of \(N\), \(K\), and \(\delta\), such that the
following holds for any \(\delta\in(0,1]\). Define
\(
\varepsilon_\infty
\triangleq
\sup_{k\in\llbracket 1,K\rrbracket}
\mathcal{E}_{\infty}(\hat p_k,p(\theta_k)),
\)
where \(\mathcal{E}_{\infty}\) is the Sobolev-type error defined in
Appendix~\ref{subsec:proof_refined_L2}. Set
\(
\lambda
=
c_2\left[
\left(\frac{18c}{N}\log^2\frac{N}{\delta}\right)^{\frac{1}{1-r}}
+
\left(\frac{18c}{K}\log^2\frac{K}{\delta}\right)^{\frac{1}{1-s}}
\right].
\)
If
\begin{align}
    \varepsilon + N^{-1}\varepsilon_\infty
    \leq
    \left(\frac{\log(N/\delta)}{\sqrt N}\right)^{\frac{2+\alpha}{1-r}}
    +
    \left(\frac{\log(K/\delta)}{\sqrt K}\right)^{\frac{2+\alpha}{1-s}},
\end{align}
then, with probability at least \(1-\delta\),
\begin{align}
    \|p_{\hat b,\hat a}-p\|_{L^2(\mathbb{P}_c)L^2([0,T]\times\R^n)}
    \leq
    c_1\log\frac{2}{\delta}
    \left(
    \frac{\log((K\wedge N)/\delta)}{\sqrt{K\wedge N}}
    \right)^{\frac{\alpha+1}{1-(r\wedge s)}}.
\end{align}
\end{theorem}

Theorem~\ref{th:refined} shows how Fokker--Planck matching benefits from
regularity. The source parameter \(\alpha\) measures the regularity of
\(\partial_t p\), while \(r\) and \(s\) measure the embedding strength of
the induced feature map in the state-time and control directions. The
smaller of \(r\) and \(s\) is the bottleneck. As \(r\wedge s\) approaches
one, the Fokker--Planck matching rate can become arbitrarily fast
polynomial. As before, the overall accuracy is
limited by the first-stage density estimation errors.

\subsection{\texorpdfstring{\(L^{\infty}\)}{L-infinity} Learning Rates}\label{subsec:Linfty_rates}

The previous bounds control the density-flow error on average over controls.
This may be insufficient when the learned dynamics are later evaluated for a
particular control. Below, we provide a uniform-in-control
guarantee, while remaining \(L^2\) in time and state.

\begin{theorem}[Uniform-in-control learning rates]
\label{thm:Linfinite}
Under the same conditions as Theorem~\ref{th:refined}, there exists a constant
\(c>0\), independent of \(N\), \(K\), and \(\delta\), such that for any
\(\delta \in (0,1]\), with probability at least \(1-\delta\),
\begin{align}
    \|p_{\hat b,\hat a} - p\|_
    {L^{\infty}(\mathbb{P}_c)L^2([0,T]\times\mathbb R^n)}
    \leq
    c \log \frac{2}{\delta}
    \left(
        \frac{\log \frac{K \wedge N}{\delta}}
        {\sqrt{K \wedge N}}
    \right)^{
        \frac{\alpha + r \wedge s}{1 - r \wedge s}
    }.
\end{align}
\end{theorem}

The \(L^\infty(\mathbb{P}_c)\) norm is an essential supremum over controls sampled from \(\mathbb{P}_c\). If \(\mathbb{P}_c\) has full support on \(\Theta\) and the density-flow error is continuous in \(\theta\), this corresponds to a uniform bound over the admissible control family in the \(L^2([0,T]\times\mathbb R^n)\) norm. Theorem~\ref{thm:Linfinite} strengthens Theorems~\ref{thm:lr} and~\ref{th:refined} by replacing the averaged-in-control norm with an essential supremum over controls, but at a slower rate: the exponent \((\alpha+1)/(1-r\wedge s)\) becomes \((\alpha+r\wedge s)/(1-r\wedge s)\).

\subsection{CVaR Learning Rates}\label{subsec:cvar_lr}

The \(L^\infty\)-in-control guarantee controls the density-flow error under each admissible control. In safety-sensitive settings, one may also want to control tail-sensitive quantities of the learned dynamics. We show that, under a moment condition, the \(L^\infty\) rates imply guarantees for Conditional Value at Risk.

\paragraph{Conditional Value at Risk.} For \(\rho\in(0,1]\), the Conditional Value at Risk of a real-valued random variable \(Y\) is \begin{align} \mathrm{CVaR}_{\rho}(Y) \triangleq \inf_{\eta\in\mathbb R} \left\{ \eta + \rho^{-1}\mathbb E[(Y-\eta)_+] \right\}. \end{align} When \(Y\) is a loss, \(\mathrm{CVaR}_{\rho}(Y)\) is the expected loss in the worst \(\rho\)-fraction of outcomes.

% \begin{remark}[Risk-averse control] \label{rk:risk_averse} A typical risk-averse control objective is \begin{align} \inf_{u\in\bmH} \mathbb E[f(X_u(T))] + \lambda D(f(X_u(T))), \end{align} where \(f\) is a terminal loss, \(D\) is a risk measure, and \(\lambda>0\) balances average performance and risk sensitivity. Compared with the risk-neutral case \(\lambda=0\), the second term penalizes risk. Choosing \(D\) as CVaR emphasizes adverse tail events and is widely used in risk management and stochastic control \citep{shapiro2021lectures, miller2017optimal}. \end{remark}

\begin{remark}[Risk-averse control] \label{rk:risk_averse}
A typical risk-averse control objective is
\begin{align}
\inf_{\theta\in\Theta}
\mathbb E[f(X_{b,\sigma}^{\theta}(T))]
+
\lambda D(f(X_{b,\sigma}^{\theta}(T))) ,
\end{align}
where \(f\) is a terminal loss, \(D\) is a risk measure, and
\(\lambda>0\) balances average performance and risk sensitivity. Compared
with the risk-neutral case \(\lambda=0\), the second term penalizes risk.
Choosing \(D\) as CVaR emphasizes adverse tail events and is widely used in
risk management and stochastic control
\citep{shapiro2021lectures, miller2017optimal}.
\end{remark}

\begin{lemma}[CVaR stability in total variation] \label{lem:cvar_l1}
Let \(f\in L^\infty(\mathbb R^n)\), let \(\rho\in(0,1]\), and let \(X_1,X_2\) be
\(\mathbb R^n\)-valued random variables with densities \(p_1,p_2\). Assume that the laws
of \(f(X_1)\) and \(f(X_2)\) have no atom at their value-at-risk levels. Then
\begin{align}
\left| \mathrm{CVaR}_{\rho}(f(X_1)) - \mathrm{CVaR}_{\rho}(f(X_2)) \right|
\leq
\frac{2\|f\|_\infty}{\rho}
\|p_1-p_2\|_{L^1(\mathbb R^n)} .
\end{align}
\end{lemma}

\begin{lemma}[\(L^2\)-to-\(L^1\) interpolation with moments] \label{lem:L1_fastdecay} Let \(g\in L^2(\mathbb R^n)\). If, for some \(\beta>0\), \[ \int_{\mathbb R^n} \|x\|^\beta |g(x)|\,dx < \infty, \] then \begin{align} \|g\|_{L^1(\mathbb R^n)} \leq \|g\|_{L^2(\mathbb R^n)}^{\frac{\beta}{\beta+n/2}} \left( 3 + \int_{\mathbb R^n} \|x\|^\beta |g(x)|\,dx \right). \end{align} \end{lemma}

\begin{lemma}[Uniform moment bound] \label{lem:bounded_moment}
Let \(X_{\hat b,\hat\sigma}^{\theta}\) solve the SDE with coefficients
\((\hat b,\hat\sigma)\), where
\(\hat a=\hat\sigma\hat\sigma^\top\), driven by the control \(u_\theta\),
with \(\theta\in\Theta\). Suppose that the well-posedness and localization
assumptions of Section~\ref{subsec:fp_ineq} hold for \((\hat b,\hat a)\).
If \(\mathbb E[\|X_0\|^\beta]<\infty\) for some \(\beta>2\), then there
exists \(C>0\), depending only on the constants defining \(\bmF\), such that
\begin{align}
\sup_{\theta\in\Theta}\sup_{t\in[0,T]}
\mathbb E[\|X_{\hat b,\hat\sigma}^{\theta}(t)\|^\beta]
\leq C .
\end{align}
\end{lemma}

% \begin{lemma}[Uniform moment bound] \label{lem:bounded_moment} Let \(X_{\hat b,\hat\sigma}(u)\) solve the SDE with coefficients \((\hat b,\hat\sigma)\), where \(\hat a=\hat\sigma\hat\sigma^\top\) and \((\hat b,\hat a)\in\bmF\), under a control \(u_\theta\), \(\theta\in\Theta\). Suppose that the well-posedness and localization assumptions of Section~\ref{subsec:fp_ineq} hold for \((\hat b,\hat a)\). If \(\mathbb E[\|X_0\|^\beta]<\infty\) for some \(\beta>2\), then there exists \(C>0\), depending only on the constants defining \(\bmF\), such that \begin{align} \sup_{t\in[0,T]} \mathbb E[\|X_{\hat b,\hat\sigma}(u,t)\|^\beta] \leq C . \end{align} \end{lemma}

Combining these lemmas gives CVaR stability under \(L^2\) density errors.

% \begin{theorem}[CVaR stability] \label{th:cvar} Suppose that the well-posedness and localization assumptions of Section~\ref{subsec:fp_ineq} hold for both \((b,a)\) and \((\hat b,\hat a)\), and that \(\mathbb E[\|X_0\|^\beta]<\infty\) for some \(\beta>2\). Let \(\sigma\) and \(\hat\sigma\) satisfy \(a=\sigma\sigma^\top\) and \(\hat a=\hat\sigma\hat\sigma^\top\). Then, for any \(\theta\in\Theta\), any \(\rho\in(0,1]\), any \(f\in L^\infty(\mathbb R^n)\), and any \(t\in[0,T]\), \begin{align} \left| \mathrm{CVaR}_{\rho} \bigl(f(X_{\hat b,\hat\sigma}(u,t))\bigr) - \mathrm{CVaR}_{\rho} \bigl(f(X_{b,\sigma}(u,t))\bigr) \right| \leq \frac{c\|f\|_\infty}{\rho} \|p_{\hat b,\hat a}(u,t)-p_{b,a}(u,t)\|_ {L^2(\mathbb R^n)}^{\frac{\beta}{\beta+n/2}}, \end{align} where \(c>0\) depends only on the constants in the assumptions, not on \(u\), \(t\), \(f\), or \(\rho\). \end{theorem}

\begin{theorem}[CVaR stability] \label{th:cvar}
Suppose that the well-posedness and localization assumptions of
Section~\ref{subsec:fp_ineq} hold for both \((b,a)\) and
\((\hat b,\hat a)\), and that
\(\mathbb E[\|X_0\|^\beta]<\infty\) for some \(\beta>2\).
Let \(\sigma\) and \(\hat\sigma\) satisfy
\(a=\sigma\sigma^\top\) and
\(\hat a=\hat\sigma\hat\sigma^\top\).
Assume moreover that the atomlessness condition of
Lemma~\ref{lem:cvar_l1} holds for
\(f(X_{\hat b,\hat\sigma}^{\theta}(t))\) and
\(f(X_{b,\sigma}^{\theta}(t))\). Then, for any \(\theta\in\Theta\), any
\(\rho\in(0,1]\), any \(f\in L^\infty(\mathbb R^n)\), and any \(t\in[0,T]\),
\begin{align}
\left|
\mathrm{CVaR}_{\rho}
\bigl(f(X_{\hat b,\hat\sigma}^{\theta}(t))\bigr)
-
\mathrm{CVaR}_{\rho}
\bigl(f(X_{b,\sigma}^{\theta}(t))\bigr)
\right|
\leq
\frac{c\|f\|_\infty}{\rho}
% \|p_{\hat b,\hat a}(\theta,t)-p_{b,a}(\theta,t)\|_
% {L^2(\mathbb R^n)}^{\frac{\beta}{\beta+n/2}} .
\|p_{\hat b,\hat a}(\theta,t,\cdot)
-
p_{b,a}(\theta,t,\cdot)\|_
{L^2(\mathbb R^n)}^{\frac{\beta}{\beta+n/2}} .
\end{align}
The constant \(c>0\) depends only on the constants in the assumptions and on
the control family, and is independent of \(\theta\), \(t\), \(f\), and \(\rho\).
\end{theorem}

Combining Theorem~\ref{th:cvar} with Theorem~\ref{thm:Linfinite} gives
high-probability CVaR learning rates in \(L^2([0,T])\) over time. Up to multiplicative constants, the resulting bound is
\[
\frac{1}{\rho}
\left[
\log\frac{2}{\delta}
\left(
\frac{\log((K\wedge N)/\delta)}{\sqrt{K\wedge N}}
\right)^{\frac{\alpha+r\wedge s}{1-r\wedge s}}
\right]^{\frac{\beta}{\beta+n/2}} .
\]

% Combining Theorem~\ref{th:cvar} with Theorem~\ref{thm:Linfinite} gives high-probability CVaR learning rates. Under the assumptions of Theorem~\ref{thm:Linfinite} and the moment condition above, the CVaR error is bounded, up to multiplicative constants, by \begin{align} \frac{1}{\rho} \left[ \log\frac{2}{\delta} \left( \frac{\log\frac{K\wedge N}{\delta}} {\sqrt{K\wedge N}} \right)^{ \frac{\alpha+r\wedge s}{1-r\wedge s} } \right]^{ \frac{\beta}{\beta+n/2} } . \end{align}

\subsection{Application to Sobolev Spaces}\label{subsec:assumptions_examples}

We now give a Sobolev setting in which the abstract assumptions of the previous sections
can be checked explicitly. Sobolev regularity has already been used above to state the
well-posedness and Fokker--Planck stability assumptions. Here the point is different: we
use Sobolev spaces to verify the source and embedding assumptions of Section~4.2, and
therefore to obtain rates with explicit dependence on the dimensions \(m\) and \(n\) and on
the smoothness parameter \(\nu\).
% We now instantiate the abstract assumptions of the previous sections for Sobolev classes of coefficients and smoothly parametrized controls. This example makes explicit how the learning rates depend on the state dimension, the dimension of the control parametrization, and the Sobolev regularity.

% Sobolev regularity plays two different roles in the paper. For the Fokker--Planck analysis, it is an analytic assumption on the SDE coefficients and densities. For the learning rates, it is used to verify statistical assumptions on the induced regression problem, namely the source and embedding conditions.

\begin{lemma}[Sobolev instantiation] \label{lem:induced_emb} Let \(\Theta\subset\mathbb R^m\), \(D\subset\mathbb R^n\), and \(V\subset\mathbb R^d\) be bounded Lipschitz domains. Let \(\nu,\tau\in\mathbb N\) satisfy \(\nu>\max\{m,n+1\}/2\) and \(\tau>2\nu+2+(n+d+1)/2\). Assume that \[ (\theta,t)\mapsto u_\theta(t) \in H^{2\nu+2}(\Theta\times[0,T];\mathbb R^d), \qquad u_\theta(t)\in V, \] and \[ (\theta,t,x)\mapsto p(\theta,t,x) \in H^{2\nu+2}(\Theta\times[0,T]\times D). \] Assume that \(\mathbb{P}_c\) is the uniform probability measure on \(\Theta\). Choose the coefficient hypothesis space to be the Sobolev RKHS of order \(\tau\) on \([0,T]\times D\times V\), with the same ellipticity and localization constraints as in the estimator, and assume attainability. Then the source condition holds with \(\alpha=0\), and the embedding assumptions hold for \[ r<1-\frac{n+1}{2\nu}, \qquad s<1-\frac{m}{2\nu}. \] \end{lemma}

\begin{remark}[Regularity of the density flow]\label{rk:density_regularity} Lemma~\ref{lem:induced_emb} assumes Sobolev regularity of the density flow with respect to the control parameter: \[ (\theta,t,x)\mapsto p(\theta,t,x) \in H^{2\nu+2}(\Theta\times[0,T]\times D). \] This is a regularity assumption on the dependence of the Fokker--Planck solution on the parametrized control. It is natural when the composed coefficients \((t,x,\theta)\mapsto (b,a)(t,x,u_\theta(t))\) are sufficiently smooth in \((t,x,\theta)\). A full proof of this parameter-regularity property is beyond the scope of this paper; related parabolic regularity results can be found in \citet[Section~8.3.1]{lunardi2012analytic} and \citet[Theorem~7]{friedman2008partial}. \end{remark}

We obtain the following Sobolev-specialized rate.

\begin{corollary}[Sobolev coefficients]\label{cor:sob}
Under the assumptions of Lemma~\ref{lem:induced_emb}, suppose that the density
estimation error is sufficiently small in the sense required by
Theorem~\ref{thm:Linfinite}. Then, for every \(\zeta>0\), there exists
\(c>0\), independent of \(N\), \(K\), and \(\delta\), such that, with
probability at least \(1-\delta\),
\begin{align}
\|p_{\hat b,\hat a}-p\|_ {L^\infty(\mathbb{P}_c)L^2([0,T] \times \mathbb R^n)}
\leq c\log\frac{2}{\delta}
\left(
\frac{\log\frac{K\wedge N}{\delta}}
{\sqrt{K\wedge N}}
\right)^{
\frac{2\nu}{m\vee(n+1)}-1-\zeta
}.
\end{align}
\end{corollary}

\begin{proof}
Apply Lemma~\ref{lem:induced_emb} to Theorem~\ref{thm:Linfinite}. By
Lemma~\ref{lem:induced_emb}, \(r\wedge s\) can be chosen arbitrarily close to
\[
1-\frac{m\vee(n+1)}{2\nu},
\]
but strictly smaller. Since \(\alpha=0\), Theorem~\ref{thm:Linfinite} gives
any exponent strictly smaller than
\[
\frac{1-\frac{m\vee(n+1)}{2\nu}}
{\frac{m\vee(n+1)}{2\nu}}
=
\frac{2\nu}{m\vee(n+1)}-1.
\]
This gives the claim.
\end{proof}

% \begin{corollary}[Sobolev coefficients]\label{cor:sob} Under the assumptions of Lemma~\ref{lem:induced_emb}, suppose that the density estimation error is sufficiently small in the sense required by Theorem~\ref{thm:Linfinite}. Then there exists \(c>0\), independent of \(N\), \(K\), and \(\delta\), such that, with probability at least \(1-\delta\), \begin{align} \|p_{\hat b,\hat a}-p\|_ {L^\infty(\mathbb{P}_c)L^2([0,T] \times \mathbb R^n)} \leq c\log\frac{2}{\delta} \left( \frac{\log\frac{K\wedge N}{\delta}} {\sqrt{K\wedge N}} \right)^{ \frac{2\nu}{m\vee(n+1)}-1 }. \end{align} \end{corollary}
% \begin{proof} Apply Lemma~\ref{lem:induced_emb} to Theorem~\ref{thm:Linfinite}. Since \(r\wedge s=1-\frac{m\vee(n+1)}{2\nu}\) and \(\alpha=0\), the exponent becomes \[ \frac{r\wedge s}{1-r\wedge s} = \frac{2\nu}{m\vee(n+1)}-1 . \] \end{proof}

Corollary~\ref{cor:sob} shows the nonparametric nature of the problem: the exponent worsens with the state-time and control-parametrization dimensions, and improves with the Sobolev regularity \(\nu\). Below, we suppress the arbitrarily small loss \(\zeta>0\) in the exponent and write the endpoint exponent \(\frac{2\nu}{m\vee(n+1)}-1\) for readability.

% \begin{remark}[Statistical optimality] \label{rk:optimal_rate} When \(N\) is large, Corollary~\ref{cor:sob} gives, up to logarithmic factors, the control-sampling rate \[ K^{-\frac{\nu}{m\vee(n+1)}+\frac12}. \] Here \(\nu\) is the Sobolev regularity of the induced Fokker--Planck regression problem. Obtaining this effective regularity requires stronger upstream smoothness: regularity \(2\nu+2\) for the controls and density flow, and Sobolev order \(Q>2\nu+2+(n+d+1)/2\) for the coefficient hypothesis space. These requirements reflect the composition with the parametrized controls and the spatial derivatives in the Fokker--Planck operator. For comparison, minimax \(L^\infty\) rates for noiseless regression of \(\nu\)-smooth functions on \(d\)-dimensional compact domains are of order \[ \bmO\!\left( \left(\frac{\log^{1/2}K}{\sqrt K}\right)^{2\nu/d} \right) \] \citep{bauer2017nonparametric}. Thus, with \(d=m\vee(n+1)\), our rate has the same dimension--regularity scaling but loses one power in the exponent of \(\log K/\sqrt K\). The Sobolev specialization is only one way to verify the abstract source and embedding assumptions. The estimator does not require \(\alpha,r,s\) to be known in advance: if the induced Fokker--Planck regression problem satisfies better source, embedding, or effective-dimension properties, Theorems~\ref{th:refined} and~\ref{thm:Linfinite} reflect the corresponding faster rates. Identifying such structures, and understanding whether the Sobolev smoothness-transfer requirements and remaining rate loss are intrinsic, is left for future work. \end{remark}

\begin{remark}[Statistical optimality] \label{rk:optimal_rate}
When \(N\) is large, Corollary~\ref{cor:sob} gives, up to logarithmic
factors, the control-sampling rate
\[
    K^{-\frac{\nu}{m\vee(n+1)}+\frac12}.
\]
Here \(\nu\) is the Sobolev regularity of the induced Fokker--Planck
regression problem. Obtaining this effective regularity requires stronger
upstream smoothness: regularity \(2\nu+2\) for the controls and density
flow, and Sobolev order
\(\tau>2\nu+2+(n+d+1)/2\) for the coefficient hypothesis space. These
requirements reflect the composition with the parametrized controls and
the spatial derivatives in the Fokker--Planck operator.

For comparison, minimax \(L^\infty\) rates for noiseless regression of
\(\nu\)-smooth functions on \(d\)-dimensional compact domains are of order
\[
    \bmO\!\left(
    \left(\frac{\log^{1/2}K}{\sqrt K}\right)^{2\nu/d}
    \right)
\]
\citep{bauer2017nonparametric}. Thus, with \(d=m\vee(n+1)\), our rate has
the same dimension--regularity scaling but loses one power in the exponent
of \(\log K/\sqrt K\). The Sobolev specialization is only one way to instantiate the abstract
source and embedding assumptions with explicit exponents. Studying other
RKHS choices for the coefficients, such as Gaussian kernels, and the rates
they imply, is left for future work.
\end{remark}

We also obtain the following sample-complexity consequence.

\begin{corollary}[Number of sampled controls]\label{cor:comp_complexity} Under the assumptions of Lemma~\ref{lem:induced_emb}, let \(N\geq K\). To achieve precision \(\eta>0\) in the norm \[ \|p_{\hat b,\hat a}-p\|_ {L^\infty(\mathbb{P}_c)L^2([0,T] \times \mathbb R^n)} \leq \eta, \] it is sufficient, up to logarithmic factors, to take \begin{align} K = \mathcal O\!\left( \eta^{ -\frac{2(m\vee(n+1))} {2\nu-m\vee(n+1)} } \right). \end{align} \end{corollary}

In particular, if \(\nu\geq m\vee(n+1)\), then \(K=\mathcal O(\eta^{-2})\), up to logarithmic factors.

\begin{remark}[Computational scaling]\label{rk:comp_scaling} The Fokker--Planck matching step uses \(KN\) regression points: \(K\) sampled controls and \(N\) state-time collocation points. The resulting kernel system costs \(O((KN)^3)\) time and \(O((KN)^2)\) memory. Taking \(N=K\) and using \(K=O(\eta^{-2})\) from Corollary~\ref{cor:comp_complexity} gives cost \(O(\eta^{-12})\), up to logarithmic factors. Kernel approximations, such as Nystr\"om methods and random features, can substantially reduce this cost while preserving statistical rates under suitable regularity assumptions, and have scaled kernel methods to datasets with billions of samples \citep{rudi2015less,rudi2017generalization,meanti2020kernel}. Extending the present analysis to such approximations is left for future work. \end{remark}

\begin{remark}[Dependence on \(M\) and \(Q\)] The results above focus on the number \(K\) of sampled controls and on the Fokker--Planck matching step. They assume that the first-stage density estimators \(\hat p_k\) are sufficiently accurate, and therefore do not give a full statistical or computational analysis in terms of the number \(Q\) of trajectories per control and the number \(M\) of observation times. Finite-sample guarantees for density estimation are studied in \citet{bonalli2023non} for the error measure \(\varepsilon\). Extending these bounds to the stronger uniform error \(\varepsilon_\infty\) required by the \(L^\infty\)-in-control results would require additional uniform-in-time and uniform-in-space control, for instance through Sobolev embedding arguments. Here we focus on control sampling and Fokker--Planck matching, the main new elements of the controlled setting and, in our experiments, the main computational cost; see Section~\ref{sec:expe}. \end{remark}

% \begin{remark}[Dependence on \(M\) and \(Q\)]
% The results above focus on the number \(K\) of sampled controls and on the
% Fokker--Planck matching step. They assume that the first-stage density
% estimators \(\hat p_k\) are sufficiently accurate, and therefore do not provide
% a full statistical or computational analysis in terms of the number \(Q\) of
% trajectories per control and the number \(M\) of observation times.
% Finite-sample guarantees for the density-estimation step are studied in
% \citet{bonalli2023non} for the error measure \(\varepsilon\). Extending these
% bounds to the stronger uniform error \(\varepsilon_\infty\) required by the
% \(L^\infty\)-in-control results would require additional uniform-in-time and
% uniform-in-space control, through suitable Sobolev embedding
% arguments. In this work, we instead focus on the control-sampling and
% Fokker--Planck matching step, which are the main new elements of the
% controlled setting and, in our numerical experiments, constitute the main
% computational cost; see Section~\ref{sec:expe}.
% \end{remark}

\begin{remark}[State-dependent controls] \label{rk:closed_loop} For clarity, we focus on deterministic open-loop controls \(u_\theta(t)\). An important extension is to allow state-dependent controls \(u_\theta(t,x)\), as in feedback or closed-loop settings. Then the Fokker--Planck operator is evaluated along \(u_\theta(t,x)\), and the induced feature map contains additional chain-rule terms involving spatial derivatives of the control. Establishing the corresponding guarantees would require additional control regularity assumptions and careful tracking of constants. We leave this extension to future work. \end{remark}

\section{Numerical Experiments}\label{sec:expe}

We present numerical experiments for uncontrolled SDE estimation (Section~\ref{subsec:exp_uncontrolled}) and controlled SDE estimation (Section~\ref{subsec:exp_controlled}). The goal is to illustrate the main behavior of the proposed method.

\subsection{Uncontrolled SDE Estimation}\label{subsec:exp_uncontrolled}

% \paragraph{Purpose of uncontrolled SDE experiments.} We initiate our numerical study by implementing and evaluating the method proposed in \citet{bonalli2023non} for uncontrolled SDE estimation. Note that the method proposed in \citet{bonalli2023non} corresponds to our approach in the limit case where the control space is reduced to a singleton \(\bmH = \{u\}\). Therefore, this study provides important insights for the more intricate scenario of controlled SDEs considered in Section \ref{subsec:exp_controlled}.

\paragraph{Purpose of uncontrolled SDE experiments.}
We begin our numerical study by implementing and evaluating the method proposed in \citet{bonalli2023non} for uncontrolled SDE estimation. The method of \citet{bonalli2023non} corresponds to our approach in the limiting case where the control space is reduced to a singleton \(\bmH = \{u\}\). This study therefore provides useful insight before moving to the controlled SDE setting of Section~\ref{subsec:exp_controlled}.

\paragraph{Python open-source library.}
Our implementation is available as an open-source Python library at \url{lmotte/sde-learn}. The library includes documentation and example scripts with light computational demands. In particular, it supports the Nyström approximation \citep{rudi2015less}, which helps reduce the computational cost of Fokker--Planck matching. Further details, including derivations of the formulas used to implement the estimator and the computational complexity of each step, are given in Appendix~\ref{sec:implementation_uncontrolled}.

\paragraph{Experimental setup.} We consider an isotropic diffusion and use the pointwise positivity constraint described in Section~\ref{sec:proposed_method} for the diffusion coefficient. All kernels are Gaussian kernels \(k(x,y) = \exp(-\gamma \|x-y\|^2)\) with different parameters \(\gamma>0\). We select all hyperparameters by grid search on validation sets. The selection metrics are the log-likelihood for the probability density estimation step, i.e., \(\hat p \mapsto \sum_{l=1}^{M_{val}} \sum_{i=1}^{Q_{val}} \log(\hat p(t_l, x_i))\), and the mean squared error for the Fokker--Planck matching step, i.e., \((\hat b, \hat a) \mapsto \sum_{i=1}^{N} [\frac{\partial p}{\partial t}(t_i, x_i) - \bmL^{\hat b, \hat a} p(t_i, x_i)]^2\). Further details can be found in the code repository.

\paragraph{Computational considerations.} While the computational complexity of each step is given in Appendix~\ref{sec:implementation_uncontrolled}, we also report observed execution times to give a practical sense of scale. These experiments were run on a machine equipped with an Apple M3 Pro processor and 18 GB of RAM. For the 1D case, the probability density estimation step takes 0.015 seconds for training with 1000 sample paths and 100 time steps, and 0.11 seconds for prediction over a Fokker--Planck training set of size 2500. The Fokker--Planck matching step takes 6.0 seconds for training with 2500 data points and 3.7 seconds to generate 100 sample paths with 100 time steps. For the two 2D problems, probability density estimation training takes around 0.012 seconds for 3000 sample paths and 100 time steps, with prediction taking approximately 60 seconds over a Fokker--Planck training set of size 3000. The Fokker--Planck matching step takes about 10 seconds for training and 6 seconds to generate 100 sample paths with 100 time steps. These timings suggest that the method is computationally feasible for the moderate-size uncontrolled problems considered here.

\subsubsection{Linear Scalar SDE}

We first consider Ornstein--Uhlenbeck processes.

\paragraph{Ornstein--Uhlenbeck process.}
The Ornstein--Uhlenbeck (OU) process is a simple mean-reverting stochastic process. It is useful, for example, for modeling phenomena such as price volatility. It is defined by the SDE
\begin{align}
    dX(t) = \theta(\mu - X(t)) dt + \sigma dW(t),
\end{align}
where \(\mu\) is the mean to which the process reverts, \(\theta\) is the mean-reversion coefficient, and \(\sigma\) is the noise amplitude. The probability density of the OU process can be obtained explicitly from the Fokker--Planck equation. In particular, if \(X(0)\) is Gaussian with mean \(\mu_0 \in \R\) and covariance \(\sigma_0^2 > 0\), then \(X(t)\) remains Gaussian, with mean and covariance
\begin{align}
    &\mu(t) \triangleq e^{-\theta t} (\mu_0 -\mu) + \mu,
    &\Sigma(t) \triangleq (\sigma^2_0 - \frac{\sigma^2}{2 \theta})e^{-2 \theta t}  + \frac{\sigma^2}{2 \theta}.
\end{align}

\paragraph{Datasets.}
We consider an Ornstein--Uhlenbeck process with constant variance and mean increasing from \(0.5\) toward \(2.5\) over \(t\in[0,10]\). Specifically, we set \(\mu = 2.5, \theta = 0.5, \sigma^2 = \theta/4, \mu_0 = 0.5, \sigma_0^2 = \sigma^2 / (2\theta)\), and \(T=10\). In Figure~\ref{fig:ornstein_uhlenbeck_samples}, we plot 100 sample paths generated from this OU process. For the probability density estimation steps, we draw training and validation sets with \(Q/Q_{val}=1000/100\) sample paths and \(M/M_{val}=100/100\) time steps, respectively. For the Fokker--Planck matching step, we draw a training set \((t_i, x_i)_{i=1}^N = \{t_i\}_{i=1}^{50} \times \{x_i\}_{i=1}^{50}\) by drawing times and positions uniformly within well-chosen intervals, and we draw a validation set with the same size and distribution.

\paragraph{Step 1 (probability density estimation).}
In Figure~\ref{fig:kde}, we plot the true and estimated probability densities, \(p(t,x)\) and \(\hat{p}(t,x)\), respectively. These densities are plotted on a uniformly spaced temporal grid, offset by \(1/2\) unit from the training time discretization, and on a spatial grid with positions drawn uniformly within an interval. The estimated density gives a good visual approximation of the true dynamics.

\begin{figure}[H]
    \centering
    \begin{minipage}[t]{0.47\textwidth}
        \centering
        \includegraphics[width=\textwidth]{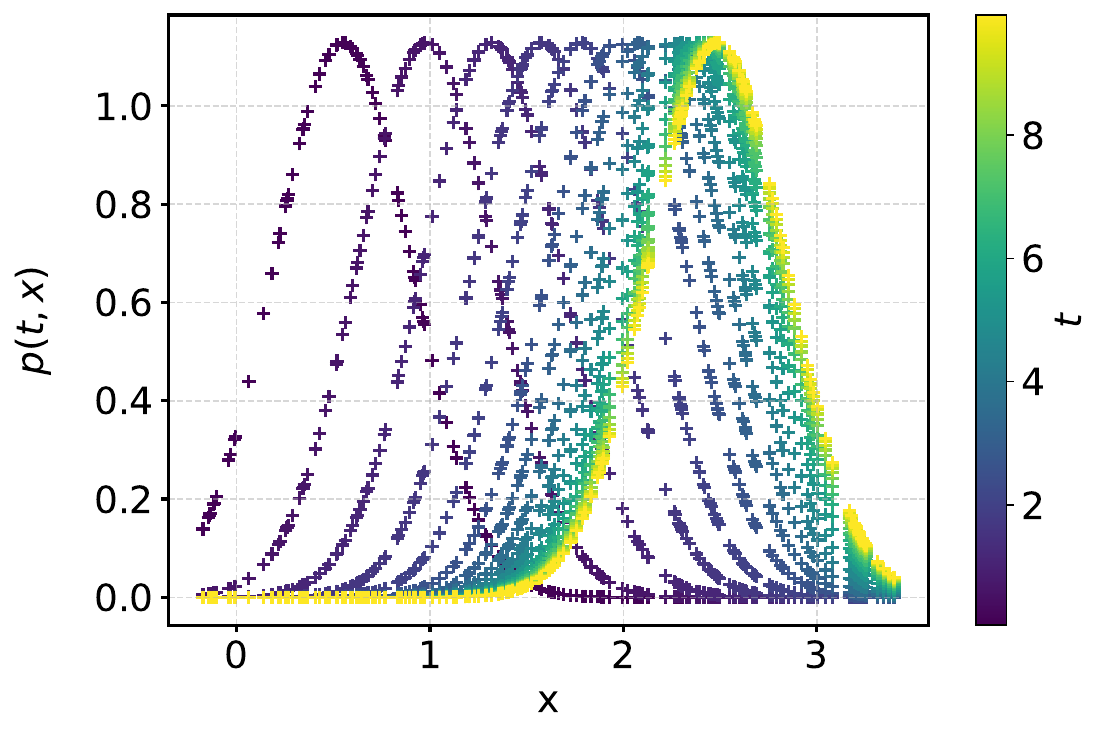}
    \end{minipage}\hfill
    \begin{minipage}[t]{0.47\textwidth}
        \centering
        \includegraphics[width=\textwidth]{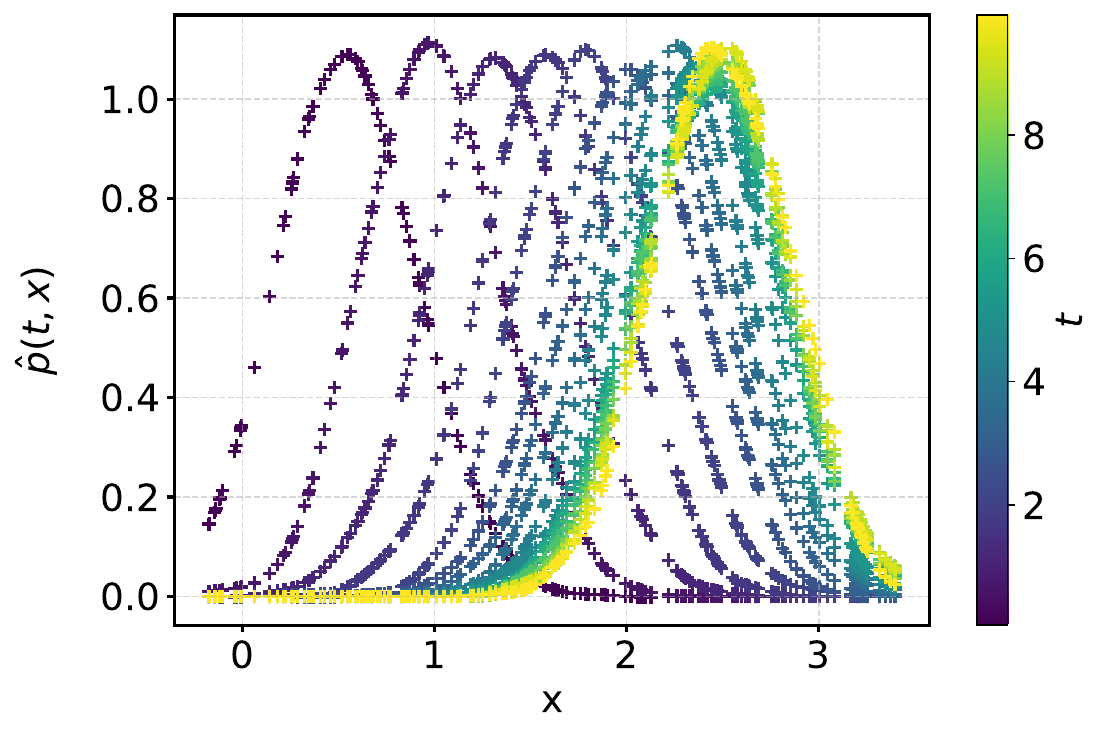}
    \end{minipage}
    \caption{OU process. True and estimated probability densities, $p(t,x)$ \textit{(left)} and $\hat p(t,x)$ \textit{(right)}, with respect to $x$ for several $t \in [0, 10]$.}
    \label{fig:kde}
\end{figure}

\paragraph{Step 2 (coefficient estimation via FP matching).}
In Figure~\ref{fig:estimated_coeff}, we plot the estimated coefficients obtained by FP matching. These coefficients differ significantly from the true coefficients. As discussed in Remark~\ref{rk:non_id}, measuring accuracy through the \(L^2\) distance to the true coefficients is not appropriate because the coefficients are not identifiable from densities alone. Our guarantees do not ensure recovery of the true coefficients; they ensure that the estimated coefficients reproduce the true dynamics at the distributional level. Specifically, they aim for the induced distribution \(p_{\hat{b}, \hat{a}}\) to be close to the true distribution \(p_{b, a}\) in \(L^2\).

\begin{figure}[H]
    \centering
    \begin{minipage}[t]{0.47\textwidth}
        \centering
        \includegraphics[width=\textwidth]{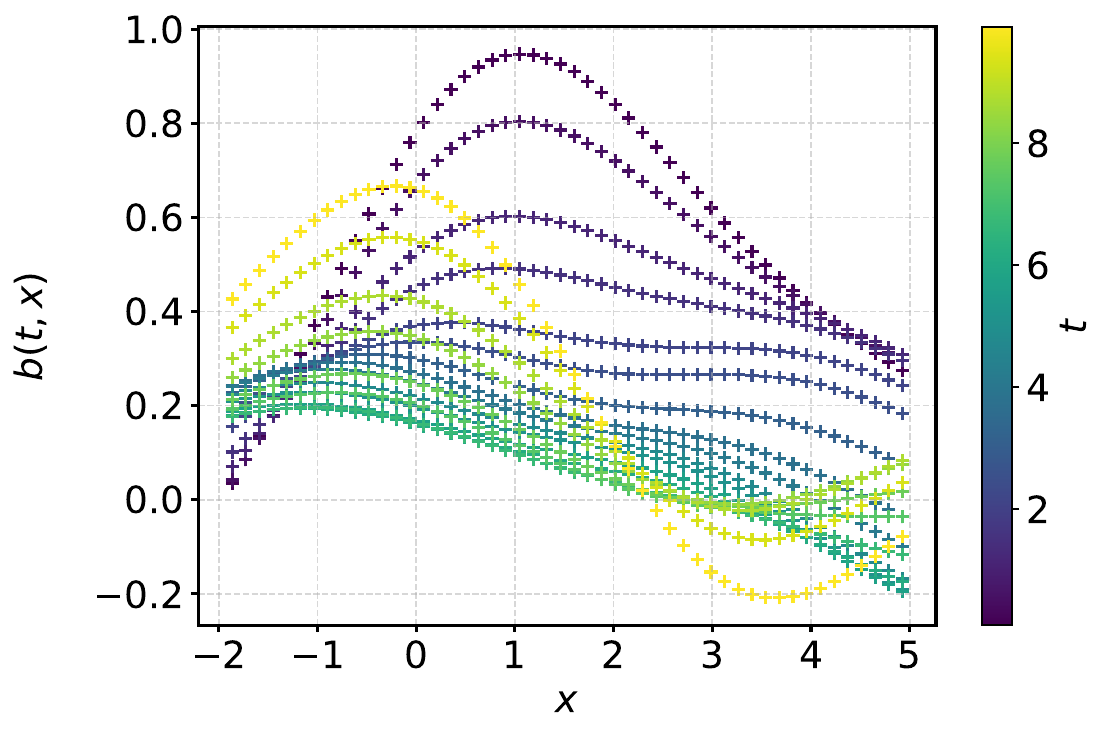} 
    \end{minipage}\hfill
    \begin{minipage}[t]{0.47\textwidth}
        \centering
        \includegraphics[width=\textwidth]{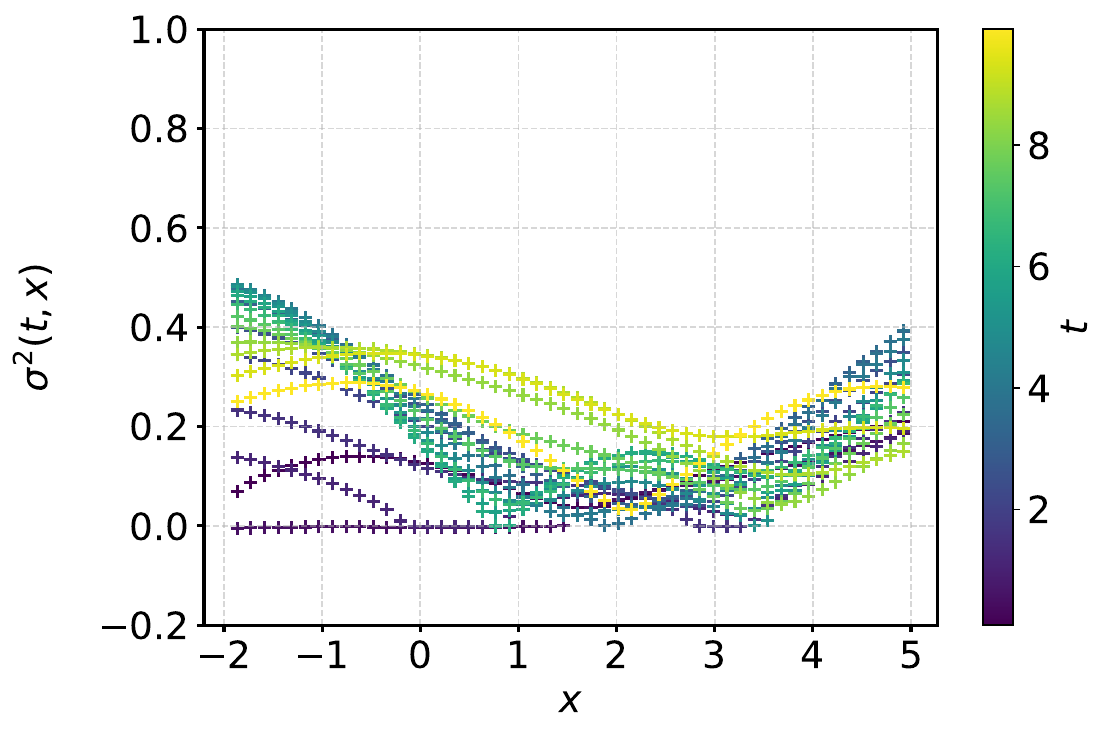} 
    \end{minipage}
    \caption{OU process. Estimated coefficients $\hat{b}(t,x)$ \textit{(left)} and $\hat{a}(t, x)$ \textit{(right)} as functions of $x$ for several \(t \in [0, 10]\).}
    \label{fig:estimated_coeff}
\end{figure}

\paragraph{Recovering the true dynamics.}
We draw and plot 100 sample paths from the true and estimated coefficients in Figure~\ref{fig:ornstein_uhlenbeck_samples} to assess recovery of the true dynamics. The estimated coefficients lead to visually comparable probability distributions, with close means and variances over time. Interestingly, while the distributions are similar, individual paths can look quite different.

\begin{figure}[H]
    \centering
    \begin{minipage}[t]{0.47\textwidth}
        \centering
        \includegraphics[width=\textwidth]{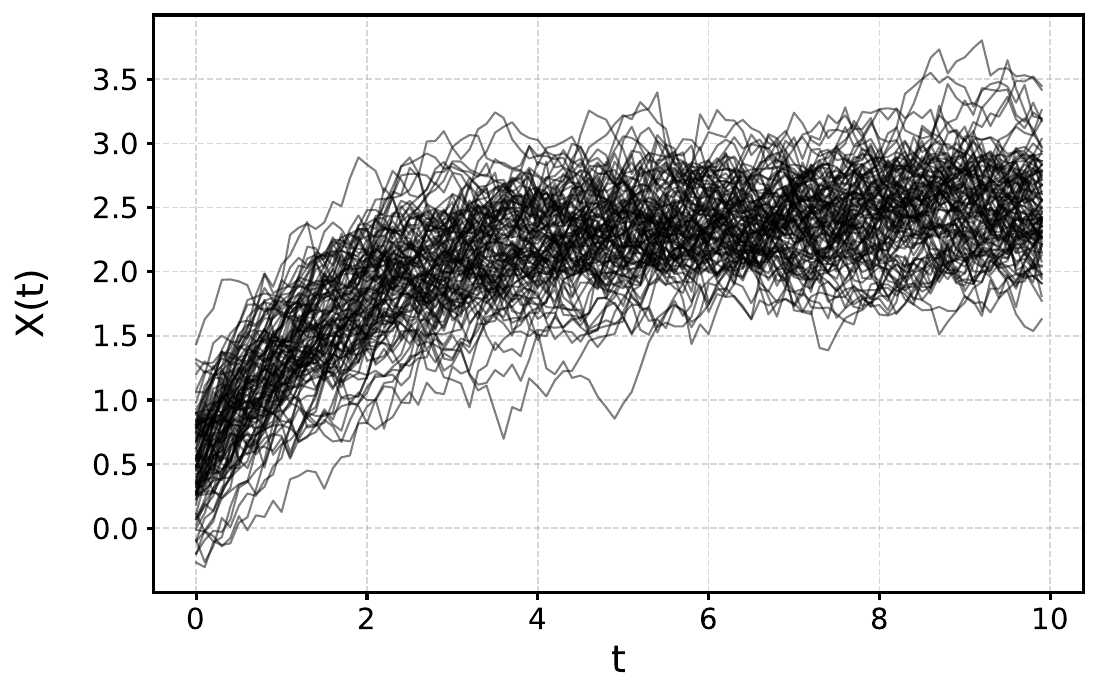} 
    \end{minipage}\hfill
    \begin{minipage}[t]{0.47\textwidth}
        \centering
        \includegraphics[width=\textwidth]{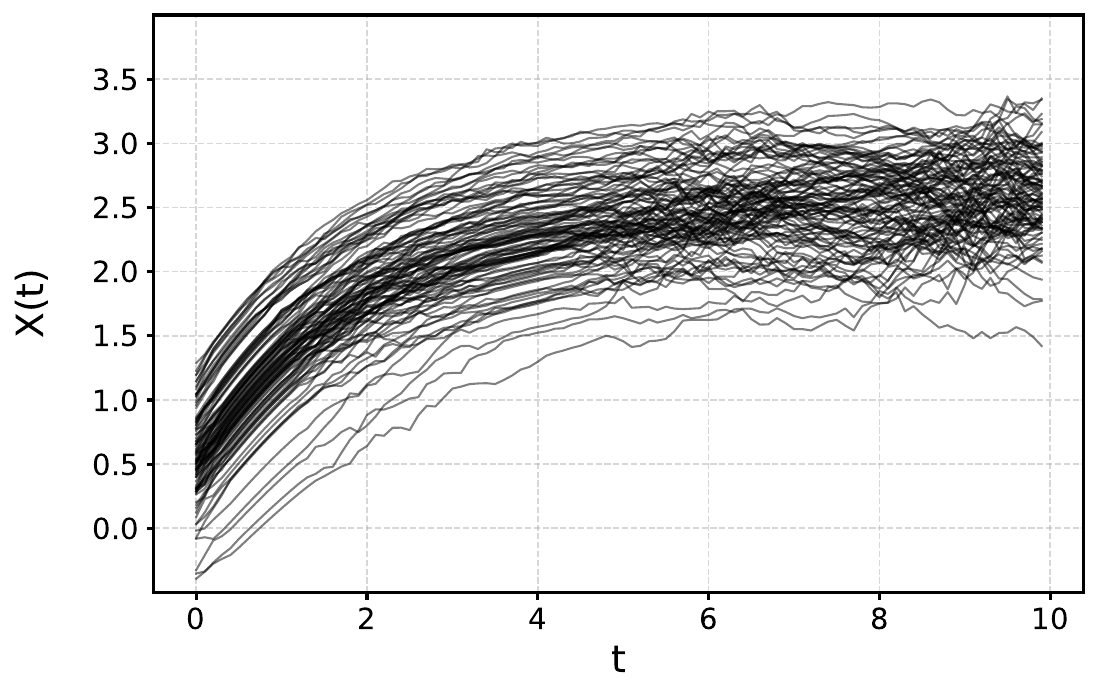} 
    \end{minipage}
    \caption{OU process. 100 sample paths from the SDEs associated with the true \textit{(left)} and estimated \textit{(right)} coefficients.}
    \label{fig:ornstein_uhlenbeck_samples}
\end{figure}

\subsubsection{Nonlinear Multivariate SDE}\label{subsubsec:dubins_finite_exp}

For nonlinear multivariate SDE experiments, we consider two examples: a Dubins process and a finite exponential sum process.

\paragraph{Dubins process.}
The Dubins process is defined by the SDE
\begin{align}\label{eq:mult_sde}
    dX(t) = v \times (\cos(u(t)), \sin(u(t))) dt + \sigma dW(t) \in \R^2,
\end{align}
with \(u(t)=\theta \sin(\frac{\pi t}{10})\), \(v, \theta \in \R\), and \(\sigma > 0\).

\begin{figure}[H]
    \centering

    \begin{subfigure}[t]{0.26\textwidth}
        \centering
        \includegraphics[height=4.5cm]{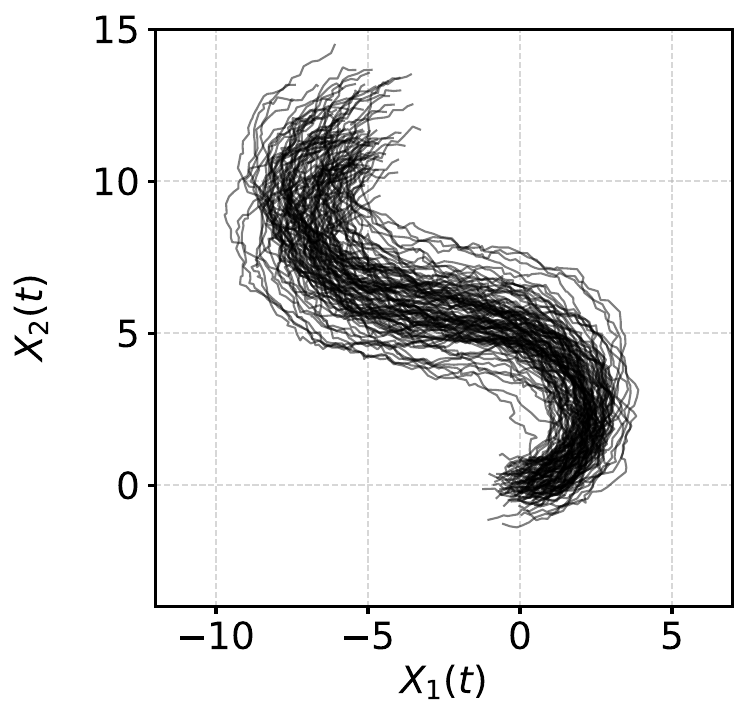}
        \makebox[\linewidth][c]{\hspace*{+1.4cm}\small True SDE samples.}
    \end{subfigure}%
    \hspace{1.2cm}%
    \begin{subfigure}[t]{0.31\textwidth}
        \centering
        \includegraphics[height=4.5cm]{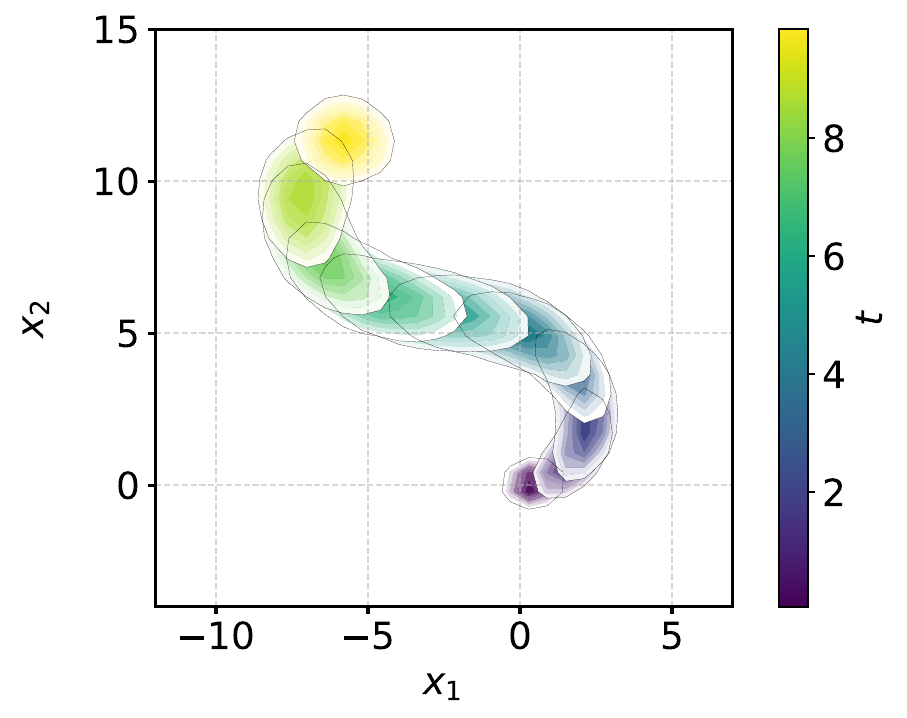}
        \makebox[\linewidth][c]{\hspace*{1.4cm}\small Estimated density.}
    \end{subfigure}%
    \hspace{1.2cm}%
    \begin{subfigure}[t]{0.26\textwidth}
        \centering
        \includegraphics[height=4.5cm]{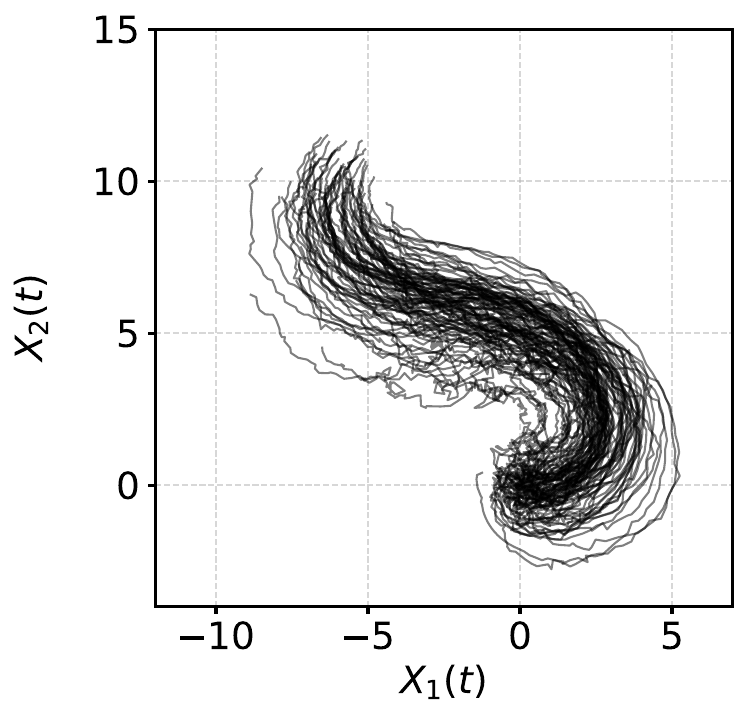}
        \makebox[\linewidth][c]{\hspace*{1.7cm}\small Estimated SDE samples.}
    \end{subfigure}%

    \caption{Dubins process. 100 sample paths from the true and estimated SDEs.}
    \label{fig:2d_exp_dubins}
\end{figure}
% \begin{figure}[H]
%     \centering
%     \begin{minipage}[t]{0.3\textwidth}
%         \centering
%         \includegraphics[height=4.5cm]{figures/512_true_samples_22.pdf} 
%         \caption*{True SDE samples.}
%     \end{minipage}\hfill
%     \begin{minipage}[t]{0.4\textwidth}
%         \centering
%         \includegraphics[height=4.5cm]{figures/512_p_pred_22.pdf}
%          \caption*{Estimated density.}
%     \end{minipage}\hfill
%     \begin{minipage}[t]{0.3\textwidth}
%         \centering
%         \includegraphics[height=4.5cm]{figures/512_samples_22.pdf} 
%         \caption*{Estimated SDE samples.}
%     \end{minipage}
%     \caption{Dubins process. 100 samples from the true and estimated SDEs.}
%     \label{fig:2d_exp_dubins}
% \end{figure}

\paragraph{Finite exponential sum (FES) process.}
We introduce the process defined by the SDE associated with the coefficients
\begin{align}
    &b(t, x) = \sum_{i=1}^{n_b} b_i \exp(-\gamma \|x-x_i\| |t-t_i| ),
    &\sigma(t, x) = \sum_{i=1}^{n_\sigma} s_i \exp(-\gamma \|x-x_i\| |t-t_i| ) I_{2},
\end{align}
for \((b_i)_i \in (\R^2)^{n_b}, (s_i)_i \in (\R)^{n_{\sigma}}, \gamma>0, n_b, n_\sigma\in \N^*\).

\begin{figure}[H]
    \centering
    \begin{minipage}[t]{0.3\textwidth}
        \centering
        \includegraphics[height=5.5cm]{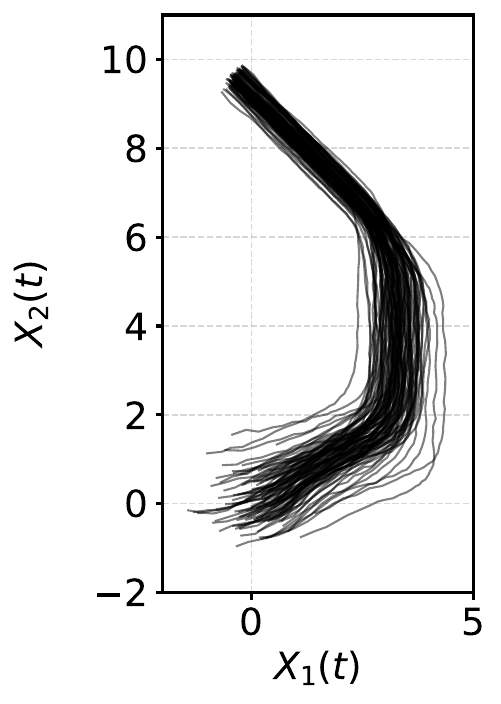} 
        \makebox[\linewidth][c]{\hspace*{+1.cm}\small True SDE samples.}
    \end{minipage}\hfill
    \begin{minipage}[t]{0.4\textwidth}
        \centering
        \includegraphics[height=5.5cm]{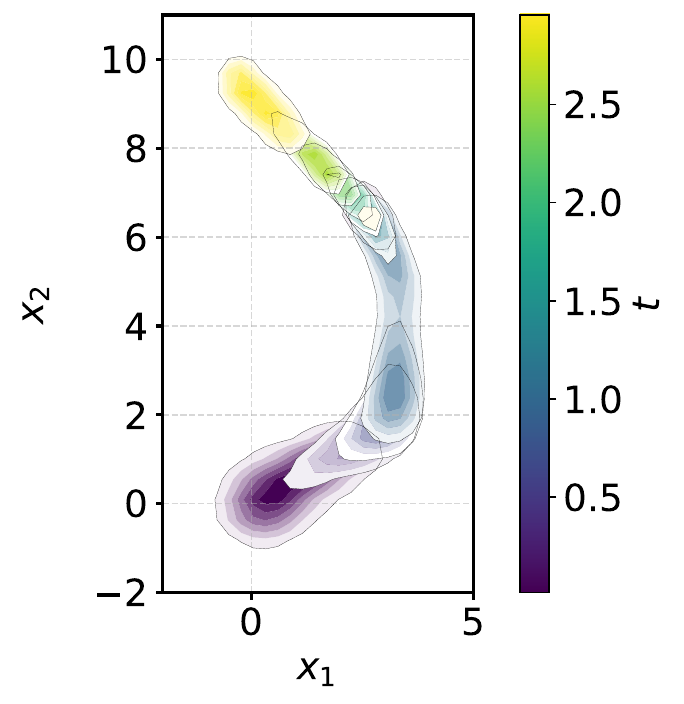}
        \makebox[\linewidth][c]{\hspace*{0.cm}\small Estimated density.}    \end{minipage}\hfill
    \begin{minipage}[t]{0.3\textwidth}
        \centering
        \includegraphics[height=5.5cm]{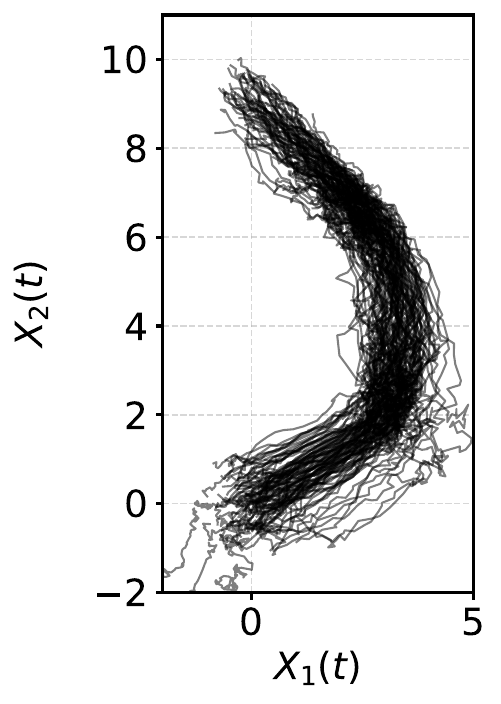} 
        \makebox[\linewidth][c]{\hspace*{1.2cm}\small Estimated SDE samples.}
    \end{minipage}
    \caption{FES process. 100 sample paths from the true and estimated SDEs.}
    \label{fig:2d_exp_fes}
\end{figure}

\paragraph{Datasets.}
For the Dubins process, we set \( T = 10 \), \( n = 2 \), \( M = 100 \), \( Q = 3000 \), \( v = 2 \), \( \theta = 3 \), and \( \sigma = 0.3 \). For the probability density estimation steps, the training and validation sets consist of \( Q/Q_{val} = 3000/100 \) sample paths and \( M/M_{val} = 100/10 \) time steps, respectively. For the Fokker--Planck matching step, training and validation sets are drawn by sampling time-position pairs from the training sample paths, with sizes \(N=3000\) and \(N_{val}=1000\). For the finite exponential sum process, the parameters are \( T = 3 \), \( n = 2 \), \( M = 100 \), \( Q = 3000 \), \( n_b = n_\sigma = 3 \), \( \gamma = 1 \), and \( (t_1, x_1) = (0, (0, 0)) \), \( (t_2, x_2) = (1, (1, 0)) \), and \( (t_3, x_3) = (3, (4, 6)) \). For the probability density estimation steps, the training and validation sets consist of \( Q/Q_{val} = 3000/100 \) sample paths and \( M/M_{val} = 100/100 \) time steps, respectively. For the Fokker--Planck matching step, we use a training set \(\{t_i\}_{i=1}^{30} \times \{x_i\}_{i=1}^{100}\), where the sets of times and positions are drawn uniformly in a well-chosen interval and two-dimensional box, respectively. We use a validation set \(\{t_i\}_{i=1}^{10} \times \{x_i\}_{i=1}^{100}\), where times and positions form uniform grids in a well-chosen interval and two-dimensional box. For both processes, the initial condition is \( X(0) \sim \bmN(0, 1/4 I_{\R^2}) \).

% \paragraph{Datasets.} For the Dubins process, we set the parameters as follows: \( T = 10 \), \( n = 2 \), \( M = 100 \), \( Q = 3000 \), \( v = 2 \), \( \theta = 3 \), and \( \sigma = 0.3 \).  For the probability density estimation steps, the training and validation sets consist of \( Q/Q_{val} = 3000/100 \) sample paths and \( M/M_{val} = 100/10 \) time steps, respectively. For the Fokker-Planck matching step, training and a validation set are drawn by sampling the set of training sample paths' time-position pairs, with size $N=3000/N_{val}=1000$. For the finite exponential sum process, the parameters are \( T = 3 \), \( n = 2 \), \( M = 100 \), \( Q = 3000 \), \( n_b = n_\sigma = 3 \), \( \gamma = 1 \), and \( (t_1, x_1) = (0, (0, 0)) \), \( (t_2, x_2) = (1, (1, 0)) \), and \( (t_3, x_3) = (3, (4, 6)) \). For the probability density estimation steps, the training and validation sets consist of \( Q/Q_{val} = 3000/100 \) sample paths and \( M/M_{val} = 100/100 \) time steps, respectively. For the Fokker-Planck matching step, we use a training set \(\{t_i\}_{i=1}^{30} \times \{x_i\}_{i=1}^{100}\) where the sets of times and positions are drawn uniformly in well-chosen interval and two-dimensional box, respectively. We use a validation set $\{t_i\}_{i=1}^{10} \times \{x_i\}_{i=1}^{100}$ where times and positions are uniform grids in well-chosen interval and two-dimensional box. For both processes, the initial condition is \( X(0) \sim \bmN(0, 1/4 I_{\R^2}) \).

\paragraph{Recovering the true dynamics.}
We perform density estimation and Fokker--Planck (FP) matching for both processes, selecting hyperparameters on the validation sets as above. We draw and plot 100 sample paths from both the true and estimated coefficients in Figure~\ref{fig:2d_exp_dubins} and Figure~\ref{fig:2d_exp_fes}. The estimated coefficients lead to probability distributions that are visually close to those of the true dynamics, with closely matching means and variances over time.

\paragraph{Observations.}
Our experiments highlight two difficulties in estimating SDEs.
\begin{itemize}
    \item \textbf{Cumulative error.} After the coefficients are estimated, the learned SDE is simulated by free rollout, without teacher forcing, i.e., without being reset or corrected using samples from the true process at intermediate times. Therefore, local errors in the estimated coefficients can accumulate over time. Even if the coefficients are accurate near a given time \(t_0\), earlier errors on \(0 \leq t < t_0\) may already have changed the distribution of the learned process at time \(t_0\).

    \item \textbf{Error amplification.} The learned process may enter regions of the time--state space where the training density is low and the coefficients are weakly constrained by data. In such regions, coefficient errors can be amplified, sometimes leading to path divergence or numerical termination. Increasing the variance of \(X(0)\) can improve coverage of the state space and reduce this instability.
\end{itemize}

\subsection{Controlled SDE Estimation}\label{subsec:exp_controlled}

\paragraph{Python open-source library.}
Our implementation is available as an open-source Python library at \url{lmotte/controlled-sde-learn}. The library includes documentation and example scripts with light computational demands. More details are given in Appendix~\ref{sec:implementation_controlled}, including derivations of the formulas used to implement the estimator.

\paragraph{Experimental setup.}
We consider an isotropic diffusion and use the pointwise positivity constraint
described in Remark~\ref{rem:alternative_ellipticity_constraints}. All kernels are Gaussian kernels \(k(x,y) = \exp(-\gamma \|x-y\|^2)\) with different parameters \(\gamma>0\). For the 1D SDE, we select all hyperparameters by grid search on validation sets. The selection metrics are the log-likelihood for the probability density estimation step, i.e., \(\hat p \mapsto \sum_{l=1}^{M_{val}} \sum_{i=1}^{Q_{val}} \log(\hat p(t_l, x_i))\), and the mean squared error for the Fokker--Planck matching step, i.e., \((\hat b, \hat a) \mapsto \sum_{i=1}^{N} [\frac{\partial \hat p}{\partial t}(t_i, x_i) - \bmL^{(\hat b,\hat a)} \hat p(t_i,x_i)]^2\). For the 2D SDE, to avoid excessive computation, we fix the hyperparameters for both probability density estimation and Fokker--Planck matching using those previously selected for the uncontrolled Dubins process in Section~\ref{subsubsec:dubins_finite_exp}. Further details can be found in the code repository.

% \paragraph{Experimental setup.} We consider uniform diffusion and the soft-shape constraint presented in Section \ref{sec:proposed_method} for the diffusion coefficient. All kernels are Gaussian kernels \(k(x,y) = \exp(-\gamma \|x-y\|^2)\) with different parameters $\gamma>0$. For the 1D SDE, we select all hyperparameters using grid search and validation sets. The selection metrics are the log-likelihood for the probability density estimation step, i.e., \(\hat p \mapsto \sum_{l=1}^{M_{val}} \sum_{i=1}^{Q_{val}} \log(\hat p(t_l, x_i))\),  and the mean squared error for the Fokker-Planck matching step, i.e., \((\hat b, \hat a) \mapsto \sum_{i=1}^{N} [\frac{\partial p}{\partial t}(t_i, x_i) - \bmL^{(\hat b,\hat a)} p(t_i,x_i)]^2\). For the 2D SDE, to avoid excessive computation, we fix the hyperparameters for both the probability density estimation and the Fokker-Planck matching using previously selected hyperparameters for the estimation of uncontrolled Dubins process in Section \ref{subsubsec:dubins_finite_exp}. Further details can be found in the code repository.

\paragraph{Computational considerations.}
While the computational complexity of each step is given in Appendix~\ref{sec:implementation_controlled}, we also report observed execution times to give a practical sense of scale. The experiments were run on a machine equipped with an Apple M3 Pro processor and 18 GB of RAM. For the 1D case, the probability density estimation step takes 5.6 seconds for training with 1000 sample paths, 100 time steps, and 10 training controls, and 1.0 seconds for prediction over a Fokker--Planck training set of size 10000. The Fokker--Planck matching step takes 190 seconds for training with 10000 data points and 150 seconds to generate 100 sample paths with 100 time steps for 10 different controls. For the 2D problem, probability density estimation training takes around 0.032 seconds for 3000 sample paths, 100 time steps, and 20 training controls, with prediction taking approximately 160 seconds over a Fokker--Planck training set of size 10000. The Fokker--Planck matching step takes about 490 seconds for training and 140 seconds to generate 100 sample paths with 100 time steps for 5 different controls. These timings show that controlled SDE estimation is substantially more demanding than the uncontrolled case, but remains feasible at the moderate scales considered here. As expected, the cost increases with the number of training controls.

% \paragraph{Computational considerations.} While the computational complexity of each step is provided in Section \ref{sec:implementation_controlled}, we offer here a practical idea of the computational requirements based on the observed execution times from our experiments on the considered problem and datasets, using a machine equipped with an Apple M3 Pro processor and 18 GB of RAM. For the 1D case, the probability density estimation step takes 5.6 seconds for training with 1000 sample paths, 100 time steps, and 10 training controls, and 1.0 seconds for prediction over a Fokker-Planck training set of size 10000. The Fokker-Planck matching step requires 190 seconds for training with 10000 data points and 150 seconds to generate 100 sample paths with 100 time steps for 10 different controls. For the 2D problem, probability density estimation training takes around 0.032 seconds for 3000 sample paths, 100 time steps, and 20 training controls, with prediction taking approximately 160 seconds over a Fokker-Planck training set of size 10000. The Fokker-Planck matching step requires about 490 seconds for training and 140 seconds to generate 100 sample paths with 100 time steps for 5 different controls. These timings show that controlled SDE estimation is substantially more demanding than
% the uncontrolled case, but remains feasible at the moderate scales considered here. As expected, the computational requirements increase as the sizes of the datasets are multiplied by the number of training controls.

\subsubsection{Linear Scalar SDE} 

We first consider controlled Ornstein--Uhlenbeck processes.

\paragraph{Controlled Ornstein--Uhlenbeck process.}
Let \(X(t)\) be an Ornstein--Uhlenbeck process with controlled mean, defined by the SDE
\begin{align}
    dX(t) = \theta(u(t) - X(t)) dt + \sigma dW(t),
\end{align}
where \(\theta\in \R\), \(\sigma \in \R_+\), and \(u\in\bmH, u:[0,T]\to\mathbb R\). For simplicity, we take \(\bmH\) to be the class of two-step control functions from \([0,T]\) to \(\R\), namely
\begin{align}
    u(t) = 
    \begin{cases} 
    u_0, & \text{if } t \in [0, t_1) \\
    u_1, & \text{if } t \in [t_1, T]
    \end{cases}
    \quad \text{ with } \quad u_0, u_1 \in \R, t_1 \in [0, T].
\end{align}
In the experiments, the control parameter belongs to
\[
\Theta_{\mathrm{OU}}=[-2,2]^2\times[3,7],
\]
with parameter \((u_0,u_1,t_1)\), and the three components are sampled
independently and uniformly over their respective intervals.
These controls fall outside Assumption~\ref{as:smooth_controls}, but they provide a simple numerical illustration.

% \paragraph{Controlled Ornstein–Uhlenbeck process.} Let \(X(t)\) be an Ornstein–Uhlenbeck process with controlled mean, defined by the SDE
% \begin{align}
%     dX(t) = \theta(u(t) - X(t)) dt + \sigma dW(t),
% \end{align}
% where $\theta\in \R, \sigma \in \R_+$, and $u\in\bmH, u:[0,T]\to\mathbb R$. For simplicity, we take \(\bmH\) to be the class of two-step control functions from \([0,T]\) to \(\R\), namely
% \begin{align}
%     u(t) = 
%     \begin{cases} 
%     u_0, & \text{if } t \in [0, t_1) \\
%     u_1, & \text{if } t \in [t_1, T]
%     \end{cases}
%     \quad \text{ with } \quad u_0, u_1 \in \R, t_1 \in [0, T].
% \end{align}
% These controls fall outside Assumption A2, but still provide a simple numerical illustration.

\paragraph{Datasets.}
We consider a controlled OU process and set \(\theta = 0.5, \sigma^2 = \theta/4, \mu_0 = 0.5, \sigma_0^2 = \sigma^2 / (2\theta)\), \(T=10\), and \( X(0) \sim \bmN(\mu_0, \sigma_0^2) \). In Figure~\ref{fig:controlled_ornstein_uhlenbeck_samples}, we plot 100 sample paths generated from this OU process for three different controls. We draw a dataset of \(K=10\) controls, shown in Figure~\ref{fig:521_control_tr_all}, by sampling
\((u_0,u_1,t_1)\) independently according to the distribution defined above. For the probability density estimation steps, we draw training and validation sets with \(Q/Q_{val}=1000/100\) sample paths and \(M/M_{val}=100/100\) time steps, respectively. For the Fokker--Planck matching step, we draw a training set \((t_i, x_i)_{i=1}^N = \{t_i\}_{i=1}^{20} \times \{x_i\}_{i=1}^{50}\) by drawing times and positions uniformly within well-chosen intervals, and we draw a validation set with the same size and distribution.

\begin{figure}[t]
    \centering
    \begin{minipage}[t]{\textwidth}
        \centering
        \includegraphics[height=3.5cm]{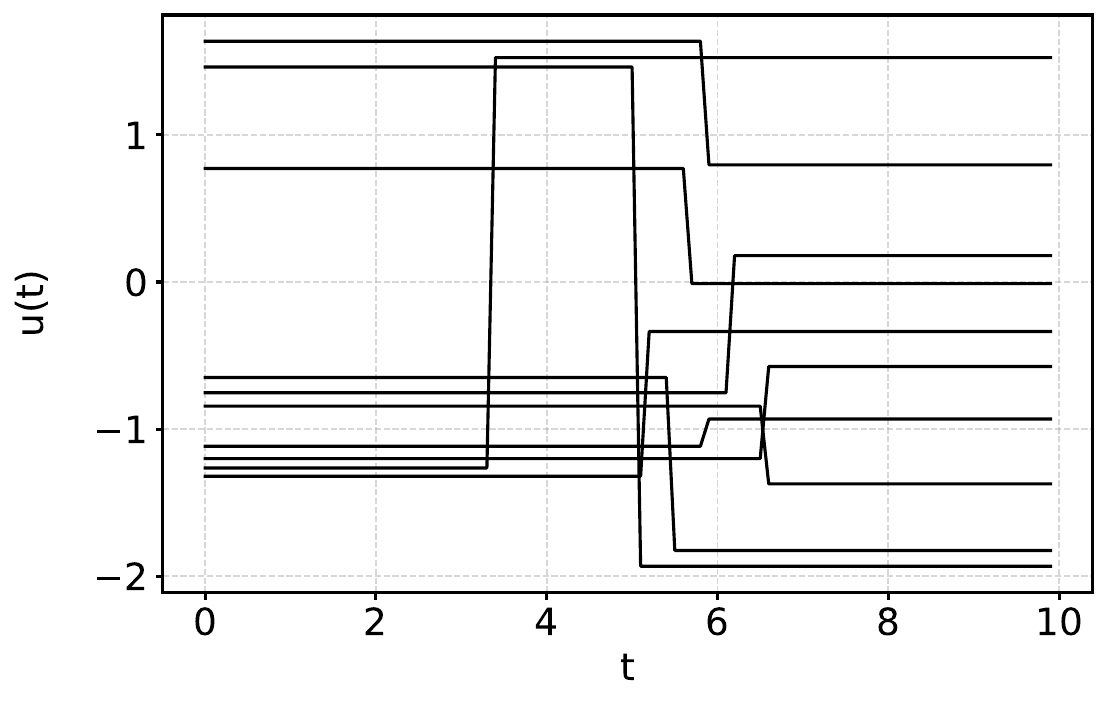} 
    \end{minipage}
    \caption{Training set of 10 i.i.d. piecewise-constant controls.}
    \label{fig:521_control_tr_all}
\end{figure}

\paragraph{Recovering the true controlled dynamics.}
For \(K_{te}=3\) randomly drawn controls, we generate and plot 100 sample paths using both the true and estimated coefficients; see Figure~\ref{fig:controlled_ornstein_uhlenbeck_samples}. The estimated coefficients produce probability distributions that are visually close to the true ones, with matching means and variances over time. The three controls used for evaluation are not part of the training data.

\begin{figure}[t]
    \centering

        \begin{minipage}[t]{0.33\textwidth}
        \centering
        \caption*{\(u_1\)}
        \includegraphics[height=3.2cm]{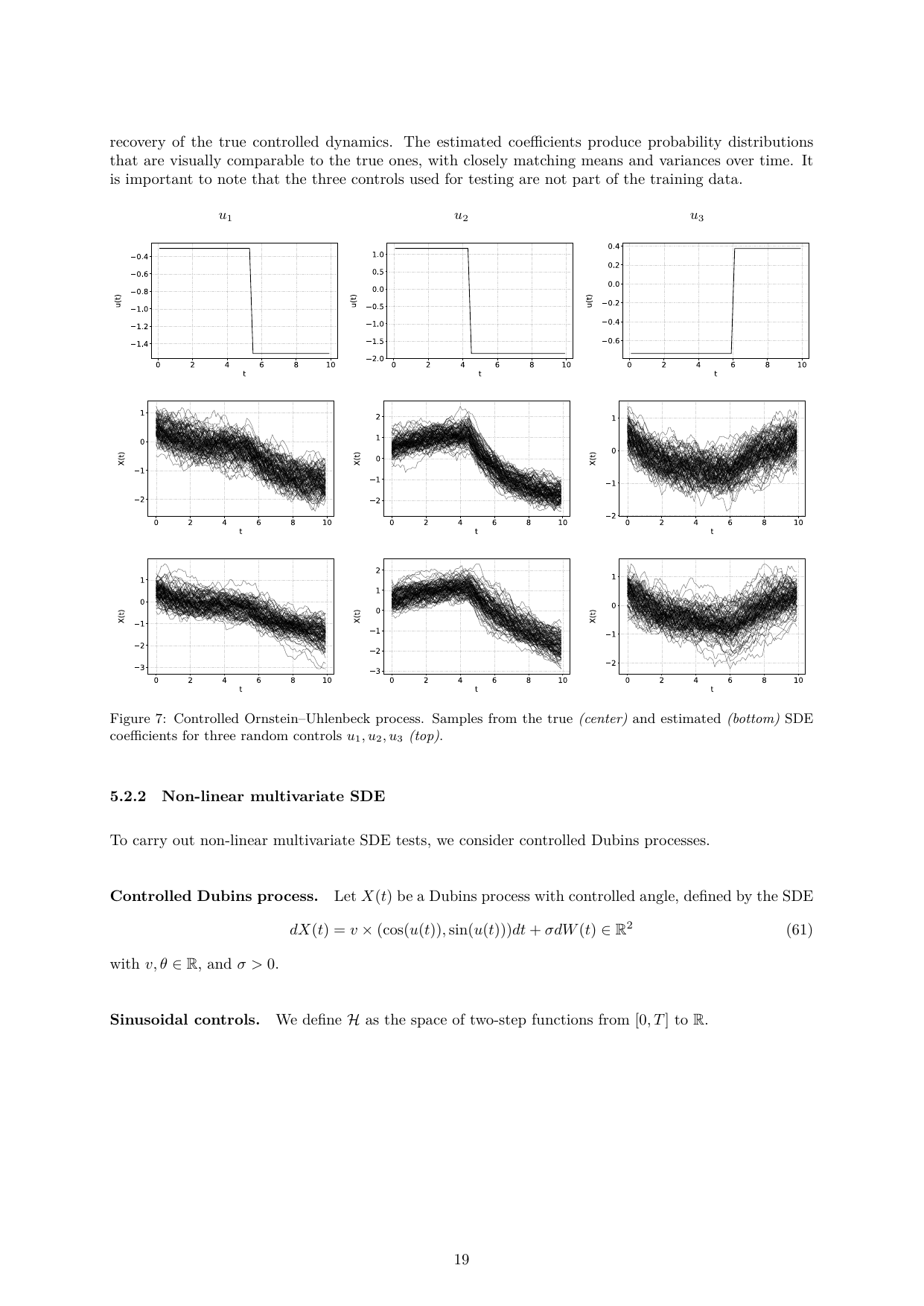} 
    \end{minipage}\hfill
    \begin{minipage}[t]{0.33\textwidth}
        \centering
        \caption*{\(u_2\)}
        \includegraphics[height=3.2cm]{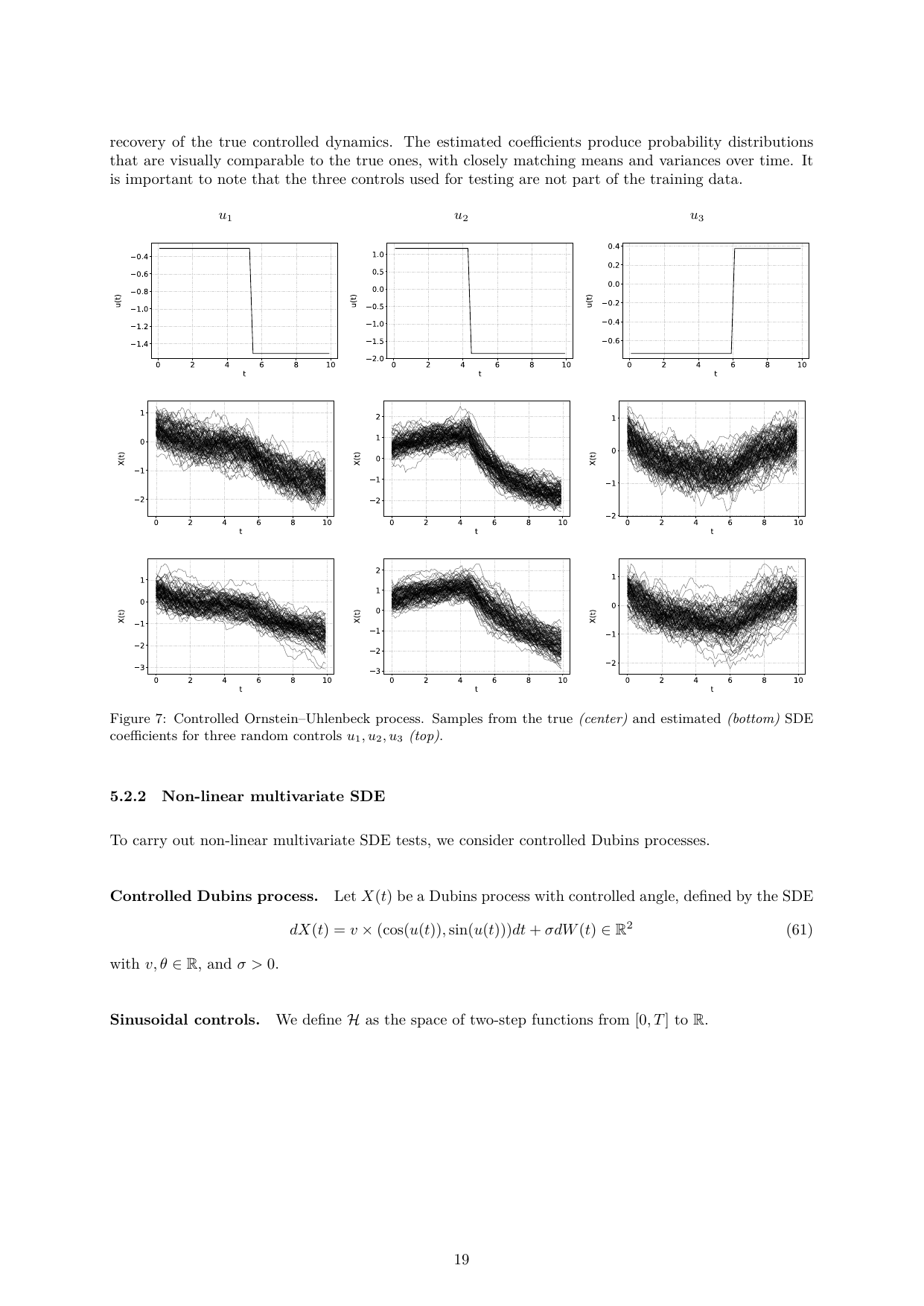}
    \end{minipage}\hfill
    \begin{minipage}[t]{0.33\textwidth}
        \centering
        \caption*{\(u_3\)}
        \includegraphics[height=3.2cm]{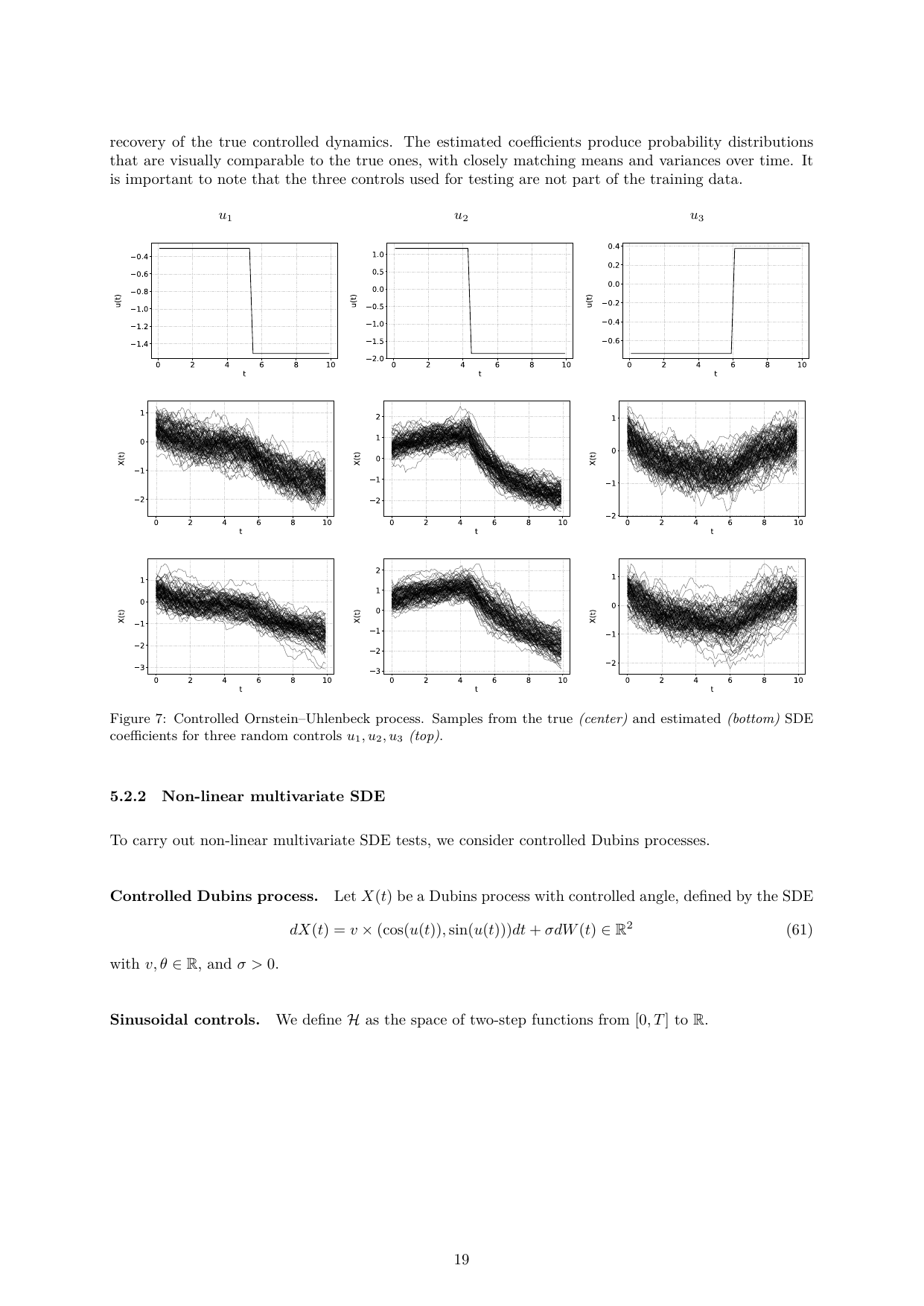} 
    \end{minipage}

    \begin{minipage}[t]{0.33\textwidth}
        \centering
        \includegraphics[height=3.2cm]{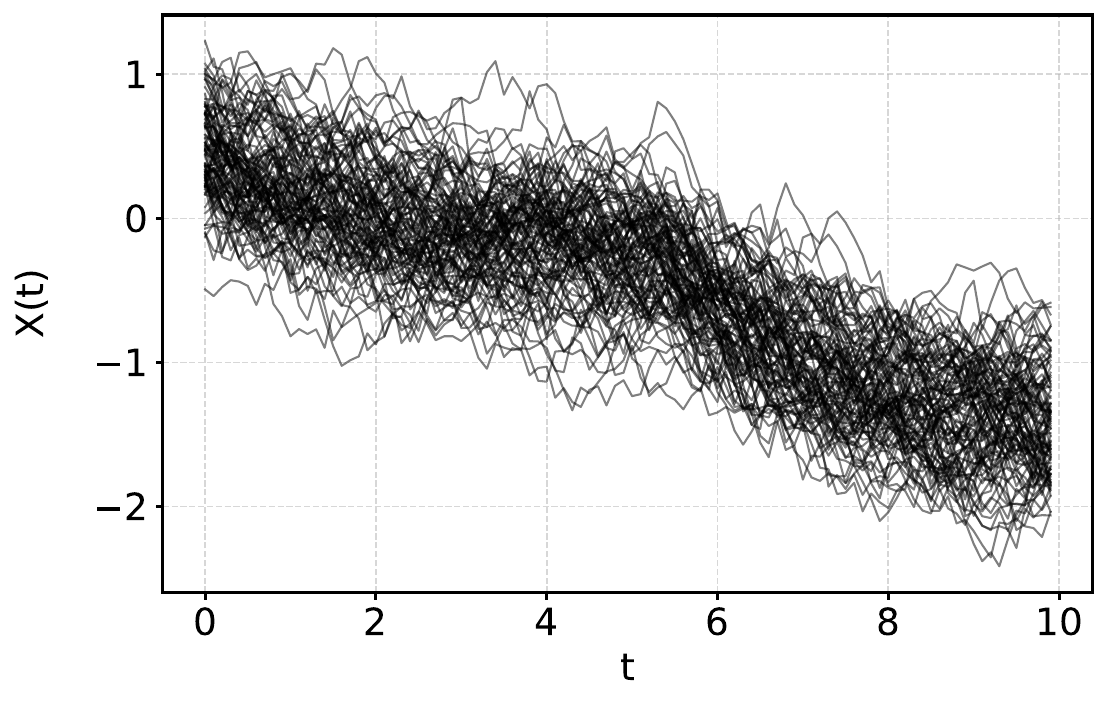} 
    \end{minipage}\hfill
    \begin{minipage}[t]{0.33\textwidth}
        \centering
        \includegraphics[height=3.2cm]{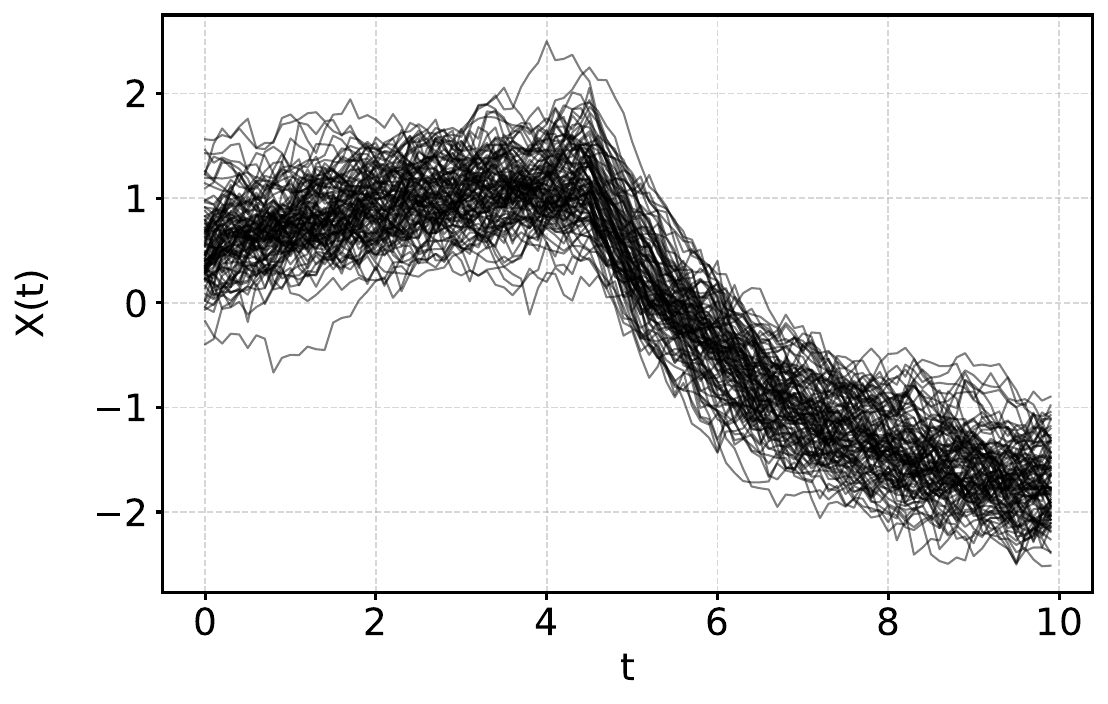}
    \end{minipage}\hfill
    \begin{minipage}[t]{0.33\textwidth}
        \centering
        \includegraphics[height=3.2cm]{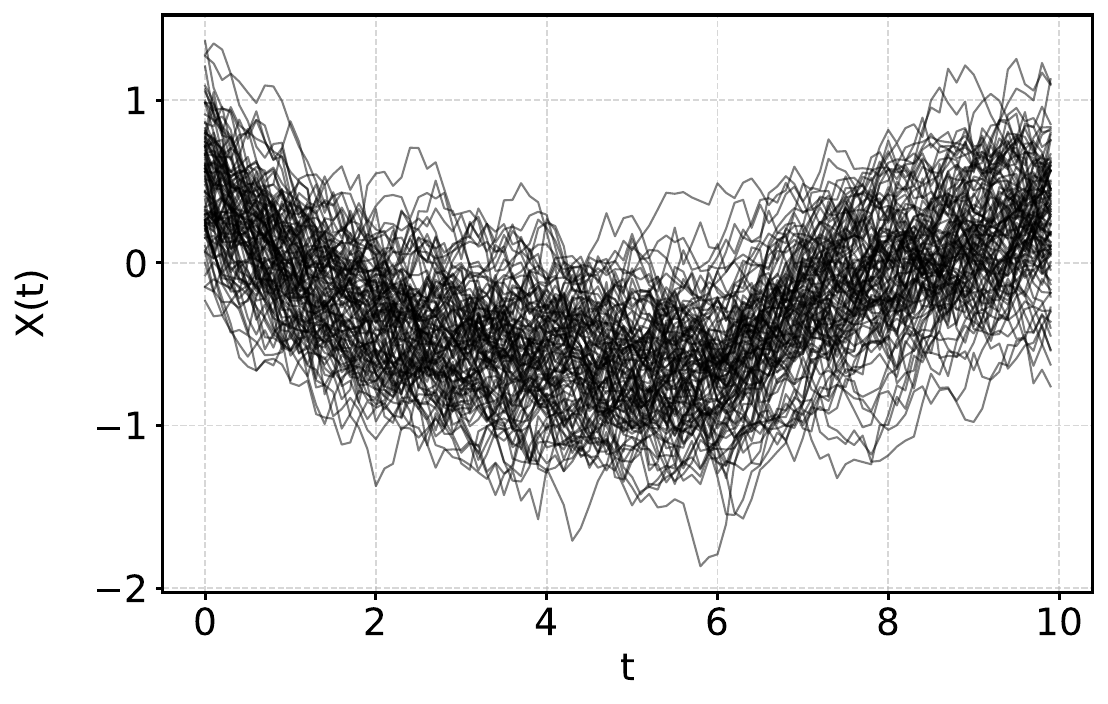} 
    \end{minipage}

     \begin{minipage}[t]{0.33\textwidth}
        \centering
        \includegraphics[height=3.2cm]{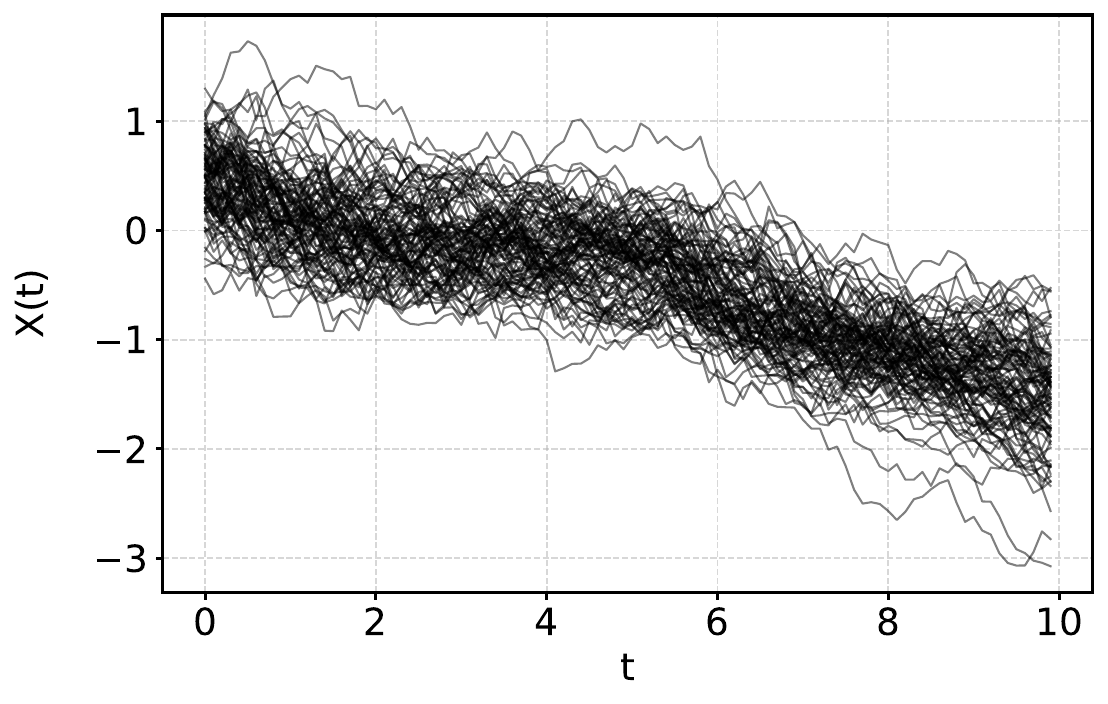} 
    \end{minipage}\hfill
    \begin{minipage}[t]{0.33\textwidth}
        \centering
        \includegraphics[height=3.2cm]{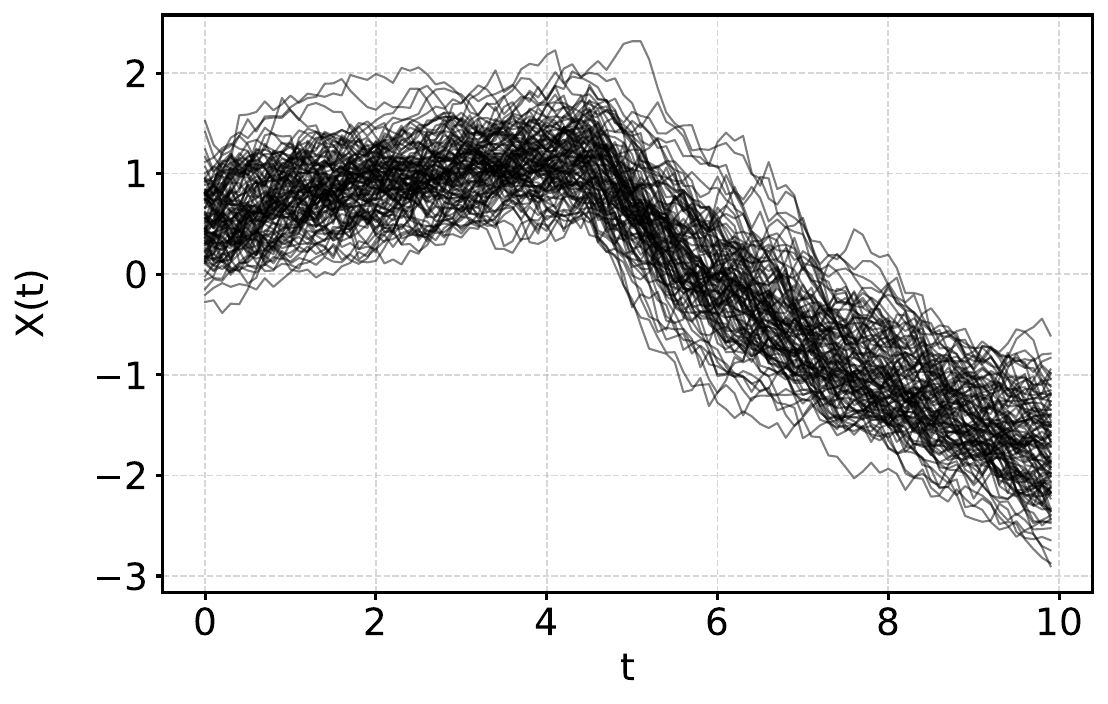}
    \end{minipage}\hfill
    \begin{minipage}[t]{0.33\textwidth}
        \centering
        \includegraphics[height=3.2cm]{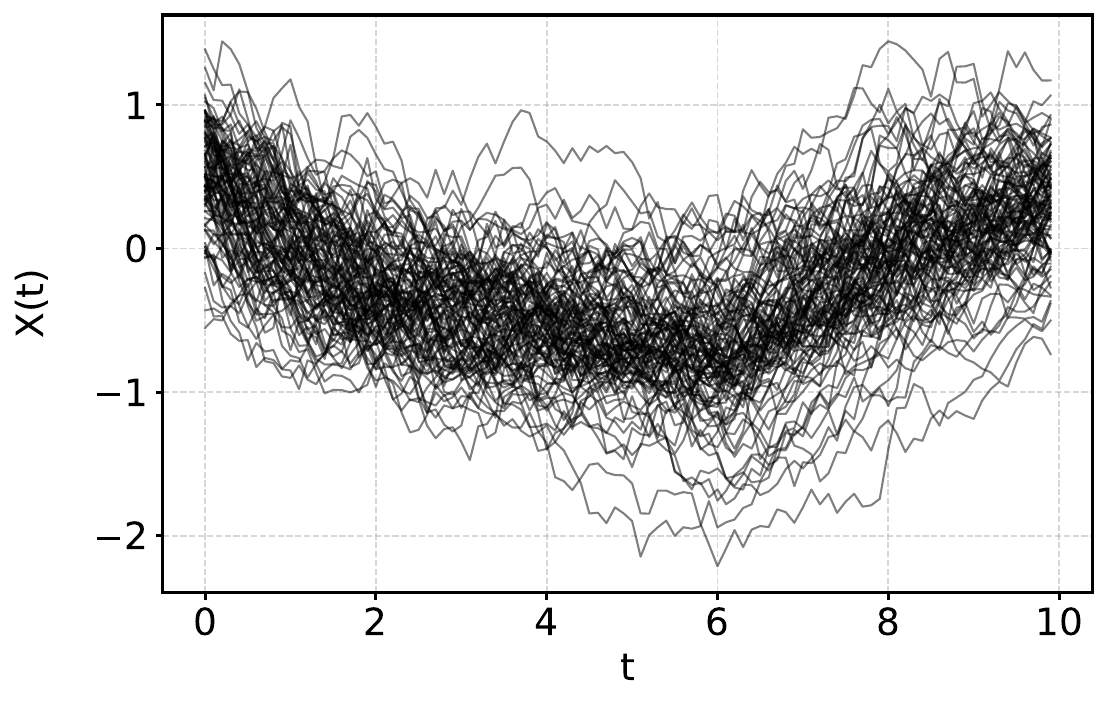} 
    \end{minipage}

    \caption{Controlled Ornstein–Uhlenbeck process. 100 sample paths from the SDEs with true \textit{(center)} and estimated \textit{(bottom)}  coefficients for three random controls \(u_1, u_2, u_3\) \textit{(top)}.}
    \label{fig:controlled_ornstein_uhlenbeck_samples}
\end{figure}

\subsubsection{Nonlinear Multivariate SDE}

We next consider controlled Dubins processes.

\paragraph{Controlled Dubins process.}
Let \(X(t)\) be a Dubins process with controlled angle, defined by the SDE
\begin{align}\label{eq:controlled_mult_sde}
    dX(t) = v \times (\cos(u(t)), \sin(u(t))) dt + \sigma dW(t) \in \R^2,
\end{align}
where \(v \in \R\), \(\sigma > 0\), and \(u\in\bmH\), where \(u:[0,T]\to\R\). We consider the parametrized control family
\[
\Theta=[-1.2,1.2],
\qquad
u_\theta(t)=\theta\sin(\pi t/10),
\qquad
\mathcal H=\{u_\theta:\theta\in\Theta\}.
\]

% We define sinusoidal controls by setting
% \begin{align}
%     \bmH = \{u : t \mapsto a \sin(t\pi/ 10) \;|\; a \in \R \}.
% \end{align}

\begin{figure}[H]
    \centering
    \begin{minipage}[t]{\textwidth}
        \centering
        \includegraphics[height=3.5cm]{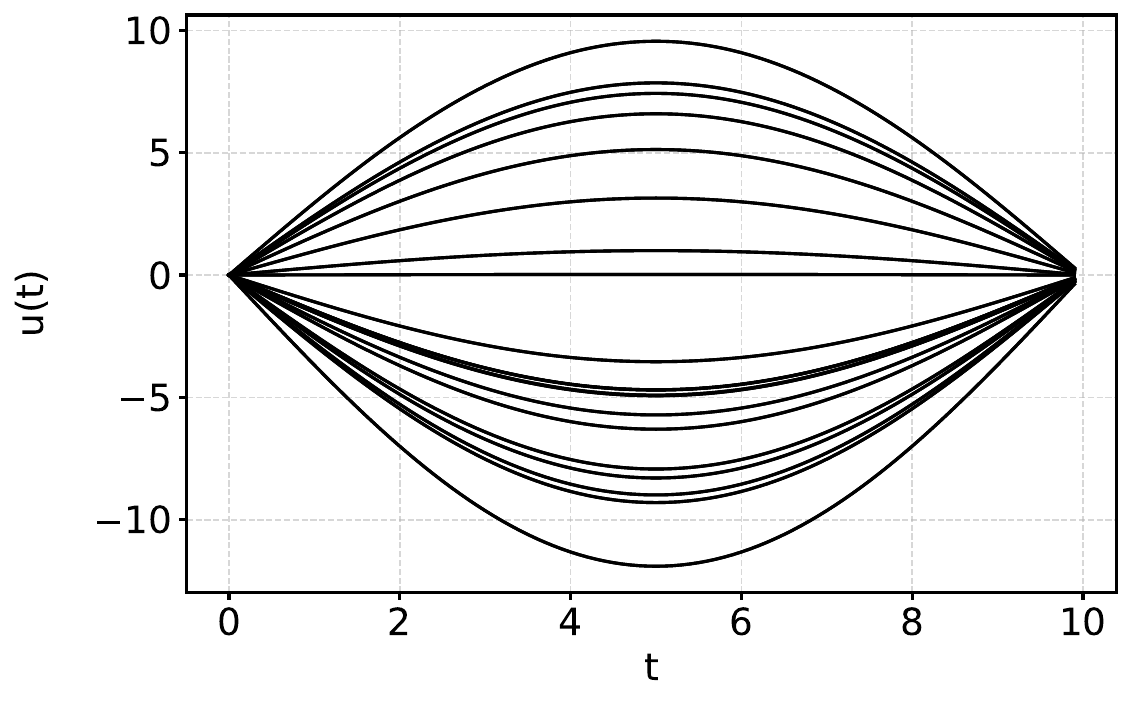} 
    \end{minipage}
\caption{Training set of 20 i.i.d. controls.}    \label{fig:522_controls_tr_all_2d_2}
\end{figure}

\paragraph{Datasets.}
We consider a controlled Dubins process and set \(v = 2\), \(\sigma = 0.3\), \(\mu_0 = 0\), \(\sigma_0 = 0.5\), \(T = 10\), and \( X(0) \sim \bmN(\mu_0, \sigma_0^2 I_{\R^2}) \). In Figure~\ref{fig:controlled_dubins_samples}, we plot 100 sample paths generated from this process for five different controls. We build a dataset of \(K = 20\) controls, shown in Figure~\ref{fig:522_controls_tr_all_2d_2} (with amplitudes scaled by a factor of \(10\)), by drawing the amplitudes independently according to \(\theta_k\sim\operatorname{Unif}(\Theta)\). For the probability density estimation steps, we generate a training set with \(Q = 3000\) sample paths and \(M = 100\) time steps. For the Fokker--Planck matching step, we draw a training set \((\theta_k, t_i, x_i)_{k \in \llbracket 1, 20 \rrbracket, i \in \llbracket 1, 500 \rrbracket }\) of size \(10^4\). To avoid sampling regions where \(p(\theta_k, t, x)\) is negligible, for each \(\theta_k\), we draw 500 pairs \((t_i, x_i)_{i=1}^{500}\) from 5 sample paths with 100 time steps, generated with the same parameters as the controlled Dubins process but with initial variance \(\sigma_0 = 2.5\).

\begin{figure}[H]
    \centering
    \hspace{0.05em}
    \begin{minipage}[t]{0.19\textwidth}
        \centering
        \caption*{\hspace{1.5em}\(u_1\)}
        \includegraphics[height=1.8cm]{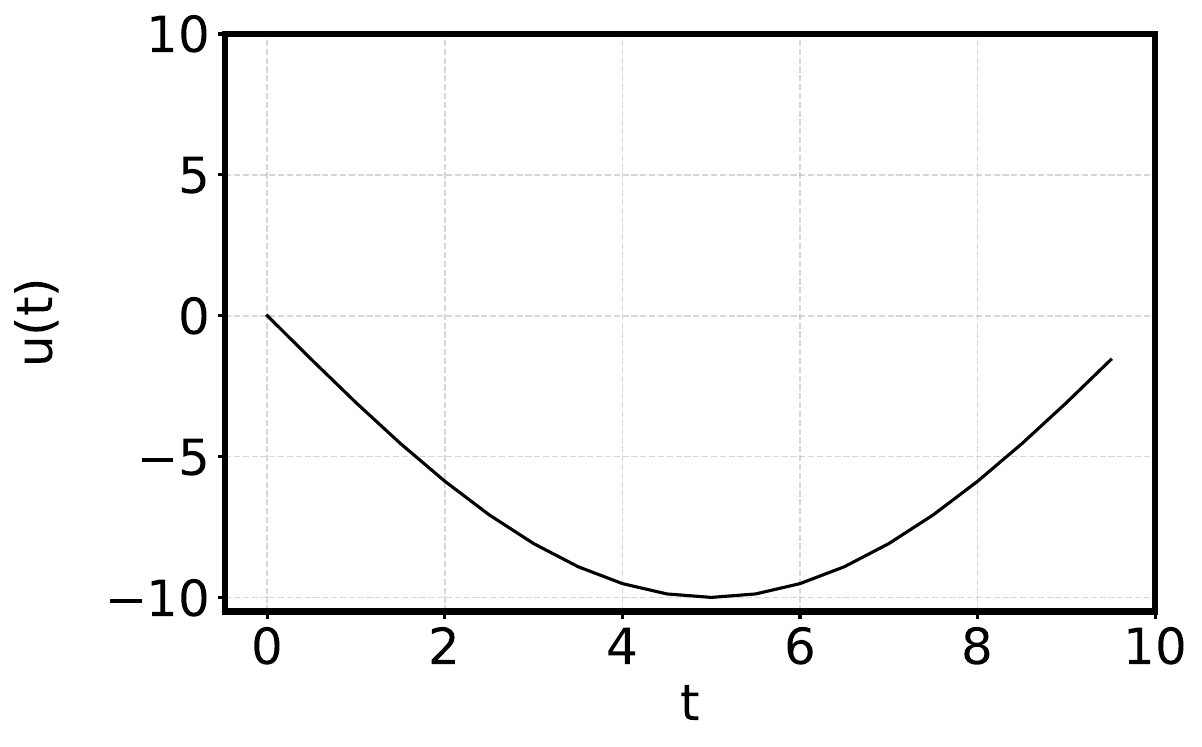} 
    \end{minipage}\hfill
    \begin{minipage}[t]{0.2\textwidth}
        \centering
        \caption*{\hspace{1.5em}\(u_2\)}
        \includegraphics[height=1.8cm]{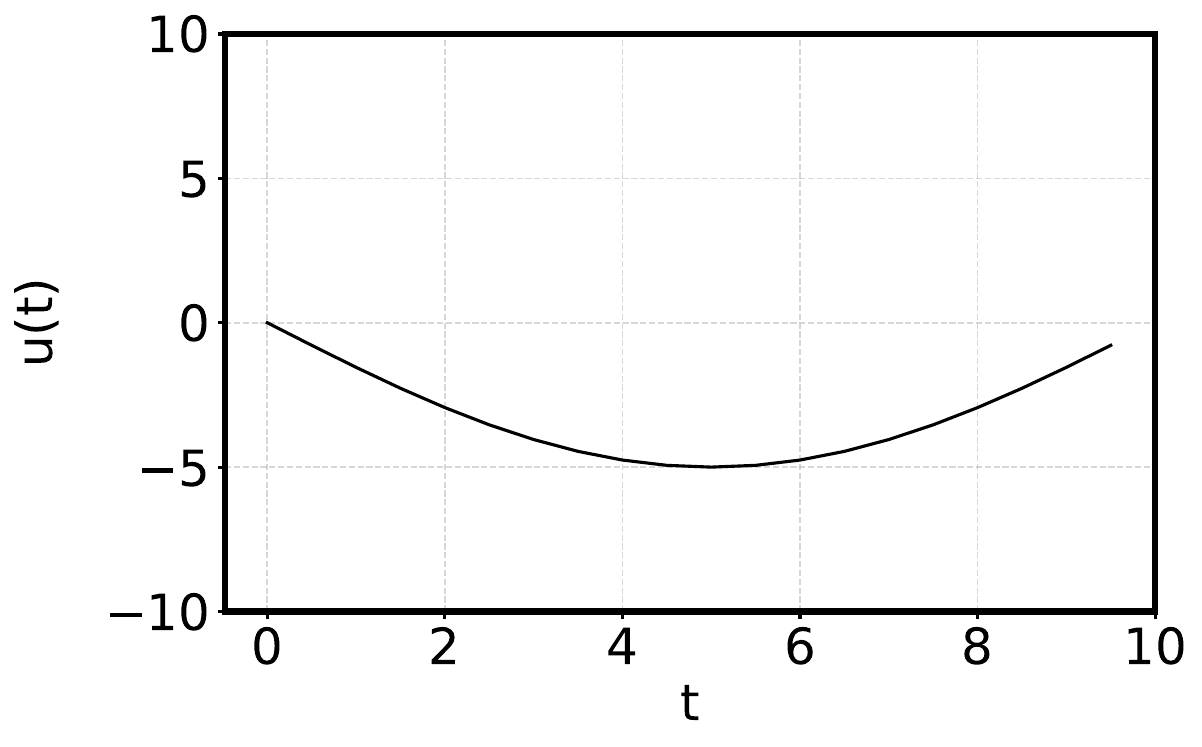} 
    \end{minipage}\hfill
    \begin{minipage}[t]{0.2\textwidth}
        \centering
        \caption*{\hspace{1.5em}\(u_3\)}
        \includegraphics[height=1.8cm]{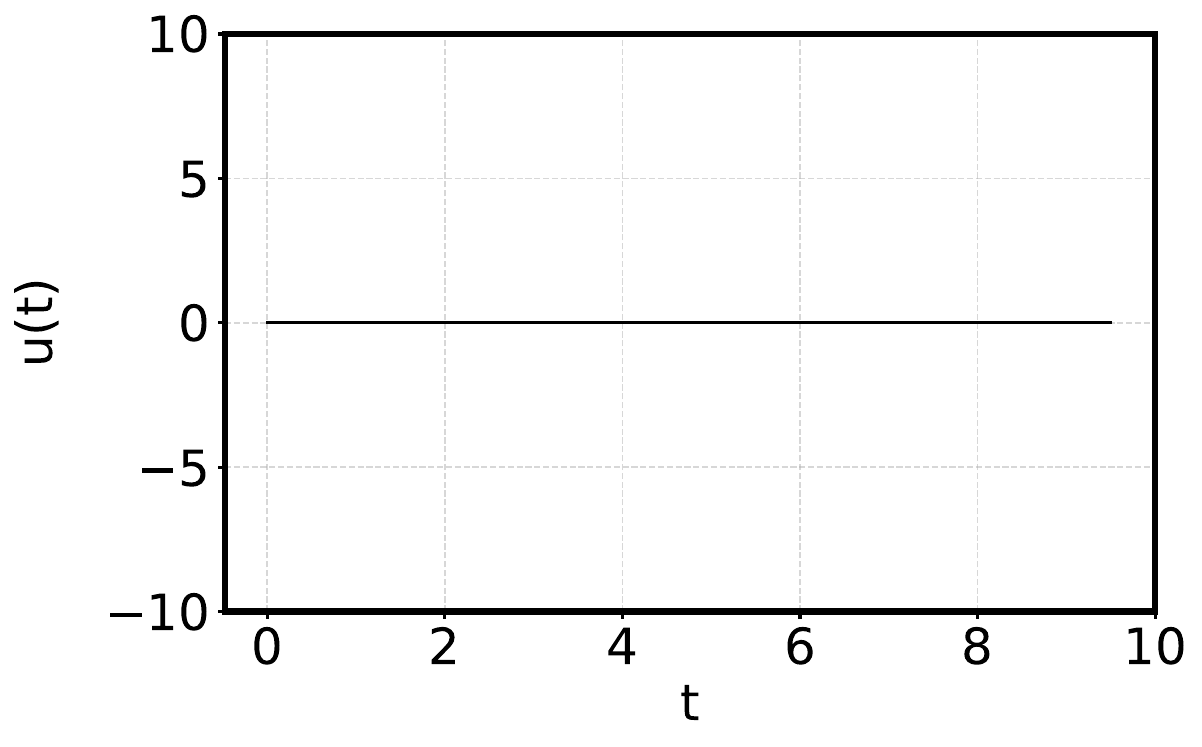} 
    \end{minipage}\hfill
    \begin{minipage}[t]{0.19\textwidth}
        \centering
        \caption*{\hspace{1.5em}\(u_4\)}
        \includegraphics[height=1.8cm]{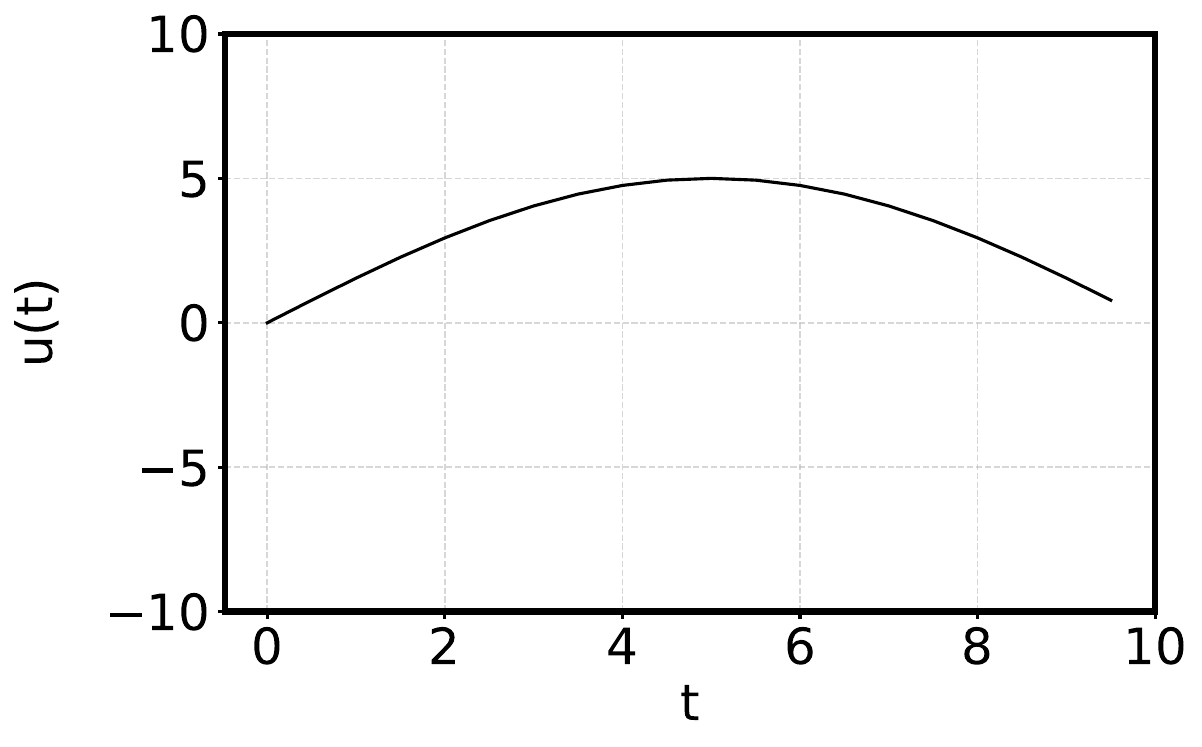} 
    \end{minipage}\hfill
    \begin{minipage}[t]{0.19\textwidth}
        \centering
        \caption*{\hspace{1.5em}\(u_5\)}
        \includegraphics[height=1.8cm]{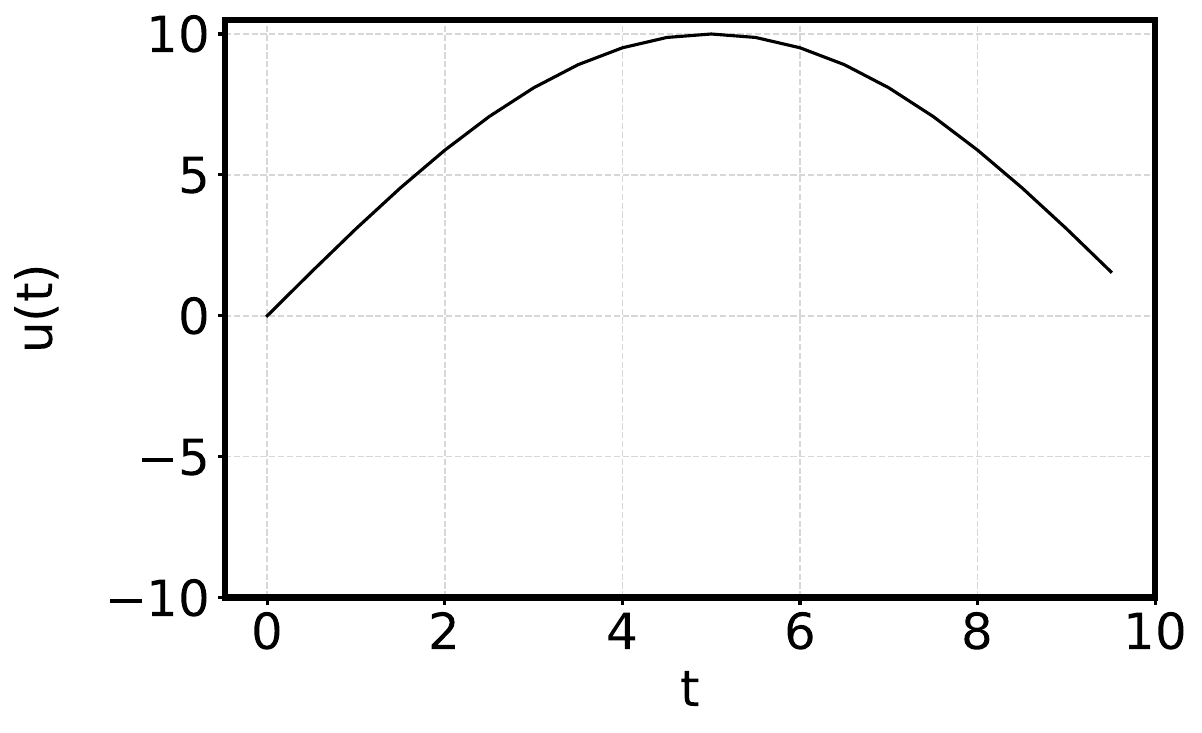} 
    \end{minipage}

    \begin{minipage}[t]{0.2\textwidth}
        \centering
        \includegraphics[height=2.8cm]{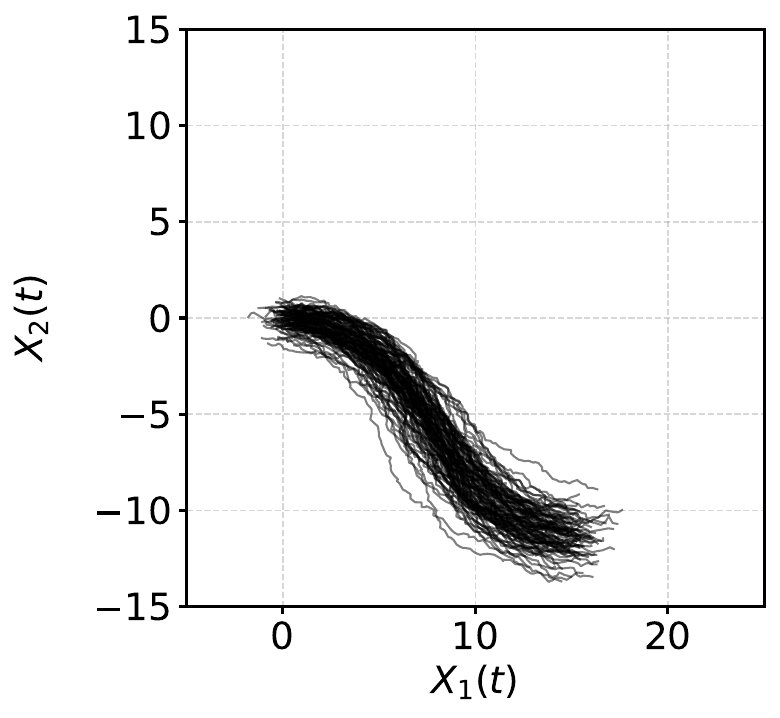} 
    \end{minipage}\hfill
    \begin{minipage}[t]{0.2\textwidth}
        \centering
        \includegraphics[height=2.8cm]{figures/522_samples_true_2d_2_0.pdf} 
    \end{minipage}\hfill
    \begin{minipage}[t]{0.2\textwidth}
        \centering
        \includegraphics[height=2.8cm]{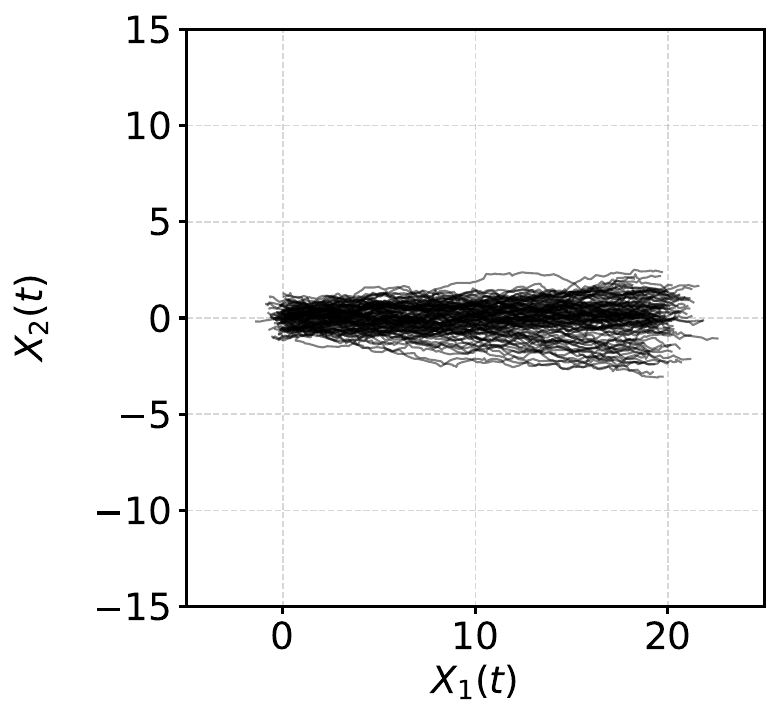} 
    \end{minipage}\hfill
    \begin{minipage}[t]{0.2\textwidth}
        \centering
        \includegraphics[height=2.8cm]{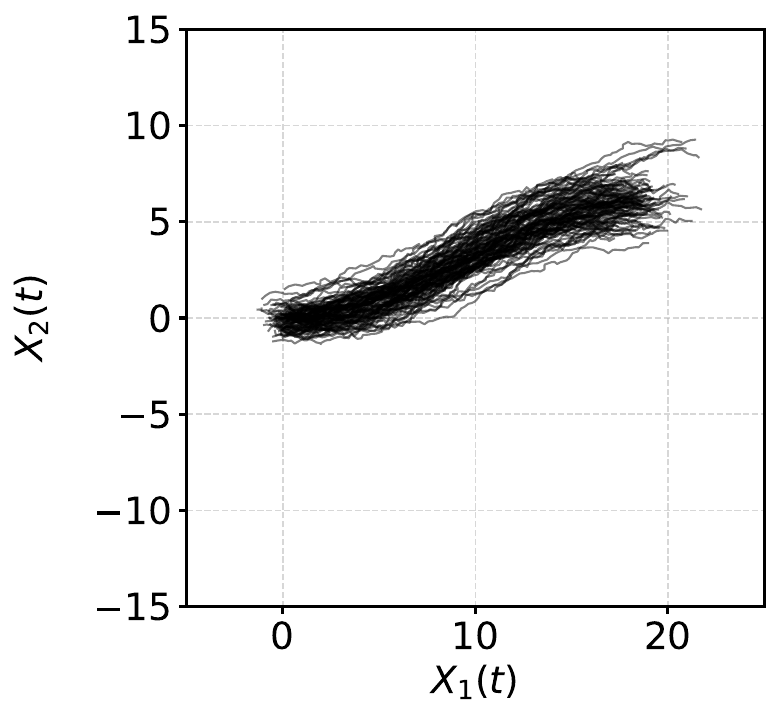} 
    \end{minipage}\hfill
    \begin{minipage}[t]{0.2\textwidth}
        \centering
        \includegraphics[height=2.8cm]{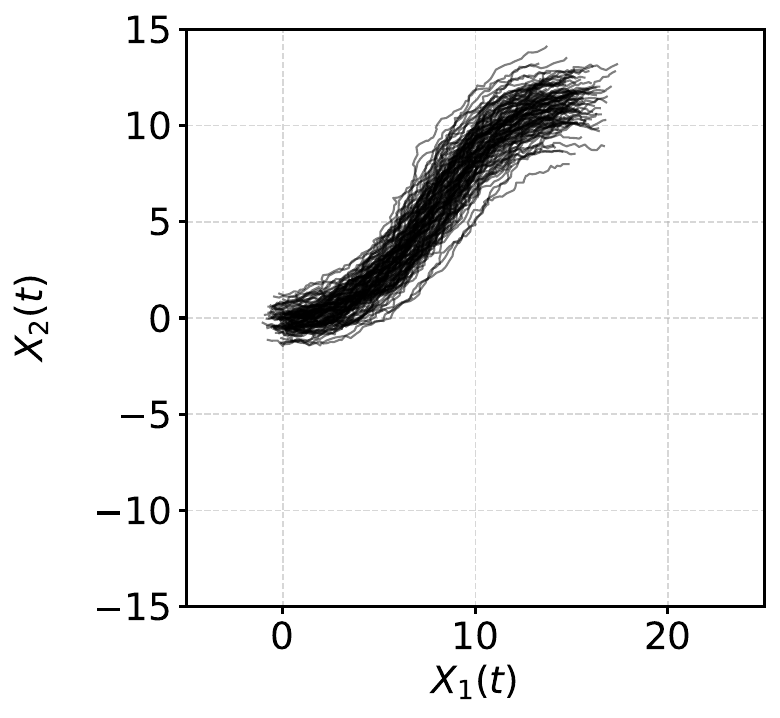} 
    \end{minipage}

    \begin{minipage}[t]{0.2\textwidth}
        \centering
        \includegraphics[height=2.8cm]{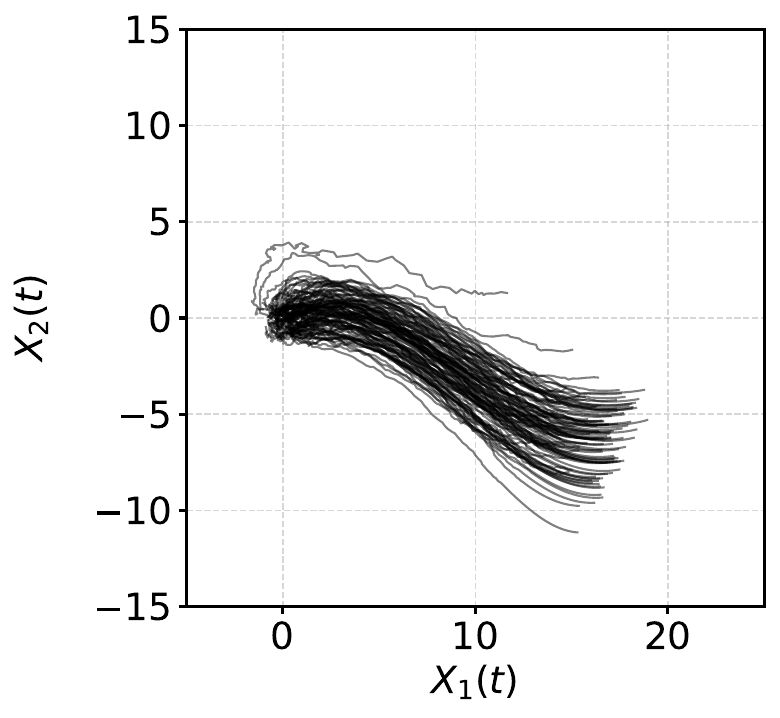} 
    \end{minipage}\hfill
    \begin{minipage}[t]{0.2\textwidth}
        \centering
        \includegraphics[height=2.8cm]{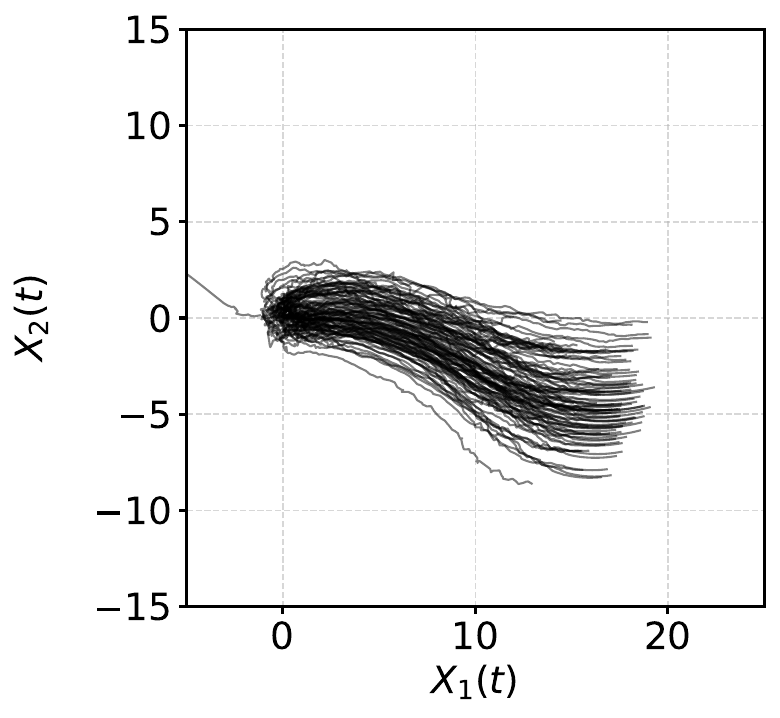} 
    \end{minipage}\hfill
    \begin{minipage}[t]{0.2\textwidth}
        \centering
        \includegraphics[height=2.8cm]{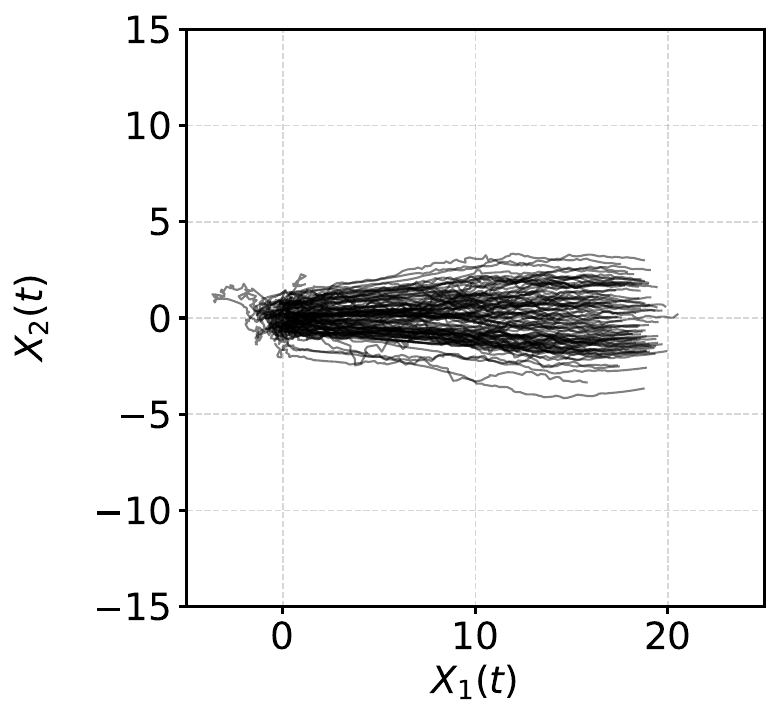} 
    \end{minipage}\hfill
    \begin{minipage}[t]{0.2\textwidth}
        \centering
        \includegraphics[height=2.8cm]{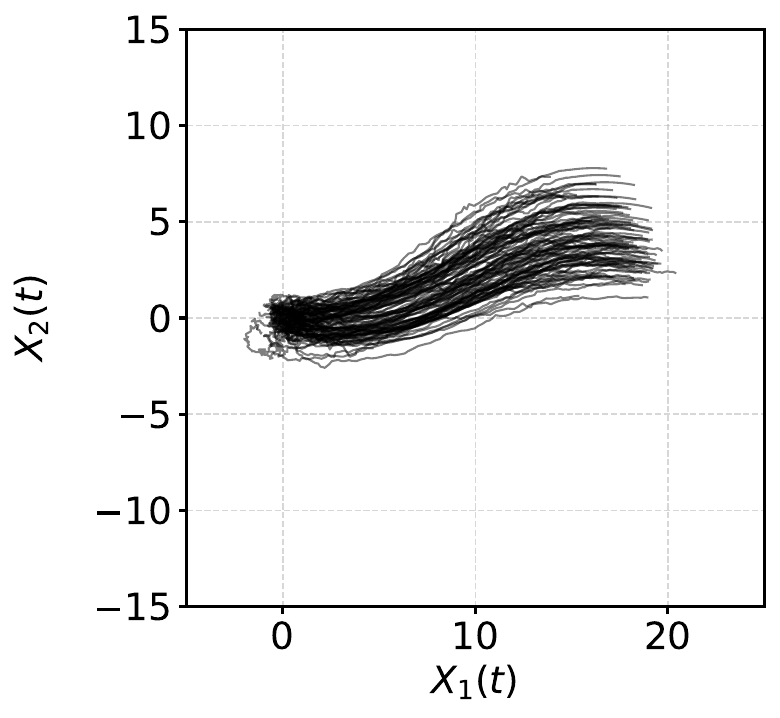} 
    \end{minipage}\hfill
    \begin{minipage}[t]{0.2\textwidth}
        \centering
        \includegraphics[height=2.8cm]{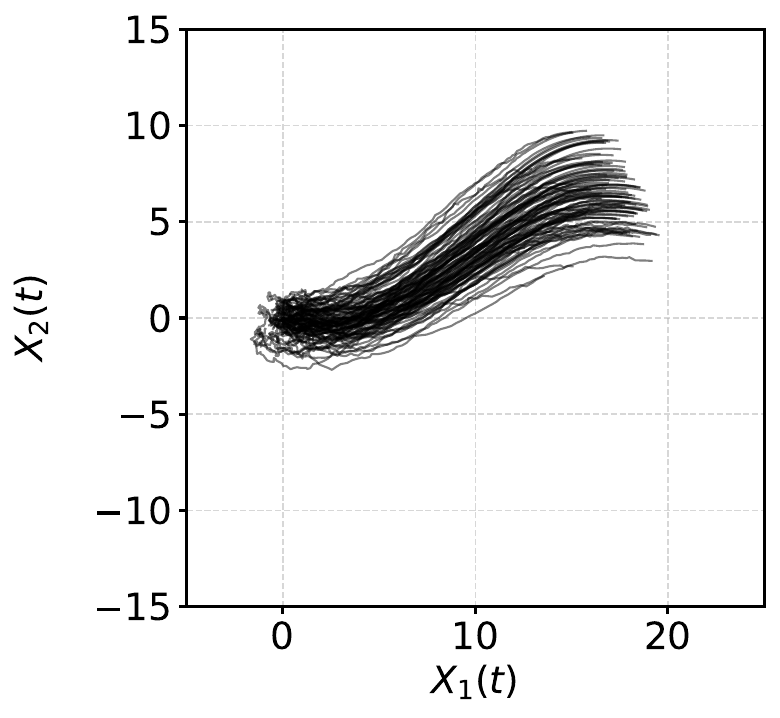} 
    \end{minipage}
    \caption{Controlled Dubins process. 100 sample paths from the SDEs with true \textit{(center)} and estimated \textit{(bottom)} coefficients for five controls \textit{(top)} with \(\theta = -1, -1/2, 0, 1/2, 1\) \textit{(from left to right)}.}
    \label{fig:controlled_dubins_samples}
\end{figure}

\paragraph{Recovering the true controlled dynamics.}
For evaluation, we consider \(K_{\mathrm{te}}=5\) held-out controls
with amplitudes \(\theta=-1,-1/2,0,1/2,1\). These amplitudes lie in the
interior of the training range \([-1.2,1.2]\). We generate and plot
100 sample paths using both the true and estimated coefficients; see
Figure~\ref{fig:controlled_dubins_samples}. The estimated coefficients
yield probability distributions that are visually close to the true
ones, with similar means and variances over time.

\section{Conclusion}

We studied the problem of learning multidimensional controlled SDEs whose drift and diffusion may depend nonlinearly on time,
state, and control, from trajectory data collected under several controls. The
proposed method follows a two-step strategy: it first estimates the state-density evolution
associated with each observed control, and then learns drift and diffusion coefficients by
least-squares Fokker--Planck matching.

The main contribution is a finite-sample analysis of the density flows induced by the
learned coefficients. The resulting bounds show how the error decreases with the number
of sampled controls, and how this rate depends on the dimension and regularity of the
control parametrization. Beyond average-case accuracy, we also obtain guarantees that hold
uniformly over controls, together with CVaR-type bounds for tail-sensitive
quantities under the learned dynamics.

Several limitations should be kept in mind. The results are stated at the level of
density-flow reconstruction, which is the natural object controlled by the Fokker--Planck
equation, but does not in general imply identification of a unique drift--diffusion pair:
distinct coefficients may induce the same laws. The setting is also passive and open-loop:
the controls are observed inputs sampled from a fixed finite-dimensional family, rather
than actions chosen adaptively by the learner. Finally, the analysis relies on smoothness,
ellipticity, localization, and first-stage density-estimation assumptions, which make the
finite-sample guarantees explicit but also delimit the regime covered by the theory.

These limitations point to several directions for future work. A complete end-to-end
analysis should track the density-estimation step more explicitly, including the number of
trajectories per control and the time discretization. Beyond the Sobolev instantiation
considered here, it would be interesting to verify the general source and embedding
conditions for other kernels and structural assumptions, such as finite-dimensional,
sparse, or low-rank representations, which may yield different dimension dependence and
improved statistical guarantees. For larger problems, the kernel least-squares structure of
the Fokker--Planck matching step makes Nystr\"om methods, random features, sketching,
stochastic optimization, and iterative solvers natural scaling tools. Further extensions
include feedback controls, adaptive or safe exploration of the control space, weaker
localization assumptions, and tests on real-world systems such as autonomous driving or
space rendezvous.

% Acknowledgements and Disclosure of Funding should go at the end, before appendices and references
\acks{This work was carried out while L.B.M. was a postdoctoral researcher at Laboratoire des Signaux et Systèmes, Université Paris-Saclay, CentraleSupélec. The Agence Nationale de la Recherche (grant ANR-22-CE48-0006, PI: R.B.) provided funds to support this research. A.R. acknowledges support from the European Research Council (grant REAL 947908).}

% Manual newpage inserted to improve layout of sample file - not
% needed in general before appendices/bibliography.
\newpage
\appendix
\section{Proofs}

\subsection{Probabilistic Setting}\label{app:prob_setting}

We fix integers \(K,Q,M \in \mathbb N_+\) and a time horizon \(T>0\).

\paragraph{Control parameters.}
The controls are parametrized by
\[
\bmH = \{u_\theta : \theta \in \Theta\},
\]
where \(\Theta \subset \mathbb R^m\) is a is a bounded Lipschitz domaine, equipped with its Borel \(\sigma\)-algebra
\(\mathscr T\). Let \(\mathbb P_c\) be a probability measure on
\((\Theta,\mathscr T)\). The parameters \((\theta_k)_{k=1}^K\) are sampled
independently from \(\mathbb P_c\).

\paragraph{Base probability space.}
Let \((\Omega_0,\mathcal F_0,\mathbb P_0)\) be a probability space equipped with
a filtration \((\mathcal F_t)_{t \in [0,T]}\). Let \(W\) be a standard Brownian
motion adapted to this filtration, and let \(X_0\) be an
\(\mathcal F_0\)-measurable random variable with law \(p_0\). The measure
\(\mathbb P_0\) is fixed and does not depend on the control parameter.

\paragraph{Controlled stochastic processes.}
For \(\theta \in \Theta\), let
\[
X^{\theta} : \Omega_0 \times [0,T] \to \mathbb R^n
\]
denote the strong solution of the controlled SDE~\eqref{eq:sde} associated with
\(u_\theta\), whenever it exists. For \(\omega \in \Omega_0\) and \(t \in [0,T]\),
\(X^{\theta}(\omega,t)\) denotes the state at time \(t\).

\paragraph{Auxiliary probability space for control parameters.}
Let
\[
(\Omega_c,\mathcal F_c,\mathbb P_c^{\otimes K})
\]
be a probability space supporting the independent parameters
\((\theta_k)_{k=1}^K\).

\paragraph{Joint probability space.}
We define the product probability space
\[
(\Omega,\mathcal F,\mathbb P)
\;\triangleq\;
(\Omega_c \times \Omega_0^{QK},\;
\mathcal F_c \otimes \mathcal F_0^{\otimes QK},\;
\mathbb P_c^{\otimes K} \otimes \mathbb P_0^{\otimes QK}).
\]

\paragraph{Trajectory indices.}
For each \((k,i) \in \llbracket 1,K \rrbracket \times
\llbracket 1,Q \rrbracket\), let
\[
\omega_i^k : \Omega \to \Omega_0
\]
denote the coordinate projection onto the \((k,i)\)-th copy of
\(\Omega_0\). For a fixed parameter \(\theta_k\), the process
\(X^{\theta_k}(\omega_i^k,\cdot)\) is the solution of the controlled SDE driven
by the Brownian motion and initial condition corresponding to \(\omega_i^k\).

\paragraph{Observed data.}
Let \((t_\ell)_{\ell=1}^M \subset [0,T]\) be fixed sampling times. The observed
dataset is
\[
(\theta_k, X^{\theta_k}(\omega_i^k,t_\ell))_{k \in \llbracket 1,K \rrbracket,\;
i \in \llbracket 1,Q \rrbracket,\;
\ell \in \llbracket 1,M \rrbracket}.
\]

\paragraph{Independence structure.}
The parameters \((\theta_k)_{k=1}^K\) are independent. Conditionally on these
parameters, the trajectories
\((X^{\theta_k}(\omega_i^k,\cdot))_{i=1}^Q\) are independent for each fixed
\(k\), and independent across different values of \(k\).

\subsection{Notation}\label{app:notation}

We collect the notation used throughout the paper. Let \(q,r,s,d,n \in \mathbb N_+\).

\paragraph{Function spaces.}
For sets \(A,B\), \(\mathcal F(A,B)\) denotes the set of all functions from \(A\) to \(B\). If \((A,\mathcal A)\) and \((B,\mathcal B)\) are measurable spaces, \(\mathcal M(A,B)\) denotes the set of measurable functions from \((A,\mathcal A)\) to \((B,\mathcal B)\). For \(q \in \mathbb N_+\), \(H^q(A;B)\) denotes the Sobolev space
\(W^{q,2}(A;B)\) of functions from \(A\) to \(B\) with weak derivatives up to order \(q\). Unless otherwise specified, \(A\) is either \(\mathbb R^r\) or \([0,T]\times\mathbb R^r\), and \(B=\mathbb R^s\).

\paragraph{Control parametrization.}
When \(\bmH=\{u_\theta:\theta\in\Theta\}\) is parametrized by \(\Theta\), we sometimes write \(f(\theta,\cdot)\) for \(f(u_\theta,\cdot)\), whenever no ambiguity arises.

\paragraph{Derivatives.}
For a function \(f : \mathbb R^n \to \mathbb R\) and indices \(i,j \in \llbracket 1,n \rrbracket\), we write
\[
f_i \triangleq \frac{\partial f}{\partial x_i},
\qquad
f_{ij} \triangleq \frac{\partial^2 f}{\partial x_i \partial x_j}.
\]
When convenient, we also use
\[
f_{x_i} \triangleq \frac{\partial f}{\partial x_i},
\qquad
f_{x_i x_j} \triangleq \frac{\partial^2 f}{\partial x_i \partial x_j}.
\]

\paragraph{Uniform bounds.}
For sets \(\bmX,\bmY\), a norm \(\|\cdot\|_{\bmY}\) on \(\bmY\), and a bounded map \(f : \bmX \to \bmY\), we define
\[
\kappa_f \triangleq \sup_{x \in \bmX} \|f(x)\|_{\bmY}.
\]

\paragraph{Mixed \(L^p\) norms.}
For a measurable function
\(f:\Theta\times[0,T]\times\mathbb R^n\to\mathbb R\), we define
\begin{align}
    \|f\|^2_{L^2(\mathbb P_c)\,L^2([0,T]\times\mathbb R^n)}
    &\triangleq
    \mathbb E_{\theta\sim\mathbb P_c}
    \int_0^T\int_{\mathbb R^n}
    f(\theta,t,x)^2\,\mathrm dx\,\mathrm dt, \\
    \|f\|^2_{L^\infty(\mathbb P_c)\,L^2([0,T]\times\mathbb R^n)}
    &\triangleq
    \operatorname*{ess\,sup}_{\theta\sim\mathbb P_c}
    \int_0^T\int_{\mathbb R^n}
    f(\theta,t,x)^2\,\mathrm dx\,\mathrm dt .
\end{align}

\paragraph{Matrix and tensor notation.}
For bounded linear operators \(A,B\), we write \(A\preceq B\) if \(B-A\) is positive semidefinite. For vectors or Hilbert-space elements \(u,v\), \(u\otimes v\) denotes the tensor product. We write \(a\wedge b=\min(a,b)\) and \(a\vee b=\max(a,b)\).

\paragraph{Diffusion matrix.}
Throughout the paper, we write
\[
a \triangleq \sigma \sigma^\top
\]
for the diffusion matrix.

\subsection{Organization of the Proofs}

The proofs are organized as follows.
\begin{enumerate}
    \item \textbf{Proof of the Fokker-Planck matching inequality.} The existence of densities, strong Fokker-Planck equation, and FP matching inequality are proven in Section \ref{subsec:fp_proof}.
    \item \textbf{Proof of \(L^2\) learning rates.} We present necessary preliminary results in Section \ref{subsec:preli}. The \(L^2\) learning rates are then established in Section \ref{subsec:proof_theorem}, based on four main lemmas detailed in Sections \ref{subsec:cone}, \ref{subsec:error_1}, \ref{subsec:error_2}, and \ref{subsec:error_3}, respectively. These main lemmas are proven using auxiliary lemmas, which are established in Section \ref{subsec:lemmas}. The auxiliary lemmas rely on concentration inequalities adapted to our needs, stated in Section \ref{subsec:concentration}.
    \item \textbf{Proof of refined \(L^2\) learning rates.} Refined \(L^2\) learning rates are derived in Section \ref{subsec:proof_refined_L2} using a similar approach to the unrefined \(L^2\) rates, but employing refined lemmas, which are proven in Section \ref{subsec:proof_ref_lem}.
    \item \textbf{Proof of \(L^\infty\) learning rates.} The proofs for the \(L^\infty\) learning rates are provided in Section \ref{subsec:proof_linfty}.
    \item \textbf{Proof of CVaR learning rates.} The proofs for deriving CVaR learning rates are detailed in Section \ref{subsec:proof_cvar}.
    \item \textbf{Proofs for Sobolev coefficients.} The embedding property of Fokker-Planck matching, when the coefficients belong to a Sobolev space, is derived in Section \ref{proof_emb_sobolev}.
\end{enumerate}

\subsection{Existence of Densities, Strong Fokker-Planck Equation, and FP Matching Inequality}
\label{subsec:fp_proof}

We prove the FP matching inequality.

\begin{lemma}[Detailed version of Lemma~\ref{lem:FP_ineq_main}]
\label{lem:FP_ineq}
Let
\(
s_0 \triangleq 5+n+\left\lceil \frac{d+1}{2}\right\rceil .
\)
Assume that Assumptions~\ref{as:uniform_ellipticity}-\ref{as:smooth_coeffs} hold for both coefficient pairs
\((b,a)\) and \((\hat b,\hat a)\), with the same localization domain \(D\),
the same cutoff \(\xi\), and the same ellipticity parameter \(\kappa>0\). In
particular, assume that
\[
b=\xi b_0,\qquad a=\kappa I_n+\xi a_0,\qquad
\hat b=\xi \hat b_0,\qquad \hat a=\kappa I_n+\xi \hat a_0 .
\]
Then, for every \(\theta\in\Theta\), the controlled SDEs associated with
\((b,a)\) and \((\hat b,\hat a)\) admit unique strong solutions whose laws have
densities
\[
p(\theta,t,\cdot)\triangleq p_{b,a}(\theta,t,\cdot),
\qquad
p_{\hat b,\hat a}(\theta,t,\cdot).
\]
Moreover, these densities satisfy, for a.e. \((t,x)\in[0,T]\times\mathbb R^n\),
\[
\partial_t p(\theta,t,x)=\bmL^{(b,a), \theta}p(\theta,t,x),
\qquad
\partial_t p_{\hat b,\hat a}(\theta,t,x)
=
\bmL^{(\hat b,\hat a), \theta}
p_{\hat b,\hat a}(\theta,t,x).
\]

For every \(R>0\), there exists \(C_{\mathrm{FP}}(R)>0\) such that, if
\[
\|\hat b_0\|_{H^{s_0}([0,T]\times D\times V)}
+
\|\hat a_0\|_{H^{s_0}([0,T]\times D\times V)}
\le R,
\]
then
\[
\|p_{\hat b,\hat a}-p\|_{L^2(\mathbb P_c)L^2([0,T]\times\mathbb R^n)}
\le
C_{\mathrm{FP}}(R)\,
\mathrm{FP}(\hat b,\hat a),
\]
where
\[
\mathrm{FP}(\hat b,\hat a)
\triangleq
\left(
\E_{\theta\sim\mathbb P_c}
\left\|
\partial_t p(\theta,\cdot,\cdot)
-
\bmL^{(\hat b,\hat a), \theta}
p(\theta,\cdot,\cdot)
\right\|_{L^2([0,T]\times D)}^2
\right)^{1/2}.
\]
\end{lemma}
\begin{proof}
Fix \(\theta\in\Theta\). By Assumptions~\ref{as:uniform_ellipticity},
\ref{as:smooth_controls}, \ref{as:initial_density}, and
\ref{as:smooth_coeffs}, together with
\citet[Theorems~3.1--3.4]{bonalli2023non}, both SDEs admit unique strong
solutions with densities satisfying the stated Fokker--Planck equations. Set
\[
\rho_\theta \triangleq p_{\hat b,\hat a}(\theta)-p(\theta),
\qquad
r_\theta \triangleq
\partial_t p(\theta)-\bmL^{(\hat b,\hat a), \theta}p(\theta).
\]
Since both equations start from \(p_0\),
\[
\rho_\theta(0,\cdot)=0,
\qquad
\partial_t\rho_\theta
=
\bmL^{(\hat b,\hat a), \theta}\rho_\theta-r_\theta .
\]
Moreover, using \(\partial_t p(\theta)=\bmL^{(b,a), \theta}p(\theta)\),
\[
r_\theta
=
\bigl[
\bmL^{(b,a), \theta}
-
\bmL^{(\hat b,\hat a), \theta}
\bigr]p(\theta).
\]
Since the two diffusion matrices share the same fixed background term
\(\kappa I_n\), this term cancels in the operator difference, and therefore
\[
r_\theta(t,x)
=
\frac12
\sum_{i,j=1}^n
\partial_{x_i x_j}
\left(
\xi(x)
\bigl(a_{0,ij}-\hat a_{0,ij}\bigr)
(t,x,u_\theta(t))
p(\theta,t,x)
\right)
-
\sum_{i=1}^n
\partial_{x_i}
\left(
\xi(x)
\bigl(b_{0,i}-\hat b_{0,i}\bigr)
(t,x,u_\theta(t))
p(\theta,t,x)
\right).
\]
Since \(\xi\in C_c^\infty(\mathbb R^n)\) and
\(\operatorname{supp}(\xi)\subset D\), we have
\[
\|r_\theta\|_{L^2([0,T]\times\mathbb R^n)}
=
\|r_\theta\|_{L^2([0,T]\times D)}.
\]

Let \(r:=4+\lfloor n/2\rfloor\). The \(L^2\)-stability estimate of
\citet{bonalli2023non} gives
\[
\|\rho_\theta\|_{L^2([0,T]\times\mathbb R^n)}
\le
C\!\left(
\|(\hat b,\hat a-\kappa I_n)(\cdot,\cdot,u_\theta(\cdot))\|_
{H^r([0,T]\times D)}
\right)
\|r_\theta\|_{L^2([0,T]\times D)}.
\]

It remains to make the constant uniform in \(\theta\). Since
\[
s_0=5+n+\left\lceil \frac{d+1}{2}\right\rceil
>
r+\frac{n+d+1}{2},
\]
Lemma~\ref{lem:composition-controls}, applied with \(\theta\) fixed, \(q=r\),
and \(\tau=s_0\), gives for every component \(g\) of \(\hat b_0\) or \(\hat a_0\),
\[
\|g(\cdot,\cdot,u_\theta(\cdot))\|_{H^r([0,T]\times D)}
\le
C_\theta
\|g\|_{H^{s_0}([0,T]\times D\times V)} .
\]
By Assumption~\ref{as:smooth_controls}, the constants \(C_\theta\) are bounded
uniformly over \(\theta\in\Theta\). Since
\(\hat b=\xi\hat b_0\), \(\hat a-\kappa I_n=\xi\hat a_0\), and multiplication by
the fixed smooth cutoff \(\xi\) is continuous on Sobolev spaces, it follows that
\[
\sup_{\theta\in\Theta}
\|(\hat b,\hat a-\kappa I_n)(\cdot,\cdot,u_\theta(\cdot))\|_{H^r([0,T]\times D)}
\le C_R
\]
whenever
\[
\|\hat b_0\|_{H^{s_0}([0,T]\times D\times V)}
+
\|\hat a_0\|_{H^{s_0}([0,T]\times D\times V)}
\le R .
\]
Hence the parabolic stability constant is bounded uniformly over
\(\theta\in\Theta\). Denote this uniform bound by \(C_{\mathrm{FP}}(R)\). Then
\[
\|\rho_\theta\|_{L^2([0,T]\times\mathbb R^n)}
\le
C_{\mathrm{FP}}(R)
\|r_\theta\|_{L^2([0,T]\times D)}.
\]
Squaring, integrating with respect to \(\theta\sim\mathbb P_c\), and taking
square roots gives the result.
\end{proof}% \begin{proof}
\begin{lemma}[Fokker--Planck matching inequality for \(L^\infty\)-in-control learning rates]
\label{lem:fp_ineq_linfty}
Assume the setting and assumptions of Lemma~\ref{lem:FP_ineq}. Let
\(D\subset\mathbb R^n\) be the set from Assumption~\ref{as:p_s}.
For every \(R>0\), there exists \(C_{\mathrm{FP}}(R)>0\) such that, if
\[
\|\hat b_0\|_{H^{s_0}([0,T]\times D\times V)}
+
\|\hat a_0\|_{H^{s_0}([0,T]\times D\times V)}
\le R,
\]
then

\[
\|p_{\hat b,\hat a}-p\|_{L^\infty(\mathbb P_c)L^2([0,T]\times\mathbb R^n)}
\le
C_{\mathrm{FP}}(R)\,
\mathrm{FP}^\infty(\hat b,\hat a).
\]

where
\[
\mathrm{FP}^\infty(\hat b,\hat a)
\triangleq
\operatorname*{ess\,sup}_{\theta\sim\mathbb P_c}
\left\|
\partial_t p(\theta,\cdot,\cdot)
-
\bmL^{(\hat b,\hat a), \theta}
p(\theta,\cdot,\cdot)
\right\|_{L^2([0,T]\times D)} .
\]
\end{lemma}
\begin{proof}
The proof of Lemma~\ref{lem:FP_ineq} establishes, for every
\(\theta\in\Theta\), the fixed-control estimate
\[
\|p_{\hat b,\hat a}(\theta)-p(\theta)\|_{L^2([0,T]\times\mathbb R^n)}
\le
C_{\mathrm{FP}}(R)
\left\|
\partial_t p(\theta)
-
\bmL^{(\hat b,\hat a), \theta}p(\theta)
\right\|_{L^2([0,T]\times D)} ,
\]
where \(C_{\mathrm{FP}}(R)\) is independent of \(\theta\). Taking the essential
supremum over \(\theta\sim\mathbb P_c\) proves the claim.
\end{proof}

\subsection{Useful Preliminary Results for Proving FP Matching Learning rates}\label{subsec:preli}

We state the auxiliary results used to prove the FP matching learning rates for the estimator of Section~\ref{sec:proposed_method}, with hard shape-constrained PSD diffusion. Since the true and candidate diffusion matrices share the term $\kappa I_n$, it cancels in the Fokker-Planck residual, so the matching error depends only on the localized components.

\paragraph{FP matching as least-squares regression.}
Let \(k\) be the scalar kernel used in the coefficient hypothesis space, let
\(\bmG\) be its RKHS, and write, with a slight abuse of notation,
% \[
% \phi(t,x,v)\triangleq k((t,x,v),\cdot)\in\bmG .
% \]
\[
\phi(t,x,v)\triangleq \xi(x)\,k((t,x,v),\cdot)\in\bmG,
\qquad
(\phi\otimes\phi)(t,x,v)
\triangleq
\xi(x)\,
\bigl(k((t,x,v),\cdot)\otimes k((t,x,v),\cdot)\bigr)
\in\bmG\otimes\bmG .
\]
Define also
\[
\bmG_{\mathrm{FP}}
\triangleq
\bmG^n \times (\bmG\otimes\bmG)^{n^2}.
\]
The induced Fokker--Planck feature map
\(
\tilde\phi : \Theta\times[0,T]\times\R^n \to \bmG_{\mathrm{FP}}
\)
is defined by
\begin{align}
\tilde \phi(\theta,t,x)
\triangleq
\left(
(-\tilde\phi_i)_{i\in\llbracket1,n\rrbracket}
\;\middle|\;
\left(\tfrac12\tilde\phi_{ij}\right)_{i,j\in\llbracket1,n\rrbracket}
\right)(\theta,t,x)
\in \bmG_{\mathrm{FP}},
\end{align}
where
\begin{align}
\tilde\phi_i(\theta,t,x)
&\triangleq
\left(
\phi\,\partial_{x_i}p
+
p\,\phi_i
\right)\!\bigl(t,x,u_\theta(t)\bigr)
\in\bmG,
\\
\tilde\phi_{ij}(\theta,t,x)
&\triangleq
\left(
\phi\otimes\phi\,\partial_{x_ix_j}p
+
(\phi\otimes\phi)_i\,\partial_{x_j}p
+
(\phi\otimes\phi)_j\,\partial_{x_i}p
+
p(\phi\otimes\phi)_{ij}
\right)\!\bigl(t,x,u_\theta(t)\bigr)
\in\bmG\otimes\bmG .
\end{align}

The FP matching problem can then be written as least-squares regression:
\begin{align}
\mathrm{FP}^2(b,a)
=
\E_{\theta,t,x}
\Big[
\big(
\partial_t p(\theta,t,x)
-
\langle w,\tilde\phi(\theta,t,x)\rangle_{\bmG_{\mathrm{FP}}}
\big)^2
\Big],
\end{align}
where, throughout this proof, \((t,x)\) is uniform on \([0,T]\times D\) and \(\theta\sim\mathbb P_c\). Since \(T|D|\) is fixed, the resulting factor \(\sqrt{T|D|}\) can be absorbed into the constant \(C_{\mathrm{FP}}\) in the Fokker--Planck matching inequality.

\paragraph{Empirical FP matching as ridge regression.}

Let \(w=((w_i)_{i=1}^n,(w_{ij})_{i,j=1}^n)\in\bmG_{\mathrm{FP}}\), and define
\begin{align}
    \bmS_{\mathrm{FP}} \triangleq \bmG^{n} \times \bmS \subset \bmG_{\mathrm{FP}}.
\end{align}
Regularized empirical FP matching, Eq.~\eqref{eq:proposed_estimator}, can be written as ridge regression:
\begin{align}\label{eq:w_s}
    \hat w_S 
    \triangleq 
    \argmin_{w \in \bmS_{\mathrm{FP}}}
    \frac{1}{K N}
    \sum_{k=1}^K  
    \sum_{i=1}^{N}
    \left(
    \frac{\partial \hat p_k}{\partial t}(t_i, x_i)
    -
    \langle w,\, \hat \phi^k(t_i, x_i) \rangle_{\bmG_{\mathrm{FP}}}
    \right)^2
    +
    \lambda  \|w\|^2_{\bmG_{\mathrm{FP}}}.
\end{align}
The empirical feature map \(\hat\phi^k(t,x)\in\bmG_{\mathrm{FP}}\) is
\begin{align}
\hat \phi^k(t,x)
\triangleq
\left(
(-\hat \phi_i^k)_{i\in\llbracket1,n\rrbracket}
\;\middle|\;
\left(\tfrac12\hat \phi_{ij}^k\right)_{i,j\in\llbracket1,n\rrbracket}
\right)(t,x,u_{\theta_k}(t))
\in\bmG_{\mathrm{FP}},
\end{align}
where
\begin{align}
\hat \phi_i^k(t,x,v)
&\triangleq
\left(
\phi \frac{\partial \hat p_k}{\partial x_i}
+
\hat p_k \phi_i
\right)(t,x,v),
\\
\hat \phi_{ij}^k(t,x,v)
&\triangleq
\left(
(\phi\otimes\phi)\frac{\partial^2 \hat p_k}{\partial x_i\partial x_j}
+
(\phi\otimes\phi)_i\frac{\partial \hat p_k}{\partial x_j}
+
(\phi\otimes\phi)_j\frac{\partial \hat p_k}{\partial x_i}
+
\hat p_k(\phi\otimes\phi)_{ij}
\right)(t,x,v).
\end{align}

\paragraph{Auxiliary ridge estimators for the proof of Theorem~\ref{thm:lr}.}

We define the unconstrained ridge estimator \(\hat w\), obtained from Eq.~\eqref{eq:w_s} by removing the cone constraint:
\begin{align}
\hat w
\triangleq
\argmin_{w\in\bmG_{\mathrm{FP}}}
\frac{1}{KN}
\sum_{k=1}^K
\sum_{i=1}^N
\left(
\frac{\partial \hat p_k}{\partial t}(t_i,x_i)
-
\langle w,\hat\phi^k(t_i,x_i)\rangle_{\bmG_{\mathrm{FP}}}
\right)^2
+
\lambda\|w\|^2_{\bmG_{\mathrm{FP}}}.
\end{align}
We also define the ridge estimator \(\tilde w\), whose error comes only from finite sampling of \(\Theta\times[0,T]\times D\), excluding density-estimation errors from \((\hat p_k)_{k=1}^K \approx (p(\theta_k))_{k=1}^K\):
\begin{align}
\tilde w
\triangleq
\argmin_{w\in\bmG_{\mathrm{FP}}}
\frac{1}{KN}
\sum_{k=1}^K
\sum_{i=1}^N
\left(
\frac{\partial p}{\partial t}(\theta_k,t_i,x_i)
-
\langle w,\tilde\phi(\theta_k,t_i,x_i)\rangle_{\bmG_{\mathrm{FP}}}
\right)^2
+
\lambda\|w\|^2_{\bmG_{\mathrm{FP}}}.
\end{align}
Finally, we define \(w_K\), whose error comes only from finite sampling of the control parameters:
\begin{align}
w_K
\triangleq
\argmin_{w\in\bmG_{\mathrm{FP}}}
\frac{1}{K}
\sum_{k=1}^K
\E_{t,x}
\Big[
\big(
\partial_t p(\theta_k,t,x)
-
\langle w,\tilde\phi(\theta_k,t,x)\rangle_{\bmG_{\mathrm{FP}}}
\big)^2
\Big]
+
\lambda\|w\|^2_{\bmG_{\mathrm{FP}}}.
\end{align}

The next proposition gives closed-form expressions for these ridge estimators.

\begin{proposition}[\(\hat w\), \(\tilde w\), and \(w_K\) expressions] 
For any operator \(A\), define \(A_{\lambda} \triangleq A + \lambda I\), where \(\otimes\) denotes the tensor product. Then \(\hat w\) has the standard ridge regression form
\begin{align}
    \hat w = \hat D \hat C_{\lambda}^{-1},
\end{align}
with \(\hat D \triangleq \frac{1}{K N} \sum_{k=1}^K\sum_{i=1}^N \frac{\partial \hat p_k}{\partial t}(t_i, x_i) \hat \phi^k(t_i,x_i)\), and 
\(\hat C \triangleq \frac{1}{K N} \sum_{k=1}^K\sum_{i=1}^N \hat \phi^k(t_i,x_i) \otimes \hat \phi^k(t_i,x_i)\).

Similarly, \(\tilde w = \tilde D \tilde C_{\lambda}^{-1}\), with
\begin{align}
    \tilde D \triangleq \frac{1}{NK}\sum_{k=1}^K \sum_{i=1}^N \frac{\partial p}{\partial t}(\theta_k, t_i, x_i) \tilde\phi(\theta_k,t_i,x_i),\quad &\text{and} \quad 
    \tilde C \triangleq \frac{1}{NK} \sum_{k=1}^K \sum_{i=1}^N \tilde \phi(\theta_k,t_i,x_i) \otimes \tilde \phi(\theta_k,t_i,x_i).
\end{align}
We also have \(w_K = D_K C_{K, \lambda}^{-1}\), with
\begin{align}
    D_K \triangleq \frac{1}{K}\sum_{k=1}^K \E_{t,x}\left[\frac{\partial p}{\partial t}(\theta_k, t, x) \tilde \phi(\theta_k,t,x)\right],\quad &\text{and} \quad 
    C_K \triangleq \frac{1}{K} \sum_{k=1}^K \E_{t,x}\left[\tilde \phi(\theta_k,t,x) \otimes \tilde \phi(\theta_k,t,x)\right].
\end{align}
\end{proposition}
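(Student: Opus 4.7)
The three identities are all instances of the closed-form minimizer of a penalized least-squares problem on the Hilbert space $\bmG_n$, so I would prove them simultaneously via a single template argument that differs only in the data.

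\textbf{Step 1: uniqueness of the minimizer.} I would first note that each of the three objectives is a strictly convex quadratic functional on $\bmG_n$: the squared-loss term is convex, and the ridge term $\lambda\|w\|^2_{\bmG_n}$ with $\lambda > 0$ makes the Hessian strictly positive definite. Hence each objective admits a unique minimizer characterized by the vanishing of its Fr\'echet derivative.

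\textbf{Step 2: first-order condition.} Using $\nabla_w \langle w, \xi\rangle_{\bmG_n}^2 = 2(\xi \otimes \xi)\,w$ and $\nabla_w \|w\|_{\bmG_n}^2 = 2w$, the stationarity condition of each objective reduces to a linear normal equation of the form
\begin{align}
(A + \lambda I)\,w \,=\, b,
\end{align}
where $(A, b)$ equals $(\hat C, \hat D)$ for $\hat w$, $(\tilde C, \tilde D)$ for $\tilde w$, and $(C_K, D_K)$ for $w_K$. The three cases differ only in whether the second moments $A$ and cross moments $b$ are obtained by sampling over both $k$ and $i$, by sampling over $k$ while integrating in $(t, x)$, or by full integration.

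\textbf{Step 3: inversion.} Since $A \succeq 0$ and $\lambda > 0$, the operator $A + \lambda I$ is strictly positive and boundedly invertible on $\bmG_n$, so the minimizer is $(A + \lambda I)^{-1} b$, yielding the announced closed-form expressions once identified with $\hat C_\lambda^{-1}$, $\tilde C_\lambda^{-1}$, and $C_{K,\lambda}^{-1}$ respectively.

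\textbf{Where care is needed.} No algebraic step is delicate. The one genuine verification concerns $w_K$, whose objective features an expectation over $(t, x) \sim p_s$ inside the squared loss: to exchange the Fr\'echet derivative and the expectation I would invoke a standard dominated convergence argument, using that $\tilde\phi$ is essentially bounded on $[0,T] \times \R^n$ and $\frac{\partial p}{\partial t}(u_k, \cdot, \cdot)$ is square-integrable against $p_s$, both of which follow from Assumptions \ref{as:min_reg} and \ref{as:p_s} together with Lemma \ref{lem:sfpe}. A purely notational caveat is that the statement writes $\hat w = \hat D\,\hat C_\lambda^{-1}$ with the operator appearing on the right; under the tensor-product convention $(\xi \otimes \xi)w = \langle \xi, w\rangle_{\bmG_n}\,\xi$ used throughout the paper this is the same element of $\bmG_n$ as $\hat C_\lambda^{-1}\hat D$, so the difference is a convention about left versus right composition and requires no argument.
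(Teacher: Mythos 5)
Your proof is correct and follows the standard normal-equations derivation of kernel ridge regression that the paper itself relies on implicitly (it states this proposition without proof as a routine fact). Your two caveats — the dominated-convergence justification for differentiating under the expectation in the $w_K$ case, and the left-versus-right composition convention in writing $\hat w = \hat D \hat C_{\lambda}^{-1}$ — are both apt and correctly resolved.
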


\begin{proposition}[Attainability of \(\partial_t p\)]
Assumption~\ref{as:attain} is equivalent to the existence of 
\(W_b \in \R^n \otimes \bmG\) and 
\(W_a \in \R^{n^2} \otimes (\bmG \otimes \bmG)\) such that
\begin{align}
(b,a-\kappa I_n)
=
\left(
W_b \phi(\cdot),
W_a (\phi(\cdot)\otimes\phi(\cdot))
\right)
\quad
\text{with}
\quad
\|W_b\|_{\hs} < +\infty,
\quad
\|W_a\|_{\hs} < +\infty,
\end{align}
and such that the representation \([W_a]_{\bmG^n \otimes \bmG^n}\) of \(W_a\) in \(\bmG^n \otimes \bmG^n\) satisfies
\begin{align}
[W_a]_{\bmG^n \otimes \bmG^n} \in \bmS .
\end{align}

The assumptions of Section~\ref{subsec:fp_ineq} further ensure that there exists
\(w \in \bmS_{\mathrm{FP}}\) such that
\begin{align}
\frac{\partial p}{\partial t}(\theta,t,x)
=
\left(
\bmL^{(b,a), \theta} p
\right)(\theta,t,x)
=
\langle w ,\, \tilde \phi(\theta,t,x)\rangle_{\bmG_{\mathrm{FP}}}.
\end{align}

Furthermore, the Fokker--Planck matching \(L^2\)-risk can be written as
\begin{align}
\mathrm{FP}(\hat b,\hat a)
=
\|(\hat w_S - w)C^{1/2}\|_{\bmG_{\mathrm{FP}}},
\end{align}
with
\begin{align}
C
\triangleq
\E_{\theta,t,x}
\left[
\tilde \phi(\theta,t,x) \otimes \tilde \phi(\theta,t,x)
\right].
\end{align}
See also \citet{ciliberto2020general}.
\end{proposition}

\begin{remark}[Alternative assumptions to Assumptions~\ref{as:emb}]
We denote \(\tilde \phi \triangleq \tilde \phi(\theta, t, x)\). The following three assumptions are standard alternatives for measuring the regularity of the feature space in least-squares regression:
\begin{itemize}
    \item \textbf{(1)} There exists \(r_1 \in [0,1]\) and \(c > 0\) such that 
    \begin{align}
        \Tr(C^{1-r_1}) < c.
    \end{align}
    \item \textbf{(2)} There exists \(r_2 \in [0,1]\) and \(c > 0\) such that 
    \begin{align}
        \mathbb{E}\!\left[\langle \tilde \phi, C^{-r_2} \tilde \phi \rangle_{\bmG_{\mathrm{FP}}} \tilde \phi \otimes \tilde \phi\right] \preccurlyeq c\, C.
    \end{align}
    \item \textbf{(3)} There exists \(r_3 \in [0,1]\) and \(c > 0\) such that 
    \begin{align}
        \|C^{-r_3/2} \tilde \phi\|_{\bmG_{\mathrm{FP}}} < c 
        \quad \text{almost surely}.
    \end{align}
\end{itemize}

Assumptions (1) and (3) are known as the capacity condition and embedding property, respectively \citep{pillaud2018statistical, fischer2020sobolev}. Assumptions (1), (2), and (3) give progressively finer regularity conditions on the features \(\tilde \phi(\theta,t,x)\). Indeed, Assumption (2) implies Assumption (1) with \( r_1=r_2\), and Assumption (3) implies Assumption (2) with \(r_2=r_3 \); see Remark 3 in \citet{berthier2020tight}. In noisy kernel ridge regression, Assumption (1) refines rates from \(N^{-1/4}\) when \(r_1=0\) to \(N^{-1/2}\) when \(r_1=1\). In noiseless kernel ridge regression, Assumption (2) refines rates from \(N^{-1/2}\) when \(r_2=0\) to \(N^{-1}\) when \(r_2=1\), while Assumption (3) refines them from \(N^{-1}\) when \(r_3=0\) to arbitrarily fast polynomial decay as \(r_3\to1\).
For completeness, we present all three assumptions, but for clarity we base our proofs on the stronger Assumption (3).
\end{remark}

\subsection{Proof of Theorem \ref{thm:lr}}\label{subsec:proof_theorem}

\begin{theorem}[\(L^2\) Learning rates]
For any \(N,K\in\mathbb N^*\), let \((\hat b,\hat a)\) be the estimator
defined in Section~\ref{sec:proposed_method}, with hard shape-constrained
PSD diffusion, trained using \(K\) control parameters
\((\theta_k)_{k=1}^K\overset{\mathrm{i.i.d.}}{\sim}\mathbb P_c\) and
\(N\) state-time points \(z_i=(s_i,y_i)\overset{\mathrm{i.i.d.}}{\sim}
\mathrm{Unif}([0,T]\times D)\), where \(D\subset\mathbb R^n\) is the
localization domain of Assumption~\ref{as:p_s}. 
Assume that Assumptions~\ref{as:uniform_ellipticity}--\ref{as:attain} hold. Then there exist constants $c_1, c_2 > 0$, independent of $N$, $K$, and $\delta$,
such that for any $\delta \in (0,1]$, defining \(\varepsilon \triangleq \sup_{k \in \llbracket 1,\, K\rrbracket} \mathcal{E}(\hat p_k, p(\theta_k))\) where \( \mathcal{E}\) is a Sobolev-type error defined as
\begin{align}
    \mathcal{E}(p_1, p_2)^2 \triangleq \int_{0}^T \left\| \frac{\partial p_1}{\partial t}(t, \cdot) - \frac{\partial p_2}{\partial t}(t, \cdot)\right\|_{L^2(\R^n)}^2 +  \left\|p_1(t, \cdot) -p_2(t, \cdot)\right\|_{H^2(\R^n)}^2 dt,
\end{align}
and setting
\begin{align}
    \lambda = c_2 \log \frac{2}{\delta} \left(\frac{1}{N}\log^2\frac{N}{\delta} + \frac{1}{K}\log^2\frac{K}{\delta}\right),
\end{align}
if 
\begin{align}
    \varepsilon \leq \frac{1}{N}\log^2 \frac{N}{\delta} + \frac{1}{K}\log^2 \frac{K}{\delta},
\end{align}
then with probability at least $1-\delta$,
\begin{align}
   \|p_{\hat b, \hat a} - p\|_{L^2(\mathbb P_c)L^2([0,T]\times \R^n)} \leq c_1 \log \frac{2}{\delta} \left(\frac{\log \frac{N}{\delta}}{\sqrt{N}} + \frac{\log \frac{K}{\delta}}{\sqrt{K}}\right).
\end{align}
\end{theorem}
\begin{sproof}
The proof involves decomposing the error of the proposed estimator into three components:
\begin{enumerate}
    \item Error stemming from the estimation of the probability densities \((\hat{p}_k)_{k=1}^K \approx (p(\theta_k))_{k=1}^K\),
    \item Error due to the finite-sample approximation over the localized state-time domain \([0,T]\times D\),
    \item Error due to the finite sampling approximation of the space of control parameters \(\Theta\),
\end{enumerate}
and then bounding each error component.

Denoting $z=(t,x)$, $\tilde{\phi} = \tilde{\phi}(\theta, t, x)$, and $\hat{\mathbb{E}}[\cdot]$ as the empirical expectation over the training points, this corresponds to the following successive approximations.
\begin{align}
    \hat{C} \triangleq \hat{\mathbb{E}}_\theta \hat{\mathbb{E}}_z [\hat{\phi} \otimes \hat{\phi}] \quad\rightarrow\quad \tilde{C} \triangleq \hat{\mathbb{E}}_\theta \hat{\mathbb{E}}_z [\tilde{\phi} \otimes \tilde{\phi}] \quad\rightarrow\quad C_K \triangleq \hat{\mathbb{E}}_\theta \mathbb{E}_z [\tilde{\phi} \otimes \tilde{\phi}] \quad\rightarrow\quad C \triangleq \mathbb{E}_\theta \mathbb{E}_z [\tilde{\phi} \otimes \tilde{\phi}].
\end{align}
\end{sproof}

\begin{proof}
The definitions of Section \ref{subsec:preli} allow for the following decomposition
\begin{align}
    \|(\hat w - w) C^{1/2}\|_{\bmG_{\mathrm{FP}}} \leq \|(\hat w - \tilde w) C^{1/2}\|_{\bmG_{\mathrm{FP}}}+  \|(\tilde w - w_K) C^{1/2}\|_{\bmG_{\mathrm{FP}}} + \|(w_K - w) C^{1/2}\|_{\bmG_{\mathrm{FP}}}.
\end{align}

Using Lemmas \ref{lem:cone}, \ref{lem:error_1}, \ref{lem:error_2}, and \ref{lem:error_3}, and Lemma \ref{lem:kappa}, we determine that for any \(\delta \in (0,1]\), if \(\lambda \geq c_2 \log \frac{2}{\delta} (N^{-1} + \varepsilon)\), \(\lambda \geq \frac{9\kappa_{\tilde \phi}^2}{N}\log \frac{N}{\delta}\), and \(\lambda \geq \frac{9\kappa_{\tilde \phi}^2}{K}\log \frac{K}{\delta}\), with probability \(1-\delta\),
\begin{align}
    \text{FP}(\hat b, \hat  a) \leq c_1 \left((N^{-1} + \varepsilon) \lambda^{-1/2} \log \frac{2}{\delta}  + \frac{\log \frac{N}{\delta}}{\sqrt{N}} + \frac{\log \frac{K}{\delta}}{\sqrt{K}} + \lambda^{1/2}\right),
\end{align}
where constants \(c_1, c_2 > 0\) are independent of \(N\), \(K\), and \(\delta\). Therefore, setting
\begin{align}
    \lambda = c_2 \log \frac{2}{\delta} \left(\frac{1}{N}\log^2 \frac{N}{\delta} + \frac{1}{K}\log^2 \frac{K}{\delta}\right),
\end{align}
and assuming
\begin{align}
    \varepsilon \leq \frac{1}{N}\log^2 \frac{N}{\delta} + \frac{1}{K}\log^2 \frac{K}{\delta},
\end{align}
the three lower bounds on \(\lambda\) hold, up to increasing \(c_2\). Hence, after overloading \(c_1>0\),
\begin{align}
    \text{FP}(\hat b, \hat  a) \leq c_1 \log \frac{2}{\delta} \left(\frac{\log \frac{N}{\delta}}{\sqrt{N}} + \frac{\log \frac{K}{\delta}}{\sqrt{K}}\right).
\end{align}
Finally, Lemma \ref{lem:FP_ineq} completes the proof.
\end{proof}

\subsection{Proof of Lemma \ref{lem:cone}}\label{subsec:cone}

This lemma shows that the error of \(\hat{w}_S\) can be bounded by that of the unconstrained ridge estimator \(\hat{w}\). This stems from the fact that using the same empirical learning objective with a subset \(\bmS_{\mathrm{FP}} \subset \bmG_{\mathrm{FP}}\), rather than the full set \(\bmG_{\mathrm{FP}}\), cannot increase the error, provided that \(w \in \bmS_{\mathrm{FP}}\).

\begin{lemma}\label{lem:cone}
Under Assumption~\ref{as:attain}, the following inequality holds:
\begin{align}
    \|(\hat w_S - w)C^{1/2}\|_{\bmG_{\mathrm{FP}}}
    \leq
    \|(\hat w - w)C_{\lambda}^{1/2}\|_{\bmG_{\mathrm{FP}}}
    \left(
    1+
    \|C_{\lambda}^{-1/2}\hat C_{\lambda}^{1/2}\|_{\infty}
    \|\hat C_{\lambda}^{-1/2}C^{1/2}\|_{\infty}
    \right).
\end{align}
\end{lemma}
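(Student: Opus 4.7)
The plan is to combine the quadratic structure of the empirical ridge objective \eqref{eq:w_s} with the cone constraint $\bmS_n$, then convert between the noncommuting operators $\hat C_\lambda$, $C_\lambda$, $C$ via submultiplicativity of the operator norm.

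Call $J(v)$ the regularized empirical objective appearing in \eqref{eq:w_s}. Since $\hat w$ is the unique unconstrained minimizer of this strictly convex quadratic, a direct expansion around $\hat w$ (using the normal equation $\hat C_\lambda\hat w=\hat D$) yields the exact identity
\[
J(v) - J(\hat w) \;=\; \|(v-\hat w)\hat C_\lambda^{1/2}\|_{\bmG_n}^{2} \qquad \forall\,v\in\bmG_n.
\]
By Assumption \ref{as:attain} we have $w\in \bmS_n$, and $\hat w_S$ is by definition the minimizer of $J$ over the closed convex cone $\bmS_n$, so $J(\hat w_S)\le J(w)$. Substituting into the identity gives the only place the cone structure is used:
\[
\|(\hat w_S-\hat w)\hat C_\lambda^{1/2}\|_{\bmG_n} \;\le\; \|(w-\hat w)\hat C_\lambda^{1/2}\|_{\bmG_n}.
\]

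The remaining work is algebraic bookkeeping between the noncommuting square roots. I would insert the factorizations $C^{1/2}=\hat C_\lambda^{1/2}\,(\hat C_\lambda^{-1/2}C^{1/2})$ and $\hat C_\lambda^{1/2}=C_\lambda^{1/2}\,(C_\lambda^{-1/2}\hat C_\lambda^{1/2})$, and apply $\|v\cdot AB\|\le \|vA\|\,\|B\|_\infty$ twice together with the previous inequality to get
\begin{align*}
\|(\hat w_S-\hat w)C^{1/2}\|_{\bmG_n}
&\le \|(\hat w_S-\hat w)\hat C_\lambda^{1/2}\|_{\bmG_n}\,\|\hat C_\lambda^{-1/2}C^{1/2}\|_\infty\\
&\le \|(w-\hat w)\hat C_\lambda^{1/2}\|_{\bmG_n}\,\|\hat C_\lambda^{-1/2}C^{1/2}\|_\infty\\
&\le \|(w-\hat w)C_\lambda^{1/2}\|_{\bmG_n}\,\|C_\lambda^{-1/2}\hat C_\lambda^{1/2}\|_\infty\,\|\hat C_\lambda^{-1/2}C^{1/2}\|_\infty.
\end{align*}
The stated factor $\|C_\lambda^{-1/2}\hat C_\lambda^{1/2}\|_\infty+\|\hat C_\lambda^{-1/2}C^{1/2}\|_\infty$ in place of their product follows from the elementary inequality $xy\le x+y$ valid whenever $\min(x,y)\le 1$ — precisely the regime enforced later by the concentration arguments for the choice of $\lambda$ in Theorem \ref{thm:lr}, under which both operator norms lie within a small constant of $1$ with high probability.

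The main obstacle is purely notational: keeping the order of the non-commuting operators straight when factoring $C^{1/2}$ through $\hat C_\lambda^{1/2}$ and then $C_\lambda^{1/2}$, and correctly transposing operators when switching between ``row'' and ``column'' actions on $\hat w_S-\hat w$. No genuine analytic difficulty is expected, as the whole argument reduces to a Hilbert-space projection statement dressed with standard operator-norm manipulations.
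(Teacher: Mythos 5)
Your proof is correct and follows essentially the same route as the paper's: completing the square to recognize $\hat w_S$ as the $\hat C_\lambda$-metric projection of $\hat w$ onto $\bmS_n$, comparing against $w\in\bmS_n$ (Assumption \ref{as:attain}), and then factoring through $\hat C_\lambda^{1/2}$ and $C_\lambda^{1/2}$ by submultiplicativity. Your explicit justification of the final product-to-sum step (via $xy\le x+y$ when $\min(x,y)\le 1$, which holds in the regime where the lemma is invoked) is in fact more careful than the paper, which passes from the product of the two operator norms to their sum without comment.
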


\begin{proof}
By the triangle inequality,
\begin{align}
    \|(\hat w_S - w)C^{1/2}\|_{\bmG_{\mathrm{FP}}}
    \leq
    \|(\hat w_S - \hat w)C^{1/2}\|_{\bmG_{\mathrm{FP}}}
    +
    \|(\hat w - w)C^{1/2}\|_{\bmG_{\mathrm{FP}}}.
\end{align}
Since \(C\preceq C_\lambda\), the second term is bounded by
\begin{align}
    \|(\hat w - w)C^{1/2}\|_{\bmG_{\mathrm{FP}}}
    \leq
    \|(\hat w - w)C_{\lambda}^{1/2}\|_{\bmG_{\mathrm{FP}}}.
\end{align}
For the first term, recalling that \(\hat w=\hat D\hat C_\lambda^{-1}\), the constrained problem can be rewritten as
\begin{align}
    \hat w_S
    =
    \argmin_{v\in\bmS_{\mathrm{FP}}}
    \|(v-\hat w)\hat C_{\lambda}^{1/2}\|_{\bmG_{\mathrm{FP}}}.
\end{align}
Since \(w\in\bmS_{\mathrm{FP}}\), we get
\begin{align}
    \|(\hat w_S-\hat w)\hat C_{\lambda}^{1/2}\|_{\bmG_{\mathrm{FP}}}
    \leq
    \|(w-\hat w)\hat C_{\lambda}^{1/2}\|_{\bmG_{\mathrm{FP}}}.
\end{align}
Therefore,
\begin{align}
    \|(\hat w_S-\hat w)C^{1/2}\|_{\bmG_{\mathrm{FP}}}
    &\leq
    \|(\hat w_S-\hat w)\hat C_{\lambda}^{1/2}\|_{\bmG_{\mathrm{FP}}}
    \|\hat C_{\lambda}^{-1/2}C^{1/2}\|_{\infty} \\
    &\leq
    \|(\hat w-w)C_{\lambda}^{1/2}\|_{\bmG_{\mathrm{FP}}}
    \|C_{\lambda}^{-1/2}\hat C_{\lambda}^{1/2}\|_{\infty}
    \|\hat C_{\lambda}^{-1/2}C^{1/2}\|_{\infty}.
\end{align}
Combining the previous bounds gives the result.
\end{proof}

\subsection{Proof of Lemma \ref{lem:error_1}}\label{subsec:error_1}

This lemma bounds the error stemming from the estimation of the probability densities \((\hat{p}_k)_{k=1}^K \approx (p(\theta_k))_{k=1}^K\).
\begin{lemma}[Bound $\|(\hat w - \tilde w) C^{1/2}\|_{\bmG_{\mathrm{FP}}}$]\label{lem:error_1} There exist constants $c_1, c_2>0$ that do not depend on $N, K, \delta$, such that, for any $\delta \in (0, 1]$, with probability at least $1-\delta$
\begin{align}
    \|(\hat w - \tilde w) C^{1/2}\|_{\bmG_{\mathrm{FP}}} \leq c_1\log \frac{2}{\delta} (N^{-1} + \varepsilon) \lambda^{-1/2},
\end{align}
provided that $\lambda \geq c_2 \log \frac{2}{\delta} (N^{-1} + \varepsilon)$, $\lambda \geq \frac{18c}{N}\log \frac{N}{\delta}$ and $\lambda \geq \frac{18c}{K}\log \frac{K}{\delta}$.
\end{lemma}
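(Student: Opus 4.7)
The plan is to apply the resolvent identity to express $\hat w - \tilde w$ in terms of the operator and vector differences $\hat C - \tilde C$ and $\hat D - \tilde D$, which encode the effect of swapping $p(u_k)$ for $\hat p_k$ throughout the definitions in Section \ref{subsec:preli}. Starting from $\hat C_\lambda \hat w = \hat D$ and $\tilde C_\lambda \tilde w = \tilde D$, a short manipulation gives
\begin{align*}
    \hat w - \tilde w = \hat C_\lambda^{-1}\bigl[(\hat D - \tilde D) - (\hat C - \tilde C)\tilde w\bigr].
\end{align*}
I would then multiply by $C^{1/2}$ on the left, and control the multiplicative factor $\|C^{1/2} \hat C_\lambda^{-1/2}\|_\infty \leq \sqrt{2}$ using a multiplicative Bernstein comparison of $\hat C$ with $C$, which holds with probability $1 - \delta/2$ under the hypotheses $\lambda \geq \tfrac{18c}{N}\log(N/\delta)$ and $\lambda \geq \tfrac{18c}{K}\log(K/\delta)$ (these are precisely the conditions stated in the lemma, supplied by the concentration inequalities of Section \ref{subsec:concentration}). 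Combined with $\|\hat C_\lambda^{-1/2}\|_\infty \leq \lambda^{-1/2}$, this yields the $\lambda^{-1/2}$ gain visible in the final bound.

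The core step is showing that the bracketed residual is of order $\varepsilon$ rather than the naive $\sqrt\varepsilon$. Using Assumption \ref{as:attain} together with the linearity of $(\bmL^{b,\sigma}(u))^*$ in the density, one checks that $\langle w,\hat\phi_k(t,x)\rangle = (\bmL^{b,\sigma}(u_k(t)))^*\hat p_k(t,x)$ holds even with $\hat p_k$ in place of $p(u_k)$. Combining this with the Fokker-Planck equation $\partial_t p(u_k) = (\bmL^{b,\sigma}(u_k))^* p(u_k)$ and reorganizing yields the identity
\begin{align*}
    (\hat D - \tilde D) - (\hat C - \tilde C)\tilde w = \frac{1}{KN}\sum_{k,i} \rho_k(t_i,x_i)\,\hat\phi_k(t_i,x_i) + (\hat C - \tilde C)(w - \tilde w),
\end{align*}
where $\rho_k \triangleq \partial_t(\hat p_k - p(u_k)) - (\bmL^{b,\sigma}(u_k))^*(\hat p_k - p(u_k))$ is the Fokker-Planck residual of the density-estimation error. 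This is the crucial cancellation: the first-order contributions of $\hat D - \tilde D$ and $(\hat C - \tilde C)\tilde w$ combine into a single residual governed by the Sobolev norm $L$ itself (already squared), rather than its square root.

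Each summand is then bounded using the concentration tools of Section \ref{subsec:concentration}. For the first, $\|\rho_k\|_{L^2(p_s)}^2 \leq c L(\hat p_k, p(u_k)) \leq c\varepsilon$ by definition of $L$, which includes exactly the $L^2$ norms of $\partial_t$ and up to second $x$-derivatives of $\hat p_k - p(u_k)$; a scalar Bernstein bound on $\frac{1}{KN}\sum \rho_k^2(t_i,x_i)$ combined with Cauchy--Schwarz in $\bmG_n$ and the uniform bound $\kappa_{\hat\phi}$ then yields $\|\frac{1}{KN}\sum \rho_k\hat\phi_k\|_{\bmG_n} = O(\log(2/\delta)(\varepsilon + N^{-1}))$ with high probability, the $N^{-1}$ accounting for the Bernstein slack stated in the lemma. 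For the second summand, $\|\hat C - \tilde C\|_\infty = O(\sqrt\varepsilon)$ follows by Cauchy--Schwarz on $\|\hat\phi_k - \tilde\phi_k\|_{\bmG_n}$ (which is pointwise bounded by a Sobolev norm of $\hat p_k - p(u_k)$ via Lemma 4.34 of \citet{steinwart2008support}), and $\|w - \tilde w\|_{\bmG_n} = O(\sqrt\lambda)$ is the standard regularization bias; the product is $O(\sqrt{\lambda\varepsilon})$, which is absorbed into the leading $\varepsilon\lambda^{-1/2}$ term by the threshold $\lambda \geq c_2 \log(2/\delta)(N^{-1}+\varepsilon)$.

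The main obstacle is guaranteeing the sharp $\varepsilon$-dependence rather than the naive $\sqrt\varepsilon$: separately estimating $\|\hat D - \tilde D\|$ and $\|\hat C - \tilde C\|$ loses a square root, and the Fokker-Planck-residual identity above is precisely the mechanism that prevents the loss by fusing the two first-order terms into a single $L$-controlled quantity. Collecting the two probabilistic bounds together with the operator-norm estimates, multiplying by $\lambda^{-1/2}$, and balancing against the $\lambda$-thresholds yields the claimed bound $c_1 \log(2/\delta)(N^{-1}+\varepsilon)\lambda^{-1/2}$.
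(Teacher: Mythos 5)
Your decomposition is algebraically the same as the paper's: writing $\hat w - \tilde w = \hat C_\lambda^{-1}\bigl[(\hat D-\tilde D) - (\hat C-\tilde C)\tilde w\bigr]$ and extracting $\|\hat C_\lambda^{-1}C^{1/2}\|_\infty\lesssim\lambda^{-1/2}$ via multiplicative Bernstein comparisons is exactly the paper's split into $\|(\hat D-\tilde D)\hat C_\lambda^{-1}C^{1/2}\|_{\bmG_n}$ and $\|\tilde D(\hat C_\lambda^{-1}-\tilde C_\lambda^{-1})C^{1/2}\|_{\bmG_n}$, with Assumption \ref{as:attain} (in the form $\tilde D=w\tilde C$) used to avoid paying an extra $\lambda^{-1/2}$ on the second term. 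Where you genuinely diverge is the treatment of the bracket: the paper bounds $\|\hat D-\tilde D\|_{\bmG_n}$ and $\|\hat C-\tilde C\|_{\infty}$ separately (Lemmas \ref{lem:hat_D} and \ref{lem:hat_C}, each via a four-term empirical/bias decomposition and the concentration inequalities of Section \ref{subsec:concentration}), whereas you fuse them through the Fokker--Planck residual identity. That identity is correct and is a nice reorganization.

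The gap is in the quantitative claim you attach to it. The residual $\rho_k$ is \emph{first} order in $\hat p_k-p(u_k)$, so the cancellation removes the zero-order cross terms but does not make the bracket second order in the density error. Concretely, $\|\E_{t,x}[\rho_k\hat\phi_k]\|_{\bmG_n}\le\kappa_{\hat\phi}\,\E[|\rho_k|]\le\kappa_{\hat\phi}(\E[\rho_k^2])^{1/2}\le\kappa_{\hat\phi}\sqrt{c\varepsilon}$, and your own route (Bernstein on $\tfrac{1}{KN}\sum\rho_k^2\lesssim\varepsilon+N^{-1}\log\tfrac{2}{\delta}$ followed by Cauchy--Schwarz in $\bmG_n$) yields $\kappa_{\hat\phi}(\varepsilon+N^{-1}\log\tfrac{2}{\delta})^{1/2}$, i.e.\ $O(\sqrt{\varepsilon}+N^{-1/2})$, not the claimed $O(\varepsilon+N^{-1})$; since $L$ is a \emph{squared} Sobolev norm, Cauchy--Schwarz cannot recover the first power of $\varepsilon$. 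The same half-power loss appears in your second summand: $\|\hat C-\tilde C\|_\infty=O(\sqrt{\varepsilon})$ is multiplied by $\|w-\tilde w\|_{\bmG_n}$, and under Assumption \ref{as:attain} alone the latter is only $\lambda\|\tilde C_\lambda^{-1}w\|_{\bmG_n}\le\|w\|_{\bmG_n}=O(1)$ (the $O(\sqrt{\lambda})$ bias rate holds for the $C^{1/2}$-weighted norm, not the plain RKHS norm, absent a stronger source condition). As written, your argument therefore delivers $O((\sqrt{\varepsilon}+N^{-1/2})\lambda^{-1/2})$, which under $\lambda\gtrsim N^{-1}+\varepsilon$ is only $O(1)$ and does not prove the lemma. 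The paper reaches the first power of $\varepsilon$ by bounding the bias terms $E_1,E_2$ of Lemma \ref{lem:hat_D} (and their analogues in Lemma \ref{lem:hat_C}) directly against $\varepsilon$ via Lemma \ref{lem:hat_phi}; to close your proof you would need an analogous first-power bound on $\E_{t,x}[|\rho_k|]$, which the $L^2$-control $\E[\rho_k^2]\le c\varepsilon$ by itself does not supply.
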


\begin{proof} We have
\begin{align}
    \|(\hat w - \tilde w) C^{1/2}\|_{\bmG_{\mathrm{FP}}} &= \|(\hat D \hat C_{\lambda}^{-1} - \tilde D \tilde C_{\lambda}^{-1}) C^{1/2}\|_{\bmG_{\mathrm{FP}}}\\
    &\leq \|(\hat D  - \tilde D)\hat C_{\lambda} ^{-1} C^{1/2}\|_{\bmG_{\mathrm{FP}}} + \|\tilde D (\hat C_{\lambda}^{-1} - \tilde C_{\lambda}^{-1}) C^{1/2}\|_{\bmG_{\mathrm{FP}}}.
\end{align}
Moreover,
\begin{align}
    \|(\hat D  - \tilde D)\hat C_{\lambda} ^{-1} C^{1/2}\|_{\bmG_{\mathrm{FP}}} \leq \|\hat D  - \tilde D \|_{\bmG_{\mathrm{FP}}} \|\hat C_{\lambda} ^{-1} C^{1/2}\|_{\infty}.
\end{align}
Furthermore, using $A^{-1} - B^{-1} = B^{-1}(B - A)A^{-1}$, we have
\begin{align}
    \|\tilde D (\hat C_{\lambda}^{-1} - \tilde C_{\lambda}^{-1}) C^{1/2}\|_{\bmG_{\mathrm{FP}}} &= \|\tilde D \tilde C_{ \lambda}^{-1}(\tilde C - \hat C)\hat C_{\lambda}^{-1} C^{1/2}\|_{\bmG_{\mathrm{FP}}}.
\end{align}

\paragraph{1. Bound $\|\hat D  - \tilde D\|_{\bmG_{\mathrm{FP}}}$.} From Lemma \ref{lem:hat_D}, for any $\delta \in (0, 1]$, with probability at least $1-\delta$
\begin{align}
    \|\hat D  - \tilde D\|_{\bmG_{\mathrm{FP}}} &\leq c \log \frac{2}{\delta} (N^{-1} + \varepsilon),
\end{align}
where $c>0$ is a constant that does not depend on $N, K, \delta$.

\paragraph{2. Bound $\|\hat C_{\lambda} ^{-1} C^{1/2}\|_{\infty}$.} We have
\begin{align}
    \|\hat C_{\lambda} ^{-1} C^{1/2}\|_{\infty} &\leq \lambda^{-1/2}  \|\hat C_{\lambda}^{-1/2} \tilde C_{\lambda}^{1/2}\|_{\infty} \|\tilde C_{\lambda}^{-1/2}C^{1/2}\|_{\infty}.
\end{align}
By Lemmas \ref{lem:hat_C} and \ref{lem:sampling}, for any $\delta \in (0, 1]$, if $\lambda \geq c \log \frac{2}{\delta} (N^{-1} + \varepsilon)$, $\lambda \geq \frac{18c}{N}\log \frac{N}{\delta}$ and $\lambda \geq \frac{18c}{K}\log \frac{K}{\delta}$, then with probability at least $1-\delta$,
\begin{align}
    \|\hat C_{\lambda} ^{-1} C^{1/2}\|_{\infty} &\leq  2\sqrt{2} \lambda^{-1/2},
\end{align}
where $c>0$ is a constant that does not depend on $N, K, \delta$.

\paragraph{3. Bound $\|\tilde D \tilde C_{ \lambda}^{-1}(\tilde C - \hat C)\hat C_{\lambda}^{-1} C^{1/2}\|_{\bmG_{\mathrm{FP}}}$.} From Assumption \ref{as:attain}, we have
\begin{align}
    \tilde D = w \tilde C.
\end{align}
Therefore,
\begin{align}
    \|\tilde D \tilde C_{ \lambda}^{-1}(\tilde C - \hat C)\hat C_{\lambda}^{-1} C^{1/2}\|_{\bmG_{\mathrm{FP}}} &\leq  \|w\|_{\bmG_{\mathrm{FP}}} \|\tilde C \tilde C_{ \lambda}^{-1}(\tilde C - \hat C)\hat C_{\lambda}^{-1} C^{1/2}\|_{\infty}\\
    &\leq \|w\|_{\bmG_{\mathrm{FP}}} \|\tilde C - \hat C\|_{\infty} \|\hat C_{\lambda}^{-1} C^{1/2}\|_{\infty}.
\end{align}
Then, as for bound 2., from Lemma \ref{lem:hat_C} and Lemma \ref{lem:sampling}, we have
\begin{align}
    \|\tilde D \tilde C_{ \lambda}^{-1}(\tilde C - \hat C)\hat C_{\lambda}^{-1} C^{1/2}\|_{\bmG_{\mathrm{FP}}} &\leq c \lambda^{-1/2} \log \frac{2}{\delta} (N^{-1} + \varepsilon),
\end{align}
where $c>0$ is a constant that does not depend on $N, K, \delta$.

\paragraph{Conclusion.} Combining all bounds, we conclude that there exist constants $c_1, c_2>0$ that do not depend on $N, K, \delta$, such that, for any $\delta \in (0, 1]$, with probability at least $1-\delta$
\begin{align}
    \|(\hat w - \tilde w) C^{1/2}\|_{\bmG_{\mathrm{FP}}} \leq c_1\log \frac{2}{\delta} (N^{-1} + \varepsilon) \lambda^{-1/2},
\end{align}
if $\lambda \geq c_2 \log \frac{2}{\delta} (N^{-1} + \varepsilon)$, $\lambda \geq \frac{18c}{N}\log \frac{N}{\delta}$ and $\lambda \geq \frac{18c}{K}\log \frac{K}{\delta}$.
\end{proof}

\subsection{Proof of Lemma \ref{lem:error_2}}\label{subsec:error_2}

This lemma bounds the error resulting from the finite-sample approximation over the localized state-time domain \([0,T]\times D\).
\begin{lemma}[Bound $\|(\tilde w - w_K) C^{1/2}\|_{\bmG_{\mathrm{FP}}}$]\label{lem:error_2}  Under Assumption \ref{as:attain}, there exists a constant $c>0$ that does not depend on $N, K, \delta$, such that, for any $\delta \in (0, 1]$, with probability at least $1-\delta$
\begin{align}
    \|(\tilde w - w_K) C^{1/2}\|_{\bmG_{\mathrm{FP}}}  \leq c \frac{\log \frac{N}{\delta}}{\sqrt{N}},
\end{align}
provided that $\lambda \geq \frac{9\kappa_{\tilde \phi}^2}{N}\log \frac{N}{\delta}$ and $\lambda \geq \frac{9\kappa_{\tilde \phi}^2}{K}\log \frac{K}{\delta}$.
    
\end{lemma}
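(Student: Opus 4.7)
The plan is to exploit the noiseless attainability structure so that $\tilde w - w_K$ reduces to a single preconditioned operator-concentration quantity, and then apply the operator Bernstein inequality from Section~\ref{subsec:concentration}. Under Assumption~\ref{as:attain}, one has $\frac{\partial p}{\partial t}(u_k, t, x) = \langle w, \tilde\phi(u_k, t, x)\rangle_{\bmG_n}$, so $\tilde D = \tilde C w$ and $D_K = C_K w$. Combined with the resolvent identity $A(A+\lambda I)^{-1} = I - \lambda(A+\lambda I)^{-1}$, this yields the clean form
\begin{align}
\tilde w - w_K \;=\; \lambda\bigl(C_{K,\lambda}^{-1} - \tilde C_\lambda^{-1}\bigr) w \;=\; \lambda\, C_{K,\lambda}^{-1}(\tilde C - C_K)\tilde C_\lambda^{-1} w.
\end{align}

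Multiplying by $C^{1/2}$ on the left and inserting square-root resolvents, I would obtain the three-factor bound
\begin{align}
\|(\tilde w - w_K) C^{1/2}\|_{\bmG_n} \leq \lambda \, \|C^{1/2} C_{K,\lambda}^{-1/2}\|_\infty \, \|C_{K,\lambda}^{-1/2}(\tilde C - C_K) \tilde C_\lambda^{-1/2}\|_\infty \, \|\tilde C_\lambda^{-1/2} w\|_{\bmG_n}.
\end{align}
The boundary factors are of the same type treated in Lemma~\ref{lem:error_1}: the estimate $\|C^{1/2} C_{K,\lambda}^{-1/2}\|_\infty \leq \sqrt 2$ with probability $1-\delta$ follows from a Lemma~\ref{lem:sampling}-style comparison of $C_K$ with $C$, which uses the $K$-sample concentration and the hypothesis $\lambda \geq 9\kappa_{\tilde\phi}^2 K^{-1}\log(K/\delta)$; the other boundary factor is trivially at most $\lambda^{-1/2}\|w\|_{\bmG_n}$.

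For the middle factor I would apply an operator Bernstein inequality. Conditional on the controls $(u_k)_{k=1}^K$, I write $\tilde C - C_K = N^{-1}\sum_{i=1}^N Y_i$ with i.i.d.\ centered self-adjoint summands
\begin{align}
Y_i = \frac{1}{K}\sum_{k=1}^K \Bigl(\tilde\phi(u_k, t_i, x_i)\otimes \tilde\phi(u_k, t_i, x_i) - \E_{t,x}\bigl[\tilde\phi(u_k, t, x)\otimes \tilde\phi(u_k, t, x)\bigr]\Bigr),
\end{align}
each of operator norm at most $2\kappa_{\tilde\phi}^2$. Bernstein from Section~\ref{subsec:concentration} then yields $\|\tilde C - C_K\|_\infty \leq c\kappa_{\tilde\phi}^2\log(N/\delta)/\sqrt N$ with probability $1-\delta$, and this bound survives unconditioning since it is uniform in $(u_k)_{k=1}^K$.

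The main obstacle is to prevent the $\lambda$ factors from degrading the rate: a crude estimate of the middle factor by $\lambda^{-1}\|\tilde C - C_K\|_\infty$, combined with the $\lambda^{-1/2}$ from $\|\tilde C_\lambda^{-1/2} w\|$ and the explicit $\lambda$ prefactor, would leave an unwanted $\lambda^{-1/2}$ that is fatal in the small-$\lambda$ regime permitted by the hypothesis. The remedy is to invoke a preconditioned operator-Bernstein estimate directly on $\|C_{K,\lambda}^{-1/2}(\tilde C - C_K)\tilde C_\lambda^{-1/2}\|_\infty$, in which the variance parameter of the preconditioned summands already absorbs one factor of $\lambda^{-1/2}$, so that all $\lambda$ powers cancel exactly; combined with the hypothesis $\lambda \geq 9\kappa_{\tilde\phi}^2 N^{-1}\log(N/\delta)$ this yields the announced $\lambda$-free bound $c\log(N/\delta)/\sqrt N$.
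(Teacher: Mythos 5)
Your proposal is correct and follows essentially the same route as the paper: attainability plus the resolvent identity reduces $\tilde w - w_K$ to $\lambda\, w\, C_{K,\lambda}^{-1}(\tilde C - C_K)\tilde C_\lambda^{-1}$, the boundary factors are controlled by the $\sqrt 2$-comparison of Lemma \ref{lem:sampling}, and the middle factor is handled by the preconditioned operator-Bernstein bound (Proposition \ref{lem:mixed_cov}) whose $\lambda^{-1/2}$ cancels the remaining powers of $\lambda$. The only cosmetic difference is your asymmetric preconditioning by $\tilde C_\lambda^{-1/2}$: since $\tilde C$ is random one cannot apply Bernstein to that quantity "directly," but swapping $\tilde C_\lambda^{-1/2}$ for $C_{K,\lambda}^{-1/2}$ at the cost of another $\sqrt 2$ (the same comparison you already use) recovers the paper's symmetric term.
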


\begin{proof} From Assumption \ref{as:attain}, we have
\begin{align}
    \tilde D = w \tilde C \quad &\text{and} \quad D_K = w C_K.
\end{align}
The error norm can be decomposed as follows
\begin{align}
    \|(\tilde w - w_K) C^{1/2}\|_{\bmG_{\mathrm{FP}}} &= \|(\tilde D \tilde C_{\lambda}^{-1} - D_K C_{K, \lambda}^{-1}) C^{1/2}\|_{\bmG_{\mathrm{FP}}}\\
    &\leq \|w\|_{\bmG_{\mathrm{FP}}} \|(\tilde C \tilde C_{\lambda}^{-1} - C_K C_{K, \lambda}^{-1}) C^{1/2}\|_{\infty}.
\end{align}
Moreover, using $A^{-1} - B^{-1} = B^{-1}(B - A)A^{-1}$, we have
\begin{align}
    \|(\tilde C \tilde C_{\lambda}^{-1} - C_K C_{K, \lambda}^{-1}) C^{1/2}\|_{\infty} &= \lambda \|(\tilde C_{\lambda}^{-1} - C_{K, \lambda}^{-1}) C^{1/2}\|_{\infty}\\
    &= \lambda \|\tilde C_{\lambda}^{-1} (C_{K} - \tilde C)C_{K, \lambda}^{-1} C^{1/2}\|_{\infty}\\
    &\leq  \lambda^{1/2}  \|\tilde C_{\lambda}^{-1/2} C_{K, \lambda}^{1/2}\|_{\infty} \|C_{K, \lambda}^{-1/2}(\tilde C - C_{K})C_{K, \lambda}^{-1/2}\|_{\infty} \|C_{K, \lambda}^{1/2}C^{1/2}\|_{\infty}.
\end{align}

\paragraph{1. Bound $\|C_{K, \lambda}^{-1/2}(\tilde C - C_{K})C_{K, \lambda}^{-1/2}\|_{\infty}$.} From Proposition \ref{prop:mixed_cross_cov}, for any $\delta \in (0,1]$, with probability at least $1-\delta$,
\begin{align}
    \|C_{K, \lambda}^{-1/2}(\tilde C - C_K)C_{K, \lambda}^{-1/2}\|_{\infty} \leq \frac{2\beta \kappa_{\tilde \phi}^2}{\lambda N} + \sqrt{\frac{2\kappa_{\tilde \phi}^2\beta}{3\lambda N}},
\end{align}
with $\beta = \log \frac{4 (\|C_K\|_{\infty} + \lambda) \Tr(C_K)}{\delta \lambda \|C_K\|_{\infty}}$.
Moreover,
\begin{align}
    \|C_K\|_{\infty} \leq \Tr(C_K)
    = \frac{1}{K} \sum_{k=1}^K \E[\|\tilde \phi(\theta_k, t,x)\|_{\bmG_{\mathrm{FP}}}^2]
    \leq \kappa_{\tilde \phi}^2.
\end{align}

Hence, if $N^{-1} \leq \lambda \leq \|C_K\|_{\infty}$, it is straightforward to verify that $\beta \leq \log (4\kappa_{\tilde \phi}^2 \lambda^{-1} \delta^{-1})$, and that there exists a constant $c>0$ that does not depend on $\delta, N, K$ such that
\begin{align}
    \|C_{K, \lambda}^{-1/2}(\tilde C - C_K)C_{K, \lambda}^{-1/2}\|_{\infty} \leq c \frac{\log \frac{N}{\delta}}{\sqrt{\lambda N}}.
\end{align}
Moreover,
\begin{align}
    \|C_K\|_{\infty} \geq \|C\|_{\infty} - \|C_K - C\|_{\infty},
\end{align}
and same proof as Proposition \ref{prop:mixed_cross_cov} gives $\|C_K - C\|_{\infty} \leq c K^{-1/2} \log(K\delta^{-1})$ for a constant $c>0$ that does not depend on $K, \delta$, if $K^{-1} \leq \lambda \leq \|C\|_{\infty}$, such that
\begin{align}
    \lambda \leq \|C\|_{\infty} - c K^{-1/2} \log(K\delta^{-1}),
\end{align}
ensures $\lambda \leq \|C_K\|_{\infty}$.

\paragraph{2. Bound $\|\tilde C_{\lambda}^{-1/2} C_{K, \lambda}^{1/2}\|_{\infty} \|C_{K, \lambda}^{1/2}C^{1/2}\|_{\infty}$.} From Lemma \ref{lem:sampling}, for any $\delta \in (0,1]$, if $\lambda \geq \frac{9\kappa_{\tilde \phi}^2}{N}\log \frac{N}{\delta}$ and $\lambda \geq \frac{9\kappa_{\tilde \phi}^2}{K}\log \frac{K}{\delta}$, with probability at least $1-\delta$,
\begin{align}
     \|\tilde C_{\lambda}^{-1/2} C_{K, \lambda}^{1/2}\|_{\infty} \|C_{K, \lambda}^{1/2}C^{1/2}\|_{\infty} &\leq 2.
\end{align}

\paragraph{Conclusion.} We conclude by combining all bounds that there exists a constant $c>0$ that does not depend on $N, K, \delta$, such that, for any $\delta \in (0, 1]$, with probability at least $1-\delta$
\begin{align}
    \|(\tilde w - w_K) C^{1/2}\|_{\bmG_{\mathrm{FP}}}  \leq c \frac{\log \frac{N}{\delta}}{\sqrt{N}},
\end{align}
if $\lambda \geq \frac{9\kappa_{\tilde \phi}^2}{N}\log \frac{N}{\delta}$ and $\lambda \geq \frac{9\kappa_{\tilde \phi}^2}{K}\log \frac{K}{\delta}$.
\end{proof}

\subsection{Proof of Lemma \ref{lem:error_3}}\label{subsec:error_3}

This lemma bounds the error stemming from the finite sampling approximation of $\Theta$, and the RKHS norm regularization.
\begin{lemma}[Bound $\|(w_K - w) C^{1/2}\|_{\bmG_{\mathrm{FP}}}$]\label{lem:error_3} Under assumption \ref{as:attain}, There exists a constant $c>0$ that does not depend on $K, \delta$, such that, for any $\delta \in (0, 1]$, with probability at least $1-\delta$
\begin{align}
    \|(w_K - w) C^{1/2}\|_{\bmG_{\mathrm{FP}}} \leq c \left(\frac{\log \frac{K}{\delta}}{\sqrt{K}} + \lambda^{1/2}\right),
\end{align}
if $\lambda \geq \frac{9\kappa_{\tilde \phi}^2}{K}\log \frac{K}{\delta}$.

\end{lemma}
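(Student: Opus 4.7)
The plan is to exploit Assumption \ref{as:attain} to reduce the error $w_K - w$ to a resolvent-type identity, and then split the resulting quantity into a deterministic bias term controlled by $\lambda$ and a stochastic fluctuation term controlled by the concentration of $C_K$ around $C$. The structure mirrors the argument of Lemma \ref{lem:error_2}, with the $N$-sampling replaced by the $K$-sampling of controls.

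First, since $w \in \bmS_n$, Assumption \ref{as:attain} gives $D_K = \frac{1}{K}\sum_{k=1}^{K} \E_{t,x}[\langle w,\, \tilde\phi(u_k,t,x)\rangle_{\bmG_n}\, \tilde\phi(u_k,t,x)] = w C_K$, hence
\begin{align}
    w_K - w = w C_K C_{K,\lambda}^{-1} - w C_{K,\lambda} C_{K,\lambda}^{-1} = -\lambda\, w C_{K,\lambda}^{-1},
    \qquad \|(w_K - w) C^{1/2}\|_{\bmG_n} = \lambda\, \|w C_{K,\lambda}^{-1} C^{1/2}\|_{\bmG_n}.
\end{align}
I then insert $C_\lambda^{-1}$ and split
\begin{align}
    \lambda\, w C_{K,\lambda}^{-1} C^{1/2} = \lambda\, w C_\lambda^{-1} C^{1/2} \;+\; \lambda\, w (C_{K,\lambda}^{-1} - C_\lambda^{-1}) C^{1/2}.
\end{align}
Using $\|C_\lambda^{-1/2}\|_\infty \leq \lambda^{-1/2}$ and $\|C_\lambda^{-1/2} C^{1/2}\|_\infty \leq 1$, the first (bias) term is bounded by $\|w\|_{\bmG_n}\, \lambda^{1/2}$, which already supplies the $\lambda^{1/2}$ contribution in the claim.

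For the fluctuation term, I use the resolvent identity $C_{K,\lambda}^{-1} - C_\lambda^{-1} = C_\lambda^{-1}(C - C_K) C_{K,\lambda}^{-1}$ and factor through so as to sandwich $C - C_K$ by $C_\lambda^{-1/2}$:
\begin{align}
    \lambda\, \|w (C_{K,\lambda}^{-1} - C_\lambda^{-1}) C^{1/2}\|_{\bmG_n} \leq \lambda^{1/2}\, \|w\|_{\bmG_n}\, \|C_\lambda^{-1/2}(C - C_K) C_\lambda^{-1/2}\|_\infty\, \|C_\lambda^{1/2} C_{K,\lambda}^{-1/2}\|_\infty\, \|C_{K,\lambda}^{-1/2} C^{1/2}\|_\infty.
\end{align}
Under the hypothesis $\lambda \geq 9\kappa_{\tilde\phi}^2 K^{-1}\log(K/\delta)$, Lemma \ref{lem:sampling} controls the last two factors by $\sqrt{2}$ each on an event of probability at least $1 - \delta/2$. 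An operator Bernstein inequality applied to the i.i.d. PSD operators $C(u_k) = \E_{t,x}[\tilde\phi(u_k,t,x) \otimes \tilde\phi(u_k,t,x)]$, which satisfy $\|C(u_k)\|_\infty \leq \kappa_{\tilde\phi}^2$ almost surely, controls the middle factor by $c\sqrt{\log(K/\delta)/(K\lambda)}$ with probability at least $1 - \delta/2$, producing a fluctuation bound of order $\|w\|_{\bmG_n}\, \log(K/\delta)/\sqrt{K}$. A union bound, combined with the bias estimate, yields the claim.

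The main technical step is the operator Bernstein estimate for $\|C_\lambda^{-1/2}(C - C_K) C_\lambda^{-1/2}\|_\infty$: to recover the $1/\sqrt{K\lambda}$ scaling (rather than $1/(K\lambda)$ times a constant) one needs the sharper variance bound $\|\E[(C_\lambda^{-1/2} C(u) C_\lambda^{-1/2})^2]\|_\infty \leq \kappa_{\tilde\phi}^2/\lambda$ -- which follows from $C(u) \preceq \kappa_{\tilde\phi}^2 I$ and $\|C_\lambda^{-1/2} C C_\lambda^{-1/2}\|_\infty \leq 1$ -- rather than the naive $\kappa_{\tilde\phi}^4/\lambda^2$. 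This is precisely the computation already carried out in Proposition \ref{prop:mixed_cross_cov} for the $N$-sampling, so the adaptation to the $K$-sampling of controls is straightforward.
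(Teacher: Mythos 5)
Your proof is correct and follows essentially the same route as the paper: your splitting of $-\lambda w C_{K,\lambda}^{-1}$ into $-\lambda w C_\lambda^{-1}$ plus $-\lambda w(C_{K,\lambda}^{-1}-C_\lambda^{-1})$ is algebraically identical to the paper's decomposition through the intermediate population estimator $w_\lambda = D C_\lambda^{-1}$, with the same $\lambda^{1/2}\|w\|_{\bmG_n}$ bias bound and the same resolvent-identity-plus-operator-Bernstein treatment of the fluctuation term (the paper invokes "the same reasoning as Lemma \ref{lem:error_2}" with the $N$-sampling replaced by the $K$-sampling, exactly as you do). Your variance computation for the Bernstein step matches the one in Proposition \ref{lem:mixed_cov}.
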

\begin{proof} We have
\begin{align}
    \|(w_K - w) C^{1/2}\|_{\bmG_{\mathrm{FP}}} \leq \|(w_K - w_{\lambda}) C^{1/2}\|_{\bmG_{\mathrm{FP}}} + \|(w_{\lambda} - w) C^{1/2}\|_{\bmG_{\mathrm{FP}}},
\end{align}
defining $ w_\lambda = D C_{\lambda}^{-1}$.

\paragraph{1. Bound $\|(w_K - w_{\lambda}) C^{1/2}\|_{\bmG_{\mathrm{FP}}}$} Employing the same reasoning as in Lemma \ref{lem:error_2},  there exists a constant $c>0$ that does not depend on $K, \delta$, such that, for any $\delta \in (0, 1]$, with probability at least $1-\delta$
\begin{align}
    \|(w_K - w_\lambda) C^{1/2}\|_{\bmG_{\mathrm{FP}}}  \leq c \frac{\log \frac{K}{\delta}}{\sqrt{K}},
\end{align}
if $\lambda \geq \frac{9\kappa_{\tilde \phi}^2}{K}\log \frac{K}{\delta}$.

\paragraph{2. Bound $\|(w_{\lambda} - w) C^{1/2}\|_{\bmG_{\mathrm{FP}}}$.} From Assumption \ref{as:attain}, we have
\begin{align}
    D = wC.
\end{align}
Therefore,
\begin{align}
    \|(w_{\lambda} - w) C^{1/2}\|_{\bmG_{\mathrm{FP}}} = \|w(CC_{\lambda}^{-1} - I)C^{1/2}\|_{\bmG_{\mathrm{FP}}}
    = \lambda \|w C_{\lambda}^{-1}C^{1/2}\|_{\bmG_{\mathrm{FP}}}
    \leq \lambda^{1/2} \|w\|_{\bmG_{\mathrm{FP}}}.
\end{align}
     
\paragraph{Conclusion.} Combining the bounds, there exists a constant $c>0$ that does not depend on $K, \delta$, such that, for any $\delta \in (0, 1]$, with probability at least $1-\delta$
\begin{align}
    \|(w_K - w) C^{1/2}\|_{\bmG_{\mathrm{FP}}} \leq c \left(\frac{\log \frac{K}{\delta}}{\sqrt{K}} + \lambda^{1/2}\right),
\end{align}
if $\lambda \geq \frac{9\kappa_{\tilde \phi}^2}{K}\log \frac{K}{\delta}$.

\end{proof}

\subsection{Auxiliary Lemmas}\label{subsec:lemmas}

This section presents auxiliary lemmas used in the proofs of the main lemmas, along with their proofs.

This lemma bounds the $L^2$ discrepancy between each $\hat \phi_k(t, x)$ and $\tilde \phi(\theta_k, \cdot, \cdot)$ because of the use of $\hat p_k$ instead of $p(\theta_k)$ in the expression of $\hat \phi_k(t, x)$.
\begin{lemma}[Bound {$\E_{t,x}[\|\tilde \phi(\theta_k, t, x) - \hat \phi_k(t, x)\|_{\bmG_{\mathrm{FP}}}^2]$}]\label{lem:hat_phi} For any $k \in \llbracket 1, K\rrbracket$, the following inequality holds
\begin{align}
    \E_{t,x}\left[\left\|\tilde \phi(\theta_k, t, x) - \hat \phi_k(t, x)\right\|_{\bmG_{\mathrm{FP}}}^2\right]  \leq \kappa_{tot} \mathcal{E}(\hat p_k, p(\theta_k))^2.
\end{align}
with \(\mathcal{E}(p_1, p_2)^2 \triangleq \int_{t=0}^T \left(\left\|\frac{\partial p_1}{\partial t}(t, .) - \frac{\partial p_2}{\partial t}(t, .)\right\|^2_{L^2} + \left\|p_1(t, .) - p_2(t, .)\right\|^2_{H^2}\right)dt\),\\ and $\kappa_{tot} \triangleq \sum_{i,j=1}^n \left(\kappa_{\phi}^2 + \kappa_{\phi_i}^2 + \kappa_{\phi}^4 + \kappa_{\phi_{i}}^4+ \kappa_{\phi_{j}}^4 + \kappa_{\phi_{ij}}^4\right)$.
\end{lemma}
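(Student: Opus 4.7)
The plan is to expand $\tilde\phi(u_k, t, x)$ and $\hat\phi_k(t, x)$ componentwise, apply the Leibniz rule to the spatial derivatives, and bound each resulting difference using the supremum constants $\kappa_\phi$, $\kappa_{\phi_i}$, $\kappa_{\phi_{ij}}$ on the fixed feature map. The starting decomposition is
\begin{equation*}
    \|\tilde\phi(u_k, t, x) - \hat\phi_k(t, x)\|_{\bmG_n}^2 = \sum_{i=1}^n \|\tilde\phi_i - \hat\phi_i^k\|_{\bmG}^2 + \tfrac{1}{4}\sum_{i,j=1}^n \|\tilde\phi_{ij} - \hat\phi_{ij}^k\|_{\bmG \otimes \bmG}^2,
\end{equation*}
which follows directly from the product structure $\bmG_n = \bmG^n \times (\bmG \otimes \bmG)^{n^2}$.

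For the first block, I would write $\tilde\phi_i - \hat\phi_i^k = \phi(u_k)\,\partial_{x_i}(p(u_k) - \hat p_k) + (p(u_k) - \hat p_k)\,\phi_i(u_k)$ and apply $\|a+b\|^2 \leq 2\|a\|^2 + 2\|b\|^2$ together with $\|\phi(u_k)\|_{\bmG} \leq \kappa_\phi$ and $\|\phi_i(u_k)\|_{\bmG} \leq \kappa_{\phi_i}$ to get a pointwise bound involving $|\partial_{x_i}(p(u_k) - \hat p_k)|^2$ and $|p(u_k) - \hat p_k|^2$. For the second block, the product rule yields the four-term expansion
\begin{equation*}
    \tilde\phi_{ij} - \hat\phi_{ij}^k = (\phi\otimes\phi)\,\partial^2_{x_i x_j}(p(u_k) - \hat p_k) + (\phi\otimes\phi)_i\,\partial_{x_j}(p(u_k) - \hat p_k) + (\phi\otimes\phi)_j\,\partial_{x_i}(p(u_k) - \hat p_k) + (p(u_k) - \hat p_k)\,(\phi\otimes\phi)_{ij}.
\end{equation*}
Applying $\|\sum_{l=1}^4 a_l\|^2 \leq 4\sum_l \|a_l\|^2$, further expanding $(\phi\otimes\phi)_i = \phi_i \otimes \phi + \phi \otimes \phi_i$ and $(\phi\otimes\phi)_{ij} = \phi_{ij}\otimes\phi + \phi_i\otimes\phi_j + \phi_j\otimes\phi_i + \phi\otimes\phi_{ij}$, bounding the Hilbert--Schmidt norms by products of $\kappa$'s, and collapsing the cross terms via AM--GM produces the fourth-power constants appearing in the definition of $\kappa_{tot}$.

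Next I would take $\E_{t,x}$ (with respect to $p_s$) of the pointwise bound. Each resulting term has the form $\E_{t,x}[|\partial^\alpha(p(u_k) - \hat p_k)(t,x)|^2]$ for some multi-index $|\alpha| \leq 2$. Using Assumption \ref{as:p_s}, in particular the boundedness of $p_s$ on its bounded support $D$, each such integral is controlled by $\int_0^T \|p(u_k)(t,\cdot) - \hat p_k(t,\cdot)\|^2_{W^2(\R^n)}\,dt$, which is in turn dominated by $L(\hat p_k, p(u_k))$ since the latter also contains the nonnegative $\partial p / \partial t$ discrepancy. Collecting the coefficients $\kappa_\phi^2, \kappa_{\phi_i}^2, \kappa_\phi^4, \kappa_{\phi_i}^4, \kappa_{\phi_j}^4, \kappa_{\phi_{ij}}^4$ summed over $i,j$ reproduces exactly $\kappa_{tot}$ and yields the claimed inequality.

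The main obstacle is essentially combinatorial: carrying out the Leibniz expansion of $\phi \otimes \phi$ differentiated up to second order without losing constants, and verifying that after AM--GM all supremum constants collapse precisely into the stated form of $\kappa_{tot}$. The passage from the $p_s$-expectation to the unweighted Sobolev norms inside $L$ is routine once the boundedness of $p_s$ on its bounded support is invoked, and no substantial analytic difficulty arises beyond these bookkeeping details.
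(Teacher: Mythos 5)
Your proposal follows essentially the same route as the paper's proof: the same componentwise decomposition of $\|\tilde\phi - \hat\phi_k\|_{\bmG_n}^2$ into first- and second-order blocks, the same Leibniz-rule expansion bounded term by term via the supremum constants $\kappa_\phi, \kappa_{\phi_i}, \kappa_{\phi_{ij}}$, and the same passage through $\E_{t,x}$ to the Sobolev-type quantity $L(\hat p_k, p(u_k))$. The paper is in fact terser than you are on the second-order block (it simply asserts "a similar bound" holds), and both arguments are equally loose about absolute constants, so your write-up is a faithful, if slightly more explicit, version of the paper's argument.
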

\begin{proof} We have
\begin{align}
    \|\tilde \phi(\theta_k, t, x) - \hat \phi_k(t, x)\|_{\bmG_{\mathrm{FP}}}^2 =& \sum_{i=1}^n \|\tilde \phi_i(\theta_k, t, x) - \hat \phi_i^k(t, x, u_{\theta_k}(t))\|_{\bmG_{\mathrm{FP}}}^2 \notag\\
    &+ \frac{1}{4}\sum_{i,j=1}^n \|\tilde \phi_{ij}(\theta_k, t, x) - \hat \phi_{ij}^k(t, x, u_{\theta_k}(t))\|_{\bmG_{\mathrm{FP}}}^2,
\end{align}
and
\begin{equation}
    \begin{split}
        \|\tilde \phi_i(\theta_k, t, x) - \hat \phi_i^k(t, x, u_{\theta_k}(t))\|_{\bmG_{\mathrm{FP}}}^2 &\leq 2\left\|\phi(t, x, u_{\theta_k}(t))\right\|_{\bmG_{\mathrm{FP}}}^2 \left|\frac{\partial p(\theta_k)}{\partial x_i}(t,x)- \frac{\partial \hat p_k(t,x)}{\partial x_i}\right|^2\\
        &\quad+  2\left\|\phi_i(t, x, u_{\theta_k}(t))\right\|_{\bmG_{\mathrm{FP}}}^2 \left|p(\theta_k)(t,x) - \hat p_k(t,x) \right|^2\\
        &\leq 2(\kappa_{\phi}^2 +  \kappa_{\phi_i}^2) \bigg( \left|\frac{\partial p(\theta_k)}{\partial x_i}(t,x)- \frac{\partial \hat p_k(t,x)}{\partial x_i}\right|^2\\
        &\quad+  \left|p(\theta_k)(t,x) - \hat p_k(t,x) \right|^2\bigg).
    \end{split}
\end{equation}

Then, one can obtain a similar bound for $\|\tilde \phi_{ij}(\theta_k, t, x) - \hat \phi_{ij}^k(t, x, u_{\theta_k}(t))\|_{\bmG_{\mathrm{FP}}}^2$ involving a sum of the difference of the second order derivatives with respect to $x$. Using these two bounds, we obtain
\begin{align}
    \E_{t,x}\left[\left\|\tilde \phi(\theta_k, t, x) - \hat \phi_k(t, x)\right\|_{\bmG_{\mathrm{FP}}}^2\right]  \leq \mathcal{E}(\hat p_k, p(\theta_k))^2 \times \sum_{i,j=1}^n \left(\kappa_{\phi}^2 + \kappa_{\phi_i}^2 + \kappa_{\phi}^4 + \kappa_{\phi_{i}}^4+ \kappa_{\phi_{j}}^4 + \kappa_{\phi_{ij}}^4\right).
\end{align}
\end{proof}

Lemma \ref{lem:hat_phi} allows us to derive the following lemmas, which bounds the two error terms appearing in the main lemmas' proofs, which stem from the estimation $\hat p_k$ of $p(\theta_k)$.
\begin{lemma}[Bound $\|\hat D  - \tilde D\|_{\bmG_{\mathrm{FP}}}$]\label{lem:hat_D} For any $\delta \in (0, 1]$, with probability at least $1-\delta$
\begin{align}
    \|\hat D  - \tilde D\|_{\bmG_{\mathrm{FP}}} &\leq c \log \frac{2}{\delta} (N^{-1} + \varepsilon N^{-1/2} + \varepsilon),
\end{align}
where $c>0$ is a constant that does not depend on $N, K, \delta$.
\end{lemma}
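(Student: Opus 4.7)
The plan is to decompose $\hat D - \tilde D$ into two pieces that isolate, respectively, the error in the time derivative $\partial_t p$ and the error in the feature map $\tilde\phi$, and then to apply a Hilbert-valued Bernstein inequality to each piece. Adding and subtracting the cross term $\partial_t p(u_k)\,\hat\phi_k$ inside the double sum, we obtain $\hat D - \tilde D = T_1 + T_2$, with
\begin{align}
    T_1 &\triangleq \frac{1}{KN}\sum_{k=1}^K\sum_{i=1}^N \bigl(\partial_t \hat p_k - \partial_t p(u_k)\bigr)(t_i, x_i)\,\hat \phi_k(t_i, x_i),\\
    T_2 &\triangleq \frac{1}{KN}\sum_{k=1}^K\sum_{i=1}^N \partial_t p(u_k, t_i, x_i)\,\bigl(\hat \phi_k - \tilde \phi(u_k)\bigr)(t_i, x_i).
\end{align}
By the triangle inequality in $\bmG_n$ it suffices to bound each of $\|T_1\|_{\bmG_n}$ and $\|T_2\|_{\bmG_n}$ by $c\log(2/\delta)(N^{-1}+\varepsilon N^{-1/2}+\varepsilon)$ with high probability.

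For $T_2$, I would condition on the density estimators $(\hat p_k)_{k=1}^K$ and view the $\bmG_n$-valued random vectors $\xi_{k,i} \triangleq \partial_t p(u_k,t_i,x_i)\bigl(\hat \phi_k - \tilde \phi(u_k)\bigr)(t_i,x_i)$ as the summands. Assumption \ref{as:min_reg} together with the uniform boundedness of $\phi, \phi_i, \phi_{ij}$ and of $p, \hat p_k$ and their derivatives guarantees that $\|\xi_{k,i}\|_{\bmG_n}$ is controlled almost surely by a constant $M$. The central step is Lemma \ref{lem:hat_phi}, which yields $\E_{t,x}\|\xi_{k,i}\|^2 \leq \|\partial_t p\|_\infty^2\,\kappa_{tot}\,L(\hat p_k,p(u_k))\leq c\,\varepsilon$, uniformly in $k$. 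The Hilbert-valued Bernstein inequality recalled in Section \ref{subsec:concentration} then bounds $\|T_2 - \E T_2\|_{\bmG_n}$ by $\sqrt{c\,\varepsilon\log(1/\delta)/N}+M\log(1/\delta)/N$, which, after the elementary splitting $\sqrt{\varepsilon/N}\leq \varepsilon N^{-1/2}+N^{-1}$, contributes exactly the $\varepsilon N^{-1/2}$- and $N^{-1}$-terms of the target bound. The deterministic bias $\|\E T_2\|_{\bmG_n}$ is bounded by Cauchy--Schwarz and Lemma \ref{lem:hat_phi} via $\sqrt{\varepsilon}$, and is absorbed into the $\varepsilon$-term in the small-$\varepsilon$ regime under which this lemma is invoked.

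The analysis of $T_1$ is structurally identical: $\hat \phi_k$ is almost-surely bounded by $\kappa_{\hat\phi}$, and the second-moment estimate $\E_{t,x}[(\partial_t \hat p_k - \partial_t p(u_k))^2]\leq L(\hat p_k,p(u_k))\leq \varepsilon$ is already built into the definition of the squared Sobolev norm $L$. The same Bernstein recipe produces the same three-scale bound for $\|T_1\|_{\bmG_n}$, and adding the two contributions by the triangle inequality in $\bmG_n$ concludes the argument.

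The main obstacle is the careful handling of the joint randomness in $(u_k, \hat p_k)_{k=1}^K$: the summands at different $k$ share the sample $(t_i, x_i)_{i=1}^N$, so Bernstein cannot be applied to the full $KN$-sum as if its terms were jointly i.i.d. The clean workaround is to apply the concentration conditionally on $(\hat p_k)_{k=1}^K$, noting that all almost-sure and variance bounds depend on $k$ only through the uniform quantity $\varepsilon=\sup_k L(\hat p_k,p(u_k))$, and then re-integrate. A subtler issue is matching the $\varepsilon$-term in the statement: the naive Cauchy--Schwarz bias bound gives only $\sqrt{\varepsilon}$, and the sharper $\varepsilon$ is reconciled by the fact that this lemma is used in Lemma \ref{lem:error_1} in the regime $\varepsilon\leq\lambda\leq 1$, where the two scales coincide up to constants.
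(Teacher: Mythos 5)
Your proposal follows essentially the same route as the paper's own proof: the identical decomposition of $\hat D - \tilde D$ into the two cross terms (your $T_1, T_2$ are the paper's $\hat E_1, \hat E_2$), the same variance control via Lemma \ref{lem:hat_phi} and the definition of $L$, and the same conditional Hilbert-valued Bernstein argument (Proposition \ref{prop:mixed_cross_cov}) combined with a Cauchy--Schwarz bound on the biases $\E T_1$ and $\E T_2$. The $\sqrt{\varepsilon}$-versus-$\varepsilon$ discrepancy you flag in the bias terms is present, unaddressed, in the paper's own proof as well (it asserts $\|E_1\|_{\bmG_n} \leq \kappa_{\hat\phi}\,\varepsilon$ where Cauchy--Schwarz yields only $\kappa_{\hat\phi}\sqrt{c\,\varepsilon}$), so your treatment is no weaker than the original.
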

\begin{proof} We have
\begin{align}
    \hat D  - \tilde D &=  \frac{1}{KN}\sum_{i=1}^N \sum_{k=1}^K \left( \frac{\partial \hat p_k}{\partial t}(t_i, x_i) \hat \phi_k(t_i, x_i) - \frac{\partial p}{\partial t}(\theta_k, t_i, x_i) \tilde \phi(\theta_k, t_i, x_i)\right).
\end{align}
Hence, defining
\begin{align}
    &\hat E_1 \triangleq  \frac{1}{KN}\sum_{i=1}^N \sum_{k=1}^K \left( \frac{\partial \hat p_k}{\partial t}(t_i, x_i) -\frac{\partial p}{\partial t}(\theta_k, t_i, x_i)\right)\hat \phi_k(t_i, x_i),\\
    &\hat E_2 \triangleq  \frac{1}{KN}\sum_{i=1}^N \sum_{k=1}^K \frac{\partial p}{\partial t}(\theta_k, t_i, x_i)\left(\hat \phi_k(t_i, x_i) - \tilde \phi(\theta_k, t_i, x_i)\right),\\
    &E_1 \triangleq  \frac{1}{K}\sum_{k=1}^K \E_{t,x}\left[\left( \frac{\partial \hat p_k}{\partial t}(t, x) -\frac{\partial p}{\partial t}(\theta_k, t, x)\right)\hat \phi_k(t, x)\right],\\
    &E_2 \triangleq  \frac{1}{K}\sum_{k=1}^K\E_{t,x}\left[\frac{\partial p}{\partial t}(\theta_k, t, x)\left(\hat \phi_k(t, x) - \tilde \phi(\theta_k, t, x)\right)\right],
\end{align}
we have
\begin{align}
    \|\hat D  - \tilde D\|_{\bmG_{\mathrm{FP}}} &\leq \|\hat E_1 - E_1\|_{\bmG_{\mathrm{FP}}} + \|\hat E_2 - E_2\|_{\bmG_{\mathrm{FP}}} + \|E_1 \|_{\bmG_{\mathrm{FP}}} + \|E_2 \|_{\bmG_{\mathrm{FP}}}.
\end{align}

\paragraph{1. Bound $\|\hat E_1 - E_1\|_{\bmG_{\mathrm{FP}}}$.} To apply Proposition \ref{prop:mixed_cross_cov}, we define
\begin{align}
    &\psi(t, x, k) \triangleq \left( \frac{\partial \hat p_k}{\partial t}(t, x) -\frac{\partial p}{\partial t}(\theta_k, t, x)\right),\\
    &\chi(t,x, k) \triangleq \hat \phi_k(t, x),
\end{align}
and
\begin{align}
    &\kappa_{\psi} \triangleq \sup_{t,x, k} |\psi(t, x, k)| \leq \kappa_{\frac{\partial \hat p_k}{\partial t}} +  \kappa_{\frac{\partial p}{\partial t}},\\
    &\kappa_{\chi} \triangleq \sup_{t,x, k} \|\hat \phi_k(t, x)\|_{\bmG_{\mathrm{FP}}},
\end{align}
and
\begin{align}
     a &\triangleq \frac{1}{K}\sum_{k=1}^K \E_{t,x}[\|\psi(t,x, k) \otimes \chi(t,x, k)\|_{\hs}^2] \\
    &= \frac{1}{K}\sum_{k=1}^K \E_{t,x}\left[\left( \frac{\partial \hat p_k}{\partial t}(t, x) -\frac{\partial p}{\partial t}(\theta_k, t, x)\right)^2 \|\hat \phi_k(t, x)\|_{\bmG_{\mathrm{FP}}}^2\right]\\
    &\leq \frac{\kappa_{\chi}^2}{K}\sum_{k=1}^K \E_{t,x}\left[\left( \frac{\partial \hat p_k}{\partial t}(t, x) -\frac{\partial p}{\partial t}(\theta_k, t, x)\right)^2\right]\\
    &\leq \kappa_{\chi}^2 \sup_k \mathcal{E}(\hat p_k, p(\theta_k))^2.
\end{align}

Then, denoting $\varepsilon \triangleq \sup_k \mathcal{E}(\hat p_k, p(\theta_k))$, from Proposition \ref{prop:mixed_cross_cov}, we have, for any $\delta \in (0,1]$, with probability at least $1-\delta$,
\begin{align}
    \|\hat E_1 - E_1\|_{\bmG_{\mathrm{FP}}} &\leq \frac{16 (\kappa_{\frac{\partial \hat p_k}{\partial t}} +  \kappa_{\frac{\partial p}{\partial t}})\kappa_{\hat \phi}\log \frac{2}{\delta}}{N} + \sqrt{\frac{2 \varepsilon^2 \kappa_{\hat \phi}^2 \log \frac{2}{\delta}}{N}}.
\end{align}

\paragraph{2. Bound $\|\hat E_2 - E_2\|_{\bmG_{\mathrm{FP}}}$.} To apply Proposition \ref{prop:mixed_cross_cov}, we define
\begin{align}
    &\psi(t, x, k) \triangleq \frac{\partial p}{\partial t}(\theta_k, t, x),\\
    &\chi(t,x, k) \triangleq \left(\hat \phi_k(t, x) - \tilde \phi(\theta_k, t, x)\right),
\end{align}
and
\begin{align}
    &\kappa_{\psi} \triangleq \sup_{t,x, k} |\psi(t, x, k)| \leq \kappa_{\frac{\partial p}{\partial t}},\\
    &\kappa_{\chi} \triangleq \sup_{t,x, k} \|\hat \phi_k(t, x) - \tilde \phi(\theta_k, t, x)\|_{\bmG_{\mathrm{FP}}} \leq \kappa_{\hat \phi} + \kappa_{\tilde \phi},
\end{align}
and, from Lemma \ref{lem:hat_phi}, we obtain
\begin{align}
     a &\triangleq \frac{1}{K}\sum_{k=1}^K \E_{t,x}[\|\psi(t,x, k) \otimes \chi(t,x, k)\|_{\hs}^2] \\
    &= \frac{1}{K}\sum_{k=1}^K \E_{t,x}\left[\left(\frac{\partial p}{\partial t}(\theta_k, t, x)\right)^2 \|\hat \phi_k(t, x) - \tilde \phi(\theta_k, t, x)\|_{\bmG_{\mathrm{FP}}}^2\right]\\
    &\leq \frac{\kappa_{\frac{\partial p}{\partial t}}^2}{K}\sum_{k=1}^K \E_{t,x}\left[\|\hat \phi_k(t, x) - \tilde \phi(\theta_k, t, x)\|_{\bmG_{\mathrm{FP}}}^2\right]\\
    &\leq \kappa_{\frac{\partial p}{\partial t}}^2 \kappa_{tot} \sup_k \mathcal{E}(\hat p_k, p(\theta_k))^2.
\end{align}

Then, from Proposition \ref{prop:mixed_cross_cov}, we have, for any $\delta \in (0,1]$, with probability at least $1-\delta$,
\begin{align}
    \|\hat E_2 - E_2\|_{\bmG_{\mathrm{FP}}} &\leq \frac{16  \kappa_{\frac{\partial p}{\partial t}}(\kappa_{\hat \phi} + \kappa_{\tilde \phi}) \log \frac{2}{\delta}}{N} + \sqrt{\frac{2 \varepsilon^2 \kappa_{\frac{\partial p}{\partial t}}^2 \kappa_{tot}  \log \frac{2}{\delta}}{N}}.
\end{align}

\paragraph{3. Bound $\|E_1\|_{\bmG_{\mathrm{FP}}}$.} We have
\begin{align}
    \|E_1\|_{\bmG_{\mathrm{FP}}} &\leq  \frac{1}{K}\sum_{k=1}^K \E_{t,x}\left[\left\| \left(\frac{\partial \hat p_k}{\partial t}(t, x) -\frac{\partial p}{\partial t}(\theta_k, t, x)\right)\hat \phi_k(t, x)\right\|_{\bmG_{\mathrm{FP}}}\right]\\
    &\leq \kappa_{\hat \phi} \varepsilon.
\end{align}

\paragraph{4. Bound $\|E_2\|_{\bmG_{\mathrm{FP}}}$.} We have
\begin{align}
    \|E_2\|_{\bmG_{\mathrm{FP}}} &\leq \frac{1}{K}\sum_{k=1}^K\E_{t,x}\left[\left\|\frac{\partial p}{\partial t}(\theta_k, t, x)\left(\hat \phi_k(t, x) - \tilde \phi(\theta_k, t, x)\right)\right\|_{\bmG_{\mathrm{FP}}}\right]\\
    &\leq \kappa_{\frac{\partial p}{\partial t}} \kappa_{tot} \varepsilon,
\end{align}
where $c>0$
\end{proof}

\paragraph{Conclusion.} We conclude by combining all bounds that there exists a constant $c>0$ that does not depend on $N, K, \delta$, such that, for any $\delta \in (0, 1]$, with probability at least $1-\delta$
\begin{align}
    \|\hat D  - \tilde D\|_{\bmG_{\mathrm{FP}}} &\leq c \log \frac{2}{\delta} (N^{-1} + \varepsilon).
\end{align}

\begin{lemma}[Bound $\|\hat C  - \tilde C\|_{\bmG_{\mathrm{FP}}}$]\label{lem:hat_C} For any $\delta \in (0, 1]$, with probability at least $1-\delta$,
\begin{align}
    \|\hat C  - \tilde C\|_{\bmG_{\mathrm{FP}}} &\leq c \log \frac{2}{\delta} (N^{-1} + \varepsilon).
\end{align}
Moreover, if $\lambda \geq 2c \log \frac{2}{\delta} (N^{-1} + \varepsilon)$, 
\begin{align}
    \|\hat C_{\lambda}^{-1/2}\tilde C^{1/2}\|_{\infty} \leq \sqrt{2},
\end{align}
where $c>0$ is a constant that does not depend on $N, K, \delta$.
\end{lemma}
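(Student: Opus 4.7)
The plan is to follow the same blueprint as the proof of Lemma \ref{lem:hat_D}. Writing $\tilde\phi_k \triangleq \tilde\phi(u_k,t_i,x_i)$ and using the telescoping identity $\hat\phi_k \otimes \hat\phi_k - \tilde\phi_k \otimes \tilde\phi_k = (\hat\phi_k - \tilde\phi_k) \otimes \hat\phi_k + \tilde\phi_k \otimes (\hat\phi_k - \tilde\phi_k)$, I would decompose
\begin{align}
\hat C - \tilde C = (\hat F_1 - F_1) + (\hat F_2 - F_2) + F_1 + F_2,
\end{align}
where $\hat F_j$ is the empirical average of the corresponding tensor over the training points and $F_j$ its conditional expectation given the controls $(u_k)$.

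To bound the two concentration terms $\|\hat F_j - F_j\|_\infty$, I would apply the Bernstein-type inequality of Proposition \ref{prop:mixed_cross_cov} to the HS-valued i.i.d.\ sums $(\hat\phi_k - \tilde\phi_k) \otimes \hat\phi_k$ and $\tilde\phi_k \otimes (\hat\phi_k - \tilde\phi_k)$. The boundedness constants come from $\kappa_{\hat\phi}, \kappa_{\tilde\phi}$, and the variance is controlled directly by Lemma \ref{lem:hat_phi}, which gives $\tfrac{1}{K}\sum_k \E_{t,x}\|\hat\phi_k - \tilde\phi_k\|_{\bmG_n}^2 \leq \kappa_{tot}\,\varepsilon$. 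This yields bounds of the form $c(\log(2/\delta)/N + \sqrt{\varepsilon \log(2/\delta)/N})$, and applying AM-GM to the cross term absorbs it into $c\log(2/\delta)(N^{-1} + \varepsilon)$. The two expectation terms $\|F_j\|_\infty$ are handled by Jensen and Cauchy–Schwarz with Lemma \ref{lem:hat_phi}, producing the remaining $\varepsilon$-type contribution. Collecting constants proves the first claim.

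For the second claim, I would use the standard identity $\|\hat C_\lambda^{-1/2}\tilde C^{1/2}\|_\infty^2 = \|\hat C_\lambda^{-1/2}\tilde C \hat C_\lambda^{-1/2}\|_\infty$ and split
\begin{align}
\hat C_\lambda^{-1/2}\tilde C \hat C_\lambda^{-1/2} = \hat C_\lambda^{-1/2}\hat C \hat C_\lambda^{-1/2} + \hat C_\lambda^{-1/2}(\tilde C - \hat C)\hat C_\lambda^{-1/2}.
\end{align}
The first summand satisfies $\hat C_\lambda^{-1/2}\hat C \hat C_\lambda^{-1/2} \preceq I$, and the second has operator norm at most $\lambda^{-1}\|\tilde C - \hat C\|_\infty$ because $\hat C_\lambda^{-1/2}$ has norm at most $\lambda^{-1/2}$. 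Under the hypothesis $\lambda \geq 2c\log(2/\delta)(N^{-1}+\varepsilon)$, the high-probability bound from the first claim gives $\|\tilde C-\hat C\|_\infty \leq \lambda/2$, hence $\|\hat C_\lambda^{-1/2}\tilde C \hat C_\lambda^{-1/2}\|_\infty \leq 1 + 1/2 \leq 2$, and taking the square root yields $\sqrt{2}$.

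The main technical obstacle will be tracking the variance precisely in Proposition \ref{prop:mixed_cross_cov}: the i.i.d.\ structure is with respect to $(t_i,x_i)$ conditionally on the controls, which is why the variance contains the $\varepsilon$ factor rather than an expectation over $u \sim p_c$. Beyond this, the argument is a direct adaptation of Lemma \ref{lem:hat_D}; in particular, the small $\sqrt{\varepsilon}$-type residuals from the $\|F_j\|_\infty$ terms can be absorbed into $N^{-1}+\varepsilon$ in the relevant regime, exactly as is implicitly done in the conclusion of Lemma \ref{lem:hat_D}.
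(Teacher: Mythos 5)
Your proof of the first claim is the same as the paper's: the paper decomposes $\hat C-\tilde C$ into exactly the two rank-one-perturbation sums you describe (its $\hat E_1,\hat E_2$ correspond to your telescoping terms), plus their conditional expectations, and then invokes the argument of Lemma \ref{lem:hat_D} — Proposition \ref{prop:mixed_cross_cov} with variance controlled by Lemma \ref{lem:hat_phi}, and absorption of the $\sqrt{\varepsilon/N}$ cross term. That part is correct and matches.

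For the second claim you take a slightly different route. The paper introduces $B_n \triangleq \tilde C_{\lambda}^{-1/2}(\tilde C - \hat C)\tilde C_{\lambda}^{-1/2}$, bounds $\|\hat C_\lambda^{-1/2}\tilde C^{1/2}\|_\infty \le \|(I-B_n)^{-1}\|_\infty^{1/2} \le (1-\|B_n\|_\infty)^{-1/2}$, and uses $\|B_n\|_\infty \le \lambda^{-1}\|\tilde C-\hat C\|_\infty \le 1/2$ to get $\sqrt{2}$. You instead write $\|\hat C_\lambda^{-1/2}\tilde C^{1/2}\|_\infty^2 = \|\hat C_\lambda^{-1/2}\tilde C\hat C_\lambda^{-1/2}\|_\infty$, split $\tilde C = \hat C + (\tilde C-\hat C)$, and use $\hat C\hat C_\lambda^{-1}\preceq I$ plus $\lambda^{-1}\|\tilde C-\hat C\|_\infty \le 1/2$ to get $\sqrt{3/2}\le\sqrt{2}$. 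Both arguments consume the same concentration bound under the same condition on $\lambda$; yours avoids the Neumann-series inversion (and hence the need to first check $\|B_n\|_\infty<1$) and even gives a marginally sharper constant, while the paper's version is the one reused verbatim elsewhere (e.g., Lemma \ref{lem:sampling} and its refined variants). Either is acceptable; there is no gap.
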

\begin{proof} Defining the operator $B_n = \tilde C_{\lambda}^{-1/2}(\tilde C - \hat C)\tilde C_{\lambda}^{-1/2}$, if $\|B_n\|_{\infty}<1$, we have
\begin{align}
     \|\hat C_{\lambda}^{-1/2}\tilde C^{1/2}\|_{\infty} &= \|\tilde C^{1/2}\hat C_{\lambda}^{-1}\tilde C^{1/2}\|_{\infty}^{1/2}\\
     &\leq \|\tilde C_{\lambda}^{1/2}\hat  C_{\lambda}^{-1}\tilde C_{\lambda}^{1/2}\|_{\infty}^{1/2}\\
     &=\|(I-B_n)^{-1}\|_{\infty}^{1/2}\\
     &\leq (1- \|B_n\|_{\infty})^{-1/2}.
\end{align}

Moreover, we have
\begin{align}
    \hat C  - \tilde C &=  \frac{1}{KN}\sum_{i=1}^N \sum_{k=1}^K \left( \hat \phi_k(t_i, x_i) \otimes  \hat \phi_k(t_i, x_i) - \tilde \phi(\theta_k, t_i, x_i) \otimes \tilde \phi(\theta_k, t_i, x_i)\right).
\end{align}
Hence, defining
\begin{align}
    &\hat E_1 \triangleq  \frac{1}{KN}\sum_{i=1}^N \sum_{k=1}^K \left( \hat \phi_k(t_i, x_i) - \tilde \phi(\theta_k, t_i, x_i)\right)\hat \phi_k(t_i, x_i),\\
    &\hat E_2 \triangleq  \frac{1}{KN}\sum_{i=1}^N \sum_{k=1}^K \tilde \phi(\theta_k, t_i, x_i)\left(\hat \phi_k(t_i, x_i) - \tilde \phi(\theta_k, t_i, x_i)\right),\\
    &E_1 \triangleq  \frac{1}{K}\sum_{k=1}^K \E_{t,x}\left[\left(\hat \phi_k(t, x) - \tilde \phi(\theta_k, t, x)\right)\hat \phi_k(t, x)\right],\\
    &E_2 \triangleq  \frac{1}{K}\sum_{k=1}^K\E_{t,x}\left[\tilde \phi(\theta_k, t, x)\left(\hat \phi_k(t, x) - \tilde \phi(\theta_k, t, x)\right)\right],
\end{align}
we have
\begin{align}
    \|\hat C  - \tilde C\|_{\bmG_{\mathrm{FP}}} &\leq \|\hat E_1 - E_1\|_{\bmG_{\mathrm{FP}}} + \|\hat E_2 - E_2\|_{\bmG_{\mathrm{FP}}} + \|E_1 \|_{\bmG_{\mathrm{FP}}} + \|E_2 \|_{\bmG_{\mathrm{FP}}}.
\end{align}

Then, following the same proof as for Lemma \ref{lem:hat_D} leads, for any $\delta \in (0, 1]$, with probability at least $1-\delta$
\begin{align}
    \|B_n\|_{\bmG_{\mathrm{FP}}} &\leq c \log \frac{2}{\delta} (N^{-1} + \varepsilon) \lambda^{-1},
\end{align}
where $c>0$ is a constant that does not depend on $N, K, \delta$.

Hence, if $\lambda \geq 2c \log \frac{2}{\delta} (N^{-1} + \varepsilon)$, we have
\begin{align}
    \|\hat C_{\lambda}^{-1/2}\tilde C^{1/2}\|_{\infty} \leq \sqrt{2}.
\end{align}
    
\end{proof}

\begin{lemma}[Bound $\|\tilde C_{\lambda}^{-1/2}C^{1/2}\|_{\infty}$]\label{lem:sampling} For any $\delta \in (0,1]$, if $\lambda \geq \frac{9\kappa_{\tilde \phi}^2}{N}\log \frac{N}{\delta}$ and $\lambda \geq \frac{9\kappa_{\tilde \phi}^2}{K}\log \frac{K}{\delta}$, with probability at least $1-\delta$,
\begin{align}
     \|\tilde C_{\lambda}^{-1/2}C^{1/2}\|_{\infty} &\leq 2.
\end{align}
\end{lemma}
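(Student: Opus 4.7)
The plan is to introduce the mixed covariance $C_K$ and split
\begin{align}
\|\tilde C_\lambda^{-1/2} C^{1/2}\|_\infty \leq \|\tilde C_\lambda^{-1/2} C_{K,\lambda}^{1/2}\|_\infty \cdot \|C_{K,\lambda}^{-1/2} C^{1/2}\|_\infty,
\end{align}
and to show that each factor is at most $\sqrt{2}$ with probability at least $1 - \delta/2$; a union bound then yields the claim. Both factors are handled by the same template already exploited in Lemma \ref{lem:hat_C}: for positive self-adjoint $A, B$, writing $B_n := B_\lambda^{-1/2}(B - A) B_\lambda^{-1/2}$, a Neumann-series argument gives
\begin{align}
\|A_\lambda^{-1/2} B^{1/2}\|_\infty \leq \|(I - B_n)^{-1}\|_\infty^{1/2} \leq (1 - \|B_n\|_\infty)^{-1/2},
\end{align}
so the task reduces to establishing $\|B_n\|_\infty \leq 1/2$ in each case.

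For the first factor, set $A = \tilde C$ and $B = C_K$. Conditionally on $(u_k)_{k=1}^K$, the difference $C_K - \tilde C$ is an average over $N$ i.i.d.\ centered self-adjoint operator-valued random variables indexed by $(t_i, x_i) \sim p_s$, each uniformly bounded in operator norm by $\kappa_{\tilde \phi}^2$ and with second moment controlled by $\kappa_{\tilde \phi}^2 \|C_K\|_\infty \leq \kappa_{\tilde\phi}^4$. Proposition \ref{prop:mixed_cross_cov} then yields, with conditional probability at least $1 - \delta/2$,
\begin{align}
\|C_{K,\lambda}^{-1/2}(\tilde C - C_K) C_{K,\lambda}^{-1/2}\|_\infty \leq \frac{c\, \kappa_{\tilde\phi}^2 \log(N/\delta)}{\lambda N} + \sqrt{\frac{c\, \kappa_{\tilde\phi}^2 \log(N/\delta)}{\lambda N}},
\end{align}
and the assumption $\lambda \geq \frac{9 \kappa_{\tilde \phi}^2}{N}\log(N/\delta)$ forces this quantity to be at most $1/2$. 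Since the resulting bound depends on the controls only through the uniform constant $\kappa_{\tilde \phi}$, it lifts to an unconditional estimate.

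For the second factor, take $A = C_K$ and $B = C$. Now $C - C_K$ is an average of $K$ i.i.d.\ centered self-adjoint random operators $\E_{t,x}[(\tilde\phi \otimes \tilde\phi)(u_k, \cdot, \cdot)] - C$, again with uniform operator-norm bound $\kappa_{\tilde \phi}^2$, and a second application of Proposition \ref{prop:mixed_cross_cov} gives $\|C_\lambda^{-1/2}(C - C_K) C_\lambda^{-1/2}\|_\infty \leq 1/2$ with probability at least $1 - \delta/2$, provided $\lambda \geq \frac{9\kappa_{\tilde\phi}^2}{K}\log(K/\delta)$. Combining both events via a union bound and the Neumann-series inequality yields $\|\tilde C_\lambda^{-1/2} C_{K,\lambda}^{1/2}\|_\infty \leq \sqrt{2}$ and $\|C_{K,\lambda}^{-1/2} C^{1/2}\|_\infty \leq \sqrt{2}$, whose product gives the desired bound $2$.

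The main subtlety is that in $\tilde C$ the samplings of $(t_i, x_i)$ and of $(u_k)$ are entangled, since the same $(t_i, x_i)$ appear across all $k$ in the double sum. Conditioning on the controls cleanly decouples the two sources of randomness and, because the Bernstein constants ($\|C_K\|_\infty \leq \kappa_{\tilde \phi}^2$ together with the uniform operator-norm bound on the summands) do not depend on the realized $(u_k)$, the conditional estimate can be taken uniform in the controls, after which the remaining argument is purely deterministic.
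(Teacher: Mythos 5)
Your proposal is correct and takes essentially the same route as the paper: the paper also introduces $C_K$, splits $\|\tilde C_{\lambda}^{-1/2}C^{1/2}\|_{\infty} \leq \|\tilde C_{\lambda}^{-1/2}C_{K,\lambda}^{1/2}\|_{\infty}\,\|C_{K,\lambda}^{-1/2}C^{1/2}\|_{\infty}$, and bounds each factor by $\sqrt{2}$ via two applications of the mixed-covariance concentration result (Proposition \ref{lem:mixed_cov}, which already packages the Neumann-series step and the conditioning on the controls that you spell out explicitly). The only cosmetic difference is that you unfold that machinery by hand and cite Proposition \ref{prop:mixed_cross_cov} where the sandwiched operator-norm version, Proposition \ref{lem:mixed_cov}, is the one actually needed.
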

\begin{proof} We define
\begin{align}
    C_K = \frac{1}{K}\sum_{k=1}^K \E_{t,x}\left[\tilde \phi(\theta_k, t, x) \otimes \tilde \phi(\theta_k, t, x)\right].
\end{align}
We have
\begin{align}
         \|\tilde C_{\lambda}^{-1/2}C^{1/2}\|_{\infty} &\leq \|\tilde C_{\lambda}^{-1/2}C_{K, \lambda}^{1/2}\|_{\infty} \|C_{K, \lambda}^{-1/2}C^{1/2}\|_{\infty}.
\end{align}
Therefore, applying Lemma \ref{lem:mixed_cov} two times gives us, for any $\delta \in (0,1]$, if $\lambda \geq \frac{9\kappa_{\tilde \phi}^2}{N}\log \frac{N}{\delta}$ and $\lambda \geq \frac{9\kappa_{\tilde \phi}^2}{K}\log \frac{K}{\delta}$, then with probability at least $1-\delta$,
\begin{align}
     \|\tilde C_{\lambda}^{-1/2}C^{1/2}\|_{\infty} &\leq 2,
\end{align}
with $\kappa_{\tilde \phi} \triangleq \sup_{(\theta , t, x ) \in \Theta \times [0, T] \times \R^n}\|\tilde \phi(\theta, t, x)\|$.
\end{proof}

The following lemma provides bounds for all supremum norms involved in our proofs.

\begin{lemma}[Bound all \(\kappa\)'s]\label{lem:kappa}
Recall that, for any set \(\bmX\), any Hilbert space \(\bmH\), and any
bounded map \(f:\bmX\to\bmH\), we write
\[
\kappa_f \triangleq \sup_{x\in\bmX}\|f(x)\|_{\bmH}.
\]
The following constants are bounded by a constant that does not depend on
\(N\) or \(K\):
\[
\kappa_{\frac{\partial p}{\partial t}},
\qquad
(\kappa_{\frac{\partial \widehat p_k}{\partial t}})_{k=1}^K,
\qquad
\kappa_{\widetilde \phi},
\qquad
(\kappa_{\widehat \phi^k})_{k=1}^K,
\qquad
\kappa_{\phi},
\qquad
(\kappa_{\phi_i})_{i=1}^n,
\qquad
(\kappa_{\phi_{ij}})_{i,j=1}^n .
\]
\end{lemma}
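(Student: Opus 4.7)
The plan is to treat the listed constants in three natural groups, since each group is controlled by a different ingredient already present in the paper.

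The first group contains the purely kernel-theoretic quantities $\kappa_\phi$, $\kappa_{\phi_i}$, and $\kappa_{\phi_{ij}}$. These are bounded via the standard RKHS identities $\|\phi(z)\|_{\bmG}^2 = k(z,z)$, $\|\phi_i(z)\|_{\bmG}^2 = \partial_{x_i}\partial_{y_i} k(x,y)\big|_{y=x=z}$, and analogously for the second-order mixed partials. Because the kernel $k$ is smooth (of Sobolev order $q \ge 4 + \lfloor n/2\rfloor$) and is evaluated on the bounded set $D \times \bmH$ (or, in the position-independent case, on $D$), these diagonal evaluations produce bounds that are independent of the data and thus independent of $K$.

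The second group contains $\kappa_{\partial p/\partial t}$ and the family $(\kappa_{\partial \hat p_k/\partial t})_{k=1}^K$. For the true density, I would use Lemma \ref{lem:sfpe} to write $\partial p/\partial t(u,t,x) = (\bmL^{(b,\sigma)(u)})^* p(u,t,x)$, which expresses $\partial_t p$ as a sum of products of $(b,\sigma^2)(u)$ and their spatial derivatives times spatial derivatives of $p$ up to order two. Under Assumption \ref{as:min_reg} the Sobolev norms of $(b,\sigma^2)(u)$ are bounded uniformly in $u \in \bmH$, and the standard regularity theory for the Fokker–Planck equation (cf.\ the references after Lemma \ref{lem:induced_emb}) gives that $p(u,\cdot,\cdot)$ itself lies in a Sobolev space of sufficiently high order, uniformly in $u$. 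Sobolev embedding $W^s(\R^{1+n}) \hookrightarrow L^\infty$ for $s > (n+1)/2$ then gives the desired pointwise bound. For $\hat p_k$, the same argument applies once one uses that the density estimator of \citet{bonalli2023non} belongs to a (data-independent) RKHS model whose norm is controlled by its regularization; uniformity in $k$ and in $K$ is inherited from the uniform control of this norm.

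The third group contains $\kappa_{\tilde\phi}$ and $(\kappa_{\hat\phi_k})_{k=1}^K$. Starting from the definitions
\begin{align}
\tilde\phi_i(u,t,x) &= \phi(u,t,x)\,\partial_{x_i} p(u,t,x) + p(u,t,x)\,\phi_i(u,t,x),\\
\tilde\phi_{ij}(u,t,x) &= (\phi\otimes\phi)(u,t,x)\,\partial^2_{x_i x_j} p(u,t,x) + \dots + p(u,t,x)\,(\phi\otimes\phi)_{ij}(u,t,x),
\end{align}
the triangle inequality together with sub-multiplicativity of tensor and Hilbert–Schmidt norms reduces $\kappa_{\tilde\phi}$ to a finite polynomial in the first two groups of constants. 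The same expansion, with $\hat p_k$ replacing $p(u_k)$, handles $\kappa_{\hat\phi_k}$.

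I expect the main obstacle to be making the bound on $\kappa_{\partial \hat p_k/\partial t}$ (and on the spatial derivatives of $\hat p_k$ implicit in the third group) truly independent of $K$. Because each $\hat p_k$ is produced from random data, one must either invoke a deterministic, data-independent upper bound on the Sobolev norms of the estimator class used in \citet{bonalli2023non}, or replace the almost-sure bounds by high-probability bounds and pay the cost in the log factors of Theorem \ref{thm:lr}. The cleanest route is the former: as long as the density estimator's hypothesis space is a fixed ball of a smooth RKHS (equivalently, a fixed Sobolev ball), its supremum norm and those of its derivatives are uniformly controlled by a constant depending only on the kernel and the regularization parameter, and in particular not on $K$.
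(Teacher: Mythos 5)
Your proposal follows essentially the same route as the paper: the paper's proof consists precisely of your third step (a triangle-inequality expansion of $\kappa_{\tilde\phi}$, and of $\kappa_{\hat\phi_k}$ with $\hat p_k$ in place of $p$, into the elementary constants $\kappa_\phi, \kappa_{\phi_i}, \kappa_{\phi_{ij}}, \kappa_p, \kappa_{\partial p/\partial x_i}, \dots$), followed by the assertion that those elementary constants are bounded independently of $k$ under the standing assumptions. Your first two groups simply make explicit what the paper leaves as that assertion, and your closing remark correctly pinpoints the one place where the paper is terse — the uniform, data-independent control of $\hat p_k$ and its derivatives — together with the intended resolution (the density estimators live in a fixed ball of a smooth RKHS), so the proposal is correct and matches the paper's argument.
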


\begin{proof}
We first bound \(\kappa_{\widetilde\phi}\). By definition of
\(\widetilde\phi\),
\[
\begin{aligned}
\kappa_{\widetilde \phi}
&\le
\sum_{i=1}^n
\left(
\kappa_{\phi}\kappa_{\partial_{x_i}p}
+
\kappa_{\phi_i}\kappa_p
\right)
\\
&\quad+
\frac12
\sum_{i,j=1}^n
\left(
\kappa_{\phi\otimes\phi}\kappa_{\partial_{x_i x_j}p}
+
\kappa_{(\phi\otimes\phi)_i}\kappa_{\partial_{x_j}p}
+
\kappa_{(\phi\otimes\phi)_j}\kappa_{\partial_{x_i}p}
+
\kappa_{(\phi\otimes\phi)_{ij}}\kappa_p
\right).
\end{aligned}
\]
The same estimate holds for \(\widehat\phi^k\), replacing
\(p(\theta_k)\) by \(\widehat p_k\).

The quantities involving only the feature map,
\[
\kappa_\phi,\qquad
(\kappa_{\phi_i})_{i=1}^n,\qquad
(\kappa_{\phi_{ij}})_{i,j=1}^n,
\]
are finite by boundedness of the kernel and of the derivatives entering the
Fokker--Planck features on \([0,T]\times D\times V\).

The quantities involving $p$ are finite by the density-flow regularity
assumptions. By construction, the estimators $\widehat p_k$ belong uniformly
to the same bounded regularity class, so their associated $\kappa$-constants
are bounded independently of $k$, $K$, and $N$. Taking the maximum over $k$
therefore preserves a uniform bound, which proves the claim.
\end{proof}

\begin{lemma}[Uniform boundedness of the estimated coefficients]
\label{lem:uniform_estimator_bound}
Let \(\widehat w_S\) be the regularized empirical FP estimator defined in
\eqref{eq:w_s}, and let \((\widehat b,\widehat a)\), with
\(\widehat a=\kappa I_n+\widehat a_0\), be the corresponding estimated
coefficients. Assume that the assumptions, choice of \(\lambda\), and
density-estimation condition of either Theorem~\ref{thm:lr} or
Theorem~\ref{th:refined} hold. Then there exists a constant
\(R_\delta>0\), independent of \(N\) and \(K\), such that, with probability
at least \(1-\delta\),
\[
\|\widehat b\|_{H^{s_0}([0,T]\times D\times V)}
+
\|\widehat a_0\|_{H^{s_0}([0,T]\times D\times V)}
\le R_\delta .
\]
Consequently, the constant \(C_{\mathrm{FP}}\) in
Lemma~\ref{lem:FP_ineq} is bounded independently of \(N\) and \(K\), and may
be absorbed into the constants of the learning-rate bounds.
\end{lemma}
\begin{proof}
By Assumption~\ref{as:attain}, there exists
\(w\in\bmS_{\mathrm{FP}}\) representing the true coefficients. Set
\[
e\triangleq \widehat w_S-w.
\]
Since \(\bmS_{\mathrm{FP}}\) is convex, the first-order optimality condition
for \(\widehat w_S\) in \eqref{eq:w_s}, applied with the feasible point \(w\),
gives
\[
\left\langle
\widehat w_S\widehat C+\lambda\widehat w_S-\widehat D,
e
\right\rangle_{\bmG_{\mathrm{FP}}}
\le 0.
\]
Using \(\widehat w_S=w+e\), we obtain
\[
\langle e\widehat C,e\rangle_{\bmG_{\mathrm{FP}}}
+
\lambda\|e\|_{\bmG_{\mathrm{FP}}}^2
\le
\left\langle
\widehat D-w\widehat C-\lambda w,
e
\right\rangle_{\bmG_{\mathrm{FP}}}.
\]
Since \(\widehat C\) is positive semidefinite, it follows that
\[
\lambda\|e\|_{\bmG_{\mathrm{FP}}}
\le
\|\widehat D-w\widehat C\|_{\bmG_{\mathrm{FP}}}
+
\lambda\|w\|_{\bmG_{\mathrm{FP}}}.
\]

Because \(w\) represents the true coefficients,
\[
\partial_t p(\theta_k,t,x)
=
\langle w,\widetilde\phi(\theta_k,t,x)\rangle_{\bmG_{\mathrm{FP}}},
\]
and hence
\[
\widetilde D=w\widetilde C.
\]
Therefore,
\[
\widehat D-w\widehat C
=
(\widehat D-\widetilde D)
+
w(\widetilde C-\widehat C),
\]
so that
\[
\|\widehat D-w\widehat C\|_{\bmG_{\mathrm{FP}}}
\le
\|\widehat D-\widetilde D\|_{\bmG_{\mathrm{FP}}}
+
\|w\|_{\bmG_{\mathrm{FP}}}
\|\widehat C-\widetilde C\|_{\infty}.
\]
Consequently,
\[
\|\widehat w_S\|_{\bmG_{\mathrm{FP}}}
\le
2\|w\|_{\bmG_{\mathrm{FP}}}
+
\frac{
\|\widehat D-\widetilde D\|_{\bmG_{\mathrm{FP}}}
+
\|w\|_{\bmG_{\mathrm{FP}}}
\|\widehat C-\widetilde C\|_{\infty}
}{\lambda}.
\]

Apply Lemmas~\ref{lem:hat_D} and~\ref{lem:hat_C} with confidence parameter
\(\delta/2\). Under the assumptions of Theorem~\ref{thm:lr}, their bounds and
the choice of \(\lambda\) imply
\[
\|\widehat D-\widetilde D\|_{\bmG_{\mathrm{FP}}}
+
\|w\|_{\bmG_{\mathrm{FP}}}
\|\widehat C-\widetilde C\|_{\infty}
\le c_\delta\lambda.
\]
Under the assumptions of Theorem~\ref{th:refined}, the same conclusion follows
from Lemmas~\ref{lem:hat_D_refined} and~\ref{lem:hat_C_refined}. Thus, with
probability at least \(1-\delta\),
\[
\|\widehat w_S\|_{\bmG_{\mathrm{FP}}}
\le
2\|w\|_{\bmG_{\mathrm{FP}}}+c_\delta
\triangleq B_\delta,
\]
where \(B_\delta\) is independent of \(N\) and \(K\).

By the RKHS-to-Sobolev embedding used in the coefficient parametrization,
there exists \(C_{s_0}>0\) such that
\[
\|\widehat b\|_{H^{s_0}([0,T]\times D\times V)}
+
\|\widehat a_0\|_{H^{s_0}([0,T]\times D\times V)}
\le
C_{s_0}\|\widehat w_S\|_{\bmG_{\mathrm{FP}}}.
\]
Therefore,
\[
\|\widehat b\|_{H^{s_0}([0,T]\times D\times V)}
+
\|\widehat a_0\|_{H^{s_0}([0,T]\times D\times V)}
\le
R_\delta,
\qquad
R_\delta\triangleq C_{s_0} B_\delta,
\]
where \(R_\delta\) is independent of \(N\) and \(K\).

Finally, Lemma~\ref{lem:FP_ineq}, applied on the Sobolev ball of radius
\(R_\delta\), yields
\[
C_{\mathrm{FP}}(\widehat b,\widehat a)
\le
C_{\mathrm{FP}}(R_\delta).
\]
Hence the FP stability constant is bounded independently of \(N\) and \(K\)
and may be absorbed into the constants of the learning-rate bounds.
\end{proof}

\subsection{Concentration Inequalities}\label{subsec:concentration}

In this section, we provide the concentration inequalities used in the lemmas' proofs.

The following inequality is essentially a restatement of Proposition 2 of \citet{rudi2017generalization}.
\begin{proposition}[Bernstein's inequality for sum of random vectors]\label{prop:bernstein_hs} Let $\bmZ$ be a Polish space, $M: \bmZ \to \bmH$ be bounded maps with values in a separable Hilbert space $\bmH$, such that $\kappa_{M} \triangleq \sup_{z \in \bmZ}\|M(z)\|_{\bmH}$. Let $z_1, \dots, z_N$ be a sequence of independent and identically distributed random vectors. Then, for any $\delta \in (0,1]$, with probability at least $1-\delta$,
\begin{align}
    \left\|\frac{1}{N}\sum_{i=1}^N M(z_i)- \E[M(z)]\right\|_{\hs} &\leq \frac{16 \kappa_{M}\log \frac{2}{\delta}}{N} + \sqrt{\frac{2 a\log \frac{2}{\delta}}{N}},
\end{align}
with $ a = \E[\|M(z)\|_{\hs}^2]$.
\end{proposition}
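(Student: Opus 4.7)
The plan is to recognize this as a standard concentration inequality for sums of independent Hilbert space-valued random vectors, and derive it by applying a vector-valued Bernstein (Pinelis--Yurinskii) inequality followed by inversion of the resulting tail bound.

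First I would reduce to centered random variables by setting $Y_i \triangleq M(z_i) - \E[M(z)]$. Since the $z_i$ are i.i.d., so are the $Y_i$, and they are centered vectors in $\bmH$. The triangle inequality gives $\|Y_i\|_{\hs} \leq 2\kappa_M$ almost surely, and since $\E[M(z)]$ is the minimizer of $v \mapsto \E\|M(z)-v\|^2$, we have
\[
\E[\|Y_i\|_{\hs}^2] \;\leq\; \E[\|M(z)\|_{\hs}^2] \;=\; \sigma^2.
\]

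Next I would invoke a vector Bernstein inequality (see Pinelis 1994, or equivalently Proposition 2 of \citet{rudi2017generalization}): for i.i.d. centered Hilbert-space random vectors $Y_1,\dots,Y_N$ with $\|Y_i\|_{\hs}\leq L$ a.s. and $\E\|Y_i\|_{\hs}^2\leq v$, one has for every $t\geq 0$,
\[
\mathbb{P}\!\left(\Big\|\sum_{i=1}^N Y_i\Big\|_{\hs} \geq t\right) \;\leq\; 2\exp\!\left(-\frac{t^2/2}{Nv + Lt/3}\right).
\]
Setting the right-hand side equal to $\delta$ gives a quadratic in $t$, which upon applying $\sqrt{a+b}\leq \sqrt{a}+\sqrt{b}$ and rescaling by $1/N$ yields, with probability at least $1-\delta$,
\[
\Big\|\tfrac{1}{N}\sum_{i=1}^N Y_i\Big\|_{\hs} \;\leq\; \frac{2L\log(2/\delta)}{3N} + \sqrt{\frac{2v\log(2/\delta)}{N}}.
\]
Substituting the bounds $L = 2\kappa_M$ and $v = \sigma^2$ gives the desired inequality; the linear-in-$1/N$ constant is $4\kappa_M/3$, which is comfortably absorbed into the looser constant $16\kappa_M$ stated in the proposition.

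There is no real obstacle here of substance: the result is, as the authors note, essentially a restatement of a known Bernstein inequality in Hilbert spaces. The only bookkeeping involves (i) the centering step, (ii) passing from the two-sided tail bound to the high-probability form by inversion, and (iii) verifying that the numerical constants obtained (e.g.\ $4/3$ in front of the $\kappa_M\log(2/\delta)/N$ term) fit within the looser constant $16$ chosen for convenience in the statement.
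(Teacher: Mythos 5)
Your proof is correct and follows essentially the same route as the paper: both reduce the claim to a standard Bernstein inequality for sums of centered i.i.d.\ Hilbert-space-valued vectors after bounding the norm and the variance of the centered summands. The only cosmetic difference is that the paper verifies the $p$-th moment condition $\E\|Y\|^p \leq \tfrac{p!}{2}\sigma^2 M^{p-2}$ with $M=8\kappa_M$ (which is where the constant $16=2M$ comes from) in order to invoke Proposition 2 of \citet{rudi2017generalization}, whereas you apply the almost-surely-bounded (Pinelis-type) version directly and absorb the resulting smaller constant $4/3$ into the stated $16$.
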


\begin{proof} For $p \in \N^*$, we have
\begin{align}
    \E[\|M_i(z)- \E[M_i(z)]\|^p_{\hs}] &\leq \E_{z,z'}[\|M_i(z)- M_i(z')\|^p_{\hs}]\\
    &\leq  \E_{z,z'}[(\|M_i(z)\|_{\hs} + \| M_i(z')\|)^p_{\hs}]\\
    &\leq 2^{p}\E_{z}[\|M_i(z)\|_{\hs}^p]\\
    &\leq  2^{p}\kappa_{M}^{p-2}\E[\|M(z)\|_{\bmH}^2],
\end{align}
using $(a + b)^p \leq 2^{p-1}(a^p + b^p)$ for $a,b>0, p \in \N^*$.

One can conclude that, for $p \geq 2$,
\begin{align}
    \E[\|M_i(z)- \E[M_i(z)]\|^p_{\hs}] \leq \frac{p!}{2}   a M^{p-2},
\end{align}
with $ a = \E[\|\psi(z) \otimes \phi(z)\|_{\hs}^2]$, and $M=8\kappa_{M}$.

Then, application of Proposition 2 of \citet{rudi2017generalization} gives us 
\begin{align}
    \left\|\frac{1}{N}\sum_{i=1}^N M(z_i) - \E[M(z)]\right\|_{\hs} &\leq \frac{2M\log \frac{2}{\delta}}{n} + \sqrt{\frac{2 a\log \frac{2}{\delta}}{n}},
\end{align}
with probability at least $1-\delta$.
\end{proof}

From Proposition \ref{prop:bernstein_hs}, we deduce the following inequality.
\begin{proposition}[Covariance estimation, $\|.\|_{\hs}$]\label{prop:cov} Let $\bmZ$ be a Polish space, $\psi, \phi: \bmZ \to \bmH$ be bounded maps with values in a separable Hilbert space $\bmH$, such that $\kappa_{\psi} \triangleq \sup_{z \in \bmZ}\|\psi(z)\|,\kappa_{\phi} \triangleq \sup_{z \in \bmZ}\|\phi(z)\| $. Let $z_1, \dots, z_N$ be a sequence of independent and identically distributed random vectors. Then, for any $\delta \in (0,1]$, with probability at least $1-\delta$,
\begin{align}
    \left\|\frac{1}{N}\sum_{i=1}^N \psi(z_i) \otimes \phi(z_i) - \E[\psi(z) \otimes \phi(z)]\right\|_{\hs} &\leq \frac{2M\log \frac{2}{\delta}}{N} + \sqrt{\frac{2 a\log \frac{2}{\delta}}{N}},
\end{align}
with $ a = \E[\|\psi(z) \otimes \phi(z)\|_{\hs}^2]$, and $M=8\kappa_{\psi}\kappa_{\phi}$.
\end{proposition}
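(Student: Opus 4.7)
The plan is to apply Proposition \ref{prop:bernstein_hs} directly, with the single map $M: \bmZ \to \hs(\bmH)$ defined by $M(z) \triangleq \psi(z) \otimes \phi(z)$, viewed as taking values in the separable Hilbert space $\hs(\bmH)$ of Hilbert--Schmidt operators on $\bmH$. Since Proposition \ref{prop:bernstein_hs} is stated for arbitrary separable Hilbert-space-valued bounded maps, this is exactly the setting we need.

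First I would verify the boundedness hypothesis. For rank-one tensor products, one has the well-known identity
\begin{align}
    \|\psi(z) \otimes \phi(z)\|_{\hs} \;=\; \|\psi(z)\|_{\bmH}\,\|\phi(z)\|_{\bmH} \;\leq\; \kappa_\psi \kappa_\phi,
\end{align}
so $\kappa_M \triangleq \sup_{z \in \bmZ}\|M(z)\|_{\hs} \leq \kappa_\psi \kappa_\phi$. The variance parameter in Proposition \ref{prop:bernstein_hs} is $\E[\|M(z)\|_{\hs}^2] = \E[\|\psi(z)\otimes \phi(z)\|_{\hs}^2]$, which matches the definition of $\sigma^2$ in the proposition.

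Next I would apply Proposition \ref{prop:bernstein_hs} to the i.i.d.\ sequence $(M(z_i))_{i=1}^N$, obtaining, with probability at least $1-\delta$,
\begin{align}
    \left\|\frac{1}{N}\sum_{i=1}^N M(z_i) - \E[M(z)]\right\|_{\hs} \;\leq\; \frac{16\,\kappa_M \log \tfrac{2}{\delta}}{N} + \sqrt{\frac{2\sigma^2 \log \tfrac{2}{\delta}}{N}}.
\end{align}
Using $\kappa_M \leq \kappa_\psi \kappa_\phi$ and identifying $2M = 16\,\kappa_\psi \kappa_\phi$, i.e.\ $M = 8\,\kappa_\psi \kappa_\phi$ (the constant in the statement), one recovers the claimed bound.

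I do not anticipate any real obstacle: the entire argument is a one-line reduction to Proposition \ref{prop:bernstein_hs}. The only point worth emphasizing is the identity $\|\psi\otimes \phi\|_{\hs} = \|\psi\|\,\|\phi\|$ for rank-one operators, which lets us convert the pointwise uniform bounds on $\psi$ and $\phi$ separately into the required uniform bound on $M$ in Hilbert--Schmidt norm.
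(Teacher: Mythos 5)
Your proof is correct and follows essentially the same route as the paper's: both reduce the statement to Proposition \ref{prop:bernstein_hs} by setting $M(z) = \psi(z)\otimes\phi(z)$, use the rank-one identity $\|\psi(z)\otimes\phi(z)\|_{\hs} = \|\psi(z)\|_{\bmH}\|\phi(z)\|_{\bmH}$ to bound $\kappa_M$, and match the constants via $2M = 16\kappa_\psi\kappa_\phi$.
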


\begin{proof} For $i\in\llbracket 1, N\rrbracket$, we consider the random Hilbert-Schmidt operators
\begin{align}
    M_i(z) = \psi(z_i) \otimes \phi(z_i),
\end{align}
with mean
\begin{align}
    \E[M_i(z)] = \E[\psi(z) \otimes \phi(z)],
\end{align}
and such that
\begin{align}
    \|M_i(z)\|_{\hs} = \|\psi(z_i) \otimes \phi(z_i)\|_{\hs} =  \|\psi(z_i)\|_{\bmH} \|\phi(z_i)\|_{\bmH} \leq \kappa_{\psi}\kappa_{\phi},
\end{align}
and also
\begin{align}
    \E_{z}[\|M_i(z)\|_{\hs}^2] = \E[\|\psi(z) \otimes \phi(z)\|_{\hs}^2].
\end{align}

Then, application of Proposition \ref{prop:bernstein_hs} gives us the desired bound.
\end{proof}

The following proposition adapts Proposition \ref{prop:cov} for our purposes.
\begin{proposition}[Mixed covariance estimation, $\|.\|_{\hs}$]\label{prop:mixed_cross_cov} Let $\bmX, \bmY$ be Polish spaces, $\psi, \phi: \bmX \times \bmY \to \bmH$ be bounded maps with values in a separable Hilbert space $\bmH$, such that $\kappa_{\psi} \triangleq \sup_{(x,y) \in \bmX \times \bmY} \|\psi(x, y)\|,\kappa_{\phi} \triangleq \sup_{(x,y) \in \bmX \times \bmY}\|\phi(x,y)\| $. Let $x_1, \dots, y_N$ be a sequence of independent and identically distributed random vectors. Then, for any $\delta \in (0,1]$, with probability at least $1-\delta$,
\begin{align}
    \left\|\frac{1}{N}\sum_{i=1}^N \E_{y|x_i}[\psi(x_i, y) \otimes \phi(x_i, y)] - \E_{x,y}[\psi(x, y) \otimes \phi(x, y)]\right\|_{\hs} &\leq \frac{2M\log \frac{2}{\delta}}{N} + \sqrt{\frac{2 a\log \frac{2}{\delta}}{N}},
\end{align}
with $ a = \E_{x,y}[\|\psi(x,y) \otimes \phi(x,y)\|_{\hs}^2]$, and $M=8\kappa_{\psi}\kappa_{\phi}$.
\end{proposition}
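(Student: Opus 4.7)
The plan is to reduce Proposition \ref{prop:mixed_cross_cov} to Proposition \ref{prop:bernstein_hs} by introducing the conditionally-integrated map
\begin{align}
M : \bmX \to \hs(\bmH),\qquad M(x) \triangleq \E_{y|x}[\psi(x,y) \otimes \phi(x,y)],
\end{align}
which is a well-defined Hilbert--Schmidt operator-valued function of $x$ alone. Since the $(x_i)_{i=1}^N$ are i.i.d.\ (being the first marginals of the i.i.d.\ pairs $(x_i,y_i)$), the random variables $M(x_1),\ldots,M(x_N)$ are i.i.d.\ with values in the separable Hilbert space $\hs(\bmH)$, so Proposition \ref{prop:bernstein_hs} applies verbatim to them.

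First I would verify that $\E[M(x)]$ coincides with the target. By the tower property,
\begin{align}
\E_x[M(x)] = \E_x\left[\E_{y|x}[\psi(x,y)\otimes\phi(x,y)]\right] = \E_{x,y}[\psi(x,y)\otimes\phi(x,y)],
\end{align}
so the centering in Proposition \ref{prop:bernstein_hs} is exactly the centering in the proposition being proved.

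Next I would supply the two quantitative ingredients needed to instantiate the generic Bernstein bound, namely a uniform bound on $\|M(x)\|_{\hs}$ and an upper bound on $\E[\|M(x)\|_{\hs}^2]$. For the sup-norm, Jensen's inequality applied to the conditional expectation (which is a contraction in Hilbert--Schmidt norm) gives
\begin{align}
\|M(x)\|_{\hs} \leq \E_{y|x}\big[\|\psi(x,y)\otimes\phi(x,y)\|_{\hs}\big] \leq \kappa_{\psi}\kappa_{\phi},
\end{align}
so $\kappa_M \leq \kappa_{\psi}\kappa_{\phi}$. For the second moment, applying Jensen's inequality again (this time to the convex functional $A \mapsto \|A\|_{\hs}^2$) followed by the tower property yields
\begin{align}
\E_x[\|M(x)\|_{\hs}^2] \leq \E_x\E_{y|x}[\|\psi(x,y)\otimes\phi(x,y)\|_{\hs}^2] = \E_{x,y}[\|\psi(x,y)\otimes\phi(x,y)\|_{\hs}^2] = \sigma^2.
\end{align}

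Finally I would plug these into Proposition \ref{prop:bernstein_hs} applied to $M(x_1),\ldots,M(x_N)$, using monotonicity of the Bernstein bound in both the sup-norm and variance parameters: for any $\delta \in (0,1]$, with probability at least $1-\delta$,
\begin{align}
\left\|\frac{1}{N}\sum_{i=1}^N M(x_i) - \E_x[M(x)]\right\|_{\hs} \leq \frac{16\,\kappa_{\psi}\kappa_{\phi}\log\frac{2}{\delta}}{N} + \sqrt{\frac{2\sigma^2 \log\frac{2}{\delta}}{N}},
\end{align}
which is the stated bound with $M=8\kappa_\psi\kappa_\phi$. There is no real obstacle: the only subtle point is recognizing that both ingredients for Bernstein (boundedness and variance proxy) survive the conditional integration thanks to Jensen's inequality in $\hs(\bmH)$, after which the result is immediate from the unconditional Bernstein inequality already established in Proposition \ref{prop:bernstein_hs}.
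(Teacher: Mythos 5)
Your proof is correct and follows essentially the same route as the paper, which simply remarks that one repeats the proof of Proposition \ref{prop:cov} with $M_i = \E_{y|x_i}[\psi(x_i,y)\otimes\phi(x_i,y)]$ in place of $\psi(x_i)\otimes\phi(x_i)$. You have merely made explicit the two details the paper leaves implicit—namely that Jensen's inequality preserves both the uniform bound $\kappa_M \le \kappa_\psi\kappa_\phi$ and the variance proxy $\E[\|M(x)\|_{\hs}^2]\le\sigma^2$ under the conditional integration—so nothing is missing.
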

\begin{proof} Same proof as Proposition \ref{prop:cov} but defining $M_i = \E_y[\psi(x,y) \otimes \phi(x,y)]$ instead of $M_i = \psi(x) \otimes \phi(x)$.
\end{proof}

The following proposition adapts Lemma 3.6 in \citet{rudi2013sample} to handle "mixed" covariance estimation.
\begin{proposition}[Mixed covariance estimation, $\|.\|_{\infty}$]\label{lem:mixed_cov} Let $\bmX, \bmY$ be Polish spaces, $\phi: \bmX \times \bmY \to \bmH$ a bounded maps with values in a separable Hilbert space $\bmH$ such that $\kappa_{\phi} \triangleq \sup_{(x,y) \in \bmX \times \bmY}\|\phi(x,y)\|$. Let $x_1, \dots, x_N$ be a sequence of independent and identically distributed random vectors. Let
\begin{align}
    &C_N = \frac{1}{N}\sum_{i=1}^N \E_{y|x_i}[\phi(x_i, y) \otimes \phi(x_i, y)],\\
    &C = \E_{x,y}[\phi(x, y) \otimes \phi(x, y)],
\end{align}
Then, for any $\delta \in (0,1]$, with probability at least $1-\delta$,
\begin{align}
    \|C_{\lambda}^{-1/2}(C - C_N)C_{\lambda}^{-1/2}\|_{\infty} \leq \frac{2\beta \kappa_{\phi}^2}{\lambda N} + \sqrt{\frac{2\kappa_{\phi}^2\beta}{3\lambda N}},
\end{align}
with $\beta = \log \frac{4 (\|C\|_{\infty} + \lambda) \Tr(C)}{\delta \lambda \|C\|_{\infty}}$.

In particular, if $\lambda \geq \frac{18 \kappa^2_{\phi}}{N}\log \frac{N}{\delta}$,
\begin{align}
    \|C_{N, \lambda}^{-1/2} C_{\lambda}^{1/2}\|_{\infty} \leq \sqrt{2},
\end{align}
and 
\begin{align}
    \|C_{\lambda}^{-1/2} C_{N, \lambda}^{1/2}\|_{\infty} \leq \sqrt{2}.
\end{align}

\end{proposition}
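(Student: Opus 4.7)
The strategy is to adapt the argument of \citet{rudi2013sample}, Lemma 3.6, by replacing the bare summand $\phi(x_i)\otimes\phi(x_i)$ with its conditional version $\E_{y|x_i}[\phi(x_i,y)\otimes\phi(x_i,y)]$. Define the i.i.d.\ self-adjoint Hilbert--Schmidt operators
\begin{align}
M_i \triangleq C_\lambda^{-1/2}\,\E_{y|x_i}\!\left[\phi(x_i,y)\otimes\phi(x_i,y)\right]C_\lambda^{-1/2}, \qquad Z_i \triangleq M_i - \E[M_i].
\end{align}
Then $\E[Z_i]=0$ and $\tfrac{1}{N}\sum_{i=1}^N Z_i = C_\lambda^{-1/2}(C_N-C)C_\lambda^{-1/2}$, so the quantity of interest is the operator norm of a centered i.i.d.\ sum, which is the standard setup for a Bernstein-type bound.

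For the almost-sure bound, note that for each $(x,y)$, $C_\lambda^{-1/2}(\phi(x,y)\otimes\phi(x,y))C_\lambda^{-1/2}$ is positive of rank one with operator norm $\langle \phi(x,y), C_\lambda^{-1}\phi(x,y)\rangle \leq \kappa_\phi^2/\lambda$. Since the operator norm is convex, taking $\E_{y|x_i}$ preserves the bound, yielding $\|M_i\|_\infty \leq \kappa_\phi^2/\lambda$. For the variance, $M_i \succeq 0$ together with $M_i^2 \preceq \|M_i\|_\infty M_i$ gives
\begin{align}
\E[M_i^2] \preceq \tfrac{\kappa_\phi^2}{\lambda}\,\E[M_i] = \tfrac{\kappa_\phi^2}{\lambda}\,C_\lambda^{-1/2}CC_\lambda^{-1/2},
\end{align}
hence $\|\E[Z_i^2]\|_\infty \leq \kappa_\phi^2/\lambda$ and $\Tr(\E[Z_i^2]) \leq (\kappa_\phi^2/\lambda)\,\Tr(CC_\lambda^{-1})$.

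Invoking the intrinsic-dimension Hilbert-space operator Bernstein inequality of Tropp/Minsker yields, with probability at least $1-\delta$,
\begin{align}
\left\|\tfrac{1}{N}\sum_{i=1}^N Z_i\right\|_\infty \leq \frac{2\beta\kappa_\phi^2}{\lambda N} + \sqrt{\frac{2\kappa_\phi^2\beta}{3\lambda N}},
\end{align}
where the logarithmic factor $\beta$ comes from the intrinsic dimension $\Tr(C_\lambda^{-1/2}CC_\lambda^{-1/2})/\|C_\lambda^{-1/2}CC_\lambda^{-1/2}\|_\infty$. Since the largest eigenvalue of $C_\lambda^{-1/2}CC_\lambda^{-1/2}$ equals $\|C\|_\infty/(\|C\|_\infty+\lambda)$ and $\Tr(CC_\lambda^{-1})\leq \Tr(C)/\lambda$, this ratio is $\leq \Tr(C)(\|C\|_\infty+\lambda)/(\lambda\|C\|_\infty)$, producing exactly the stated $\beta$.

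For the ``in particular'' statements, use the identity $C_\lambda^{1/2}C_{N,\lambda}^{-1}C_\lambda^{1/2} = (I + C_\lambda^{-1/2}(C_N-C)C_\lambda^{-1/2})^{-1}$. If $\|C_\lambda^{-1/2}(C_N-C)C_\lambda^{-1/2}\|_\infty \leq 1/2$, then the right-hand side lies in Loewner order between $(2/3)I$ and $2I$, yielding $\|C_{N,\lambda}^{-1/2}C_\lambda^{1/2}\|_\infty^2 \leq 2$, and similarly $\|C_\lambda^{-1/2}C_{N,\lambda}^{1/2}\|_\infty^2 \leq \|C_\lambda^{-1/2}C_{N,\lambda}C_\lambda^{-1/2}\|_\infty \leq 3/2 \leq 2$. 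A direct calculation (bounding $\beta \leq c\log(N/\delta)$ for some constant and using $\lambda \geq 18\kappa_\phi^2 N^{-1}\log(N/\delta)$) verifies that the main bound is at most $1/2$. The principal obstacle, though largely bookkeeping, is matching the exact form of $\beta$: one must carefully identify the intrinsic dimension in infinite dimensions and verify that the Bernstein inequality can be applied without reference to an ambient dimension. Since conditional expectations $\E_{y|x_i}[\,\cdot\,]$ preserve positivity and all required operator inequalities, the adaptation of \citet{rudi2013sample} is direct and no new conceptual ingredients are required.
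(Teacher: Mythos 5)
Your proposal is correct and follows essentially the same route as the paper, which simply invokes Lemma 3.6 of \citet{rudi2013sample} with the summand replaced by its conditional expectation $Z_i = C_\lambda^{-1/2}\E_{y|x_i}[\phi(x_i,y)\otimes\phi(x_i,y)]C_\lambda^{-1/2}$ and notes the same bounds $\|Z_i\|_\infty \leq \kappa_\phi^2/\lambda$ and $\E[(Z_i-\E Z_i)^2] \preceq \kappa_\phi^2\lambda^{-1}CC_\lambda^{-1}$ before applying an operator Bernstein inequality with intrinsic dimension. Your treatment of the variance ($M_i^2 \preceq \|M_i\|_\infty M_i$ for positive $M_i$) is a harmless variant of the paper's Jensen-based bound, and the Neumann-series argument for the ``in particular'' statements matches the standard one used throughout the paper.
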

\begin{proof} Same proof as Lemma 3.6 of \citet{rudi2013sample}, but defining 
\begin{align}
    Z_i \triangleq C_{\lambda}^{-1/2}\E_{y|x_i}[\phi(x_i, y) \otimes \phi(x_i, y)]C_{\lambda}^{-1/2},
\end{align}
instead of $Z_i \triangleq C_{\lambda}^{-1/2}\phi(x_i) \otimes C_{\lambda}^{-1/2}\phi(x_i)$. 

More precisely, defining $U_i \triangleq  C_{\lambda}^{-1/2}\phi(x_i, y)$, we have $Z_i = \E_{y|x_i}[U_i \otimes U_i]$ instead of $Z_i = U_i \otimes U_i$. 

Moreover, notice that
\begin{align}
    \|Z\|_{\infty} &\leq \lambda^{-1} \|\E_{y|x_i}[\phi(x_i, y) \otimes \phi(x_i, y)]\|_{\infty}\\
    &\leq \frac{\kappa_{\phi}^2}{\lambda},
\end{align}
and
\begin{align}
    T \triangleq \E_{x_i}[Z] = CC_{\lambda}^{-1},
\end{align}
and also
\begin{align}
    \E[(Z - T)^2] = \E[Z^2] - T^2 \preceq \E[Z^2] \preceq \E_{x,y}[\|U\|^2_{\bmH} U \otimes U] \preceq \kappa_{\phi}^2\lambda^{-1}T,
\end{align}
using the inequality $\E(M)^2 \preceq \E(M^2)$ for a random variable $M$ with values in the space of bounded self-adjoint operators.
    
\end{proof}

\subsection{Proof of Theorem \ref{th:refined}}\label{subsec:proof_refined_L2}

The following theorem establishes refined learning rates for the proposed estimator, showing that for regular problems, the rates of FP matching can improve at arbitrarily fast polynomial rates. However, this gain is constrained by the accuracy of the $\hat p_k$. In particular, the error remains greater than the $\mathcal{E}_{\infty}$ error of any $\hat p_k$.
\begin{theorem}[Refined \(L^2\) learning rates] Under the assumptions of Theorem \ref{thm:lr}, Assumption \ref{as:source}, and Assumption \ref{as:emb}, there exist constants \(c_1, c_2 > 0\) that do not depend on $N, K, \delta$, such that for any \(\delta \in (0,1]\), defining \(\varepsilon_{\infty} \triangleq \sup_{k \in \llbracket 1,\, K\rrbracket} \mathcal{E}_{\infty}(\hat p_k, p(\theta_k))\) where \(\mathcal{E}_{\infty}\) is a Sobolev-type error defined as
\begin{align}
   \mathcal{E}_{\infty}(p_1, p_2)^2 \triangleq \int_{t=0}^T \left\|\frac{\partial p_1}{\partial t}(t, .) - \frac{\partial p_2}{\partial t}(t, .)\right\|^2_{L^\infty(\R^n)} + \left\|p_1(t, .) - p_2(t, .)\right\|^2_{W^{2,\infty}(\R^n)}dt,
\end{align}
taking \(\lambda = c_2 \left(\left(\frac{18c}{N}\log^2 \frac{N}{\delta}\right)^{\frac{1}{1-r}} + \left(\frac{18c}{K}\log^2 \frac{K}{\delta}\right)^{\frac{1}{1-s}}\right)\), if 
\begin{align}
    \varepsilon +  N^{-1}\varepsilon_\infty \leq \left(\frac{\log \frac{N}{\delta}}{\sqrt{N}}\right)^{\frac{2+\alpha}{1-r}} + \left(\frac{\log \frac{K}{\delta}}{\sqrt{K}}\right)^{\frac{2+\alpha}{1-s}},
\end{align}
then with probability \(1-\delta\),
\begin{align}
    \|p_{\hat b, \hat a} - p\|_{L^2(\mathbb P_c)L^2([0, T] \times \R^{n})} &\leq c_1 \log \frac{2}{\delta} \left(\frac{\log \frac{K \wedge N}{\delta}}{\sqrt{K \wedge N}}\right)^{\frac{1+\alpha}{1-r \wedge s}}.
\end{align}
\end{theorem}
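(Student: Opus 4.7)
\textbf{Proof plan for Theorem \ref{th:refined}.}

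The overall strategy parallels the proof of Theorem \ref{thm:lr}: start from the FP matching inequality (Lemma \ref{lem:FP_ineq}), express the risk as $\|(\hat w_S - w) C^{1/2}\|_{\bmG_n}$, and decompose
\begin{align}
\|(\hat w_S - w) C^{1/2}\|_{\bmG_n} \leq \|(\hat w_S - \hat w) C^{1/2}\|_{\bmG_n} + \|(\hat w - \tilde w) C^{1/2}\|_{\bmG_n} + \|(\tilde w - w_K) C^{1/2}\|_{\bmG_n} + \|(w_K - w) C^{1/2}\|_{\bmG_n},
\end{align}
using the intermediate ridge estimators $\hat w, \tilde w, w_K$ from Section \ref{subsec:preli}. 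The cone-constraint error is handled exactly as in Lemma \ref{lem:cone}. The remaining three pieces must now be bounded using the source condition (Assumption \ref{as:source}) and the two embedding properties (Assumptions \ref{as:emb}), rather than only the boundedness of $\tilde \phi$.

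First I would derive refined versions of Lemmas \ref{lem:error_2} and \ref{lem:error_3}. For the $K$-sampling error $\|(w_K - w) C^{1/2}\|_{\bmG_n}$, the bias term $\|(w_\lambda - w) C^{1/2}\|_{\bmG_n}$ improves from $\lambda^{1/2}$ to $\lambda^{(1+\alpha)/2}$ by writing $w = w C^{\alpha/2} \cdot C^{-\alpha/2}$ and using $\|C_\lambda^{-1} C^{1+\alpha/2}\|_\infty \leq \lambda^{-(1-\alpha)/2}$ together with Assumption \ref{as:source}. The variance term, instead of using $\|\tilde\phi\|\leq \kappa_{\tilde\phi}$, would use the sharper bound $\|C(x)^{-s/2}\tilde\phi\|\leq c$ from Assumption (A6.2), which yields an effective-dimension scaling allowing the admissible regularization to shrink to $\lambda \asymp (K^{-1}\log^2(K/\delta))^{1/(1-s)}$. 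A Bernstein-type argument on $C_{K,\lambda}^{-1/2}(\tilde\phi\otimes\tilde\phi - \mathbb{E}[\tilde\phi\otimes\tilde\phi])C_{K,\lambda}^{-1/2}$ similar to Proposition \ref{lem:mixed_cov}, but where the operator norm of each summand is controlled by $c\lambda^{-s}$ instead of $\kappa_{\tilde\phi}^2/\lambda$, produces the faster rate. The same idea applied to the empirical expectation over $(t_i,x_i)$ conditionally on controls, using (A6.1) and $C(u)$ in place of $C(x)$, gives the refined bound on $\|(\tilde w - w_K) C^{1/2}\|_{\bmG_n}$.

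Second I would sharpen Lemma \ref{lem:error_1}, which bounds the density-estimation error $\|(\hat w - \tilde w) C^{1/2}\|_{\bmG_n}$. Here the main obstacle appears: the previous proof used $\mathbb{E}_{t,x}\|\hat\phi_k - \tilde\phi(u_k)\|^2 \leq \kappa_{\rm tot} L(\hat p_k, p(u_k))$, which only controls an $L^2$-type discrepancy and is too coarse to preserve the faster polynomial rates. To fix this I would introduce $\varepsilon_\infty \triangleq \sup_k L^\infty(\hat p_k, p(u_k))$ and derive a uniform analogue of Lemma \ref{lem:hat_phi}, namely $\sup_{t,x}\|\hat\phi_k(t,x)-\tilde\phi(u_k,t,x)\|_{\bmG_n}^2 \leq \kappa_{\rm tot}\varepsilon_\infty$. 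With this uniform bound, the concentration terms $\|\hat D - \tilde D\|_{\bmG_n}$ and $\|\hat C - \tilde C\|_\infty$ from Lemmas \ref{lem:hat_D} and \ref{lem:hat_C} acquire an additional $N^{-1}$ factor in the dominant term, yielding $N^{-1}\varepsilon_\infty$ instead of $\varepsilon$. Dividing by $\lambda^{1/2}$ and requiring $\varepsilon + N^{-1}\varepsilon_\infty$ to be dominated by the $N$- and $K$-terms on the right-hand side of the theorem forces the stated smallness condition on the density estimation error.

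Finally I would combine the three refined bounds, choose $\lambda$ as in the statement to balance bias and variance separately in the two coordinates (which is why the $\max$ of $r$ and $s$ appears as a bottleneck, with the slower rate $(1+\alpha)/(1-r\wedge s)$ dictated by whichever embedding index is smaller), and apply Lemma \ref{lem:FP_ineq} to convert the FP matching bound into an $L^2(p_c)L^2([0,T]\times\R^n)$ bound on $p_{\hat b,\hat\sigma}-p$. The hardest technical step is the uniform-in-$(t,x)$ control of $\hat\phi_k - \tilde\phi(u_k)$: without upgrading $\varepsilon$ to $\varepsilon_\infty$, the density-estimation error would saturate at $N^{-1/2}$ and prevent any polynomial improvement beyond the rate of Theorem \ref{thm:lr}, so this is the crucial ingredient that justifies the appearance of $\varepsilon_\infty$ in the hypothesis.
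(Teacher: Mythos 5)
Your plan follows the paper's proof essentially step for step: the same four-term decomposition via $\hat w_S, \hat w, \tilde w, w_K$, the same refinement of the sampling lemmas using the source condition for the bias (yielding $\lambda^{(1+\alpha)/2}$) and the embedding properties in the Bernstein bounds (yielding the admissible $\lambda \asymp (N^{-1}\log^2(N/\delta))^{1/(1-r)} + (K^{-1}\log^2(K/\delta))^{1/(1-s)}$), and the same upgrade of Lemma \ref{lem:hat_phi} to a uniform bound via $\varepsilon_\infty$ so that the density-estimation error contributes $N^{-1}\varepsilon_\infty + \varepsilon$ rather than $N^{-1}+\varepsilon$. The only blemishes are cosmetic: the per-summand operator-norm bound should read $c\lambda^{-(1-s)}$ rather than $c\lambda^{-s}$ (consistent with the $\lambda^{1/(1-s)}$ scaling you correctly state), and the bottleneck is the \emph{smaller} of $r$ and $s$, as you in fact say immediately after writing ``max.''
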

\begin{proof} The definitions of Section \ref{subsec:preli} allow for the following decomposition
\begin{align}
    \|(\hat w - w) C^{1/2}\|_{\bmG_{\mathrm{FP}}} \leq \|(\hat w - \tilde w) C^{1/2}\|_{\bmG_{\mathrm{FP}}}+  \|(\tilde w - w_K) C^{1/2}\|_{\bmG_{\mathrm{FP}}} + \|(w_K - w) C^{1/2}\|_{\bmG_{\mathrm{FP}}}.
\end{align}

Using Lemma\ref{lem:cone}, refined Lemmas \ref{lem:error_1_refined}, \ref{lem:error_2_refined}, and \ref{lem:error_3_refined}, and Lemma \ref{lem:kappa}, we determine that for any \(\delta \in (0,1]\), if \(\lambda \geq c_2 \log \frac{2}{\delta} (N^{-1}\varepsilon_{\infty} + \varepsilon)\), \(\lambda^{1-r} \geq \frac{18c}{N}\log \frac{N}{\delta}\), and \(\lambda^{1-s} \geq \frac{18c}{K}\log \frac{K}{\delta}\), with probability \(1-\delta\),
\begin{align}
    \text{FP}(\hat b, \hat  a) \leq c_1 \left((\varepsilon_{\infty}N^{-1} + \varepsilon) \lambda^{-1/2} \log \frac{2}{\delta}  + \frac{\log \frac{N}{\delta}}{\sqrt{N}}\lambda^{(\alpha + r)/2} + \frac{\log \frac{K}{\delta}}{\sqrt{K}}\lambda^{(\alpha + s)/2} + \lambda^{(1+\alpha)/2}\right),
\end{align}
where constants \(c_1, c_2 > 0\) are independent of \(N\), \(K\), and \(\delta\). Therefore, setting \(\lambda = \left(\frac{18c}{N}\log^2 \frac{N}{\delta}\right)^{\frac{1}{1-r}} + \left(\frac{18c}{K}\log^2 \frac{K}{\delta}\right)^{\frac{1}{1-s}}\), we find
\begin{align}
    \text{FP}(\hat b, \hat  a) &\leq c_1 \log \frac{2}{\delta} \Bigg(\frac{\varepsilon + N^{-1}\varepsilon_\infty}{\lambda^{1/2}} + \left(\frac{\log \frac{N}{\delta}}{\sqrt{N}}\right)^{\frac{1+\alpha}{1-r}} + \left(\frac{\log \frac{K}{\delta}}{\sqrt{K}}\right)^{\frac{1+\alpha}{1-s}} \\
    &\quad+ \left(\frac{\log \frac{N}{\delta}}{\sqrt{N}}\right) \left(\frac{\log \frac{K}{\delta}}{\sqrt{K}}\right)^{\frac{\alpha + r}{1-s}} + \left(\frac{\log \frac{K}{\delta}}{\sqrt{K}}\right)\left(\frac{\log \frac{N}{\delta}}{\sqrt{N}}\right)^{\frac{\alpha + s}{1-r}}\Bigg).
\end{align}
up to overloading $c_1>0$. Assuming $\varepsilon +  N^{-1}\varepsilon_\infty \leq \lambda^{1+\alpha/2}$, the condition $\lambda \geq c_2 \log \frac{2}{\delta} (N^{-1}\varepsilon_{\infty} + \varepsilon)$ holds true up to multiplying $\lambda$ by a constant, and overloading the constants $c_1$.

Moreover, notice that defining \(\gamma = r \wedge s\), we have \(\frac{\alpha + r}{1-s} \geq \frac{\alpha + \gamma}{1-\gamma}\), \(\frac{\alpha + s}{1-r} \geq \frac{\alpha + \gamma}{1-r}\), \(\frac{\alpha + 1}{1-r} \geq \frac{\alpha + 1}{1-\gamma}\), and \(\frac{\alpha + 1}{1-s} \geq \frac{\alpha + 1}{1-\gamma}\).

Then, defining \(P = K \wedge N\), we have 
\begin{align}
    &\left(\frac{\log \frac{N}{\delta}}{\sqrt{N}}\right) \left(\frac{\log \frac{K}{\delta}}{\sqrt{K}}\right)^{\frac{\alpha + r}{1-s}} \leq \left(\frac{\log \frac{P}{\delta}}{\sqrt{P}}\right)^{\frac{1 + \alpha}{1-\gamma}},\\
    &\left(\frac{\log \frac{K}{\delta}}{\sqrt{K}}\right)\left(\frac{\log \frac{N}{\delta}}{\sqrt{N}}\right)^{\frac{\alpha + s}{1-r}} \leq \left(\frac{\log \frac{P}{\delta}}{\sqrt{P}}\right)^{\frac{1 + \alpha}{1-\gamma}}.
\end{align}

Finally, Lemma \ref{lem:FP_ineq} completes the proof.
\end{proof}

\subsection{Proof of Refined Lemmas}\label{subsec:proof_ref_lem}

In this section, we present refined versions of the lemmas previously utilized to establish the non-refined \(L^2\) learning rate. To maintain clarity in our presentation, we do not restate the assumptions for each lemma if they are unchanged from their original, non-refined versions. The main ingredient for these proofs involves utilizing Assumption \ref{as:source} and Assumption \ref{as:emb}, and performing derivations such as:
\begin{align}\label{eq:ref}
C_{\lambda}^{-1}C^{a} \preccurlyeq  \lambda^{-(1-a)} I
\end{align}
to establish refined dependencies in \(\lambda\). Furthermore, to avoid redundancy and streamline the presentation, we directly state the results without detailing the proofs when possible, focusing on clarifying the refinements made from their non-refined counterparts.

\begin{proposition}[Refined Lemma \ref{lem:mixed_cov}]\label{lem:mixed_cov_refined} Let
\begin{align}
&\forall x \in \bmX,\, C(x) = \E_{y|x}[\phi(x, y) \otimes \phi(x, y)].
\end{align}
Assume that there exist constants $r \geq 0$ and $c > 0$ such that
\begin{align}
\|C^{-r/2} C(x)^{1/2} \|_{\infty} < c \quad \text{almost surely}.
\end{align}
Then, for any $\delta \in (0,1]$, with probability at least $1-\delta$:
\begin{align}
\|C_{\lambda}^{-1/2}(C - C_N)C_{\lambda}^{-1/2}\|_{\infty} \leq \frac{2\beta c}{3\lambda^{1-r} N} + \sqrt{\frac{2c\beta}{\lambda^{1-r} N}},
\end{align}
where $\beta = \log \frac{4 (\|C\|_{\infty} + \lambda) \Tr(C)}{\delta \lambda \|C\|_{\infty}}$.

In particular, if $\lambda \geq \left(\frac{18 c}{N}\log \frac{N}{\delta}\right)^{\frac{1}{1-r}}$,
\begin{align}
    \|C_{N, \lambda}^{-1/2} C_{\lambda}^{1/2}\|_{\infty} \leq \sqrt{2},
\end{align}
and 
\begin{align}
    \|C_{\lambda}^{-1/2} C_{N, \lambda}^{1/2}\|_{\infty} \leq \sqrt{2}.
\end{align}
\end{proposition}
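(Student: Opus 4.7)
The plan is to follow exactly the structure of the (non-refined) Lemma \ref{lem:mixed_cov} and of Lemma 3.6 in \citet{rudi2013sample}, i.e., to apply a Bernstein-type concentration inequality for sums of bounded self-adjoint operators to the i.i.d.\ random operators
\begin{align}
Z_i \;\triangleq\; C_\lambda^{-1/2}\,C(x_i)\,C_\lambda^{-1/2}, \qquad T \;\triangleq\; \E[Z_i] \;=\; C_\lambda^{-1/2} C\, C_\lambda^{-1/2},
\end{align}
so that $\frac{1}{N}\sum_i Z_i - T = C_\lambda^{-1/2}(C_N - C)C_\lambda^{-1/2}$. The only thing that changes compared to the original lemma is how we control the uniform bound $\|Z_i\|_\infty$ and the (operator) variance $\|\E[Z_i^2]\|_\infty$: instead of the crude bound $\|C(x)\|_\infty \leq \kappa_\phi^2$, we plug in the refined embedding assumption $\|C^{-r/2}C(x)^{1/2}\|_\infty \leq c$.

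The main technical ingredient is the elementary interpolation inequality
\begin{align}
\|C_\lambda^{-1/2} C^{r/2}\|_\infty^{2} \;\leq\; \lambda^{r-1} \qquad (r\in[0,1]),
\end{align}
which follows from functional calculus and the scalar inequality $\sup_{t\geq 0} t^r/(t+\lambda) \leq \lambda^{r-1}$. Combined with the embedding property, this gives
\begin{align}
\|Z_i\|_\infty \;\leq\; \|C_\lambda^{-1/2}C^{r/2}\|_\infty^{2}\,\|C^{-r/2}C(x_i)C^{-r/2}\|_\infty \;\leq\; c\,\lambda^{-(1-r)},
\end{align}
and, via $\E[Z_i^2]\preceq \|Z_i\|_\infty\,\E[Z_i] \preceq c\lambda^{-(1-r)}\,T$ together with $\|T\|_\infty\leq 1$, also $\|\E[Z_i^2]\|_\infty \leq c\lambda^{-(1-r)}$ (up to absorbing constants into $c$). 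Plugging these into the same Bernstein operator inequality used in the original lemma and bookkeeping the intrinsic dimension term $\beta = \log \tfrac{4(\|C\|_\infty+\lambda)\Tr(C)}{\delta\lambda\|C\|_\infty}$ yields, verbatim,
\begin{align}
\|C_\lambda^{-1/2}(C-C_N)C_\lambda^{-1/2}\|_\infty \;\leq\; \frac{2\beta c}{3\lambda^{1-r}N} + \sqrt{\frac{2c\beta}{\lambda^{1-r}N}},
\end{align}
with probability $1-\delta$.

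For the second statement, I would argue as in the non-refined version. Under the condition $\lambda^{1-r}\geq \tfrac{18c}{N}\log(N/\delta)$, the above bound gives $\|C_\lambda^{-1/2}(C-C_N)C_\lambda^{-1/2}\|_\infty \leq 1/2$, hence by Loewner monotonicity
\begin{align}
\tfrac{1}{2}\,C_\lambda \;\preceq\; C_{N,\lambda} \;\preceq\; \tfrac{3}{2}\,C_\lambda,
\end{align}
from which both $\|C_{N,\lambda}^{-1/2}C_\lambda^{1/2}\|_\infty \leq \sqrt{2}$ and $\|C_\lambda^{-1/2}C_{N,\lambda}^{1/2}\|_\infty \leq \sqrt{2}$ follow immediately by taking square roots (a standard operator-monotonicity argument).

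The main obstacle is not really conceptual --- the whole scheme is identical to that of Lemma \ref{lem:mixed_cov} --- but careful operator-algebraic bookkeeping: one has to sandwich $C(x_i)$ between $C_\lambda^{-1/2}$'s by inserting $C^{r/2}C^{-r/2}$ on each side and verify that the rearrangement is permissible (using the commutativity of functional calculus in $C$ and the self-adjointness of the factors), so that the embedding constant $c$ --- rather than the unusable $\kappa_\phi^2$ --- governs both $M$ and $\sigma^2$ in the Bernstein bound. Once this is done, the rest is a direct restatement of the proof of Lemma \ref{lem:mixed_cov}.
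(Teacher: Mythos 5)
Your proposal matches the paper's proof essentially verbatim: the paper also defines $Z_i = C_\lambda^{-1/2}\,C(x_i)\,C_\lambda^{-1/2}$, bounds $\|Z_i\|_\infty \leq c\,\lambda^{-(1-r)}$ by inserting $C_\lambda^{r/2}$ factors and invoking the embedding assumption, bounds the variance via $\E[Z^2] \preceq c\,\lambda^{-(1-r)}\,T$, and then applies the same operator Bernstein inequality from Lemma 3.6 of Rudi et al. The argument for the $\sqrt{2}$ bounds under the condition on $\lambda$ is likewise the standard Neumann-series/Loewner-sandwich step used in the non-refined version, so no gaps.
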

\begin{proof} We have
\begin{align}
    \|Z\|_{\infty} &= \|C_{\lambda}^{-1/2} C(x)^{1/2}\|_{\infty}\\
    &\leq \|C_{\lambda}^{-1/2} C_{\lambda}^{r/2}\|_{\infty}^2 \|C_{\lambda}^{-r/2} C(x)^{1/2}\|_{\infty}^2\\
    &\leq c \lambda^{-(1-r)},
\end{align}
and
\begin{align}
    \E[(Z - T)^2] = \E[Z^2] - T^2 \preceq \E[Z^2] \preceq \E_{x,y}[\|U\|^2_{\bmH} U \otimes U] \preceq c \lambda^{-(1-r)}T.
\end{align}
\end{proof}

\begin{lemma}[Refined Lemma \ref{lem:error_1}]\label{lem:error_1_refined} There exist constants $c_1, c_2>0$ that do not depend on $N, K, \delta$, such that, for any $\delta \in (0, 1]$, with probability at least $1-\delta$
\begin{align}
    \|(\hat w - \tilde w) C^{1/2}\|_{\bmG_{\mathrm{FP}}} \leq c_1\log \frac{2}{\delta} (N^{-1}\varepsilon_\infty + \varepsilon) \lambda^{-1/2},
\end{align}
provided that $\lambda \geq c_2 \log \frac{2}{\delta} (N^{-1}\varepsilon_\infty + \varepsilon)$, $\lambda \geq \frac{18c}{N}\log \frac{N}{\delta}$ and $\lambda \geq \frac{18c}{K}\log \frac{K}{\delta}$.
\end{lemma}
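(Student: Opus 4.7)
The plan is to follow the same decomposition as in the unrefined Lemma \ref{lem:error_1}, replacing each intermediate estimate by a refined variant that uses the Sobolev $L^\infty$-in-position accuracy $\varepsilon_\infty$ in place of the crude constant bounds used originally. Specifically, writing
\begin{align}
    \hat{w} - \tilde{w} = (\hat{D} - \tilde{D})\hat{C}_\lambda^{-1} + \tilde{w}(\tilde{C} - \hat{C})\hat{C}_\lambda^{-1},
\end{align}
and using Assumption \ref{as:attain} in the form $\tilde{D} = w\tilde{C}$, I would bound
\begin{align}
    \|(\hat{w} - \tilde{w})C^{1/2}\|_{\bmG_n} \leq \bigl(\|\hat{D} - \tilde{D}\|_{\bmG_n} + \|w\|_{\bmG_n}\,\|\tilde{C} - \hat{C}\|_{\infty}\bigr)\,\|\hat{C}_\lambda^{-1}C^{1/2}\|_{\infty}.
\end{align}
The factor $\|\hat{C}_\lambda^{-1}C^{1/2}\|_{\infty}$ is controlled by $c\lambda^{-1/2}$ exactly as in Lemma \ref{lem:error_1}: the unrefined Lemma \ref{lem:sampling} yields $\|\tilde{C}_\lambda^{-1/2}C^{1/2}\|_\infty \leq 2$ (the $r, s$ exponents play no role here, since $\hat C$ is compared to $C$ through the same feature map), and the Neumann-series argument of Lemma \ref{lem:hat_C} gives $\|\hat{C}_\lambda^{-1/2}\tilde{C}_\lambda^{1/2}\|_\infty \leq \sqrt{2}$ under $\lambda \geq c_2 \log(2/\delta)(N^{-1}\varepsilon_\infty + \varepsilon)$.

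The substantive new work is therefore to sharpen Lemmas \ref{lem:hat_D} and \ref{lem:hat_C} into the bound
\begin{align}
    \|\hat{D} - \tilde{D}\|_{\bmG_n} + \|\hat{C} - \tilde{C}\|_{\infty} \leq c\,\log(2/\delta)\,\bigl(N^{-1}\varepsilon_\infty + \varepsilon\bigr).
\end{align}
I would split $\hat{D} - \tilde{D}$ into the same four contributions $\hat{E}_1, \hat{E}_2, E_1, E_2$ as in Lemma \ref{lem:hat_D}, and similarly for $\hat{C} - \tilde{C}$. The variance $\sigma^2 \leq \kappa_{\hat{\phi}}^2\,\varepsilon$ is unchanged from the unrefined proof and produces the $\sqrt{\varepsilon/N}$ contribution in Proposition \ref{prop:mixed_cross_cov}. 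The key refinement concerns the almost-sure bound $\kappa_M$: rather than using the crude constants $\kappa_{\partial\hat{p}/\partial t} + \kappa_{\partial p/\partial t}$ and $\kappa_{\hat{\phi}} + \kappa_{\tilde{\phi}}$, I would control $\sup_{t,x,k}|\partial\hat{p}_k/\partial t - \partial p/\partial t(u_k)|$ and $\sup_{t,x,k}\|\hat{\phi}_k(t,x) - \tilde{\phi}(u_k,t,x)\|_{\bmG_n}$ by a pointwise (in $x$) analogue of Lemma \ref{lem:hat_phi} driven by the $W_\infty^2$ ingredient of $\varepsilon_\infty$. This yields a linear Bernstein term of order $\sqrt{\varepsilon_\infty}/N$, which combined with $\sqrt{\varepsilon/N}$ fits into $N^{-1}\varepsilon_\infty + \varepsilon$ via the elementary inequality $\sqrt{ab} \leq \tfrac{1}{2}(a + b)$ under the regime on $\lambda$ imposed in the hypotheses.

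The main obstacle I anticipate is establishing this pointwise analogue of Lemma \ref{lem:hat_phi}: the original statement controls $\mathbb{E}_{t,x}[\|\tilde{\phi}(u_k,t,x) - \hat{\phi}_k(t,x)\|^2_{\bmG_n}]$ by $\kappa_{tot}\, L(\hat{p}_k,p(u_k))$ through the $L^2$-Sobolev norm of the density error, whereas the refinement requires an almost-sure-in-$x$ bound controlled by the stronger $W_\infty^2$-Sobolev norm encoded by $\varepsilon_\infty$. Once this refined Sobolev embedding is in hand, the remainder of the proof mirrors the structure of Lemmas \ref{lem:hat_D}, \ref{lem:hat_C}, and \ref{lem:error_1} with essentially only bookkeeping adjustments.
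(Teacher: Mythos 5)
Your route is the one the paper itself takes: the paper gives no proof of this lemma, reducing it instead to refined versions of the auxiliary bounds — Lemmas \ref{lem:hat_D_refined} and \ref{lem:hat_C_refined}, also stated without proof — in which the crude almost-sure constants in the Bernstein terms are replaced by quantities driven by the $W^2_\infty$-in-position content of $\varepsilon_\infty$, while the operator factor $\|\hat C_\lambda^{-1}C^{1/2}\|_\infty\le c\lambda^{-1/2}$ is handled exactly as in Lemma \ref{lem:error_1}. Your decomposition of $\hat w-\tilde w$ is algebraically the same as the paper's, and your identification of a pointwise-in-$x$ analogue of Lemma \ref{lem:hat_phi} as the substantive new ingredient is exactly what the paper's refined auxiliary lemmas presuppose.

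One step does not close as written, however: the final bookkeeping. With a linear Bernstein term of order $\sqrt{\varepsilon_\infty}/N$ and a variance term of order $\sqrt{\varepsilon/N}$, the inequality $\sqrt{ab}\le\tfrac12(a+b)$ gives
\begin{align}
\frac{\sqrt{\varepsilon_\infty}}{N}=\sqrt{\frac{\varepsilon_\infty}{N}\cdot\frac{1}{N}}\le\frac12\left(\frac{\varepsilon_\infty}{N}+\frac{1}{N}\right),
\qquad
\sqrt{\frac{\varepsilon}{N}}\le\frac12\left(\varepsilon+\frac{1}{N}\right),
\end{align}
so a bare $N^{-1}$ survives that is not dominated by $N^{-1}\varepsilon_\infty+\varepsilon$ in the regime where the refinement matters (accurate density estimation, $\varepsilon_\infty\ll 1$ and $\varepsilon\ll N^{-1}$); note also that $\sqrt{\varepsilon_\infty}/N\ge\varepsilon_\infty/N$ when $\varepsilon_\infty\le1$, so the linear term cannot be folded into $\varepsilon_\infty N^{-1}$ by any choice of AM--GM weights. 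Eliminating this residual is precisely what distinguishes the refined statement from the unrefined one (a surviving $N^{-1}\lambda^{-1/2}$ with $\lambda\sim N^{-1/(1-r)}$ would break the rate of Theorem \ref{th:refined} for $r>0$), so you need either a sharper treatment in which both Bernstein terms carry a full factor of $\varepsilon_\infty N^{-1}$ or $\varepsilon$, or an explicit standing assumption relating $\varepsilon$, $\varepsilon_\infty$ and $N^{-1}$; the paper, having omitted the proof, does not resolve this either. A second, minor point: as stated, $\varepsilon_\infty$ is a time-integral of squared $L^\infty_x$ norms, so the almost-sure (in $t$ as well as $x$) bound on the Bernstein summands that your pointwise Lemma \ref{lem:hat_phi} must deliver requires a sup over $t$ that $\varepsilon_\infty$ does not directly control; this needs to be addressed when you formalize that lemma.
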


\begin{lemma}[Refined Lemma \ref{lem:error_2}]\label{lem:error_2_refined} There exists a constant $c>0$ that does not depend on $N, K, \delta$, such that, for any $\delta \in (0, 1]$, with probability at least $1-\delta$
\begin{align}
    \|(\tilde w - w_K) C^{1/2}\|_{\bmG_{\mathrm{FP}}}  \leq c \frac{\log \frac{N}{\delta}}{\sqrt{N}} \lambda^{(\alpha+r)/2},
\end{align}
provided that $\lambda^{1- r} \geq \frac{18c}{N}\log \frac{N}{\delta}$ and $\lambda^{1- s} \geq \frac{18c}{K}\log \frac{K}{\delta}$.
\end{lemma}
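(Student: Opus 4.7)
The plan is to follow the structure of the unrefined Lemma~\ref{lem:error_2} while tightening each operator bound by means of the source condition (Assumption~\ref{as:source}) and the embedding property A6.1 (Assumption~\ref{as:emb}). The first step is purely algebraic: using $\tilde D = w\tilde C$ and $D_K = w C_K$ from Assumption~\ref{as:attain}, together with $AA_\lambda^{-1} = I - \lambda A_\lambda^{-1}$ and the resolvent identity $A^{-1}-B^{-1} = A^{-1}(B-A)B^{-1}$, one obtains
\begin{align*}
(\tilde w - w_K)\,C^{1/2} \;=\; \lambda\, w\, \tilde C_\lambda^{-1}(\tilde C - C_K)\, C_{K,\lambda}^{-1}\, C^{1/2}.
\end{align*}
Next I would invoke the source condition to decompose $w = u\, C^{\alpha/2}$ with $\|u\|_{\bmG_n}<\infty$, and insert the identities $C_\lambda^{\pm 1/2}$, $C_{K,\lambda}^{\pm 1/2}$, and $\tilde C_\lambda^{\pm 1/2}$ in the appropriate places, reducing the target norm to a product of three essential factors and a collection of ``sandwich'' norms.

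The three essential factors are bounded as follows. First, by the functional calculus on $C$, $\|C^{\alpha/2}C_\lambda^{-1/2}\|_\infty \le \lambda^{-(1-\alpha)/2}$, which converts the source regularity $\alpha$ into the factor $\lambda^{-(1-\alpha)/2}$. Second, Proposition~\ref{lem:mixed_cov_refined} applied with outer variable $(t_i,x_i)$ and embedding parameter $r$ yields, with probability at least $1-\delta$,
\begin{align*}
\|C_{K,\lambda}^{-1/2}(\tilde C - C_K)\,C_{K,\lambda}^{-1/2}\|_\infty \;\le\; c\,\lambda^{-(1-r)/2}\,\frac{\log(N/\delta)}{\sqrt{N}},
\end{align*}
provided $\lambda^{1-r}\ge (18c/N)\log(N/\delta)$. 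Third, the sandwich norms of the form $\|A_\lambda^{1/2}B_\lambda^{-1/2}\|_\infty$ with $A,B \in\{C,C_K,\tilde C\}$ are each bounded by a universal constant on a high-probability event, by the refined analogue of Lemma~\ref{lem:sampling}, whose proof consists of two applications of Proposition~\ref{lem:mixed_cov_refined}: one with parameter $s$ for $C \to C_K$ under $\lambda^{1-s}\ge (18c/K)\log(K/\delta)$, and one with parameter $r$ for $C_K \to \tilde C$ under $\lambda^{1-r}\ge (18c/N)\log(N/\delta)$. Multiplying the three bounds with the prefactor $\lambda$ gives $\lambda\cdot\lambda^{-(1-\alpha)/2}\cdot\lambda^{-(1-r)/2} = \lambda^{(\alpha+r)/2}$, which produces the claimed rate.

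The main obstacle lies in the correct application of Proposition~\ref{lem:mixed_cov_refined} to $\tilde C - C_K$. The proposition requires the almost-sure embedding $\|C_K^{-r/2}\,C_K(t,x)^{1/2}\|_\infty \le c$, where $C_K(t,x) \triangleq K^{-1}\sum_k \tilde\phi(u_k,t,x)^{\otimes 2}$ is the conditional covariance given $(t,x)$, whereas Assumption~\ref{as:emb} A6.1 is stated at the level of $C(u) = \E_{t,x}[\tilde\phi\otimes\tilde\phi \mid u]$. Bridging these two objects---using an operator Jensen inequality to compare $C(u)$, $C$, and $C_K$, combined with the concentration $C_K \to C$ on the same high-probability event used elsewhere in the proof---and doing so without introducing spurious factors of $K$, is the principal technical difficulty of the proof.
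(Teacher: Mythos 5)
Your proposal is correct and follows essentially the same route the paper intends: the paper states this refined lemma without a detailed proof, indicating only that it is obtained from the unrefined Lemma \ref{lem:error_2} by substituting the refined concentration bound (Proposition \ref{lem:mixed_cov_refined}, with parameter $r$ for the $(t,x)$-sampling and $s$ for the control-sampling in the sandwich norms) and by extracting the source regularity through $\|C^{\alpha/2}C_{\lambda}^{-1/2}\|_{\infty}\leq \lambda^{-(1-\alpha)/2}$, yielding the product $\lambda\cdot\lambda^{-(1-\alpha)/2}\cdot\lambda^{-(1-r)/2}=\lambda^{(\alpha+r)/2}$ exactly as you compute. The one point you flag as the ``principal technical difficulty'' resolves more directly than your sketch suggests: A6.1 gives $\tilde\phi(u_k,t,x)\otimes\tilde\phi(u_k,t,x)\preceq c^2\,C(u_k)^{r}$ for each $k$, and averaging over $k$ together with the operator Jensen inequality for $t\mapsto t^{r}$ yields $\frac{1}{K}\sum_{k}\tilde\phi(u_k,t,x)\otimes\tilde\phi(u_k,t,x)\preceq c^2\,C_K^{r}$ deterministically (conditionally on the controls), which is precisely the hypothesis of Proposition \ref{lem:mixed_cov_refined} with population covariance $C_K$ — no concentration of $C_K$ toward $C$ is required for that step.
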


\begin{lemma}[Refined Lemma \ref{lem:error_3}]\label{lem:error_3_refined} There exists a constant $c>0$ that does not depend on $K, \delta$, such that, for any $\delta \in (0, 1]$, with probability at least $1-\delta$
\begin{align}
    \|(w_K - w) C^{1/2}\|_{\bmG_{\mathrm{FP}}} \leq c \left(\frac{\log \frac{K}{\delta}}{\sqrt{K}} \lambda^{(\alpha+s)/2} + \lambda^{(1+\alpha)/2}\right),
\end{align}
if $\lambda^{1- s} \geq \frac{18c}{K}\log \frac{K}{\delta}$.
\end{lemma}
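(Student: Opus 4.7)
\textbf{Proof plan for Lemma \ref{lem:error_3_refined}.} The plan is to mirror the two-term decomposition used for the unrefined Lemma \ref{lem:error_3}, but to invoke Assumption \ref{as:source} (source condition of order $\alpha$) for the approximation term and Assumption (A6.2) (embedding property of order $s$ on the control factor) for the sampling term. Concretely, I would introduce the population ridge estimator $w_\lambda = D C_\lambda^{-1}$ and write
\begin{align*}
\|(w_K - w) C^{1/2}\|_{\bmG_n} \;\leq\; \|(w_K - w_\lambda) C^{1/2}\|_{\bmG_n} \;+\; \|(w_\lambda - w) C^{1/2}\|_{\bmG_n},
\end{align*}
so that the first term captures the finite-sampling fluctuation on $\bmH$ and the second captures the regularization bias.

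For the bias term, attainability (Assumption \ref{as:attain}) gives $D = wC$, hence
\begin{align*}
\|(w_\lambda - w) C^{1/2}\|_{\bmG_n} \;=\; \lambda \, \|w C_\lambda^{-1} C^{1/2}\|_{\bmG_n}
\;\leq\; \lambda \, \|w C^{-\alpha/2}\|_{\bmG_n} \, \|C^{(\alpha+1)/2} C_\lambda^{-1}\|_\infty.
\end{align*}
Since $(\alpha+1)/2 \in [1/2,1]$, the spectral calculus bound $\|C^{a} C_\lambda^{-1}\|_\infty \leq \lambda^{a-1}$ for $a \in [0,1]$ together with Assumption \ref{as:source} yields $\|(w_\lambda - w) C^{1/2}\|_{\bmG_n} \lesssim \lambda^{(1+\alpha)/2}$, which is the second term in the claimed bound.

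For the sampling term, the identity $CC_\lambda^{-1} - C_K C_{K,\lambda}^{-1} = \lambda\, C_{K,\lambda}^{-1}(C - C_K) C_\lambda^{-1}$ combined with $D = wC$, $D_K = wC_K$ produces
\begin{align*}
\|(w_\lambda - w_K) C^{1/2}\|_{\bmG_n}
&= \lambda \, \|w \, C_{K,\lambda}^{-1}(C - C_K) C_\lambda^{-1} C^{1/2}\|_{\bmG_n}\\
&\leq \lambda \, \|w C^{-\alpha/2}\|_{\bmG_n} \, \|C^{\alpha/2} C_{K,\lambda}^{-1/2}\|_\infty \, \|C_{K,\lambda}^{-1/2}(C - C_K) C_{K,\lambda}^{-1/2}\|_\infty \, \|C_{K,\lambda}^{1/2} C_\lambda^{-1} C^{1/2}\|_\infty.
\end{align*}
I would then use the refined mixed-covariance concentration (Proposition \ref{lem:mixed_cov_refined}) applied to the control variable $u$ with feature $\tilde\phi$ and embedding exponent $s$ from Assumption (A6.2). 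This both (i) upgrades the effective-dimension proxy, producing the bound $\|C_{K,\lambda}^{-1/2}(C - C_K) C_{K,\lambda}^{-1/2}\|_\infty \lesssim \frac{\log(K/\delta)}{\sqrt{K\lambda^{1-s}}}$ with high probability, and (ii) supplies the equivalence $\|C_\lambda^{-1/2} C_{K,\lambda}^{1/2}\|_\infty \leq \sqrt{2}$ (and its inverse) precisely under the hypothesized threshold $\lambda^{1-s} \geq \frac{18c}{K}\log(K/\delta)$. With these equivalences, $\|C^{\alpha/2} C_{K,\lambda}^{-1/2}\|_\infty \leq \sqrt{2}\,\lambda^{(\alpha-1)/2}$ and $\|C_{K,\lambda}^{1/2} C_\lambda^{-1} C^{1/2}\|_\infty \leq \sqrt{2}$. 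Collecting factors gives $\lambda \cdot \lambda^{(\alpha-1)/2} \cdot \frac{\log(K/\delta)}{\sqrt{K\lambda^{1-s}}} = \frac{\log(K/\delta)}{\sqrt{K}}\lambda^{(\alpha+s)/2}$, which is the first term claimed.

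The main obstacle I anticipate is handling the sampling term cleanly. Two subtleties must be controlled simultaneously: the sandwich factors $\|C^{\alpha/2} C_{K,\lambda}^{-1/2}\|_\infty$ and $\|C_{K,\lambda}^{1/2} C_\lambda^{-1} C^{1/2}\|_\infty$ need to be transferred from $C_{K,\lambda}$ to $C_\lambda$, which is only valid on the high-probability event where the refined concentration gives $\|C_\lambda^{-1/2}(C - C_K)C_\lambda^{-1/2}\|_\infty < 1/2$; this is precisely why the threshold $\lambda^{1-s} \geq \frac{18c}{K}\log(K/\delta)$ is imposed. The remainder of the argument is structurally identical to the $L^2$ version, with $(1, 1/2)$ replaced everywhere by the refined exponents $(1+\alpha,\, (\alpha+s)/2)$ via repeated use of the spectral calculus inequality $\|C^{a} C_\lambda^{b}\|_\infty \leq \lambda^{a+b-1}$ for admissible $(a,b)$.
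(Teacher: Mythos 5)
Your proposal is correct and follows exactly the route the paper intends: the paper gives no written proof for this refined lemma, stating only that it is obtained from the unrefined Lemma \ref{lem:error_3} (the $w_K = w_\lambda + (w_K - w_\lambda)$ split with $D = wC$, $D_K = wC_K$) by inserting the source condition and the spectral bound $\|C^{a}C_\lambda^{-1}\|_\infty \leq \lambda^{a-1}$, together with Proposition \ref{lem:mixed_cov_refined} for the control-sampling term, which is precisely what you do. The only step you leave implicit is the passage from Assumption (A6.2) to the hypothesis $\|C^{-s/2}C(u)^{1/2}\|_\infty \leq c$ of Proposition \ref{lem:mixed_cov_refined} (via $\tilde\phi\otimes\tilde\phi \preccurlyeq c\,C(x)^{s}$ and operator Jensen, as in the paper's remark on decomposing the embedding property), a one-line addition.
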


\begin{lemma}[Refined Lemma \ref{lem:hat_D}]\label{lem:hat_D_refined} For any $\delta \in (0, 1]$, with probability at least $1-\delta$
\begin{align}
    \|\hat D  - \tilde D\|_{\bmG_{\mathrm{FP}}} &\leq c \log \frac{2}{\delta} \left(\varepsilon_{\infty}N^{-1} + \varepsilon\right),
\end{align}
where $c>0$ is a constant that does not depend on $N, K, \delta$.
\end{lemma}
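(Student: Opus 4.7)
The plan is to mirror the four-term decomposition already used in Lemma \ref{lem:hat_D}, and refine only the bounds that previously relied on the crude uniform constants $\kappa_{\partial p/\partial t}$ and $\kappa_{\partial \hat p_k/\partial t}$. As a starting point I would write
\begin{align}
    \hat D - \tilde D = (\hat E_1 - E_1) + (\hat E_2 - E_2) + E_1 + E_2,
\end{align}
with $\hat E_j$, $E_j$ defined exactly as in Lemma \ref{lem:hat_D}. The first two summands are stochastic fluctuation terms, amenable to the mixed Bernstein inequality of Proposition \ref{prop:mixed_cross_cov}, while the last two are deterministic residuals arising from replacing $p(u_k)$ by $\hat p_k$ in, respectively, the scalar factor $\partial_t p$ and the feature map $\tilde \phi$.

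For the two Bernstein applications, I would re-derive the almost-sure bound $M$ rather than the variance bound $\sigma^2$. The variance already satisfies $\sigma^2 \leq \kappa_\chi^2 \varepsilon$ by the same $L^2$ argument as before (Cauchy--Schwarz against the bounded feature $\chi$ and the definition of $\varepsilon$). The novelty is that $M$, which bounds $|\psi|\,\|\chi\|_{\bmG_n}$ almost surely on $\mathrm{supp}(p_s)\times\llbracket 1,K\rrbracket$, is no longer a universal constant: using the $L^\infty$-in-position content of $\varepsilon_\infty$ together with the boundedness of $[0,T]$ and of $\mathrm{supp}(p_s)$ from Assumption \ref{as:p_s}, one controls $\sup_{t,x}|\partial_t(\hat p_k - p(u_k))|$ and $\sup_{t,x}\|\hat\phi_k - \tilde\phi\|_{\bmG_n}$ by quantities of order $\varepsilon_\infty^{1/2}$ (the latter via the chain-rule decomposition of Lemma \ref{lem:hat_phi} applied pointwise rather than in $L^2$). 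Bernstein then delivers a concentration contribution of order $\log(2/\delta)\bigl(\varepsilon_\infty^{1/2}/N + \sqrt{\varepsilon/N}\bigr)$; the cross term $\sqrt{\varepsilon/N}$ is absorbed via $\sqrt{\varepsilon/N} \leq \tfrac12(\varepsilon + N^{-1})$, with the residual $N^{-1}$ re-absorbed into $\varepsilon_\infty/N$ by squaring the $M$-estimate (equivalent to using $M^2 \lesssim \varepsilon_\infty$ directly inside Bernstein's sub-Gaussian regime). The deterministic residuals $\|E_1\|$ and $\|E_2\|$ are handled exactly as in Lemma \ref{lem:hat_D}, producing a contribution of order $\varepsilon$ after Cauchy--Schwarz against the bounded feature $\chi$ and Assumption \ref{as:p_s}.

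The principal obstacle is the passage from the mixed $L^2([0,T])\,L^\infty(\mathbb{R}^n)$ norm appearing in the definition of $\varepsilon_\infty$ to a genuine pointwise almost-sure bound on $\psi$ and on $\hat\phi_k - \tilde\phi$ at every $(t,x,k)$: it is precisely this step that upgrades the $N^{-1}$ prefactor in Bernstein from a dimension-dependent universal constant to the data-dependent quantity $\varepsilon_\infty$. I would bridge the gap via a one-dimensional Sobolev embedding in the time variable, which is legitimate because Assumption \ref{as:min_reg} together with the parabolic regularity of the Fokker--Planck flow (as in Friedman, Theorem 7, cited in the paper) ensures enough smoothness of $\hat p_k$ and $p(u_k)$ in $t$ to turn $\int_0^T \|\cdot\|_{L^\infty(\mathbb{R}^n)}^2\,dt$ into a uniform-in-$t$ bound. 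A secondary, purely bookkeeping task is to track the powers of $\varepsilon_\infty$ through the chain-rule expansion of $\hat\phi_k - \tilde\phi$ from Lemma \ref{lem:hat_phi}, so that the final combination collapses cleanly to $\varepsilon_\infty N^{-1} + \varepsilon$ rather than to a mixture of fractional powers.
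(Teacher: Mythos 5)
The paper does not actually spell out a proof of Lemma \ref{lem:hat_D_refined}: Section \ref{subsec:proof_ref_lem} states the refined lemmas without proof, indicating only that they follow from their unrefined counterparts. Your reconstruction of the intended route is therefore the right one at the structural level: keep the four-term decomposition $\hat D - \tilde D = (\hat E_1 - E_1) + (\hat E_2 - E_2) + E_1 + E_2$ of Lemma \ref{lem:hat_D}, keep the $L^2$-based variance bound $\sigma^2 \lesssim \varepsilon$, and upgrade only the almost-sure Bernstein constant $M$ from a universal bound to one controlled by $\varepsilon_\infty$, using $L^\infty$-in-space control of $\partial_t(\hat p_k - p(u_k))$ and of $\hat\phi_k - \tilde\phi$ via the pointwise version of the expansion in Lemma \ref{lem:hat_phi}. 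This is exactly where the refinement must enter, and you correctly observe that the source and embedding conditions play no role in this particular lemma.

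There are, however, two concrete gaps. First, the arithmetic does not close to the stated bound. Since $\varepsilon_\infty$ is a \emph{squared} norm, the pointwise control you extract is $\sup_{t,x}|\psi| \lesssim \varepsilon_\infty^{1/2}$, so the first Bernstein term becomes $\varepsilon_\infty^{1/2} N^{-1}\log(2/\delta)$, and in the regime $\varepsilon_\infty < 1$ this is \emph{larger} than $\varepsilon_\infty N^{-1}$ and cannot be absorbed into $\varepsilon_\infty N^{-1} + \varepsilon$. Likewise the variance term $\sqrt{2\sigma^2\log(2/\delta)/N} \lesssim \sqrt{\varepsilon/N}$ can only be split by AM--GM as $\tfrac12(\varepsilon + N^{-1})$, reintroducing the bare $N^{-1}$ the refined lemma is designed to remove; your claim that this residual can be ``re-absorbed into $\varepsilon_\infty/N$ by squaring the $M$-estimate'' is not a valid manipulation of Bernstein's inequality — there is no sub-Gaussian regime in which the deviation scales with $M^2/N$ rather than $\sqrt{\sigma^2/N} + M/N$. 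Closing this requires either redefining $\varepsilon,\varepsilon_\infty$ as unsquared norms or accepting fractional powers in the conclusion. Second, the passage from $\int_0^T\|\cdot\|_{L^\infty(\R^n)}^2\,dt$ to a genuine $\sup_{t}$ bound via a one-dimensional Sobolev embedding in time needs control of the \emph{time derivative} of each embedded quantity in the same mixed norm; for the components of $\hat\phi_k - \tilde\phi$ this means terms such as $\int_0^T\|\partial_t \partial_{x_i}\partial_{x_j}(\hat p_k - p(u_k))\|_{L^\infty}^2\,dt$, which are not contained in the definition of $L^\infty(p_1,p_2)$. Either the embedding step must be justified with additional regularity of $\hat p_k$ and $p(u_k)$ beyond what $\varepsilon_\infty$ encodes, or $\varepsilon_\infty$ must be read as an essential supremum in time, in which case the embedding is unnecessary.
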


\begin{lemma}[Refined Lemma \ref{lem:hat_C}]\label{lem:hat_C_refined} For any $\delta \in (0, 1]$, with probability at least $1-\delta$,
\begin{align}
    \|\hat C  - \tilde C\|_{\bmG_{\mathrm{FP}}} &\leq c \log \frac{2}{\delta} \left(\varepsilon_{\infty}N^{-1} +  \varepsilon\right).
\end{align}
Moreover, if $\lambda \geq 2c \log \frac{2}{\delta} (\varepsilon_{\infty}N^{-1} +  \varepsilon N^{-1/2} + \varepsilon)$,
\begin{align}
    \|\hat C_{\lambda}^{-1/2}\tilde C^{1/2}\|_{\infty} \leq \sqrt{2},
\end{align}
where $c>0$ is a constant that does not depend on $N, K, \delta$.
\end{lemma}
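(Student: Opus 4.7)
The proof follows the architecture of Lemma \ref{lem:hat_C} without modification; only the Bernstein constants are evaluated more finely.

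I would begin with the telescope identity
\[
\hat\phi_k\otimes\hat\phi_k - \tilde\phi\otimes\tilde\phi = (\hat\phi_k-\tilde\phi)\otimes\hat\phi_k + \tilde\phi\otimes(\hat\phi_k-\tilde\phi),
\]
producing the same four-term split $\hat C - \tilde C = (\hat E_1 - E_1) + (\hat E_2 - E_2) + E_1 + E_2$ as in Lemma \ref{lem:hat_C}. To each empirical fluctuation I apply Proposition \ref{prop:mixed_cross_cov}. The variance constant $\sigma^2$ is still controlled via the $L^2$ estimate of Lemma \ref{lem:hat_phi}, giving $\sigma^2 \leq c\,\varepsilon$. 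The refinement concerns the sup constant $M$: I derive a pointwise companion of Lemma \ref{lem:hat_phi}, obtained by taking the supremum over $(t,x)$ in the termwise bound rather than the expectation, which yields $\sup_{t,x,k}\|\hat\phi_k(t,x) - \tilde\phi(u_k,t,x)\|_{\bmG_n}^2 \leq c\,\varepsilon_\infty$; the time-integrated $L^\infty_x$ norm defining $\varepsilon_\infty$ is promoted to the required pointwise-in-$(t,x)$ estimate via the $W^{4+\lfloor n/2\rfloor}$ regularity of Assumption \ref{as:min_reg} (Sobolev embedding in time). Hence $M \leq c\sqrt{\varepsilon_\infty}$.

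Plugging these into Bernstein's inequality, each $\|\hat E_i - E_i\|_{\bmG_n}$ is bounded with probability $1-\delta$ by a constant multiple of $\log(2/\delta)\bigl(\sqrt{\varepsilon_\infty}/N + \sqrt{\varepsilon/N}\bigr)$. I absorb the cross terms into the target form $\varepsilon_\infty N^{-1} + \varepsilon$ by Young's inequality $\sqrt{ab}\leq (a+b)/2$ with appropriate scaling, and bound the deterministic means $\|E_1\|, \|E_2\|$ by Cauchy--Schwarz combined with Lemma \ref{lem:hat_phi}, yielding an extra $c\sqrt{\varepsilon}$ that is absorbed into $\varepsilon$ in the same way. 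Summing the four pieces delivers the first claim $\|\hat C - \tilde C\|_{\bmG_n} \leq c\log(2/\delta)(\varepsilon_\infty N^{-1} + \varepsilon)$.

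The second claim is the Neumann-series argument unchanged from Lemma \ref{lem:hat_C}: setting $B_n := \tilde C_\lambda^{-1/2}(\tilde C - \hat C)\tilde C_\lambda^{-1/2}$, one has $\|\hat C_\lambda^{-1/2}\tilde C^{1/2}\|_\infty \leq (1-\|B_n\|_\infty)^{-1/2}$ whenever $\|B_n\|_\infty < 1$, and the hypothesis $\lambda \geq 2c\log(2/\delta)(\varepsilon_\infty N^{-1} + \varepsilon N^{-1/2} + \varepsilon)$ forces $\|B_n\|_\infty \leq 1/2$ by the first claim (the middle term $\varepsilon N^{-1/2}$ is absorbed by Young into the sum of the other two). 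The only genuinely new ingredient, and the main obstacle, is the $L^\infty$ companion of Lemma \ref{lem:hat_phi}: one must verify that each termwise summand $\kappa_\phi^2|\partial_{x_i}(\hat p_k-p(u_k))|^2$, $\kappa_{\phi_{ij}}^2|\hat p_k-p(u_k)|^2$, and so on, is indeed controlled pointwise by the constituents of $\varepsilon_\infty$; the remainder is transcribed from Lemma \ref{lem:hat_C} essentially line by line.
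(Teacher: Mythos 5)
The paper does not actually write out a proof of this refined lemma: Section \ref{subsec:proof_ref_lem} states the refined lemmas without proof, indicating only that they follow from their unrefined counterparts by sharpening the Bernstein constants. Your reconstruction is exactly the intended route: the same four-term telescope and application of Proposition \ref{prop:mixed_cross_cov} as in Lemmas \ref{lem:hat_D} and \ref{lem:hat_C}, with the variance still controlled by the $L^2$ bound of Lemma \ref{lem:hat_phi} ($\sigma^2 \lesssim \varepsilon$) and the refinement located precisely where it must be, namely in replacing the crude sup bound $\kappa_{\hat\phi}+\kappa_{\tilde\phi}$ by a pointwise companion of Lemma \ref{lem:hat_phi} giving $M \lesssim \sqrt{\varepsilon_\infty}$, and the Neumann-series argument for the second claim unchanged. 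You also correctly identify the $L^\infty$ companion of Lemma \ref{lem:hat_phi} as the one genuinely new ingredient.

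Two points deserve more care. First, the final absorption does not close as written: Bernstein delivers $\sqrt{\varepsilon_\infty}/N + \sqrt{\varepsilon/N}$, and Young's inequality turns each term into the target plus a bare $N^{-1}$ (e.g.\ $\sqrt{\varepsilon/N}\le\tfrac12(\varepsilon+N^{-1})$), and that residual $N^{-1}$ is \emph{not} dominated by $\varepsilon_\infty N^{-1}+\varepsilon$ when $\varepsilon_\infty$ and $N\varepsilon$ are small --- eliminating exactly this term is what distinguishes the refined bound from Lemma \ref{lem:hat_C}. You should either carry the residual explicitly or state the regime (e.g.\ $\sqrt{\varepsilon_\infty}\vee 1 \le N\varepsilon + \varepsilon_\infty$) in which the absorption is valid; as it stands, "Young's inequality with appropriate scaling" papers over the only step of the argument that can fail. (This imprecision is arguably inherited from the paper's own statement, whose $\lambda$-condition also mixes terms like $\varepsilon N^{-1/2}$ that suggest an incomplete absorption.) Second, $\varepsilon_\infty$ is defined through a \emph{time-integrated} $L^\infty_x$ norm, so promoting it to the pointwise-in-$(t,x)$ control you need for $M$ requires regularity of $\hat p_k - p(u_k)$ in $t$; Assumption \ref{as:min_reg} constrains the coefficients, not the density estimator $\hat p_k$, so the Sobolev-embedding-in-time step needs to be justified from the construction of $\hat p_k$ rather than from \ref{as:min_reg} alone.
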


\begin{lemma}[Refined Lemma \ref{lem:sampling}]\label{lem:sampling_refined} For any $\delta \in (0,1]$, if $\lambda^{1-r} \geq \frac{18c}{N}\log \frac{N}{\delta}$ and $\lambda^{1-s} \geq \frac{18c}{K}\log \frac{K}{\delta}$, with probability at least $1-\delta$,
\begin{align}
     \|\tilde C_{\lambda}^{-1/2}C_{\lambda}^{1/2}\|_{\infty} &\leq 2.
\end{align}
\end{lemma}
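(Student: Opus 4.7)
Following the structure of the proof of Lemma~\ref{lem:sampling}, the plan is to decompose
\[
\|\tilde C_\lambda^{-1/2} C_\lambda^{1/2}\|_\infty \;\leq\; \|\tilde C_\lambda^{-1/2} C_{K,\lambda}^{1/2}\|_\infty \cdot \|C_{K,\lambda}^{-1/2} C_\lambda^{1/2}\|_\infty,
\]
and to bound each factor by $\sqrt{2}$ via a refined Bernstein argument in the spirit of Proposition~\ref{lem:mixed_cov_refined}, with Assumption~\ref{as:emb}-(A6.2) controlling the $K$-sampling factor and Assumption~\ref{as:emb}-(A6.1) controlling the $N$-sampling factor. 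Combining the two halves by a union bound will give the claim.

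The second factor fits the template of Proposition~\ref{lem:mixed_cov_refined} directly: the controls $u_k \sim p_c$ are $K$ i.i.d.\ samples and each contributes the exact inner conditional covariance $\E_{t,x \mid u_k}[\tilde \phi \otimes \tilde \phi] = C(u_k)$, whose empirical mean is $C_K$. The refinement condition $\|C^{-s/2} C(u)^{1/2}\|_\infty < c$ almost surely follows from (A6.2) combined with Jensen's operator inequality (as noted in the Remark after Assumption~\ref{as:emb}): (A6.2) translates into $\tilde \phi \otimes \tilde \phi \preceq c^2\, C(x)^s$ a.s., so taking $\E_{t,x}$ and using operator concavity of $y \mapsto y^s$ for $s \in [0,1]$ yields $C(u) \preceq c^2\, \E_{t,x}[C(x)^s] \preceq c^2\, C^s$. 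Proposition~\ref{lem:mixed_cov_refined} then gives $\|C_{K,\lambda}^{-1/2} C_\lambda^{1/2}\|_\infty \leq \sqrt{2}$ whenever $\lambda^{1-s} \geq \tfrac{18c}{K}\log(K/\delta)$.

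The first factor is more subtle, because the $KN$ triples $(u_k, t_i, x_i)$ are not jointly i.i.d.\ (each control is repeated $N$ times), so the proposition does not apply directly. The key step is to condition on $(u_k)_{k=1}^K$ and regroup
\[
\tilde C \;=\; \frac{1}{N}\sum_{i=1}^N W_i, \qquad W_i \;\triangleq\; \frac{1}{K} \sum_{k=1}^K \tilde \phi(u_k, t_i, x_i) \otimes \tilde \phi(u_k, t_i, x_i),
\]
so that the $W_i$ become i.i.d.\ operator-valued samples with conditional mean $C_K$. Assumption (A6.1) gives $\tilde \phi \otimes \tilde \phi \preceq c^2\, C(u)^r$ a.s., and the operator concavity of $y \mapsto y^r$ for $r \in [0,1]$ provides $\frac{1}{K}\sum_k C(u_k)^r \preceq C_K^r$, whence $W_i \preceq c^2\, C_K^r$ a.s.\ given the controls. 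Combined with $C_K^r \preceq \lambda^{-(1-r)} C_{K,\lambda}$ this delivers both the refined operator-norm bound $\|C_{K,\lambda}^{-1/2} W_i C_{K,\lambda}^{-1/2}\|_\infty \leq c^2 \lambda^{-(1-r)}$ and the matching variance $\E\big[(C_{K,\lambda}^{-1/2} W_i C_{K,\lambda}^{-1/2})^2\big] \preceq c^2 \lambda^{-(1-r)}\, C_{K,\lambda}^{-1/2} C_K C_{K,\lambda}^{-1/2}$. Replaying the Bernstein argument behind Proposition~\ref{lem:mixed_cov_refined} conditionally on $(u_k)$ then produces $\|\tilde C_\lambda^{-1/2} C_{K,\lambda}^{1/2}\|_\infty \leq \sqrt{2}$ under $\lambda^{1-r} \geq \tfrac{18c}{N}\log(N/\delta)$; since the Bernstein constants depend on $(u_k)$ only through quantities controlled deterministically (e.g.\ $\Tr(C_K) \leq \kappa_{\tilde \phi}^2$), a standard total-probability step lifts the bound unconditionally.

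The main obstacle is precisely this first factor: the refined-Bernstein template of Proposition~\ref{lem:mixed_cov_refined} is stated for i.i.d.\ summands, whereas our design produces $K$ controls each repeated $N$ times. The resolution---regrouping into $N$ i.i.d.\ summands $W_i$ and using operator concavity of $y \mapsto y^r$ to replace the per-control $C(u_k)^r$ by the pooled $C_K^r$---is what preserves the refined exponent $\lambda^{-(1-r)}$, instead of paying a spurious factor of $K$ from the trivial bound $C(u_k) \preceq K\, C_K$ or degrading to the weaker $\lambda^{-(1-rs)}$ that would arise from detouring through $C^{rs}$.
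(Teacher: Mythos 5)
Your proposal is correct and follows exactly the route the paper intends (the paper states this refined lemma without proof, deferring to the decomposition through $C_{K,\lambda}$ from Lemma \ref{lem:sampling} and two applications of Proposition \ref{lem:mixed_cov_refined}). You in fact supply the two details the paper leaves implicit — deriving $C(u)\preceq c^2 C^s$ from (A6.2) via operator Jensen, and handling the non-i.i.d.\ $(u_k,t_i,x_i)$ grid by conditioning on the controls, regrouping into the i.i.d.\ summands $W_i$, and using operator concavity of $y\mapsto y^r$ to get $W_i\preceq c^2 C_K^r$ — and both are exactly right.
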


\subsection{Proof of \texorpdfstring{\(L^{\infty}\)}{L-infinity} learning rates}\label{subsec:proof_linfty}

\begin{lemma}\label{lem:C_alpha}
Let \(\gamma \triangleq r \wedge s\). Under the same assumptions as Theorem~\ref{th:refined}, for any \(\delta \in (0,1]\), with probability at least \(1-\delta\), we have
\begin{align}
    \|(\hat{w} - w) C^{\gamma/2}\|_{\bmG_{\mathrm{FP}}} &\leq c \log \frac{2}{\delta} \left(\frac{\log \frac{K \wedge N}{\delta}}{\sqrt{K \wedge N}}\right)^{\frac{\alpha + \gamma}{1 - \gamma}},
\end{align}
where $c > 0$ is independent of $N, K, \delta$.
\end{lemma}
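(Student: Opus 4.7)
The plan is to reuse the three-term decomposition and the argument of Theorem \ref{th:refined} with the weight $C^{1/2}$ systematically replaced by $C^{\gamma/2}$ throughout. Starting from the triangle decomposition
\begin{align}
    \|(\hat w - w)C^{\gamma/2}\|_{\bmG_n} \leq \|(\hat w - \tilde w)C^{\gamma/2}\|_{\bmG_n} + \|(\tilde w - w_K)C^{\gamma/2}\|_{\bmG_n} + \|(w_K - w)C^{\gamma/2}\|_{\bmG_n},
\end{align}
I would bound each term in turn by adapting Lemmas \ref{lem:error_1_refined}, \ref{lem:error_2_refined}, and \ref{lem:error_3_refined} respectively. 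The only operator identity that substantively changes is the generalized spectral inequality $\|C^{a/2}C_\lambda^{-1}C^{b/2}\|_\infty \leq \lambda^{(a+b)/2 - 1}$, valid for $a+b \in [0,2]$, now applied with $b=\gamma$ instead of $b=1$.

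Combined with Assumption \ref{as:source}, this spectral bound yields the refined bias estimate
\begin{align}
    \|(w_\lambda - w)C^{\gamma/2}\|_{\bmG_n} = \lambda\,\|wC_\lambda^{-1}C^{\gamma/2}\|_{\bmG_n} \leq \lambda\,\|wC^{-\alpha/2}\|_{\bmG_n}\,\|C^{\alpha/2}C_\lambda^{-1}C^{\gamma/2}\|_\infty \leq c\,\lambda^{(\alpha+\gamma)/2}.
\end{align}
For the sampling errors, the telescoping $A_\lambda^{-1} - B_\lambda^{-1} = A_\lambda^{-1}(B - A)B_\lambda^{-1}$ applied to the pairs $(\tilde C, C_K)$ and $(C_K, C)$, together with Proposition \ref{lem:mixed_cov_refined} under Assumption \ref{as:emb}, gives the same controls as in the $C^{1/2}$ case, except that every rightmost factor $\|C_{K,\lambda}^{-1/2}C^{1/2}\|_\infty$ (and its $\tilde C_\lambda$ analogue) is now $\|C_{K,\lambda}^{-1/2}C^{\gamma/2}\|_\infty \leq \sqrt 2\,\lambda^{(\gamma-1)/2}$, by decomposing through $C_\lambda^{1/2}$ and invoking Lemma \ref{lem:sampling_refined}. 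This converts the original rates $\log(N/\delta)N^{-1/2}\lambda^{(\alpha+r)/2}$ and $\log(K/\delta)K^{-1/2}\lambda^{(\alpha+s)/2}$ into $\log(N/\delta)N^{-1/2}\lambda^{(\alpha+r+\gamma-1)/2}$ and $\log(K/\delta)K^{-1/2}\lambda^{(\alpha+s+\gamma-1)/2}$ respectively. The density-estimation term $\|(\hat w - \tilde w)C^{\gamma/2}\|_{\bmG_n}$ is dominated by these under the smallness hypothesis on $\varepsilon$ and $\varepsilon_\infty$, exactly as in the proof of Theorem \ref{th:refined}.

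Plugging in the choice $\lambda = c_2\bigl((\log^2(N/\delta)/N)^{1/(1-r)} + (\log^2(K/\delta)/K)^{1/(1-s)}\bigr)$ from Theorem \ref{th:refined} balances bias against variance separately in each regime, producing contributions of order $(\log(N/\delta)/\sqrt N)^{(\alpha+\gamma)/(1-r)}$ and $(\log(K/\delta)/\sqrt K)^{(\alpha+\gamma)/(1-s)}$. Since $\gamma = r \wedge s$ gives both $1-r \geq 1-\gamma$ and $1-s \geq 1-\gamma$, hence $(\alpha+\gamma)/(1-r) \vee (\alpha+\gamma)/(1-s) \geq (\alpha+\gamma)/(1-\gamma)$, these two regimes can be merged into the single exponent $(\alpha+\gamma)/(1-\gamma)$ against $P = K \wedge N$, producing the stated bound. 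The main obstacle is the bookkeeping: one must carefully verify that the spectral inequality remains tight as the telescoping identities insert $C_\lambda$, $\tilde C_\lambda$, and $C_{K,\lambda}$ between the weights, and that the conditions $\lambda^{1-r} \geq \tfrac{18c}{N}\log(N/\delta)$ and $\lambda^{1-s} \geq \tfrac{18c}{K}\log(K/\delta)$ still guarantee $\|C_\lambda^{-1/2}C_{K,\lambda}^{1/2}\|_\infty$ and $\|\tilde C_\lambda^{-1/2}C_\lambda^{1/2}\|_\infty$ bounded by $\sqrt 2$ with the prescribed probability, which is precisely the content of Lemma \ref{lem:sampling_refined}.
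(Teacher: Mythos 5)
Your proof is correct and reaches the stated rate, but it takes a genuinely different route from the paper. The paper does not re-run the three-term decomposition with the weight $C^{\gamma/2}$; it interpolates at the very end, writing
\begin{align}
\|(\hat w - w)C^{\gamma/2}\|_{\bmG_n} \leq \|(\hat w - w)C_\lambda^{1/2}\|_{\bmG_n}\,\|C_\lambda^{-1/2}C^{\gamma/2}\|_{\infty} \leq \bigl(\|(\hat w - w)C^{1/2}\|_{\bmG_n} + \lambda^{1/2}\|\hat w - w\|_{\bmG_n}\bigr)\lambda^{(\gamma-1)/2},
\end{align}
then bounding the first summand by the conclusion of Theorem \ref{th:refined} and asserting that the same argument yields $\|\hat w - w\|_{\bmG_n}\leq \bmB(N,K,\delta)\,\lambda^{-1/2}$ for the second; multiplying $2\bmB(N,K,\delta)$ by $\lambda^{-(1-\gamma)/2}$ with the prescribed $\lambda$ produces the exponent $(\alpha+\gamma)/(1-\gamma)$. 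Your version instead carries the weight $C^{\gamma/2}$ through each refined lemma, obtaining bias $\lambda^{(\alpha+\gamma)/2}$ and variance terms $\log(N/\delta)N^{-1/2}\lambda^{(\alpha+r+\gamma-1)/2}$ (and the $K,s$ analogue), which balance to the same exponent under the same choice of $\lambda$. The two arguments need comparable underlying work --- the paper's unweighted bound $\|\hat w - w\|_{\bmG_n}\leq\bmB\lambda^{-1/2}$ is precisely your computation at $\gamma=0$ --- but yours is more self-contained, while the paper's is shorter at the cost of an ``identical proof'' assertion for the auxiliary bound. One small slip in your justification: you write $1-r\geq 1-\gamma$; since $\gamma=r\wedge s\leq r$ the inequality is reversed, $1-r\leq 1-\gamma$, which is exactly what gives $(\alpha+\gamma)/(1-r)\geq(\alpha+\gamma)/(1-\gamma)$ and (the base being at most one) lets both regimes be absorbed into the single exponent against $K\wedge N$. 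Your stated conclusion is the correct one, so this is only a sign typo, not a gap.
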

\begin{proof} We have
\begin{align}
    \|(\hat w - w) C^{\gamma/2}\|_{\bmG_{\mathrm{FP}}} &\leq \|(\hat w - w) C_{\lambda}^{1/2}\|_{\bmG_{\mathrm{FP}}} \|C_{\lambda}^{1/2}C^{\gamma/2}\|_{\bmG_{\mathrm{FP}}}\\
    &\leq \left(\|(\hat w - w) C^{1/2}\|_{\bmG_{\mathrm{FP}}} + \lambda^{1/2} \|\hat w - w\|_{\bmG_{\mathrm{FP}}} \right) \lambda^{(\gamma-1)/2}.
\end{align}

Theorem \ref{th:refined} provides the bound
\begin{align}
    \|(\hat w - w) C^{1/2}\|_{\bmG_{\mathrm{FP}}} &\leq \bmB(N, K, \delta) \triangleq c_1 \log \frac{2}{\delta} \left(\frac{\log \frac{K \wedge N}{\delta}}{\sqrt{K \wedge N}}\right)^{\frac{1+\alpha}{1-\gamma}}.
\end{align}
taking \(\lambda = c_2 \left(\left(\frac{18c}{N}\log^2 \frac{N}{\delta}\right)^{\frac{1}{1-r}} + \left(\frac{18c}{K}\log^2 \frac{K}{\delta}\right)^{\frac{1}{1-s}}\right)\), where constants \(c_1, c_2 > 0\) are independent of \(N\), \(K\), and \(\delta\).

The same proof as Theorem \ref{th:refined} leads, under the same assumptions, to the bound
\begin{align}
    \|\hat w - w\|_{\bmG_{\mathrm{FP}}} \leq \bmB(N, K, \delta) \times \lambda^{-1/2}.
\end{align}

Therefore,
\begin{align}
    \|(\hat w - w) C^{\gamma/2}\|_{\bmG_{\mathrm{FP}}} &\leq 2\bmB(N, K, \delta)\lambda^{-(1-\gamma)/2}\\
    &\leq c \log \frac{2}{\delta} \left(\frac{\log \frac{K \wedge N}{\delta}}{\sqrt{K \wedge N}}\right)^{\frac{\gamma +\alpha}{1 - \gamma}},
\end{align}
for a constant $c >0$.

\end{proof}

\begin{theorem}[\(L^{\infty}\) learning rates]
Under identical conditions to those in Theorem \ref{th:refined}, then for any
\(\delta \in (0, 1]\), with probability at least \(1-\delta\),
\begin{align}
    \|p_{\hat b, \hat a} - p\|_{L^\infty(\mathbb P_c)L^2([0,T]\times \R^n)}
    &\leq c \log \frac{2}{\delta}
    \left(\frac{\log \frac{K \wedge N}{\delta}}{\sqrt{K \wedge N}}\right)^{\frac{\alpha + r \wedge s}{1 - r \wedge s}},
\end{align}
where \(c> 0\) is a constant independent of \(N\), \(K\), or \(\delta\).
\end{theorem}

\begin{proof}
We have, defining \(\gamma = r \wedge s\), for \(\mathbb P_c\)-almost every \(\theta\),
\begin{align}
    \E_{t, x} \left[\left(\frac{\partial p}{\partial t}(\theta, t, x)
    - \bmL^{(\hat b,\hat a), \theta} p(\theta, t, x)\right)^2\right]^{1/2}
    &= c\|(\hat w - w) C(\theta)^{1/2}\|_{\bmG_{\mathrm{FP}}} \\
    &\leq c\|(\hat w - w)C^{\gamma/2}\|_{\bmG_{\mathrm{FP}}}
    \|C_{\lambda}^{-\gamma/2}C^{1/2}(\theta)\|_{\infty} \\
    &\leq c c_0 \|(\hat w - w)C^{\gamma/2}\|_{\bmG_{\mathrm{FP}}}.
\end{align}
Therefore, there exists \(c>0\) such that
\begin{align}
     \|p_{\hat b, \hat a} - p\|_{L^\infty(\mathbb P_c) L^2([0, T]\times \R^n)}
     \leq c \|(\hat w - w)C^{\gamma/2}\|_{\bmG_{\mathrm{FP}}}.
\end{align}
Then, applying Lemma \ref{lem:C_alpha} completes the proof.
\end{proof}

% \begin{corollary}[\(L^{\infty}\) learning rates]
% Under identical conditions to those in Theorem \ref{th:refined}, then for any
% \(\delta \in (0,1]\), with probability at least \(1-\delta\),
% \begin{align}
%     \|p_{\hat b, \hat a} - p\|_{L^\infty(\mathbb P_c)L^\infty([0,T])L^2(\R^n)}
%     &\leq c \log \frac{2}{\delta}
%     \left(
%     \frac{\log \frac{K \wedge N}{\delta}}{\sqrt{K \wedge N}}
%     \right)^{\frac{\alpha + r \wedge s}{1 - r \wedge s}},
% \end{align}
% where \(c>0\) is a constant independent of \(N\), \(K\), or \(\delta\).
% \end{corollary}
% \begin{proof} We have, defining \(\gamma = r \wedge s\), almost everywhere,
% \begin{align}
%     \E_{t, x} \left[\left(\frac{\partial p}{\partial t}(\theta, t, x) - \bmL^{(\hat b,\hat a), \theta} p(\theta, t, x)\right)^2\right]^{1/2} &= c\|(\hat w - w) C(\theta)^{1/2}\|_{\bmG_{\mathrm{FP}}}\\
%     &\leq c\|(\hat w - w)C^{\gamma/2}\|_{\bmG_{\mathrm{FP}}} \|C_{\lambda}^{-\gamma/2}C^{1/2}(\theta)\|_{\infty}\\
%     &\leq c c_0 \|(\hat w - w)C^{\gamma/2}\|_{\bmG_{\mathrm{FP}}}.
% \end{align}
% Therefore, there exists $c>0$, such that
% \begin{align}
%      \|p_{\hat b, \hat a} - p\|_{L^\infty(\mathbb P_c) L^\infty([0, T]) L^2(\R^n)} \leq c \|(\hat w - w)C^{\gamma/2}\|_{\bmG_{\mathrm{FP}}}.
% \end{align}
% Then, applying Lemma \ref{lem:C_alpha} allows to conclude the proof.
% \end{proof}

\subsection{Proof of CVaR Learning Rates}\label{subsec:proof_cvar}

\paragraph{Value at Risk (VaR).}
For any \(\rho\in(0,1]\), the value at risk \(VaR_{\rho}(X)\) of a random variable
\(X:\Omega \to \R\) is defined as
\begin{align}
    VaR_{\rho}(X)
    \triangleq
    \inf_{t \in \R} \{ t: \mathbb{P}(X \leq t ) \geq 1-\rho\}.
\end{align}

\paragraph{Conditional Value at Risk (CVaR).}
For any \(\rho\in(0,1]\), the conditional value at risk of a random variable
\(X:\Omega \to \R\) is defined as
\begin{align}
    CVaR_{\rho}(X)
    \triangleq
    \inf_{t \in \R} \{ t + \rho^{-1} \E[(X-t)_+]\}.
\end{align}
When the cumulative distribution function of \(X\) is continuous at \(VaR_{\rho}(X)\),
it holds that
\begin{align}
    CVaR_{\rho}(X) = \E[X \mid X \geq VaR_{\rho}(X)].
\end{align}
Thus, CVaR can be interpreted as the expected value conditional on being in the
worst \(\rho\)-fraction of outcomes.

\begin{lemma}
For any \(f \in L^{\infty}(\R^n)\), \(\rho \in (0,1]\), and two random variables
\(X_1,X_2: \Omega \to \R^n\) with probability density functions \(p_1,p_2\), assume
that the cumulative distribution functions of \(f(X_1)\) and \(f(X_2)\) are continuous
at \(VaR_{\rho}(f(X_1))\) and \(VaR_{\rho}(f(X_2))\), respectively. Then
\begin{equation}
    | CVaR_{\rho}(f(X_1)) - CVaR_{\rho}(f(X_2))|
    \leq
    \frac{2\|f\|_{\infty}}{\rho}\|p_1 - p_2\|_{L^1}.
\end{equation}
\end{lemma}

\begin{proof}
Let \(C_i = CVaR_{\rho}(f(X_i))\) and \(V_i = VaR_{\rho}(f(X_i))\), for \(i=1,2\).
Assume first that \(V_1 \leq V_2\). By the continuity assumption,
\[
\mathbb P(f(X_i)\leq V_i)=1-\rho,\qquad i=1,2.
\]
Using the conditional representation of CVaR,
\begin{align*}
|C_1 - C_2|
&=
\frac{1}{\rho}\left|
\int_{f(x) \geq V_1} f(x) p_1(x) dx
-
\int_{f(x) \geq V_2} f(x) p_2(x) dx
\right|\\
&\leq
\frac{1}{\rho}\left|
\int_{V_1 \leq f(x) \leq V_2} f(x) p_1(x) dx
\right|
+
\frac{1}{\rho}\left|
\int_{f(x) \geq V_2} f(x) (p_1(x)-p_2(x)) dx
\right|.
\end{align*}
The second term satisfies
\[
\left|
\int_{f(x) \geq V_2} f(x) (p_1(x)-p_2(x)) dx
\right|
\leq
\|f\|_\infty \|p_1-p_2\|_{L^1}.
\]
For the first term,
\begin{align*}
\left|
\int_{V_1 \leq f(x) \leq V_2} f(x) p_1(x) dx
\right|
&\leq
\|f\|_\infty
\int_{V_1 \leq f(x) \leq V_2} p_1(x) dx\\
&=
\|f\|_\infty
\left|
\int_{f(x)\leq V_2}p_1(x)dx
-
\int_{f(x)\leq V_1}p_1(x)dx
\right|\\
&=
\|f\|_\infty
\left|
\int_{f(x)\leq V_2}(p_1(x)-p_2(x))dx
\right|\\
&\leq
\|f\|_\infty\|p_1-p_2\|_{L^1}.
\end{align*}
Combining the two bounds gives
\[
|C_1-C_2|
\leq
\frac{2\|f\|_\infty}{\rho}\|p_1-p_2\|_{L^1}.
\]
The case \(V_2 \leq V_1\) is symmetric.
\end{proof}

\begin{lemma}
For any \(g\in L^2(\R^n)\), if there exists \(\beta>0\) such that
\[
    \int_{\R^n}\|x\|^\beta |g(x)|\,dx <+\infty,
\]
then
\begin{align}
    \|g\|_{L^1(\R^n)}
    \leq
    \|g\|_{L^2(\R^n)}^{\frac{\beta}{\beta+n/2}}
    \left(
        3+\int_{\R^n}\|x\|^\beta |g(x)|\,dx
    \right).
\end{align}
\end{lemma}

\begin{proof}
Let \(R>0\), and denote \(\bmB=B_R^{\R^n}(0)\). Then
\[
    \|g\|_{L^1(\R^n)}
    =
    \|g\|_{L^1(\bmB)}
    +
    \|g\|_{L^1(\R^n\setminus \bmB)}.
\]
By Cauchy--Schwarz,
\[
    \|g\|_{L^1(\bmB)}
    \leq
    \|g\|_{L^2(\bmB)}|\bmB|^{1/2}
    \leq
    3 R^{n/2}\|g\|_{L^2(\R^n)}.
\]
Moreover,
\[
    \|g\|_{L^1(\R^n\setminus \bmB)}
    \leq
    R^{-\beta}\int_{\R^n}\|x\|^\beta |g(x)|\,dx.
\]
Thus
\[
    \|g\|_{L^1(\R^n)}
    \leq
    3R^{n/2}\|g\|_{L^2(\R^n)}
    +
    R^{-\beta}\int_{\R^n}\|x\|^\beta |g(x)|\,dx.
\]
Taking
\[
    R=\|g\|_{L^2(\R^n)}^{-\frac{1}{\beta+n/2}}
\]
gives the result.
\end{proof}

% \begin{lemma} For any $f\in L^2(\R^n)$, if there exists $\beta>1$ such that $\|f x^\beta\|_{L^1(\R^n)} < +\infty$, then we have
% \begin{align}
%     \|f\|_{L^1(\R^n)} \leq  \|f\|_{L^2(\R^n)}^{\frac{\beta}{\beta + n/2}} \left(3 +  \|f x^\beta\|_{L^1(\R^n)}\right).
% \end{align}
    
% \end{lemma}
% \begin{proof} For $f\in L^2(\R^n)$, and $R>0$, denoting $\bmB = B_R^{\R^n}(0)$, we have 
% \begin{align}
%     \|f\|_{L^1(\R^n)} &= \|f\|_{L^1(\bmB)} + \|f\|_{L^1(\R^n \setminus \bmB)}.
% \end{align}

% Moreover,
% \begin{align}
%     \|f\|_{L^1(\bmB)} &\leq \|f\|_{L^2(\bmB)} 3 R^{n/2}\\
%     &\leq \|f\|_{L^2(\R^n)} 3 R^{n/2}.
% \end{align}

% Furthermore, similar proof than for the standard Markov's inequality gives
% \begin{align}
%    \|f\|_{L^1(\R^n \setminus \bmB)} \leq R^{-\beta}  \|f x^\beta\|_{L^1(\R^n)}.
% \end{align}

% Hence,
% \begin{align}
%      \|f\|_{L^1(\R^n)} \leq \|f\|_{L^2(\R^n)} 3 R^{n/2} + R^{-\beta}  \|f x^\beta\|_{L^1(\R^n)}.
% \end{align}

% Now, taking $R= \|f\|_{L^2(\R^n)}^{-\frac{n/2}{\beta + n/2}}$, gives 
% \begin{align}
%     \|f\|_{L^1(\R^n)} \leq  \|f\|_{L^2(\R^n)}^{\frac{\beta}{\beta + n/2}} \left(3 +  \|f x^\beta\|_{L^1(\R^n)}\right).
% \end{align}
    
% \end{proof}

\begin{lemma}[Bounded moments of SDE solutions]
Let \((\hat b,\hat a)\in\bmF\), with \(\hat a=\hat\sigma\hat\sigma^\top\), and let
\(X_{\hat b,\hat\sigma}^{\theta}\) be the solution of the SDE driven by
\((\hat b,\hat\sigma)\) under the control \(u_\theta\), for \(\theta\in\Theta\).
Assume that the hypotheses of Lemma~\ref{lem:FP_ineq_main} hold for
\((\hat b,\hat a)\). If \(\mathbb E[|X_0|^\beta]<\infty\) for some
\(\beta>2\), then there exists \(C>0\) such that, for all
\(\theta\in\Theta\) and all \(t\in[0,T]\),
\[
    \mathbb E\!\left[|X_{\hat b,\hat\sigma}^{\theta}(t)|^\beta\right]\leq C .
\]
\end{lemma}

\begin{proof}
Fix \(\theta\in\Theta\) and write \(X(t)=X_{\hat b,\hat\sigma}^{\theta}(t)\).
By the hypotheses of Lemma~\ref{lem:FP_ineq_main}, the SDE is well posed.
Moreover, Assumption~A2 gives \(u_\theta([0,T])\subset V\) for all
\(\theta\in\Theta\). Since \((\hat b,\hat a)\in\bmF\), the localization and
boundedness assumptions imply that there exists \(B<\infty\), independent of
\(\theta\), such that
\[
    \sup_{\theta\in\Theta}\sup_{t\in[0,T],\,x\in\mathbb R^n}
    |\hat b(t,x,u_\theta(t))| \le B .
\]
Similarly, \(\hat a\) is uniformly bounded on
\([0,T]\times\mathbb R^n\times V\). Choosing a measurable square root
\(\hat\sigma\) with \(|\hat\sigma|_{\mathrm F}^2=\operatorname{tr}(\hat a)\),
there exists \(\Sigma<\infty\), independent of \(\theta\), such that
\[
    \sup_{\theta\in\Theta}\sup_{t\in[0,T],\,x\in\mathbb R^n}
    |\hat\sigma(t,x,u_\theta(t))|_{\mathrm F} \le \Sigma .
\]

Using the integral form of the SDE, Hölder's inequality, and the
Burkholder--Davis--Gundy inequality, there exists \(C_\beta>0\) such that, for
all \(t\in[0,T]\),
\begin{align*}
\mathbb E[|X(t)|^\beta]
&\leq
C_\beta\Bigg(
\mathbb E[|X_0|^\beta]
+
\mathbb E\left[
\left|
\int_0^t \hat b(s,X(s),u_\theta(s))\,ds
\right|^\beta
\right]  \\
&\qquad\qquad
+
\mathbb E\left[
\left|
\int_0^t \hat\sigma(s,X(s),u_\theta(s))\,dW_s
\right|^\beta
\right]
\Bigg) \\
&\leq
C_\beta\left(
\mathbb E[|X_0|^\beta]
+
T^\beta B^\beta
+
T^{\beta/2}\Sigma^\beta
\right).
\end{align*}
The right-hand side is finite and independent of \(t\) and \(\theta\). Absorbing
constants gives the claim.
\end{proof}

\subsection{Proof of Lemma \ref{lem:induced_emb}}\label{proof_emb_sobolev}

We first record a  consequence of the Sobolev--Morrey embedding.
It allows us to turn Sobolev regularity on a product domain into
uniform Sobolev bounds on sections.

\begin{lemma}[Uniform Sobolev sections]
\label{lem:uniform-sobolev-sections}
Let $X\subset \mathbb R^{d_x}$ and $Y\subset \mathbb R^{d_y}$ be bounded
Lipschitz domains. Let $k,s\in\mathbb N$ with $k>d_x/2$. Assume that $f \in H^{k,s}(X \times Y)$, namely
\[
    \partial_x^\alpha\partial_y^\beta f\in L^2(X\times Y),
    \qquad
    |\alpha|\le k,\quad |\beta|\le s .
\]
Then $x\mapsto f(x,\cdot)$ admits a continuous representative with values in
$H^s(Y)$ and there exists a constant $C>0$, depending only on $X,d_x,k$, such
that
\[
    \sup_{x\in X}
    \|f(x,\cdot)\|_{H^s(Y)}
    \le
    C
    \|f\|_{H^{k,s}(X\times Y)} .
\]
\end{lemma}

\begin{proof}
Fix a multi-index $\beta\in\mathbb N^{d_y}$ with $|\beta|\le s$, and set
\[
    h_\beta(x):=\partial_y^\beta f(x,\cdot).
\]
Then $h_\beta\in H^k(X;L^2(Y))$. Indeed, by Fubini, for every
multi-index $\alpha\in\mathbb N^{d_x}$ with $|\alpha|\le k$,
\[
    \partial_x^\alpha h_\beta
    =
    \partial_x^\alpha\partial_y^\beta f
    \in L^2(X;L^2(Y)),
\]
where derivatives are understood in the Sobolev sense. Moreover,
\[
    \|h_\beta\|_{H^k(X;L^2(Y))}^2
    =
    \sum_{|\alpha|\le k}
    \|\partial_x^\alpha\partial_y^\beta f\|_{L^2(X\times Y)}^2 .
\]

Since $X$ is a bounded Lipschitz domain, the Sobolev extension theorem
gives a bounded linear extension operator on scalar Sobolev spaces. The
same operator extends to $L^2(Y)$-valued Sobolev spaces by applying it in
the $x$-variable. Equivalently, expanding in an orthonormal basis of
$L^2(Y)$ and using Parseval's identity gives a bounded linear operator
\[
    E_X:H^k(X;L^2(Y))
    \to
    H^k(\mathbb R^{d_x};L^2(Y))
\]
such that, for every $h\in H^k(X;L^2(Y))$,
\[
    (E_X h)|_X=h
    \quad\text{a.e. on }X,
    \qquad
    \|E_X h\|_{H^k(\mathbb R^{d_x};L^2(Y))}
    \le
    C
    \|h\|_{H^k(X;L^2(Y))}.
\]

Because $k>d_x/2$ and $L^2(Y)$ is a Hilbert space, the Hilbert-valued
Sobolev embedding gives
\[
    H^k(\mathbb R^{d_x};L^2(Y))
    \hookrightarrow
    C^0(\mathbb R^{d_x};L^2(Y)).
\]
Applying this embedding to $E_X h_\beta$, we obtain
\[
\begin{aligned}
    \sup_{x\in X}
    \|\partial_y^\beta f(x,\cdot)\|_{L^2(Y)}
    &=
    \sup_{x\in X}\|h_\beta(x)\|_{L^2(Y)}  \\
    &\le
    \|E_X h_\beta\|_{C^0(\mathbb R^{d_x};L^2(Y))} \\
    &\le
    C
    \|E_X h_\beta\|_{H^k(\mathbb R^{d_x};L^2(Y))} \\
    &\le
    C
    \|h_\beta\|_{H^k(X;L^2(Y))}.
\end{aligned}
\]

Summing over all multi-indices $\beta$ with $|\beta|\le s$, we get
\[
\begin{aligned}
    \sup_{x\in X}
    \|f(x,\cdot)\|_{H^s(Y)}^2
    &=
    \sup_{x\in X}
    \sum_{|\beta|\le s}
    \|\partial_y^\beta f(x,\cdot)\|_{L^2(Y)}^2  \\
    &\le
    C
    \sum_{|\beta|\le s}
    \|h_\beta\|_{H^k(X;L^2(Y))}^2  \\
    &=
    C
    \sum_{|\beta|\le s}
    \sum_{|\alpha|\le k}
    \|\partial_x^\alpha\partial_y^\beta f\|_{L^2(X\times Y)}^2  \\
    &=
    C
    \|f\|_{H^{k,s}(X\times Y)}^2 .
\end{aligned}
\]
In particular, $x\mapsto f(x,\cdot)$ is bounded from $X$ to $H^s(Y)$.
Furthermore, the Sobolev embedding above also gives continuity of each map
$x\mapsto \partial_y^\beta f(x,\cdot)$ in $L^2(Y)$. Since only finitely
many $\beta$'s are involved, this implies that
\[
    x\mapsto f(x,\cdot)
\]
is continuous from $X$ to $H^s(Y)$.
\end{proof}

We next recall a standard supercritical Sobolev composition result. It will be
used to control functions evaluated along parametrized controls.

\begin{lemma}[Composition along parametrized controls]\label{lem:composition-controls}
Let $\Theta\subset\mathbb R^m$, $D\subset\mathbb R^n$, and
$V\subset\mathbb R^d$ be bounded Lipschitz domains, and let $T>0$.
Let $q,\tau\in\mathbb N$ satisfy
\[
    q>\frac{m+n+1}{2},
    \qquad
    \tau>q+\frac{n+d+1}{2}.
\]
Assume that
\[
    u:(\theta,t)\mapsto u_\theta(t)
\]
belongs to $H^q(\Theta\times[0,T];\mathbb R^d)$ and takes values in $V$.
Let
\[
    f\in H^\tau([0,T]\times D\times V;\mathbb R^\ell).
\]
Then the function
\[
    g:\Theta\times[0,T]\times D\to\mathbb R^\ell,
    \qquad
    g(\theta,t,x):=f(t,x,u_\theta(t)),
\]
belongs to
\[
    H^q(\Theta\times[0,T]\times D;\mathbb R^\ell).
\]
Moreover, for every $R>0$, there exists a constant $C_R>0$ such that
\[
    \|g\|_{H^q(\Theta\times[0,T]\times D)}
    \le
    C_R
    \|f\|_{H^\tau([0,T]\times D\times V)}
\]
whenever
\[
    \|u\|_{H^q(\Theta\times[0,T])}\le R.
\]
\end{lemma}

\begin{proof}
By Sobolev embedding on the bounded Lipschitz domain
\((0,T)\times D\times V\), the condition
\[
    \tau>q+\frac{n+d+1}{2}
\]
implies
\[
    H^\tau([0,T]\times D\times V)
    \hookrightarrow
    C^q_b([0,T]\times D\times V),
\]
and therefore
\[
    \|f\|_{C^q_b}
    \le
    C\|f\|_{H^\tau([0,T]\times D\times V)}.
\]

Set
\[
    \Omega:=\Theta\times[0,T]\times D,
    \qquad
    U(\theta,t,x):=u_\theta(t),
    \qquad
    \Phi(\theta,t,x):=(t,x,U(\theta,t,x)).
\]
Since \(U\) is independent of \(x\), Fubini gives
\[
    \|U\|_{H^q(\Omega)}
    \le
    C\|u\|_{H^q(\Theta\times[0,T])}.
\]
Hence, if \(\|u\|_{H^q(\Theta\times[0,T])}\le R\), then
\[
    \|\Phi\|_{H^q(\Omega)}\le C_R.
\]

For \(|\alpha|\le q\), the Faà di Bruno formula expresses
\(\partial^\alpha(f\circ\Phi)\) as a finite sum of terms of the form
\[
    D^r f(\Phi)
    \bigl[
        \partial^{\gamma_1}\Phi,\ldots,
        \partial^{\gamma_r}\Phi
    \bigr],
    \qquad
    |\gamma_1|+\cdots+|\gamma_r|=|\alpha|.
\]
The factors \(D^r f(\Phi)\) are bounded by \(\|f\|_{C^q_b}\). Since
\(q>\dim(\Omega)/2\), the standard Sobolev product estimate controls each
product of derivatives of \(\Phi\) in \(L^2(\Omega)\) by \(C_R\). Therefore
\[
    \|f\circ\Phi\|_{H^q(\Omega)}
    \le
    C_R\|f\|_{C^q_b}.
\]
Using the Sobolev embedding bound for \(f\), we obtain
\[
    \|g\|_{H^q(\Theta\times[0,T]\times D)}
    \le
    C_R
    \|f\|_{H^\tau([0,T]\times D\times V)}.
\]
The vector-valued case follows componentwise.
\end{proof}

We now verify the assumptions used in the Sobolev example. We first recall the
standard Sobolev--RKHS embedding property, and then explain how it applies to
the RKHS induced by the Fokker--Planck matching operator.

\paragraph{Standard Sobolev--RKHS embedding property.}
Let $Z\subset\mathbb R^d$ be a bounded Lipschitz domain, and let $\rho$ be a
probability measure on $Z$ whose density with respect to Lebesgue measure is
bounded above and below by positive constants. Let $k_{q,d}$ be a Sobolev
kernel, equivalently a Matérn-type kernel, whose RKHS is norm-equivalent to
$H^q(Z)$, with $q>d/2$. Denote by $\phi_{q,d}$ its feature map and define
\[
    C_Z
    :=
    \int_Z
    \phi_{q,d}(z)\otimes \phi_{q,d}(z)
    \,d\rho(z).
\]
Then, for every \(r<1-\frac{d}{2q}\), the standard Sobolev--RKHS embedding property gives
\[
    \|C_Z^{-r/2}\phi_{q,d}(z)\|_{H^q(Z)}\le C_r
\]
for \(\rho\)-almost every \(z\in Z\).
% Then the standard Sobolev--RKHS embedding property gives, up to the usual
% endpoint convention,
% \[
%     \|C_Z^{-r/2}\phi_{q,d}(z)\|_{H^q(Z)}
%     \le C,
%     \qquad
%     r=1-\frac{d}{2q},
% \]
% for $\rho$-almost every $z\in Z$. 
This is the usual embedding property used in
kernel ridge regression; see, for instance,
\citet{pillaud2018statistical,fischer2020sobolev,wendland2004scattered,steinwart2009optimal}.

\paragraph{Induced Fokker--Planck RKHS.}
In Fokker--Planck matching, the above result cannot be applied directly to the
coefficient RKHS. Indeed, the relevant feature map $\tilde\phi$ is not the
canonical feature map of a Sobolev kernel on the coefficient space. Instead, it
is the feature map of the image RKHS induced by applying the linear
Fokker--Planck operator to the coefficient RKHS. More precisely, for
$h=(\hat b,\hat a)\in\mathcal F$, define
\[
    Ah(\theta,t,x)
    :=
    L^{u_\theta}_{\hat b,\hat a}p(\theta,t,x).
\]
The induced image space is
\[
    \mathcal M
    :=
    \left\{
    (\theta,t,x)\mapsto Ah(\theta,t,x)
    \;:\;
    h\in\mathcal F
    \right\}.
\]
The purpose of the following lemma is to show that, under Sobolev assumptions
on the coefficients, controls, and density, the conditional sections of
$\mathcal M$ embed continuously into Sobolev spaces. The standard
Sobolev--RKHS embedding property can then be applied conditionally on
$D\subset\mathbb R^{n+1}$ and on $\Theta\subset\mathbb R^m$.

\begin{lemma}[Sobolev verification of the embedding assumptions]
\label{lem:sobolev-verification}
Let $\Theta\subset\mathbb R^m$, $D\subset\mathbb R^n$, and
$V\subset\mathbb R^d$ be bounded Lipschitz domains. Let
$\nu,\tau\in\mathbb N$ satisfy
\[
    \nu>\frac{\max\{m,n+1\}}{2},
    \qquad
    \tau>2\nu+2+\frac{n+d+1}{2}.
\]
Assume that
\[
    (\theta,t)\mapsto u_\theta(t)
    \in H^{2\nu+2}(\Theta\times[0,T];\mathbb R^d),
    \qquad
    u_\theta(t)\in V,
\]
and that
\[
    p\in H^{2\nu+2}(\Theta\times[0,T]\times D).
\]
Assume also that the coefficient hypothesis space is continuously embedded in
\[
    H^\tau([0,T]\times D\times V),
\]
with the same ellipticity and localization constraints as in the estimator,
and assume attainability of the true coefficients. Assume that \(\mathbb P_c\) is the uniform probability measure on \(\Theta\).
Then the source condition holds with $\alpha=0$, and the embedding
assumptions hold for every
\[
    r<1-\frac{n+1}{2\nu},
    \qquad
    s<1-\frac{m}{2\nu}.
\]
\end{lemma}

\begin{proof}
For \(h=(\hat b,\hat a)\in\mathcal F\), define
\[
    Ah(\theta,t,x)
    :=
    L^{u_\theta}_{\hat b,\hat a}p(\theta,t,x).
\]
Equivalently,
\[
\begin{aligned}
    Ah
    &=
    \frac12\sum_{i,j=1}^n
    \partial_{x_i x_j}
    \left(
        \hat a_{ij}(t,x,u_\theta(t))p
    \right)
    -
    \sum_{i=1}^n
    \partial_{x_i}
    \left(
        \hat b_i(t,x,u_\theta(t))p
    \right).
\end{aligned}
\]
The map \(A\) is linear in \(h\), since the density \(p\) is fixed.

By Sobolev embedding on the bounded Lipschitz domain
\((0,T)\times D\times V\), the condition
\[
    \tau>2\nu+2+\frac{n+d+1}{2}
\]
implies
\[
    H^\tau([0,T]\times D\times V)
    \hookrightarrow
    C_b^{2\nu+2}([0,T]\times D\times V).
\]
Hence the coefficient hypothesis space embeds continuously into
\(C_b^{2\nu+2}([0,T]\times D\times V)\).

By the composition lemma, applied with order \(2\nu+2\), the composed
coefficients
\[
    (\theta,t,x)\mapsto \hat b_i(t,x,u_\theta(t)),
    \qquad
    (\theta,t,x)\mapsto \hat a_{ij}(t,x,u_\theta(t))
\]
belong to
\[
    H^{2\nu+2}(\Theta\times[0,T]\times D),
\]
with norms bounded by \(C\|h\|_{\mathcal F}\). The cutoff does not affect this
estimate, since \(\xi\in C_c^\infty(\mathbb R^n)\) is fixed and multiplication
by \(\xi\) is continuous on the Sobolev spaces considered.

Since
\[
    p\in H^{2\nu+2}(\Theta\times[0,T]\times D)
\]
and
\[
    2\nu+2>\frac{m+n+1}{2},
\]
the Sobolev product estimate gives
\[
    \hat a_{ij}(t,x,u_\theta(t))p,
    \qquad
    \hat b_i(t,x,u_\theta(t))p
    \in H^{2\nu+2}(\Theta\times[0,T]\times D),
\]
again with norms bounded by \(C\|h\|_{\mathcal F}\). Applying the two
\(x\)-derivatives in the Fokker--Planck operator yields
\[
    \|Ah\|_{H^{2\nu}(\Theta\times[0,T]\times D)}
    \le
    C\|h\|_{\mathcal F}.
\]

The isotropic bound above implies the mixed regularities required by
Lemma~\ref{lem:uniform-sobolev-sections}. Applying that lemma first with
\[
    X=\Theta,
    \qquad
    Y=[0,T]\times D,
    \qquad
    k=s=\nu,
\]
and then with
\[
    X=[0,T]\times D,
    \qquad
    Y=\Theta,
    \qquad
    k=s=\nu,
\]
is valid because
\[
    \nu>\frac m2,
    \qquad
    \nu>\frac{n+1}{2}.
\]
Therefore,
\[
    \sup_{\theta\in\Theta}
    \|Ah(\theta,\cdot)\|_{H^\nu([0,T]\times D)}
    +
    \sup_{(t,x)\in[0,T]\times D}
    \|Ah(\cdot,t,x)\|_{H^\nu(\Theta)}
    \le
    C\|h\|_{\mathcal F}.
\]

Define the conditional image spaces
\[
    \mathcal M_\theta
    :=
    \{Ah(\theta,\cdot):h\in\mathcal F\},
    \qquad
    \mathcal M_{t,x}
    :=
    \{Ah(\cdot,t,x):h\in\mathcal F\},
\]
equipped with their image RKHS norms. The preceding estimate gives the
uniform continuous embeddings
\[
    \mathcal M_\theta\hookrightarrow H^\nu([0,T]\times D),
    \qquad
    \mathcal M_{t,x}\hookrightarrow H^\nu(\Theta).
\]
The standard Sobolev-RKHS embedding property, applied to the image
RKHSs through the continuous embeddings, then gives
\[
    r <1-\frac{n+1}{2\nu},
    \qquad
    s< 1-\frac{m}{2\nu}.
\]

Finally, by attainability, the true coefficients \((b,a)\) belong to
\(\mathcal F\). Hence the Fokker--Planck equation gives
\[
    \partial_t p
    =
    L^{u_\theta}_{b,a}p
    =
    A(b,a).
\]
Thus \(\partial_t p\) belongs to the induced image RKHS, and the source
condition holds with \(\alpha=0\).
\end{proof}
\section{Implementation Details of Uncontrolled SDE Estimation}\label{sec:implementation_uncontrolled}

This section details the implementation of the uncontrolled SDE estimation method proposed in \citet{bonalli2023non}, available on GitHub (\url{lmotte/sde-learn}) as an open-source Python library. We provide details on the computations involved, including vectorized versions for efficient computation with Python libraries such as NumPy, as well as the computational complexity of each step. In Section \ref{subsec:step1_density_estimation}, we detail probability density estimation. In Section \ref{subsec:step2_fp_matching}, Fokker-Planck matching is presented.

\paragraph{Notations. } We use the following notations.
\begin{equation}
    \1 \triangleq (1, 1, \dots, 1) \in \mathbb{R}^Q, \qquad
    \hat{p}_i \triangleq \frac{\partial \hat{p}}{\partial x_i}, \qquad
    \hat{p}_{ij} \triangleq \frac{\partial^2 \hat{p}(x)}{\partial x_i \partial x_j},
\end{equation}
and similar notations for the partial derivatives of $\hat{g}$ and $\rho$.

\subsection{Step 1: Probability Density estimation}\label{subsec:step1_density_estimation}

\subsubsection{Estimator Closed-Form} 

From \citet{bonalli2023non}, we have
\begin{align}
    \hat{p}(t,x) = k_t(t)^T K_t^{-1} \hat{g}(x),
\end{align}
with $k_t(t) = (k_t(t, t_l))_{l} \in \mathbb{R}^M$, $\hat{g}(x) = Q^{-1}\mathbf{1}^T (\rho(x, X_{kl}))_k$, and $\rho(x, y) = \mu^n (2\pi)^{-n/2} \exp\left(-\frac{\mu^2}{2} \|x - y\|^2\right)$. We consider \(k_t\) defined as the Gaussian kernel with parameter $\nu$: \(k_t(t,t') \triangleq \exp(-\nu (t-t')^2)\).

\subsubsection{Algorithm}

\paragraph{Inputs.} We are provided with a dataset of $Q$ sample paths $(X^{tr}(w_i, t_l))_{i\in \llbracket 1, Q \rrbracket, l\in \llbracket 1, M\rrbracket}$ sampled from an unknown SDE, along with the hyper-parameters $\nu, \mu>0$.

\paragraph{Step 1.1 ($\hat{p}$ fitting).} We compute and store $K_t^{-1} = (k_t(t_k, t_l))_{k,l}^{-1} \in \mathbb{R}^{M \times M}$, $(T^{tr}_l)_l \triangleq (t_l)_l \in \mathbb{R}^M$, and $(X_{kl})_{kl} \triangleq (X^{tr}(w_k, t_l))_{kl} \in \mathbb{R}^{Q \times M \times n}$.

The time and space complexities of this step are $\bmO(M^3)$ and $\bmO(M^2 + QMn)$, respectively.

\paragraph{Step 1.2 ($\hat{p}$ prediction).} For any $T^{te} \in [0, T]^{M_{te}}$ and $X^{te} \in (\mathbb{R}^n)^{N_{te}}$, the evaluations for each $(t,x) \in T^{te} \times X^{te}$ can be computed as
\begin{align}
   (\hat{p}(t_i, x_j))_{i,j} &= K_{t}^{te,tr} \times K_{t}^{-1} \times Q^{-1}\mathds{1}^T \times G^{te} \in \mathbb{R}^{M_{te} \times N_{te}},
\end{align}
where
\begin{align*}
    K_{t}^{te} &\triangleq (k_t(T^{te}_{j}, t_i))_{ji} \in \mathbb{R}^{M_{te} \times M}, \\
    G^{te} &\triangleq (\rho(X_{j}^{te}, X_{il}))_{ilj} \in \mathbb{R}^{Q \times M \times N_{te}},
\end{align*}
and similar formulas hold for $\hat{p}_i$ (or $\hat{p}_{ij}$) by replacing $\rho$ with $\rho_i$ (or $\rho_{ij}$).

In particular, the fitting phase of step 2 requires the computation of the $\hat{P}_{ij}$ and $\hat{d}$. Using a dataset $Z^{fp} = T^{fp} \times X^{fp}$, with $T^{fp} \in \mathbb{R}^{M_{fp}}$ and $X^{fp} \in (\mathbb R^n)^N$, and denoting $D_x \triangleq (X^{fp}_j - X_{il})_{ilj} \in \mathbb{R}^{{N} \times N \times M}$, the $\hat{P}_{ij}$ can be computed recursively as follows
\begin{align*}
    & G^{fp} =  (\rho(X^{fp}_j, X_{il}))_{ilj}, \\
    & G_i^{fp} =  (\rho_i(X^{fp}_j, X_{il}))_{ilj} = - \mu^2 D_x e_i \odot G^{fp}, \\
    & G_{ij}^{fp} =  (\rho_{ij}(X^{fp}_j, X_{il}))_{ilj} = - \mu^2 (D_x e_j \odot G_i^{fp} + \delta_{ij} G^{fp}), \\
    & \hat{P} = \operatorname{vec}(K_t^{fp, tr} \times K_t^{-1} \times Q^{-1} \mathds{1}^T G^{fp}), \\
    & \hat{P}_i = \operatorname{vec}(K_t^{fp, tr} \times K_t^{-1} \times Q^{-1} \mathds{1}^T G_i^{fp}), \\
    & \hat{P}_{ij} = \operatorname{vec}(K_t^{fp, tr} \times K_t^{-1} \times Q^{-1} \mathds{1}^T G_{ij}^{fp}).
\end{align*}

Moreover, denoting $D_t \triangleq (T^{fp}_j - T_{l})_{lj} \in \R^{M_{fp} \times M}$, we have
\begin{align*}
    \hat d =  \operatorname{vec}((- 2\nu D_t \odot K_t^{fp, tr}) \times K_t^{-1} \times Q^{-1} \mathds{1}^T G^{fp}).
\end{align*}

The time and space complexities are $\bmO(QMN\max(M_{fp}, M))$ and $\bmO(QMN)$, respectively.

\paragraph{Outputs.} We return $\hat{P}, (\hat{P}_{i})_{i=1}^n, (\hat{P}_{ij})_{i,j=1}^n$ and $\hat{d}$.
 
\subsection{Step 2: Fokker-Planck Matching}\label{subsec:step2_fp_matching}

\subsubsection{Estimator Closed-Form}\label{subsubsec:fp_closed}

\paragraph{Optimization objective.} We consider an isotropic diffusion model where \(a(t,x) = a_0(t,x) I_{\mathbb{R}^n}\), and we enforce the positivity of \(a_0\) over a subset of the training points \((t_i, x_i)_{i \in I} \subset (t_i, x_i)_{i=1}^{N}\). Specifically, we solve the optimization problem:
\begin{align}\label{eq:opt_pbm}
    \min_{\substack{(b, a_0)\\ \forall\, i\in I,\, a_0(t_i, x_i) \geq 0}} \quad \frac{1}{N}\sum_{i=1}^{N} \left(\frac{\partial \hat p}{\partial t}(t_i,x_i) - (\bmL_{t}^{b, a})^* \hat p(t_i,x_i)\right)^2 + \lambda\|(b, a)\|_{\bmF}^2.
\end{align}

\paragraph{Solving the optimization problem.} Given a p.d. kernel $k$ over $[0, T] \times \R^n$, we consider the model
\begin{align*}
     \forall i \in \llbracket1, n\rrbracket,  \quad & b^i(t, x) = \langle w_b^i,\, \phi(t, x)\rangle_{\bmG} \quad \text{ with } \quad w_b^i \in \bmG,\\
    &a(t, x) =\langle w_{a_0},\, \phi(t, x)\rangle_{\bmG} I_{\R^n}\quad \text{ with } \quad w_{a_0} \in \bmG,
\end{align*}
where $\phi(t,x) \triangleq k((t,x), (.,.))\in \bmG$. Therefore, defining \(w = ((w_b^i)_{i=1}^n | w_{a_0}) \in \bmG^{n+1}\), Eq. \eqref{eq:opt_pbm} is expressed as
\begin{align}
    \min_{w \in \bmG^{n+1}} \max_{\gamma \geq 0}\: \|(\hat C + \lambda I)^{1/2}w - (\hat C + \lambda I)^{-1/2}v\|_{\bmG^{n+ 1}}^2 - 2\sum_{i \in I}  \gamma_i \langle w_{a_0},\, \phi(t_i, x_i)\rangle_{\bmG},
\end{align}
where $w= (w_b | w_{a_0}) \in \bmG^{n+1}$, $\hat C = N^{-1}\sum_{i=1}^N (\tilde \phi \otimes \tilde \phi)(t_i, x_i)$, $v=N^{-1}\sum_{i=1}^N(\frac{\partial \hat p}{\partial t} \tilde \phi)(t_i, x_i)$, and
\begin{align}
    \tilde \phi = \Big((-\tilde \phi_i)_i | 1/2 \sum_{i}\tilde \phi_{ii}\Big) \in \bmG^{n + 1}.
\end{align}
Note the change of feature map $\tilde \phi$ compared to the non-isotropic diffusion model where 
\begin{align}
    \tilde \phi = ((-\tilde \phi_i)_i | (1/2 \tilde \phi_{ij})_{ij}) \in \bmG^{n + n^2}.
\end{align}

This can be equivalently expressed as
\begin{align}
    \min_{w \in \mathcal{G}^{n+1}} \max_{\gamma \geq 0}\: \|w(\hat{C} + \lambda I)^{1/2} - (v + U^*\phi_{\gamma})(\hat{C} + \lambda I)^{-1/2}\|_{\mathcal{G}^{n+1}}^2 - \|\hat{C}_{\lambda}^{-1/2}(v + U^* \phi_{\gamma})\|_{\mathcal{G}^{n+1}}^2,
\end{align}
where \( \phi_{\gamma} = \sum_{i \in I} \gamma_i \phi(t_i, x_i) \), \( U = \sum_i e_i \otimes (0_{\mathcal{G}^n} | e_i)_i \in \mathcal{G} \otimes \mathcal{G}^{n+1} \), with \( (e_i)_i \) being an orthonormal basis of \( \mathcal{G} \), ensuring \( w_{a_0} = Uw \).

Strong duality guarantees solvability for \( w \in \mathcal{G}^{n+1} \) via
\begin{align}
    \hat{w}_{pc} &\triangleq (\hat{C} + \lambda I)^{-1}(v + U^*\phi_\gamma),
\end{align}
and the optimal \( \gamma \in \mathbb{R}^{|I|} \) is obtained by solving the dual problem
\begin{align}
    \max_{\gamma \geq 0}\: QP(\gamma),
\end{align}
with \( QP(\gamma) \triangleq - \phi_\gamma^* U C_{\lambda}^{-1}U^* \phi_\gamma - 2 \phi_\gamma^* U C_{\lambda}^{-1}v \).

\paragraph{Closed-form formula for \(\hat b_{pc}, \hat a_{pc}\) and \(QP(\gamma)\).} We decompose
\begin{align}
    \hat w_{pc} &= \hat w_{std} + \hat w_{+},
\end{align}
where $\hat w_{std} \triangleq (\hat C + \lambda I)^{-1}v$ is the standard unconstrained ridge regression formula, and  $\hat w_{+} \triangleq (\hat C + \lambda I)^{-1}U^*\phi_\gamma$ is an adjustment stemming from the positivity constraint.

Additionally, by denoting \(\tilde K = \sum_i \tilde K_i + \frac{1}{4} \sum_{i,j=1}^n \tilde K_{ii}^{jj}
\), we deduce
\begin{align}
    \hat w_{+} &= N(\tilde \Phi^* \tilde \Phi + N\lambda I)^{-1}  U^*\phi_\gamma\\
    &= \lambda^{-1} (U^*\phi_\gamma  - \tilde \Phi^*(\tilde \Phi \tilde\Phi^* + N\lambda I)^{-1}\tilde \Phi U^*\phi_\gamma)\\
    &= \lambda^{-1} (U^*\phi_\gamma  - \tilde \Phi^* \tilde K_{N\lambda} ^{-1}\tilde \Phi U^*\phi_\gamma),
\end{align}
using the Woodbury identity for any operators $A$ and $B$: $(AB + \lambda I)^{-1} = \lambda^{-1} (I - A(BA + \lambda I)^{-1}B)$. Hence, setting $V_i = \sum_j e_j \otimes (e_j | 0_\bmG)$, we derive
\begin{align}
    \hat b^i_{pc} &= \langle V_i \hat w_{pc},\, \phi(t,x)\rangle_{\bmG}\\
    &= \underbrace{\langle V_i\hat w_{std},\, \phi(t,x)\rangle_{\bmG}}_{\triangleq \hat b^i_{std}} + \underbrace{\langle V_i\hat w_{+},\, \phi(t,x)\rangle_{\bmG}}_{\triangleq \hat b_+^i},
\end{align}
Moreover, using $UV_i^* = 0$, we have
\begin{align}
    \hat b^i_+ &= \langle V_i \hat w_{+},\, \phi(t,x)\rangle_{\bmG}\\
    &= \lambda^{-1} \left(\phi_\gamma^* U V_i^* \phi(t,x) - \phi_\gamma^* U \tilde\Phi^* \tilde K_{N\lambda} ^{-1} \tilde\Phi V_i^* \phi(t,x)\right)\\
    &=  -\lambda^{-1} \phi_\gamma^* U \tilde \Phi^* \tilde K_{N\lambda} ^{-1} \tilde \Phi V_i^* \phi(t,x).
\end{align}

Similarly,
\begin{align}
    \hat a_{0,pc} = \hat a_{0,std} + \hat a_{0,+},
\qquad
\hat a_{pc} = \hat a_{0,pc} I_{\mathbb R^n},
\end{align}
where \( \hat a_{0,std} \) is the standard unconstrained ridge regression estimator, and
\begin{align}
     \hat a_{0,+} &= \lambda^{-1} \left(\phi_\gamma^* \phi(t,x) - \phi_\gamma^* U \tilde{\Phi}^*  \tilde{K}_{N\lambda}^{-1} \tilde{\Phi} U^* \phi(t,x)\right).
\end{align}

Furthermore,
\begin{align}
    QP(\gamma) &= - \phi_\gamma^* U C_{\lambda}^{-1} U^* \phi_\gamma - 2 \phi_\gamma^* U C_{\lambda}^{-1} v \\
    &= - \gamma^T \Phi U C_{\lambda}^{-1} U^* \Phi^* \gamma - 2 N^{-1} \gamma^T \Phi U C_{\lambda}^{-1} \tilde{\Phi}^* \hat{d}.
\end{align}

Therefore, we conclude
\begin{align}
    & \hat b^i_{pc}(t,x) = \hat b^i_{std}(t,x) + \hat b^i_{+}(t,x),\\
    &\hat b^i_{std}(t,x) = \hat d^T \tilde K_{N\lambda}^{-1} r_i^{b}(t,x),\\
    &\hat b^i_+(t,x) = - \lambda^{-1} \gamma^T (R^{a}_{fp, pc})^T \tilde K_{N\lambda}^{-1}r_i^b(t,x),\\
    & \hat a_{0,pc}(t,x) = \hat a_{0,std}(t,x) + \hat a_{0,+}(t,x),\\
    &\hat a_{pc}(t,x)=\hat a_{0,pc}(t,x)I_{\mathbb R^n},\\
    &\hat a_{0,std}(t,x) = \hat d^T \tilde K_{N\lambda}^{-1} r^{a}(t,x),\\
    &\hat a_{0,+}(t,x) = \lambda^{-1} \gamma^T \left(k(t,x) - (R^{a}_{fp, pc})^T  \tilde K_{N\lambda} ^{-1} r^{a}(t,x)\right) I_{R^n},\\
    &QP(\gamma) = -\lambda^{-1}  \gamma^T (K - (R^{a}_{fp, pc})^T \tilde K_{N\lambda}^{-1} (R^{a}_{fp, pc})) \gamma - 2 \gamma^T (R^{a}_{fp, pc})^T \tilde K_{N\lambda}^{-1} \hat d,
\end{align}
using the following notations
\begin{align}
    &z, z' = (t,x), (t',x')\\
    &r_i^{b}(z, z') \triangleq - \langle \tilde \phi_i(z),\, \phi(z')\rangle_{\bmG},\\
    &r_i^{b}(z) \triangleq \left(r_i^b(z_l, z)\right)_l \in \R^{N},\\
    &r^{a}_{ij}(z, z') \triangleq \langle \tilde \phi_{ij}(z),\, \phi(z')\rangle_{\bmG},\\
    &r^{a}(z, z') \triangleq 1/2 \sum_{i} r^{a}_{ii}(z, z'),\\
    &r^{a}(z) \triangleq \left(r^a(z_l, z)\right)_l \in \R^{N},\\
    &R^{a}_{fp, pc} \triangleq 1/2 \sum_{i} \tilde \Phi_{ii} \Phi \in \R^{N \times n_{pc}},\\
    &\hat d \triangleq \left(\frac{\partial \hat p}{\partial t}(z_l)\right)_l \in \R^{N}.
\end{align}

\subsubsection{Gram Matrices Computations}\label{subsubsec:fp_grams}

\vspace{1em}
The term \((\mathcal{L}_{t}^{b, a})^* \hat{p}\) involves second derivatives of \(a \times \hat{p}\), leading to the feature maps \(\tilde{\phi}_{ij}\), which are sums of four terms each. Consequently, the evaluation of the scalar product between two feature map values results in a sum of 16 terms.

\paragraph{$\tilde K$ computation.} Denoting \(\odot\) as the Hadamard product, and \(A \odot BC \triangleq A \odot (BC)\), we have
\begin{align}
    \tilde K \triangleq (\langle \tilde \phi(z_p) ,\, \tilde \phi(z_q)\rangle_{\bmG^{n+1}})_{p,q=1}^{N} = \sum_{i=1}^n \tilde K_i + \frac{1}{4}\sum_{i,j=1}^n \tilde K_{ii}^{jj},
\end{align}
with
\begin{align*}
    \tilde K_i \triangleq (\langle \tilde \phi_i(z_p) ,\, \tilde \phi_i(z_q)\rangle_{\bmG^{n+1}})_{p,q=1}^{N} = \quad K \odot \hat P_i \hat P_i^T
    + K_i \odot \hat P \hat P_i^T
    + K^i \odot \hat P_i \hat P^T
    + K_i^i \odot \hat P \hat P^T,
\end{align*}
and
\begin{equation}
\begin{split}
    \tilde K_{kl}^{ij} \triangleq \left( \langle \tilde \phi_{kl}(z_p),\, \tilde \phi_{il}(z_q)\rangle_{\bmG^{n+1}}\right)_{p,q=1}^N
    =\quad &K \odot  \hat P_{kl} \hat P_{ij}^T \,+\, K_k \odot \hat P_{l} \hat P_{ij}^T \,+\, K_l \odot \hat P_{k} \hat P_{ij}^T \,+\, K_{kl} \odot \hat P \hat P_{ij}^T\\
    + &  K^i \odot \hat P_{kl} \hat P_j^T \,+\, K^i_k \odot \hat P_{l} \hat P_j^T \,+\, K_l^i \odot \hat P_{k} \hat P_{j}^T \,+\, K_{kl}^i \odot \hat P \hat P_{j}^T\\
    + & K^j \odot \hat P_{kl} \hat P_i^T \,+\, K^j_k \odot \hat P_{l} \hat P_i^T \,+\, K_l^j \odot \hat P_{k} \hat P_{i}^T \,+\, K_{kl}^j \odot \hat P \hat P_{i}^T\\
    + & K^{ij} \odot \hat P_{kl} \hat P^T \,+\, K^{ij}_k \odot \hat P_{l} \hat P^T \,+\, K_l^{ij} \odot \hat P_{k} \hat P^T \,+\, K_{kl}^{ij} \odot \hat P \hat P^T,
\end{split}
\end{equation}
where $K_{ij}^{kl} = (k_{ij}^{kl}(z_p, z_q))_{p,q = 1}^{N}$, and $\hat P_{ij} = (\hat p_{ij}(z_p))_{p=1}^{N}$.

\paragraph{Fast computation of the Gram matrices $K_{ij}^{kl}$ with Gaussian kernel.} If
\begin{align}
    k((t,x), (t', x')) \triangleq \exp(-\gamma \|(t,x)-(t',x')\|^2) \quad \text{for} \quad \gamma>0,
\end{align}
then, denoting $D_z \triangleq (Z_k - Z_l)_{k, l=1}^{N}$, the $K^{kl}_{pq}$ can be computed recursively as follows. For $i, j, k, l \in \llbracket 1, n\rrbracket$, 
\begin{align*}
    &K_i = - 2\gamma D_z e_i \odot K,\\
    &K^i = (-K_i),\\
    &K_{ij} = -2\gamma (D_z e_j \odot K_i + \delta_{ij} K),\\
    &K^{ij} = K_{ij},\\
    &K_{i}^j = - K_{ij},\\
    &K_k^{ij} = - 2\gamma (D_z e_j \odot K_{ik} + \delta_{jk} K_i + \delta_{ij} K_k),\\
    &K_{ij}^{k} = - K_{i}^{jk},\\
    &K_{kl}^{ij} = - 2\gamma (- D_z e_j \odot K_{kl}^{i} + \delta_{jl} K_{ik} + \delta_{jk} K_{il} +  \delta_{ij} K_{kl}).
\end{align*}

\subsubsection{Algorithm}

\paragraph{Inputs.} We are provided with a dataset $(z_l)_{l=1}^N = (t_l, x_l)_{l=1}^N$ sampled
i.i.d.\ uniformly from the bounded domain $D$, along with a p.d. kernel $k$ over $[0, T] \times \R^n$, and the hyper-parameter \(\lambda>0\).

\paragraph{Step 2.1 ($(\hat b, \hat a)$ fitting).} From Sections \ref{subsubsec:fp_closed} and \ref{subsubsec:fp_grams}, at fitting time, one needs to compute and store
\begin{align}
    (\tilde K + N\lambda I)^{-1} \in \R^{{N} \times {N}}.
\end{align}

The time and space complexities of this step are $\bmO({N}^3)$ and $\bmO({N}^2)$, respectively.

\paragraph{Step 2.2 ($(\hat b, \hat a)$ predictions).} Denoting $D_z^{te} = (Z_l - Z^{te}_l)_{k,l} \in \R^{{N} \times N_{te}}$, we have
\begin{align*}
    &K_i^{fp, te} = - 2\gamma D_z^{te} e_i \odot K^{fp, te},\\
    &K_{ij}^{fp, te} = - 2\gamma (D_z^{te} e_j \odot K_i^{fp, te} + \delta_{ij} K^{fp, te}).
\end{align*}
Then, predictions can be computed from the formulas provided in Section \ref{subsec:fp_proof}.

The time and space complexities of this step are $\bmO(N N_{te})$ and $\bmO(N N_{te})$, respectively.

\paragraph{Outputs.} We return the predicted SDE coefficients' values for the given test inputs.

\subsubsection{FP Matching with Nyström Approximation}

Time and space complexities of Fokker-Planck matching can be reduced from $\bmO(N^{3})$ and $\bmO(N^{2})$ to $\bmO(m N^{2})$ and $\bmO(Nm)$ by using Nyström approximation with $m$ anchors. More precisely, the Fokker-Planck matching objective with the Nyström approximation 
\begin{align}
    w = \sum_{i=1}^m \alpha_i \tilde \phi(z_i),
\end{align}
writes
\begin{align}
    \min_{\alpha \in \R^m} \alpha^T  \left(\tilde K_{nm}^T \tilde K_{nm} + N\lambda \tilde K_{mm}\right) \alpha - 2 \alpha^T \left(\tilde K_{mn} Y_{tr} + R^{a}_{mn_{pc}}\gamma\right).
\end{align}
It can be solved with
\begin{align}
    \alpha = \left(\tilde K_{nm}^T \tilde K_{nm} + N\lambda \tilde K_{mm}\right)^{-1}(\tilde K_{mn} Y_{tr} + R^{a}_{mn_{pc}}\gamma).
\end{align}
We deduce the following formula for $\hat b_{ny, pc}$, $\hat a_{ny, pc}$.
\begin{align}
    &\hat b_{ny, pc}(z) = \alpha^T R^{b}_m(z),\\
    &\hat a_{ny, pc}(z)= \alpha^T R^{a}_m(z),
\end{align}
and the following formula for the dual problem
\begin{align}
    \min_{\gamma \geq 0} \gamma^T A \gamma + \gamma^T b,
\end{align}
with 
\begin{align}
    &A = (R_{mn_{pc}}^{a})^T \left(\tilde K_{nm}^T \tilde K_{nm} + N\lambda \tilde K_{mm}\right)^{-1} R_{mn_{pc}}^{a},\\
    &b = 2 (R^{a}_{m n_{pc}})^T\left(\tilde K_{nm}^T \tilde K_{nm} + N\lambda \tilde K_{mm}\right)^{-1}\tilde K_{mn}\hat d.
\end{align}

\section{Implementation Details of Controlled SDE Estimation}\label{sec:implementation_controlled}

This section details the implementation of the controlled SDE estimation method proposed in this work, available on GitHub (\url{lmotte/controlled-sde-learn}) as an open-source Python library. We update the formulas and computational complexity of each step provided in Section \ref{sec:implementation_uncontrolled} for uncontrolled SDEs to adapt them for controlled SDEs.

\subsection{Step 1: Probability Density Estimation}

For controlled SDEs, the estimation of the probability density function involves repeating the algorithm from Step 1 of the uncontrolled case \(K\) times. Each iteration corresponds to a specific control setting. At the end of this step, we have computed and stored
\begin{align}
    (\hat p_{k}((t_i, x_j)))_{i,j,k} \in \R^{M_{fp} \times N_{fp} \times K},
\end{align}
and also the partial derivatives evaluations.

The time and space complexities of this step are $\bmO(K M^3)$ and $\bmO(K(M^2 + QMn))$ for fitting, and $\bmO(KQMN\max(M_{fp}, M))$ and $\bmO(KQMN)$ for predictions.

\subsection{Step 2: Fokker-Planck Matching}

We use the same algorithm, adding the control dimensions to the inputs. More precisely, closed-form formulas are updated with $z = (t, x, v) \in [0, T] \times \R^n \times \R^d$ instead of $z=(t,x) \in [0, T] \times \R^n$. We store
\begin{align}
    Z_{tr} = (t_i, x_i, u_{\theta_k}(t_i))_{i=1,\dots, N_{fp}, k =1, \dots, K},
\end{align}
and compute and store
\begin{align}
    (\tilde K + NK \lambda I)^{-1} \in \R^{NK \times NK}.
\end{align}

The time and space complexities of this step are $\bmO(K^3 N^3)$ and $\bmO(K^2 N^2)$ for fitting, and $\bmO(KN N_{te})$ and $\bmO(KN N_{te})$ for predictions.

% Note: in this sample, the section number is hard-coded in. Following
% proper LaTeX conventions, its should properly be coded as a reference:

%In this appendix we prove the following theorem from
%Section~\ref{sec:textree-generalization}:

\vskip 0.2in
\bibliography{references}

\end{document}